\let\openright=\clearpage
\DeclareMathOperator*{\argmax}{arg\,max}
\DeclareMathOperator*{\argmin}{arg\,min}
\newglossaryentry{game_tree}{name=game tree,
  description={Formal model of sequential decision making in perfect information settings}}
\newglossaryentry{constant_sum_game}{name=constant-sum game,
  description={Strictly competitive game where the reward of both players always sum to a constant}}
\newglossaryentry{zero_sum_game}{name=zero-sum game,
  description={\Gls{constant_sum_game} where the constant is zero}}
\newglossaryentry{best_response}{name=best response,
  description={Strategy maximizing the player's utility given the fixed policy of other players}}
\newglossaryentry{maximin_policy}{name=maximin policy,
  description={Policy that maximizes a value against the worst case opponent}}
\newglossaryentry{maximin_value}{name=maximin value,
  description={Unique value of the \gls{maximin_policy}}}
\newglossaryentry{nash_equilibrium}{name=Nash equilibrium,
  description={Strategy profile where none of the players can improve}}
\newglossaryentry{game_value}{name=game value,
  description={Unique value of the game when both players follow optimal policy}}
\newglossaryentry{minimax_algo}{name=minimax algorithm, description={Simple tree traversal algorithm producing optimal policies in perfect information games}}
\newglossaryentry{alphabeta_pruning}{name=alpha-beta pruning, description={Pruning trick that allows the \gls{minimax_algo} to potentially visit less nodes of the game tree}}
\newglossaryentry{selfplay}{name=self-play, description={A general framework where the players play sequence of games and gradually improve based on past performance}}
\newglossaryentry{selfplay_irl}{name=self-play via independent reinforcement learning, description={\Selfplay{} where the next strategy in the sequence is computed using a reinforcement learning algorithm}}
\newglossaryentry{fictitious_play}{name=fictitious play, description={Variant of \selfplay{} where the agent's next strategy in the sequence is \gls{best_response} against the averaged strategy of the opponent}}
\newglossaryentry{best_responding_sequence}{name=best responding sequence, description={Sequence of strategies where the agent's next strategy is \gls{best_response} against the the last strategy of the opponent}}
\newglossaryentry{repeated_game}{name=repeated game, description={Repeated game consists of a sequence of individual matches}}
\newglossaryentry{stateless_algorithm}{name=stateless algorithm, description={Online algorithm that does not use an internal state between the games}}
\newglossaryentry{stateful_algorithm}{name=stateful algorithm, description={Online algorithm that uses an internal state between the games and thus can condition the computation on the past games}}
\newglossaryentry{tabularized_strategy}{name=tabularized strategy, description={Offline strategy produced from online strategy}}
\newglossaryentry{equilibrium_selection_problem}{name=equilibrium selection problem,
  description={Selecting Nash equilibrium to play in multiplayer settings}}
\newglossaryentry{resolving}{name=re-solving,
  description={Search as a method to re-solve an optimal policy}}
\newglossaryentry{online_minimax_algo}{name=online minimax algorithm, description={Online algorithm that runs the \gls{minimax_algo} on the \subgame{} rooted in the current state}}
\newglossaryentry{full_lookahead}{name=full lookahead,
  description={Search tree consisting all the states of the \subgame{}}}
\newglossaryentry{limited_lookahead}{name=limited lookahead,
  description={Search tree consisting of some state of the \subgame{}}}
\newglossaryentry{support}{name=support,
  description={Actions with a non-zero probability}}
\newglossaryentry{expected_reward}{name=expected reward,
  description={Expected reward of a player given all players' policies}}
\newglossaryentry{imperfect_recall}{name=imperfect recall, description={grouping of states to information sets so that the agent forgets previously known information}}
\newglossaryentry{regret_matching}{name=regret matching,
  description={Regret minimization algorithm - selects actions proportionally to the positive regret}}
\newglossaryentry{regret_matching_plus}{name=regret matching plus,
  description={Regret minimization algorithm based on regret matching - caps the negative regrets to zero}}
\newglossaryentry{behavioral_strategy}{name=behavioral strategy,
    plural={behavioral strategies},
  description={Strategy defined over all the information states in \gls{efg_g}}}
\newglossaryentry{information_state}{name=information state,
  description={Decision state of a player in \gls{efg_g}}}
\newglossaryentry{matrix_games}{name=matrix games,
  description={Simultaneous move games - simple formalism for imperfect information games}}
\newglossaryentry{normal_form_games}{name=normal form games,
  description={See \gls{matrix_games}}}
\newglossaryentry{glasses}{name=Glasses,
  description={Simple variant of the graph chasing game}}
\newglossaryentry{lbr}{name=local best response,
  description={Exploitability approximating technique.}}
\newglossaryentry{consistent_states}{name=consistent states,
  description={Set of (opponent's) states consistent with a current state.}}
\newglossaryentry{consistency_operation}{name=consistency operation,
  description={Operation that given a state, returns \gls{consistent_states}}}
\newglossaryentry{common_info_set}{name=common information set,
  description={Set of states closed under the consistency operation}}
\newglossaryentry{counterfactual_best_response}{name=counterfactual best response,
  description={Best response having non-negative counterfactual regrets}}
\newglossaryentry{subgame_refinement}{name=subgame refinement,
  description={Resolving a sub-game using a finer-grained abstraction than the original policy used for resolving constraints.}}
\newglossaryentry{aivat}{name=AIVAT,
  description={Variance reduction technique for evaluating agent's performance}}
\newacronym{neq}{$\mathbb{NEQ}$}{set of all Nash equilibrium strategy profiles}
\newacronym{br}{$\mathbb{BR}$}{set of all best response strategies}
\newacronym{p2p}{P2P}{Peer-to-Peer}
\newacronym{rl}{RL}{reinforcement learning}
\newglossaryentry{cfr_g}{name={Counterfactual regret minimization},
    description={Regret minimization in sequential decision making (i.e on a game tree)}}
\newglossaryentry{cfr}{type=\acronymtype, name={CFR}, description={Counterfactual regret minimization}, first={Counterfactual regret minimization (CFR)\glsadd{cfr_g}}, see=[Glossary:]{cfr_g}}
\newglossaryentry{efg_g}{name={extensive form games},
    description={Formalism for imperfect information games}}
\newglossaryentry{fog_g}{name={factored observation stochastic games},
    description={Formalism for imperfect information games}}
\def\ThesisTitle{Search in Imperfect Information Games}
\def\ThesisAuthor{Martin Schmid}
\def\YearSubmitted{2021}
\def\Department{Department of Applied Mathematics}
\def\DeptType{Department}
\def\Supervisor{Milan Hladík}
\def\SupervisorsDepartment{ Department of Applied Mathematics}
\def\Advisor{Michael Bowling}
\def\AdvisorsDepartment{Department of Computing Science, University of Alberta}
\def\StudyProgramme{Computer Science}
\def\StudyBranch{Discrete Models and Algorithms}
\def%
\def\Abstract{%
From the very dawn of the field, search with value functions was a fundamental concept of computer games research.
Turing's chess algorithm from $1950$ was able to think two moves ahead \citep{copeland2004essential}, and Shannon's work on chess from $1950$ includes an extensive section on evaluation functions to be used within a search \citep{shannon1950xxii}.
Samuel's checkers program from $1959$ already combines search and value functions that are learned through self-play and bootstrapping \citep{samuel1959some}.
TD-Gammon improves upon those ideas and uses neural networks to learn those complex value functions --- only to be again used within search \citep{tesauro1995temporal}. 
The combination of decision-time search and value functions has been present in the remarkable milestones where computers bested their human counterparts in long standing challenging games --- DeepBlue for Chess \citep{campbell2002deep} and AlphaGo for Go \citep{silver2016mastering}.

Until recently, this powerful framework of search aided with (learned) value functions has been limited to {\pig}s.
As many interesting problems do not provide the agent perfect information of the environment, this was an unfortunate limitation.
This thesis introduces the reader to sound search for imperfect information games.

}
\def\Keywords{%
{game theory}, {search}, {imperfect information}, {games}, {DeepStack}
}
\def\@makechapterhead#1{
  {\parindent \z@ \raggedright \normalfont
  \Huge\bfseries \thechapter. #1
  \par\nobreak
  \vskip 20\p@
}}
\def\@makeschapterhead#1{
  {\parindent \z@ \raggedright \normalfont
  \Huge\bfseries #1
  \par\nobreak
  \vskip 20\p@
}}
\def\chapwithtoc#1{
\chapter*{#1}
\addcontentsline{toc}{chapter}{#1}
}
\theoremstyle{plain}
\newtheorem{thm}{Theorem}
\newtheorem{lemma}[thm]{Lemma}
\newtheorem{corollary}{Corollary}[thm]
\newtheorem{observation}{Observation}[thm]
\theoremstyle{plain}
\newtheorem{defn}{Definition}
\theoremstyle{remark}
\definecolor{block-gray}{gray}{0.90}
\newtcolorbox{myquote}{colback=block-gray,grow to right by=-0mm,grow to left by=-0mm,
boxrule=0pt,boxsep=0pt,breakable}
\newcommand{\R}{\mathbb{R}}
\newcommand{\T}[1]{#1^\top}
\newcommand{\NEQ}{\mathbb{NEQ}}
\newcommand{\brv}{BRV}
\newcommand{\cfbrv}{CBRV}
\newcommand{\brset}{\mathbb{BR}}
\newcommand{\cfbrset}{\mathbb{CFBR}}
\newcommand{\maximinset}{\mathbb{MAXIMIN}}
\newcommand{\nashset}{\mathbb{NEQ}}
\newcommand{\rl}{reinforcement learning}
\newcommand{\gt}{game theory}
\newcommand{\pig}{perfect information game}
\newcommand{\iig}{imperfect information game}
\newcommand{\first}[1]{{#1}}
\newcommand{\second}[1]{{#1}}
\newcommand{\lookahead}{look-ahead}
\newcommand{\Minimax}{Minimax}
\newcommand{\maximin}{maximin}
\newcommand{\Maximin}{Maximin}
\newcommand{\selfplay}{self-play}
\newcommand{\Selfplay}{Self-play}
\newcommand{\SelfPlay}{Self-Play}
\newcommand{\subgame}{sub-game}
\newcommand{\subgames}{sub-games} 
\newcommand{\Subgame}{Sub-game}
\newcommand{\Subgames}{Sub-games} 
\newcommand{\subproblem}{sub-problem}
\newcommand{\resolve}{re-solve}
\newcommand{\resolving}{re-solving}
\newcommand{\resolved}{re-solved}
\newcommand{\Resolving}{Re-solving}
\newcommand{\search}{\Omega}
\newcommand{\match}{z} 
\newcommand{\consistent}{\mathcal{C}}
\newcommand{\rps}{rock-paper-scissors}
\newcommand{\glasses}{Glasses}
\newcommand{\Glasses}{Glasses}
\DeclareMathOperator{\EX}{\mathbb{E}}
\begin{document}


\pagestyle{empty}
\hypersetup{pageanchor=false}
\begin{center}

\centerline{\mbox{\includegraphics[width=166mm]{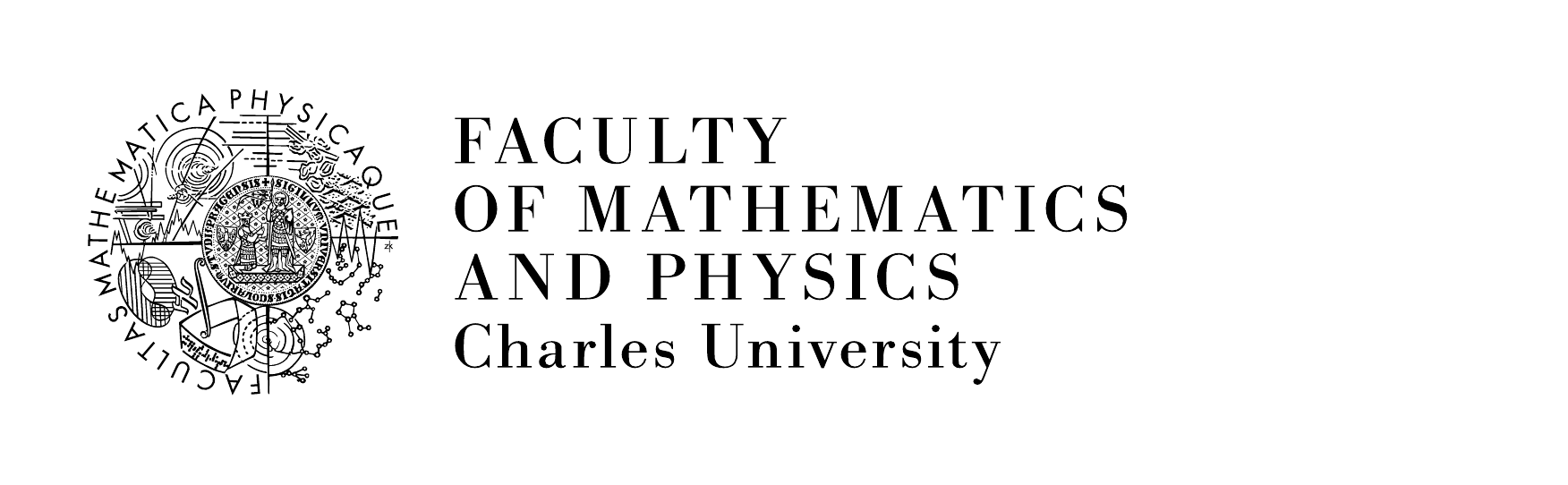}}}

\vspace{-8mm}
\vfill

{\bf\Large DOCTORAL THESIS}

\vfill

{\LARGE\ThesisAuthor}

\vspace{15mm}

{\LARGE\bfseries\ThesisTitle}

\vfill

\Department

\vfill

\begin{tabular}{rl}

Supervisor of the doctoral thesis: & \Supervisor \\
\noalign{\vspace{2mm}}
Advisor of the doctoral thesis: & \Advisor \\
\noalign{\vspace{2mm}}
Study programme: & \StudyProgramme \\
\noalign{\vspace{2mm}}
Study branch: & \StudyBranch \\
\end{tabular}

\vfill

Prague \YearSubmitted

\end{center}

\newpage



\openright
\hypersetup{pageanchor=true}
\pagestyle{plain}
\pagenumbering{roman}
\vglue 0pt plus 1fill

\noindent
I declare that I carried out this doctoral thesis independently, and only with the cited
sources, literature and other professional sources.

\medskip\noindent
I understand that my work relates to the rights and obligations under the Act No.~121/2000 Sb.,
the Copyright Act, as amended, in particular the fact that the Charles
University has the right to conclude a license agreement on the use of this
work as a school work pursuant to Section 60 subsection 1 of the Copyright Act.

\vspace{10mm}

\hbox{\hbox to 0.5\hsize{%
In ........ date ............	
\hss}\hbox to 0.5\hsize{%
signature of the author
\hss}}

\vspace{20mm}
\newpage


\openright

\noindent

\newpage


\openright

\vbox to 0.5\vsize{
\setlength\parindent{0mm}
\setlength\parskip{5mm}

Title:
\ThesisTitle

Author:
\ThesisAuthor

\DeptType:
\Department

Supervisor:
\Supervisor, \SupervisorsDepartment

Advisor:
\Advisor, \AdvisorsDepartment

Abstract:
\Abstract

Keywords:
\Keywords

\vss}

\newpage

\openright
\pagestyle{plain}
\pagenumbering{arabic}
\setcounter{page}{1}


\tableofcontents

\glsunsetall

\include{preface}
\chapter*{Contributions}
\addcontentsline{toc}{chapter}{Contributions}

This thesis aims to tell a coherent story of search in imperfect information games.
My main contributions in this line of work are described below.
Each contribution has been published in a major conference or a journal (see given citations and the full list of author's publications at the end of the thesis in Section~\ref{chap:list_of_publications}).

\paragraph{Bounding Support Sizes in Games}
In order to play optimally, imperfect information games often require stochastic policies.
This is in contrast to perfect information games, where an optimal deterministic strategy always exist.
We have proven an intriguing connection between a players' level of uncertainty in state of the game and the support size (number of actions with non-zero probability) \citep{schmid2014bounding}.
Section~\ref{sec:iig-stochastic_policies} includes a brief overview of the result as most of the contribution dates to an older version of the paper, published prior to the beginning of my doctoral study \citep{schmid2013equilibrium}.

\paragraph{Factored Observation Games Formalism}
The work on new sound search methods for imperfect information games revealed that previous formalisms are ill-suited for this task.
We have introduced a new formalism particularly suited for the needs of modern search algorithms 
\citep{kovavrik2019rethinking}.
We have also proven connections to the prior formalism of extensive form games.
Finally, we formulated counterfactual regret minimization (Section~\ref{sec:iig-cfr}) and sequence form linear optimisation (Section~\ref{sec:iig-sequence_games_lp}) in this formalism --- popular algorithms from n extensive form games.
This formalism is also adopted for most of this thesis.
See also Section \ref{sec:igg-fog}.

\paragraph{Variance reduced Monte Carlo Counterfactual Regret Minimization}
Monte Carlo CFR (MCCFR) is a family of game tree sampling algorithms for imperfect information games that has become a key-building block of many recent methods.
As the algorithm does not traverse the entire game tree but rather samples  trajectories, the corresponding sampled values are inherently noisy.
While MCCFR still provides strong (probabilistic) convergence bounds, the variance introduced by sampling can greatly slow down the convergence speed.
We have introduced VR-MCCFR that decreases the variance and results in orders of magnitude faster convergence \citep{schmid2019variance, davis2020low}.
For details see Section \ref{sec:iig-vrmccfr}.

\paragraph{Refining \Subgames{} in Large Imperfect Information Games}
Prior to the dawn of search, state of the art methods were based on the abstraction framework which simply solved a smaller, abstracted game (Section~\ref{sec:iig-abstraction_methods}).
We have introduced sound and safe refinement of \subgames{}, that allows to on-line improve the strategy near the end of the game when the corresponding \subgame{} becomes (more) tractable \citep{moravcik2016refining}.
More details in Section \ref{sec:iig-subgame_refinement}.

\paragraph{Formalising $\epsilon$-Sound Search}
The key solution concept of $\epsilon$-Nash equilibrium is defined for fixed, offline strategies.
We have generalized the concept for online settings, allowing one to analyze the worst-case behavior of search algorithms \citep{vsustr2020sound, vsustr2021sound}.
Our consistency hierarchy then revealed further discrepancies between perfect and imperfect information games.
Relevant Chapters are \ref{chap:pig-online_settings} and \ref{chap:iig-online_settings}.

\paragraph{DeepStack}
The culmination of the presented thesis is the sound search technique of continual \resolving{} (Section~\ref{sec:iig-continual_resolving_vf}).
Continual \resolving{} with value functions represented by deep neural networks is the algorithm behind the DeepStack agent (\url{https://www.deepstack.ai}).
DeepStack (Chapter~\ref{chap:iig-deepstack}) became the first agent to beat professional human players in the game of no-limit Texas hold'em poker, combining search and learning and constitutes a major break-through for search in imperfect information games \citep{moravvcik2017deepstack}.

\paragraph{AIVAT}
As poker is an inherently high variance game, many matches are often required to get a statistically significantly result.
While this is possible for computer agents where computer poker competitions often required millions of data-points during evaluation, it is expensive and even more problematic when human play is involved.
We have developed AIVAT~\citep{burch2018aivat}, a provably unbiased technique for reducing the variance during evaluation. 
Note that this technique is not poker specific and can be used to evaluate any game.
AIVAT and prior variance reduction techniques is described in Section~\ref{sec:iig-aivat}

\chapter{Introduction}

\second{
Games have long served as benchmarks and marked milestones of progress in artificial
intelligence (AI), serving as ``fruit flies'' of AI research \citep{mccarthy1990chess}.
The same way lab mice allow the researchers to speed up a development of new human drugs, 
games allow us to design algorithms for challenging real world environments.
Unlike mice, games allow for an easy natural progression, as different games bring varying challenges.
Games can be single player or multiplayer, perfect or imperfect information, simultaneous moves or sequential, discrete or continuous actions, etc.

Last but not least, games are fun.
Games are fun to play, and even more fun to do research in.
``Don't worry about the overall importance of the problem; work on it if it looks interesting. I think there is a sufficient correlation between interest and importance'' (David Blackwell).
}
\section{Learning}
\second{
As we hope that games help us to design algorithms for a wide class of problems, the more general the algorithms are, the better.
If one had one algorithm for chess, one for poker an another one for security games, it would be harder to imagine how these techniques seamlessly generalize to other problems. 

A powerful idea that is appealing to incorporate into our approaches is learning.
Rather than hard-coding specific game algorithm, we can design an algorithm that learns to play the game by repeatedly interacting with the game and improving its policy.
}

\section{Games in Question}
\second{
This thesis is limited to a particular subset of games.
The class of games we will be concerned with are strictly adversarial (zero sum) two player games.
Many popular games fit these criteria (e.g. chess, poker, go etc.).
Search for this relatively large class of games is still a step forward from perfect information games, as imperfect information games generalize perfect information games.

Why this particular class?
First, the optimal policies provide strong and appealing properties.
Second, these optimal policies are computable in polynomial time (in the size of the game).
Third, it is not clear what policies would be desirable in the larger class of multiplayer games.
The common solution concepts lose the appealing guarantees and thus their appeal.
Overall, the theoretical results for multiplayer games are rather unfortunate: equilibrium policies are harder to compute and they lack the same appealing guarantees as in two-player case.
}

\section{Optimal Policies}
\second{
In single agent environments, the common concept of optimal policy is that of a policy that maximizes the return.
In the case of multiple agents, the reward depends not just on the actions of our agent, but also on the actions of all the other players in the game.
As we do not get to know the policy of the opponents, the question of optimality is complicated.
We still want to maximize return, but how can we do that as we do not know what the opponent will do?
}

\subsection{Worst Case Opponent --- \Maximin{}}
\label{sec:worst_case_reasoning}
\second{
One option is to consider the worst case opponent.
This is rather natural if we think of how one reasons in chess.
Consider an overly simplistic situation where we simply need to maximize chess material.
When making a chess move, would you rather make:
\begin{itemize}
    \item A move that no matter what counter-move the opponent does, you will win a pawn.
    \item A move that for almost all of the counter-moves of the opponent, you will win a rook. But there is one counter-move of the opponent that loses you a queen.
\end{itemize}

Optimizing the reward against a worst-case opponent will lead to the popular \maximin{} solution concept.
This solution concept is guaranteed to exist in both perfect and imperfect information games, with some appealing properties.
}

\subsection{Nash Equilibrium}
\second{
We will also see another solution concept, Nash equilibrium.
Unlike the worst case reasoning, Nash equilibrium is defined as a joint policy (strategy profile) for all the players where none of the players benefit from deviating from this profile.
Crucially, we will learn that for the games in question --- two player zero sum perfect and imperfect information games --- the concepts of minmax and Nash equilibria collapse.
We thus often use the term optimal policy to refer to both solution concepts interchangeably.
}

\subsection{The Appeal of Optimal Policies}
\second{
Arguably the most appealing property of optimal policies in the context of two-player zero-sum games is that if we get to play both positions (e.g. white and black in chess, small and big blind in poker), we can not lose on expectation against any opponent.
This allows us to focus on algorithms that scale to large problems and produce provably near-optimal strategies, rather than dealing with the distributions of the opponent we could be facing.
}

\section{Offline Algorithms}
\second{
Offline algorithms compute the optimal policy prior to the actual game play, in other words, solve the game.
The result of the computation then describes what policy to follow in each state of the game, and
is then simply retrieved during the game-play.
}

\subsection{Tabular vs Implicit Representation}
\second{
An important distinction has to be made between tabular methods and methods that store the strategy in some implicit way.
A natural representation of the computed strategy is a tabular one --- that is,
storing explicitly how to act in every state $s \in \mathcal{S}$.
While such a representation might be necessary for an optimal strategy, it 
can be intractable for large games.

It is worth mentioning that in some cases, it is possible to represent provably optimal policies without the need to the store strategy for each state of the game.
Consider the idea of alpha-beta pruning that allows one to prune parts of the game tree in a sound way \citep{knuth1975analysis}.
The resulting policy can then be thought of as a small certificate and one can verify the validity of such a certificate in time linear in the size of the certificate (in contrast to the size of the full game).
Interestingly, this idea can be generalized to Nash equilibrium certificates in imperfect information games \citep{zhang2020small}.

Rather than explicit representation of the policy, one can opt for an implicit representation, for example by a parametric function representation (a state to a strategy). 
If we represent the states by a single number $s \in \{ 1 \hdots |S| \}$ and the actions in all 
the states are $\{a, b\}$,
we can represent a policy by letting $p(a | s) \propto \frac{1}{s}$.
While such representation is rather silly, it illustrates the idea.
In practice, the function would be some more complex parametric function
to produce $p(a | s) = f_{\Theta}(a, s)$, maybe represented by a deep
neural network.
}

\section{Search}
\second{
Search is a particular form of an online algorithm, similar to the way humans play chess \citep{newell1964example}.
Unlike offline algorithms, online algorithms compute the policy for the required/observed state during game-play.
Given a state of a game, search algorithms reason about potential future states using a \lookahead{} tree.
As the number of all future states can be too large, this \lookahead{} is often truncated and the terminal states assigned a heuristics value.
This evaluation ideally would closely approximate the true value of these states under the assumption of optimality.
That is, the value of a future state is the value of the sub-problem rooted in that state, given that both players play optimally in that sub-problem.
Search algorithms then reason about optimal play within the \lookahead{} tree using these sub-problem values, resulting in an optimal policy for the state in question.
Search is thus typically a combination of i) sub-problem decomposition, ii) (learning of) value functions iii) local (\lookahead{}) reasoning method.
}

\subsection{Benefits}
\second{
There are multiple reasons search is an appealing method.

First, we get to use compute resources during the game-play and can focus these resources on the part of the problem that is immediately relevant.
Search algorithms compute a strategy for the current state to play and do not have to reason about the states irrelevant to the current situation.
Purely offline algorithms do not know what states will be visited during the play, and thus have to compute the strategy for all of them.

Second, search methods get to leverage a model of the environment.
This allows it to exactly reason about the future states and the estimated values  (or even exact values in the case of terminal states).
While the model can be potentially learned \citep{schrittwieser2020mastering}, we limit this work to settings with an exact game model.

While these reason sound appealing, what matters most is empirical performance and search methods do in fact excel in practice.
All the top computer agents in chess, go, shogi, Arimaa and many other games share one thing: they all use search methods.
And now the same argument of performance holds for imperfect information games such as poker.

}

\section{Imperfect Information Games}

\begin{figure}[ht]
    \centering
    \begin{subfigure}[b]{0.3\textwidth}
        \includegraphics[width=\textwidth]{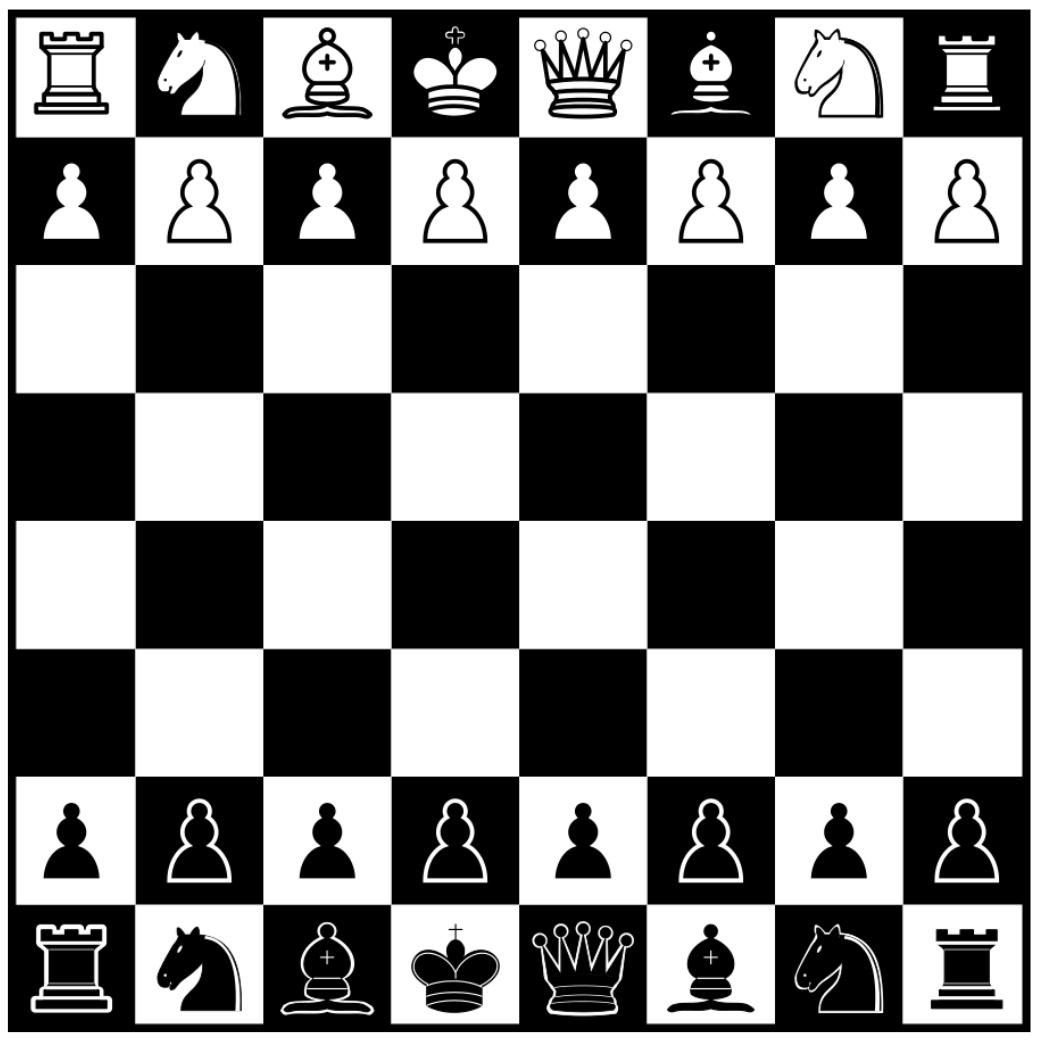}
        \caption{}
        \label{fig:intro:chess_perfect}
    \end{subfigure}
    ~ 
    \begin{subfigure}[b]{0.3\textwidth}
        \includegraphics[width=\textwidth]{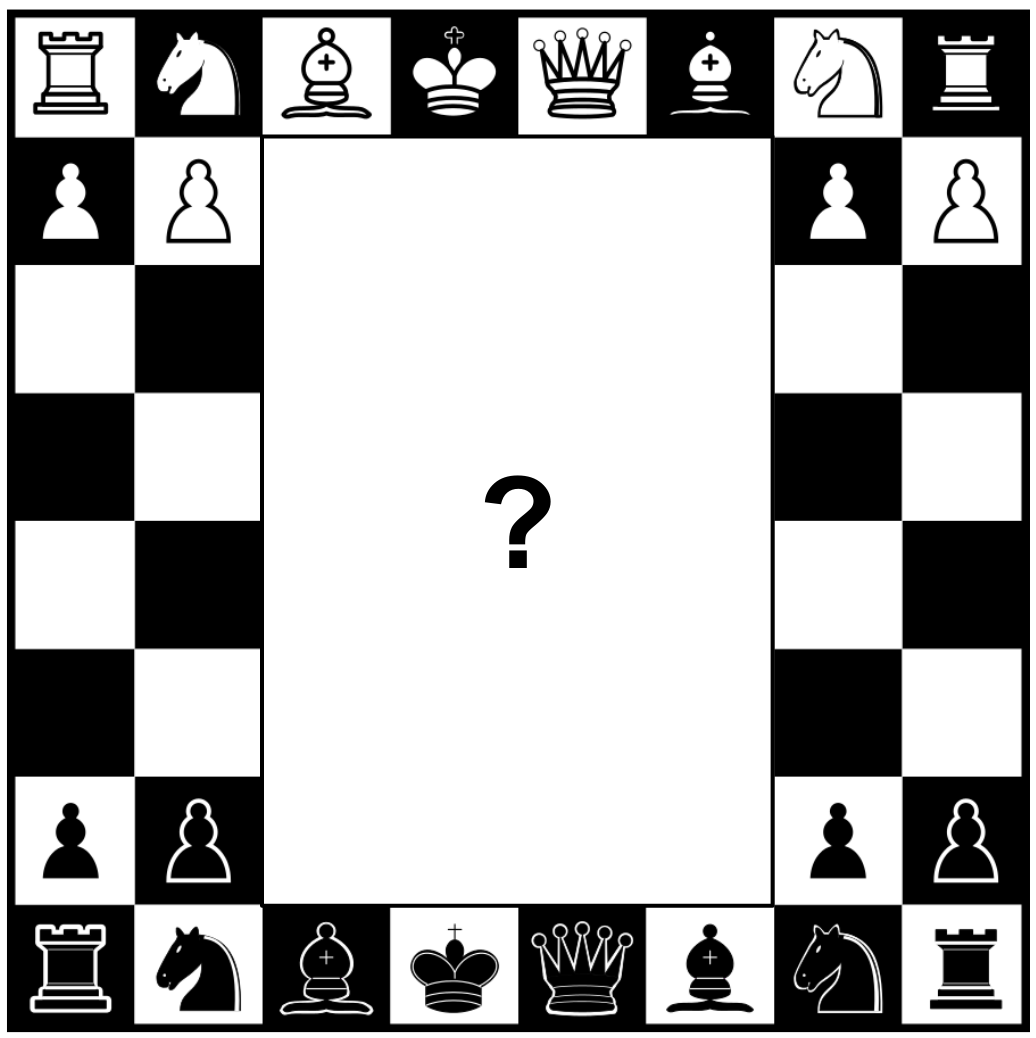}
        \caption{}
        \label{fig:intro:chess_imperfect}
    \end{subfigure}
    ~ 
    \begin{subfigure}[b]{0.3\textwidth}
        \includegraphics[width=\textwidth]{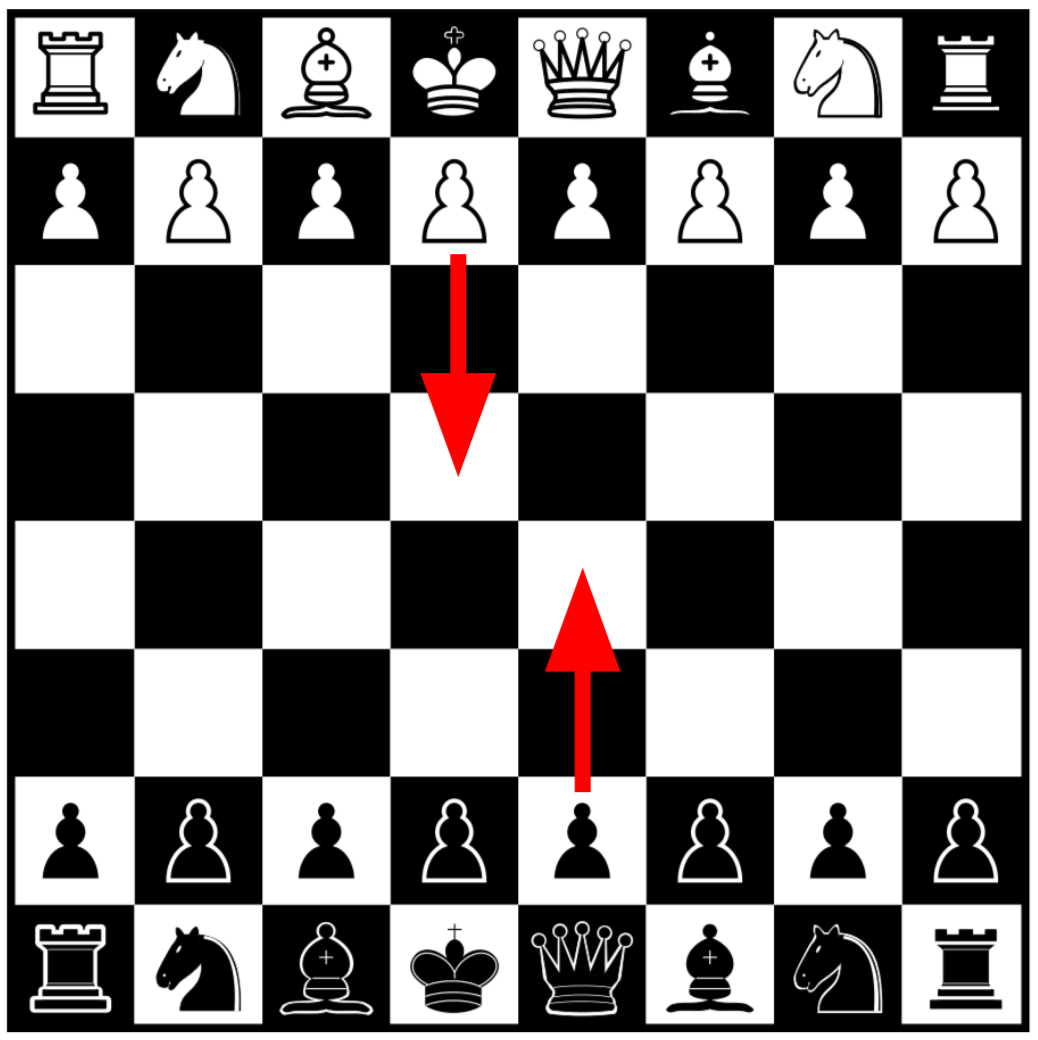}
        \caption{}
        \label{fig:intro:chess_sim_moves}
    \end{subfigure}
    \caption[Caption]{
    a) Perfect information chess.
    b) Imperfect information chess.
    c) Simultaneous moves chess.
    Chessboard image under the creative commons license\footnotemark}
    \label{fig:intro:chess_perfect_imperfect}
\end{figure}

\footnotetext{
    \href{https://commons.wikimedia.org/wiki/File:AAA\_SVG\_Chessboard\_and\_chess\_pieces\_06.svg}{AAA SVG Chessboard and chess pieces06}
    from  \href{https://commons.wikimedia.org/wiki/User:ILA-boy}{ILA-boy},
    licensed under \href{https://creativecommons.org/licenses/by-sa/3.0/deed.en}{CC BY-SA 3.0}
    }

\second{
This thesis deals with search in imperfect information games.
The defining property is that unlike in perfect information games (e.g. chess, go), agents do not have a full (or symmetric) information about the state of the game.
But why is this imperfect information property of any importance?
Consider the following three chess variants (where the first one is perfect information game, while the other two are imperfect information games).
}

\begin{enumerate}
    \itemsep0em 
    \item  Standard, perfect information chess (Figure \ref{fig:intro:chess_perfect}).
    \item Imperfect information variant, where the players do not see the opponent's pieces in the the shaded squares (Figure \ref{fig:intro:chess_imperfect}).
    \item Simultaneous moves chess, where the players make move at the same time (Figure \ref{fig:intro:chess_sim_moves}).
\end{enumerate}

\subsection{Optimal Policies}
\second{
All these chess variants make no difference when it comes to the existence or guarantees of optimal policies.
We will see that just like in perfect information chess, optimal policies still exist and have all the same desirable properties.
One distinction of the optimal policies in imperfect information games is that it often has to be stochastic.
While there always is a deterministic optimal policy in perfect information games, this is not the case for the other variants.
It turns out that the number of actions we need to randomize between is closely related to the level of uncertainty in the game \citep{schmid2014bounding}.

From the algorithmic perspective, the crucial distinction is that finding optimal policies is more challenging.
Many offline algorithms that provably work in perfect information, fail to find optimal policies in imperfect information games, most notably many popular\footnote{(Deep) Q-learning, SARSA, REINFORCE, PPO etc.} reinforcement learning algorithms.
}
\subsection{Search}
\second{
As offline algorithms for imperfect information games are more challenging, so are the online algorithms that do search.
Search consists of three fundamental components:
i) local reasoning ii) sub-problem decomposition iii) value functions.
As we will see, all of these components are substantially more complex in imperfect information settings.
While some popular perfect information based methods (e.g. Monte Carlo tree search) can be used in imperfect information \citep{whitehouse2014monte}, these methods are not sound and fail to produce optimal policies even in very small games.
For a long time, sound search has been thought to be impossible in imperfect information settings \citep{frank1998finding, lanctot2014search}.

This thesis tries to coherently introduce the reader to a history of computational game theory research
leading to this important milestone: sound search in imperfect information.
}

\section{Reinforcement Learning or Game Theory}
\label{intro:rl_vs_gt}
\second{
There are two main disciplines and corresponding communities that study sequential decision-making in multi-agent environments using games as benchmarks for progress: \rl{} and \gt{}.
In \rl{}, search and learning of value functions quickly became a common ingredient of many algorithms \citep{sutton1998introduction}.

On the other hand, \gt{} typically focused on solution concepts, complexity results and algorithms with strong theoretical guarantees - producing provably optimal policies for {\iig{}}s \citep{shoham2008multiagent}.

Are the techniques presented reinforcement learning or game theory techniques?
It is the opinion of the authors that they are both.
The techniques combine value functions, learning, search and strong theoretical convergence guarantees in both perfect and imperfect information games.
Furthermore, the line between both communities is blurring and many challenging environments  will require insights of both communities. 

}

\subsection{Dictionary}
\second{
While both communities deal with optimal behavior of agent(s) in an environment and thus use the same concepts, the language even for the basic concepts often differs.
This Babylonian confusion of tongues has mostly historical reasons.
But if we want to finish the tower of general algorithms, we need the work to be accessible to both communities.
Table \ref{tab:intro-biscuits} lists some of the basic concepts and we purposely use words from both \rl{} and \gt{} throughout this book.
}

\begin{table*}[ht]
\centering
\renewcommand{\arraystretch}{1.3}
\begin{tabular}{@{}ll@{}}
\toprule
Reinforcement Learning  & Game Theory            \\
\midrule

  Agent          &  Player                       \\ 
  Environment    &  Game                         \\ 
  Reward         &  Utility                      \\ 
  State          &  InfoSet / InfoState / State  \\
  Policy         &  Strategy  \\
\bottomrule
\end{tabular}
\caption{Reinforcement learning and game theory use different words for analogous concepts. }
\label{tab:intro-biscuits}
\end{table*}

\section{Structure}
\second{
The thesis is structured into two parts. First --- perfect information games --- looks at
search in perfect information games from a slightly unusual perspective.
Looking at search from this perspective creates basic building blocks,
concepts and ideas to be expanded to imperfect information games.

The second part --- imperfect information games --- is the core of the contributions of this thesis.
Without the first part, the ideas and concepts would be accessible only to a very small community.
}
\part{Perfect Information}

\chapter{Introduction}
\second{
The first part of this thesis deals with perfect information games.
The point of this section is not to introduce any new results or insights into search in perfect information games.
Rather, it is to look closely at the individual search components --- locality, decomposition and value functions.
We will look at search from a particular perspective.
And while this perspective is arguably not very useful for perfect information games, it will allow us to introduce search in imperfect information in a much more comprehensible way.
It will also allow us to appreciate the different complications in the fundamental search components, as we will be extending these concepts in the second section of this thesis.
}

\section{Chapters Structure}

\paragraph{Formal Models} 
\second{Chapter \ref{chap:pig-formalisms} introduces a simple game-tree formalism that is sufficient for the games in question.}

\paragraph{Offline Policies}
\second{
Chapter \ref{chap:pig-optimal_policies} motivates optimal policies.
Introduces concepts of best response, \maximin{} and Nash equilibria and discusses different ways a policy can be evaluated.
It also shows the equivalency of \maximin{} and Nash for two player zero-sum games.
We finish with some dis-concerning notes on multiplayer settings.
}

\paragraph{\Subgames{}}
\second{
Chapter \ref{chap:pig-subgames} presents a key decomposition concept of \subgames{}.
}

\paragraph{Offline Solve}
\second{
Chapter \ref{chap:pig-offline_solve} discusses offline algorithms for producing optimal policies.
We also introduce self-play style methods, including the popular approach o self-play via independent reinforcement learning.
}

\paragraph{Online Settings}
\second{
Chapter \ref{chap:pig-online_settings} shows that algorithms operating in online settings need to be thought of and evaluated differently to offline algorithms.
We show some counter-intuitive examples of an online algorithm's worst case performance.
We then introduce an appropriate formalism and metric for the online setting, including a framework connecting the performance of online and offline algorithms.
}

\paragraph{Search}
\second{
Chapter \ref{chapter-search} discusses basic building blocks of search and how they come together.
It introduces online minmax algorithm and includes well-known limited depth minmax search algorithm.
}

\paragraph{Examples}
\second{
Chapter \ref{sec:pig-examples} lists some well known examples of successful utilization of search in perfect information games.
}




\chapter{Formalisms}
\label{chap:pig-formalisms}

\second{
For perfect information games, we will settle with a single formalism for sequential decision making --- theperfect information game tree.
}

\section{Perfect Information Game Tree}
\second{
A model that fits many popular perfect information games (e.g. chess, go, checkers) is a simple \gls{game_tree} (Figure \ref{fig:simple_tree}).
In this tree, the nodes correspond to game states (e.g. chess position) while the edges correspond to the legal actions (e.g. moving a queen).
}
\begin{defn}
\label{defn:game_tree}
A perfect information game tree consists of:

\begin{itemize}
    \item Set of players $\mathcal{N} = \{0, 1, 2\}$, where the player $0$ 
    is the chance player.
    \item Set of states (tree nodes) $\mathcal{S}$.
    \item Set of terminal states $\mathcal{Z} \subseteq \mathcal{S}$.
    \item Player to act in a state $s \in (\mathcal{S} \setminus \mathcal{Z})$ denoted as
    $p(s) \in \mathcal{N}$. As a single player acts in a state, we also define $\mathcal{S}_i = \{s \in S \, | \, p(s) = i\}$.
    \item The set of available actions (tree edges) for a state $s$, denoted as $\mathcal{A}(s)$.
    \item Mapping of state-action pair to a next state (child) $c(s, a): \mathcal{S} \times \mathcal{A} \rightarrow \mathcal{S}$.
    \item Fixed strategy of a chance player  $\pi_c : s \in \mathcal{S}_0 \rightarrow \Delta(\mathcal{A}(s))$.
    \item Return for a player $i$ a terminal state $R_i : \mathcal{Z} \rightarrow \mathcal{R}$.
\end{itemize}

\end{defn}

\begin{figure}[h!]
  \centering
  \includegraphics[width=0.6\textwidth]{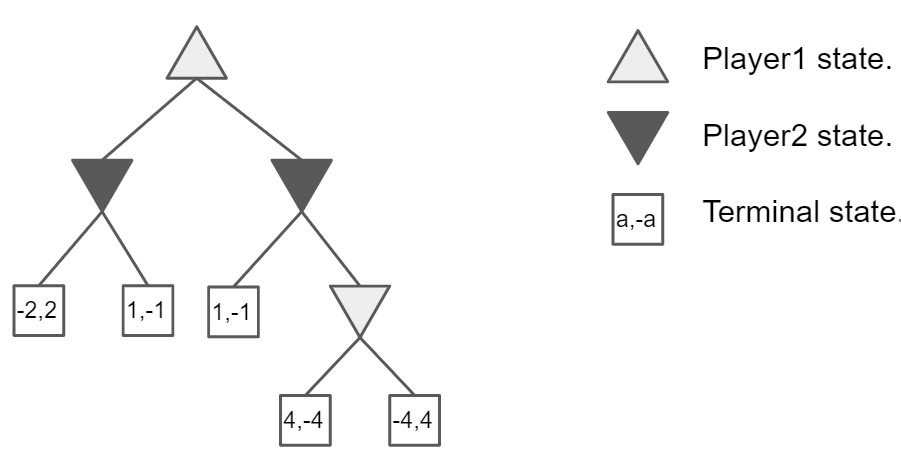}
  \caption{Simple game tree.}
  \label{fig:simple_tree}
\end{figure}

\subsection{Derived Concepts}
\second{
We denote the child state after taking the action $a \in \mathcal{A}(s)$ in a parent $s$ as $sa$, and analogically for longer sequences $sa_1a_2 \hdots a_n $.
We also use $a_i \sqsubseteq a_j$ to denote that state $a_i$ is a predecessor of $a_j$.
Furthermore, $\Delta_{R}$ is the delta of maximim/minimum utilities in the game: $\Delta = \max_{z \in \mathcal{Z}}R_i(z) - \min_{z \in \mathcal{Z}}R_i(z)$
}

\subsection{Policy}
\second{
A behavioral policy is a mapping from a state to a distribution over the available actions $s \rightarrow \Delta(\mathcal{A}(s))$.
We denote a policy of a player $i$ as $\pi_i$ and the policy in a state $s \in \mathcal{S}$ as $\pi_i(s)$.
The set of all policies of player $i$ is then $\Pi_i$.
As a single player acts in a state, we often use simply  $\pi(s)$.
Furthermore, the probability mass for an action $a \in \mathcal{A}(s)$ is $\pi_i(s,a)$.
A policy profile consists of strategies of both player $\pi = (\pi_1, \pi_2)$.
Given a policy profile, we also define a reach probability of a state $s \in \mathcal{S}$ as $P^{\pi}(s) =  \prod_{s'a \sqsubseteq s} \pi(s', a)$.
Finally, useful concept is the support (Definition~\ref{defn:pig-support})
}

\begin{defn}[Support]
\label{defn:pig-support}
Support of a policy $\pi_i$ is the set of actions with non-zero probability under that policy $supp^{\pi_i}(s) = \{ a \in \mathcal{A} \, | \, \pi(s, a) > 0 \}$.
\end{defn}

\subsection{Expected Reward}
\second{
Unlike in single agent settings, we need to know both players' policies to compute the reward.
And since both the policies and the environment (chance player) can be stochastic, we care about the expected reward.
As the reward is defined over the terminal states $\mathcal{Z}$, the expected reward of player $i$ given the strategy profile is simply (\ref{eq:pig-expected_reward}).

\begin{align}
\label{eq:pig-expected_reward}
    R_i(\pi) = R_i(\pi_1, \pi_2) = \sum_{z \in \mathcal{Z}} P^{\pi}(z) R_i(z)
\end{align}
}

\subsection{Averaging Policies}
\label{sec:pig-strategy_averaging}
\second{
One must exercise caution during any averaging in the space of behavioral strategies.
A common mistake is to simply average the strategy locally, in each individual state.
This is incorrect as for $\pi^c_1 = 0.5 \pi^a_1 + 0.5 \pi^b_1$, we expect $R_1(\pi^c_1, \pi_2) = 0.5 R_1(\pi^a_1, \pi_2) + 0.5 R_1(\pi^b_1, \pi_2)$.
Figure \ref{fig:pig_policy_averaging} shows a counterexample to this expectation for naive per-state averaging of the policy.

The issues is because the dynamics are inherently sequential --- the distribution over the states $P^\pi(z)$ depends on the full sequence.
Proper averaging in behavioral state must thus take into account the reach probability of the state (Figure \ref{fig:strat_averaging_d}).
Section \ref{sec:iig:sequence_form} will introduce sequence form representation that allows for convenient linear operations directly on this strategy representation.

\begin{figure}
    \centering
    \begin{subfigure}[b]{0.22\textwidth}
        \includegraphics[width=\textwidth]{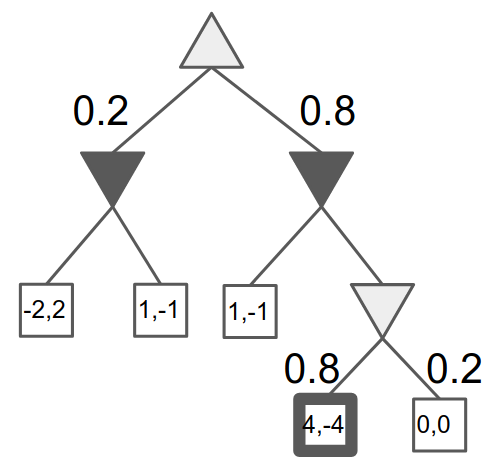}
        \caption{$\pi^a_1$}
        \label{fig:strat_averaging_a}
    \end{subfigure}
    \begin{subfigure}[b]{0.22\textwidth}
        \includegraphics[width=\textwidth]{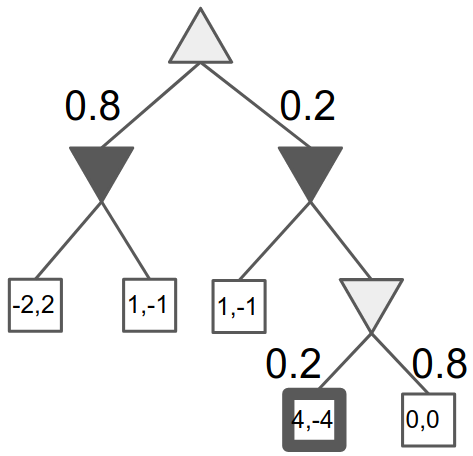}
        \caption{$\pi^b_1$}
        \label{fig:strat_averaging_b}
    \end{subfigure}
    \begin{subfigure}[b]{0.25\textwidth}
        \includegraphics[width=\textwidth]{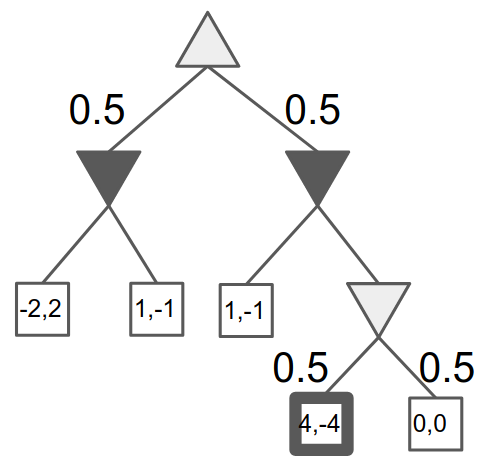}
        \caption{$\pi^c_1$ - naive average}
        \label{fig:strat_averaging_c}
    \end{subfigure}
     \begin{subfigure}[b]{0.26\textwidth}
        \includegraphics[width=\textwidth]{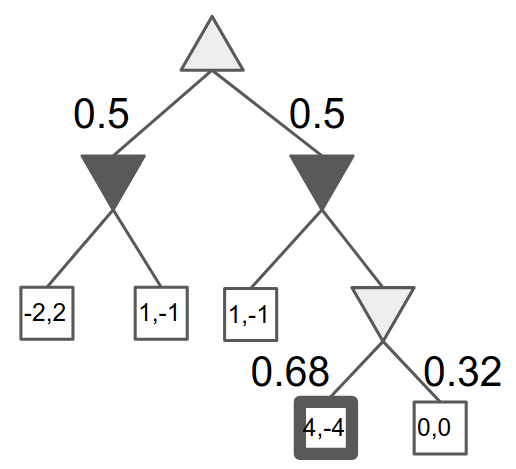}
        \caption{$\pi^c_1$ - proper average}
        \label{fig:strat_averaging_d}
    \end{subfigure}
    \caption{Consider the highlighted state $s$ against an opponent who plays to reach that state, and its reach probability $P^{\pi}(s)$.
    a) $P^{\pi^a_1}(s) = 0.64$
    b) $P^{\pi^b_1}(s) = 0.04$
    c) Naive per-state averaging of the strategy simply averages the state strategy in isolation: $P^{\pi^c_1}(s) = 0.25 \neq  0.5 \cdot 0.64 + 0.5 \cdot 0.04 $
    d) Proper averaging of the strategy takes reach account into consideration:
    }
\label{fig:pig_policy_averaging}
\end{figure}
}

\subsection{Constant-Sum and Zero-Sum Games}
\second{
For a \gls{constant_sum_game}, the reward is strictly competitive --- gain of one agent is the loss of the other.
The rewards of both players in each terminal sums to a constant (Definition \ref{def:pig-constant_sum}).
This is a property of all win/lose/draw games (e.g. chess) as well as most games where money exchange hands (e.g. poker).

\begin{defn}[Constant Sum Game]
\label{def:pig-constant_sum}
\begin{align*}
    \exists c \in \mathcal{R} \, : \, \forall z \in \mathcal{Z} \, : \, \sum_{i \in \mathcal{N}} R_i(z) = c
\end{align*}
\end{defn}

A \gls{zero_sum_game} is then simply a game for which $c=0$.
There is practically no difference between constant-sum and zero-sum games, as the reward can simply be shifted to zero with no strategic or algorithmic implications.
It is thus common to use the term zero-sum game even if the presented results hold true for any constant.
}
\chapter{Optimal Policies}
\label{chap:pig-optimal_policies}
\second{
Formal definitions for the games and policies in hand, we now come back to the question of optimal policies.
This time, we will be able to formally define these concepts and important properties.
Namely, we start with a simple concept of best response that maximizes utility given a fixed opponent.
Next we introduce a \maximin{} --- policies that optimize against the worst case scenario.
The final concept is Nash equilibrium --- a stationary strategy profile where no player has incentive to deviate.
We then discuss some connections between these concepts.
Importantly, we show that \maximin{} policies and Nash equilibrium collapse for two player zero sum games.
We finish with some notes and observations regarding multiplayer settings, where the concept of \maximin{} and Nash equilibrium differ.
}

\section{Best Response}
\label{sec:pig-best_response}
\second{
Best response is a concept that will be used throughout this book.
It is simply a policy that maximizes return against a fixed policy.

\begin{restatable}[Best Response]{defn}{restdefbestresponse}
\label{defn:best_response}
     Best response against a policy $\pi_i$ is:
     \begin{equation*}
         \argmax_{\pi_{-i} \in \Pi_{-i}} R_{-i}(\pi_i, \pi_{-i})
     \end{equation*}
\end{restatable}

We use $\brset{}(\pi_i)$ to denote the set of best response policies against the policy $\pi_i$.
Note that for zero-sum games, opponent maximizing their reward is equivalent to opponent minimizing our reward.

\begin{equation*}
    \argmax_{\pi_{-i}} R_{-i}(\pi_i, \pi_{-i}) = \argmin_{\pi_{-i}} R_i(\pi_i, \pi_{-i})
\end{equation*}

As this means the player's value against any best-response strategy is unique, we denote this unique value as $BRV_i(\pi_i)$.
\begin{align*}
    BRV_i(\pi_i) = \min_{\pi_{-i}}  R_i(\pi_i, \pi_{-i}) = -\max_{\pi_{-i}} R_{-i}(\pi_i, \pi_{-i})
\end{align*}

}

\subsection{Properties}
\second{
How hard is it to compute a best response?
The first look at the definition is not necessarily helpful from the computation perspective --- we are taking an $\argmin$ over all possible policies.
Fortunately, computing a best response strategy can be done via a single bottom-up pass traversal of the game tree (Algorithm \ref{alg:pig_best_response}).
As the algorithm produces deterministic policies, we immediately get the Lemma~\ref{lem:pig_deterministic_br}
}

\begin{lemma}
\label{lem:pig_deterministic_br}
There is always a deterministic best response.
\end{lemma}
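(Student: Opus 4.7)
The plan is to prove the lemma by backward induction on the game tree, constructing an explicit deterministic best response. Fix the player whose policy $\pi_i$ is held constant, and let $-i$ denote the best responder.

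First, I would define a value function $V(s)$ for every node $s \in \mathcal{S}$ via the bottom-up recursion: at terminals, $V(z) = R_{-i}(z)$; at chance nodes $s \in \mathcal{S}_0$, $V(s) = \sum_{a} \pi_c(s,a)\, V(c(s,a))$; at nodes $s \in \mathcal{S}_i$ of the fixed player, $V(s) = \sum_{a} \pi_i(s,a)\, V(c(s,a))$; and at nodes $s \in \mathcal{S}_{-i}$ of the best responder, $V(s) = \max_{a \in \mathcal{A}(s)} V(c(s,a))$. Simultaneously, I would define a deterministic policy $\pi_{-i}^\star$ by choosing, at each $s \in \mathcal{S}_{-i}$, an arbitrary but fixed $a^\star(s) \in \argmax_{a} V(c(s,a))$, so that $\pi_{-i}^\star(s, a^\star(s)) = 1$.

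Second, I would show by induction on the height of the subtree rooted at $s$ that (a) $V(s)$ equals the expected reward-to-go for player $-i$ under the profile $(\pi_i, \pi_{-i}^\star)$ in that subtree, and (b) no alternative policy $\pi_{-i} \in \Pi_{-i}$ yields a strictly larger subtree expectation. The base case (terminals) is immediate. The only nontrivial inductive step is at $s \in \mathcal{S}_{-i}$, where for any $\pi_{-i}$ the inductive hypothesis on the children gives
\begin{equation*}
\sum_{a \in \mathcal{A}(s)} \pi_{-i}(s,a)\, V(c(s,a)) \;\leq\; \max_{a \in \mathcal{A}(s)} V(c(s,a)) \;=\; V(s),
\end{equation*}
with equality attained by placing all mass on $a^\star(s)$, so the pure choice is already optimal. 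At chance nodes and at $\mathcal{S}_i$-nodes there is nothing to optimize, and the inductive hypothesis on the children propagates the expected value through a convex combination whose weights are fixed.

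Instantiating the result at the root yields $R_{-i}(\pi_i, \pi_{-i}^\star) \geq R_{-i}(\pi_i, \pi_{-i})$ for every $\pi_{-i} \in \Pi_{-i}$, hence $\pi_{-i}^\star \in \brset{}(\pi_i)$, and by construction $\pi_{-i}^\star$ is deterministic. The one place one must stay careful — and the closest thing to an obstacle — is the distinction between the chance/opponent nodes (genuine weighted averages that must be treated as given) and the best-responder's nodes (where a pointwise $\max$ is taken and a single pure action attains it); conflating the two would spuriously suggest that randomization at $\mathcal{S}_{-i}$-nodes could help, whereas the displayed inequality rules this out. This also matches, and justifies, the output of Algorithm~\ref{alg:pig_best_response}, which selects a single action per decision node in a single bottom-up pass.
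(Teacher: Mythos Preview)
Your proof is correct and follows essentially the same approach as the paper: the paper simply observes that Algorithm~\ref{alg:pig_best_response} performs a bottom-up pass selecting a single maximizing action at each $\mathcal{S}_{-i}$-node, and concludes the lemma from the fact that the resulting policy is deterministic. Your write-up is a more careful, fully spelled-out version of that same backward-induction argument, supplying the inductive justification that the paper leaves implicit.
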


\begin{algorithm}
\caption{Best Response}
\label{alg:pig_best_response}
\begin{algorithmic}[1]

\Function{BestResponseDFS}{$s \in S, i \in N$ }
 \State \Comment{Terminal state, return terminal utility.}
  \If{$s \in \mathcal{Z}$}
    \State \Return ${R_i(s)}$
  \EndIf
  \State \Comment{Compute the best response values of the children states.}
  \For{$a : \mathcal{A}(s)$}
    \State $v[sa]$ = BestResponseDFS($sa$, $i$)
  \EndFor

  \State
  \If{p(s) = i} \Comment{Best response decision, taking max.}
    \State $\pi(s) = \argmax_{a \in \mathcal{A}(a)} v[sa]$
    \State \Return $\max_{a \in \mathcal{A}(a)} v[sa]$
  \Else \Comment{Other player's decision, use their strategy}
    \State \Return $\sum_{a \in \mathcal{A}(a)} \pi(s, a) v[sa]$
\EndIf

\EndFunction
\State
\Function{BestResponse}{}
\Comment{Start the DFS computation in the initial state}
\State BestResponseDFS($s_0$, $i$)
\EndFunction
\end{algorithmic}
\end{algorithm}

\section{Maximin}
\second{
The worst-case reasoning (Section \ref{sec:worst_case_reasoning}) motivates a policy that maximizes the reward against the worst case opponent.
We are now able to formally introduce the \gls{maximin_policy} (Definition \ref{defn:maximin_policy}).

\begin{restatable}[Maximin Policy]{defn}{restdefminmaxpolicy}
\label{defn:maximin_policy}
    Maximin policy of a player $i$ is:
    \begin{equation}
        \argmax_{\pi_i \in \Pi_i} \min_{\pi_{-i} \in \Pi_{-i}} R_i(\pi_i, \pi_{-i}) = \argmax_{\pi_i \in \Pi_i} BRV_i(\pi_i)
    \label{eq:pig-maxmin-policy}
    \end{equation}
\end{restatable}

We denote the set of all such \maximin{} policies for a player $i$ as $\maximinset{}_i$.
\begin{align}
    \maximinset{}_i = \{\pi_i \, | \, BRV_i(\pi_i) = \max_{\pi'_i} BRV_i(\pi'_i) \}
\label{eq:pig-minmax_maxmin_set}
\end{align}

Furthermore, the corresponding value of the maximin policy is the \gls{maximin_value}, denoted as $\underline{v_i}$ (Definition \ref{defn:maximin_value})

\begin{restatable}[Maximin Value]{defn}{restdefminmaxvalue}
\label{defn:maximin_value}
    \Maximin{} value is the value of the \maximin{} policy:
    \begin{equation}
        \underline{v_i} = \max_{\pi_i \in \Pi_i} \min_{\pi_{-i} \in \Pi_{-i}} R_i(\pi_i, \pi_{-i}) = \max_{\pi_i \in \Pi_i} BRV_i(\pi_i)
    \label{eq:pig-maxmin-value}
    \end{equation}
\end{restatable}




\subsection{\Minimax{} Theorem}
We will now investigate the important relation between the players' respective \maximin{} values $\underline{v_i}$ and $\underline{v_{-i}}$.

\begin{align}
    \underline{v_i} &= \max_{\pi_i} \min_{\pi_{-i}} R_i(\pi_i, \pi_{-i}) \label{eq:pig-maxmin_minmax_1} \\
    \underline{v_{-i}} &= \max_{\pi_{-i}} \min_{\pi_{i}} R_{-i}(\pi_i, \pi_i) \label{eq:pig-maxmin_minmax_2}
\end{align}

First, we will re-phrase equation (\ref{eq:pig-maxmin_minmax_2}) in terms of $R_i$:

\begin{align}
    \underline{v_{-i}} &= \max_{\pi_{-i}} \min_{\pi_{i}} R_{-i}(\pi_i, \pi_i) \label{eq:pig-maxmin_minmax_3} \\
                       &= \max_{\pi_{-i}} \min_{\pi_{i}} -R_{i}(\pi_i, \pi_i) \label{eq:pig-maxmin_minmax_4} \\
                       &= \max_{\pi_{-i}} - \max_{\pi_{i}} R_{i}(\pi_i, \pi_i) \label{eq:pig-maxmin_minmax_5} \\
                       &= - \min_{\pi_{-i}} \max_{\pi_{i}} R_{i}(\pi_i, \pi_i) \label{eq:pig-maxmin_minmax_6}
\end{align}

Theorem \ref{thm:pig-minmax} then states a critical result --- the maximin values are in balance $\underline{v_i} = -\underline{v_{-i}}$.
We refer to this unique value as the \gls{game_value} and denote it as $GV_i$.

\begin{restatable}[Minimax Theorem]{thm}{restminmaxtheorem}
\begin{align}
    \max_{\pi_i} \min_{\pi_{-i}} R_i(\pi_i, \pi_{-i}) = \min_{\pi_{-i}} \max_{\pi_i} R_i(\pi_{i}, \pi_{i})
\label{eq:pig-minmax_maxming}
\end{align}
\label{thm:pig-minmax}
\end{restatable}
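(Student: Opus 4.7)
The plan is to exploit the tree structure of a perfect information game and prove the statement by backward induction, rather than by invoking the full von Neumann minimax theorem or LP duality. Throughout I would write the theorem as $\max_{\pi_i}\min_{\pi_{-i}} R_i(\pi_i,\pi_{-i}) = \min_{\pi_{-i}}\max_{\pi_i} R_i(\pi_i,\pi_{-i})$ and split the equality into two inequalities. The easy direction is weak duality: for any fixed $\pi_i,\pi_{-i}$ one has $\min_{\pi'_{-i}} R_i(\pi_i,\pi'_{-i}) \leq R_i(\pi_i,\pi_{-i}) \leq \max_{\pi'_i} R_i(\pi'_i,\pi_{-i})$; taking $\max_{\pi_i}$ on the left and $\min_{\pi_{-i}}$ on the right yields $\max_{\pi_i}\min_{\pi_{-i}} R_i \leq \min_{\pi_{-i}}\max_{\pi_i} R_i$ without using any structure of the game.

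For the non-trivial direction I would construct explicit strategies whose values sandwich both sides. Run backward induction on the game tree: for $s\in\mathcal{Z}$ set $v(s) = R_i(s)$, and for internal $s$ set $v(s) = \max_{a\in\mathcal{A}(s)} v(sa)$ if $p(s)=i$, $v(s) = \min_{a\in\mathcal{A}(s)} v(sa)$ if $p(s)=-i$, and $v(s) = \sum_{a}\pi_c(s,a)\,v(sa)$ at chance nodes. Simultaneously define deterministic policies $\pi_i^\star$ and $\pi_{-i}^\star$ that at each of their own states pick the argmax (respectively argmin) child; these are precisely the strategies that a two-sided extension of Algorithm~\ref{alg:pig_best_response} returns.

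The core lemma I would prove, again by induction from the leaves upward, is the pair of inequalities $R_i(\pi_i^\star,\pi_{-i}) \geq v(s_0)$ for every $\pi_{-i}\in\Pi_{-i}$ and $R_i(\pi_i,\pi_{-i}^\star) \leq v(s_0)$ for every $\pi_i\in\Pi_i$. Proving the subtree version $\mathbb{E}[R_i \mid \text{reach } s, \pi_i^\star, \pi_{-i}] \geq v(s)$ by induction: at an $i$-node, $\pi_i^\star$ picks an action $a^\star$ with $v(sa^\star)=v(s)$ and the IH applied to that child gives the bound; at an opponent node every child satisfies $v(sa)\geq v(s)$ by definition of the min, so any behavioural mixture $\pi_{-i}(s,\cdot)$ also produces a value $\geq v(s)$ by linearity of expectation; chance nodes are handled identically using the fixed distribution $\pi_c(s,\cdot)$. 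The second inequality is entirely symmetric.

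Combining these bounds yields $\max_{\pi_i}\min_{\pi_{-i}} R_i \geq \min_{\pi_{-i}} R_i(\pi_i^\star,\pi_{-i}) \geq v(s_0) \geq \max_{\pi_i} R_i(\pi_i,\pi_{-i}^\star) \geq \min_{\pi_{-i}}\max_{\pi_i} R_i$, which together with weak duality forces equality and incidentally identifies the common game value as $v(s_0)$. The step I expect to require the most care is the inductive bound at opponent nodes: I need it to hold against \emph{every} behavioural $\pi_{-i}$, not just the deterministic ones. The clean way around this is exactly the observation behind Lemma~\ref{lem:pig_deterministic_br} --- a convex combination of child values each at least $v(s)$ is still at least $v(s)$ --- so stochastic opponent play never helps, and the induction closes.
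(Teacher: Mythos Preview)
Your argument is correct and takes a genuinely different route from the paper. The paper defers the proof to Section~\ref{sec:iig-minmax_theorem}, where it formulates the row and column players' maximin problems as a primal--dual pair of linear programs (Equations~\ref{eq:iig:minmax_row_lp} and~\ref{eq:iig:minmax_column_lp}) and invokes strong LP duality to conclude that the optimal values coincide; this argument is carried out for matrix games and hence covers the imperfect-information case as well, with perfect information as a special instance. Your proof instead exploits the perfect-information tree directly via backward induction, combining weak duality with an explicit construction of deterministic saddle-point strategies $(\pi_i^\star,\pi_{-i}^\star)$ and the value $v(s_0)$. What your approach buys is elementarity and constructiveness: no optimisation machinery is needed, the game value is produced as a concrete number, and you get for free that deterministic optimal policies exist (the content of Lemma~\ref{lem:pig_deterministic_br} and the minimax Algorithm~\ref{alg:pig-minmax}). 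What the paper's approach buys is generality: the LP-duality argument is the one that survives when information sets are non-trivial and backward induction on world states no longer yields a well-defined policy, which is exactly the setting the thesis ultimately cares about.
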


The minimax theorem, proven by John Von Neumann in $1928$ \citep{neumann1928theorie} has dramatic consequences for two player zero sum games (the proof is for both perfect and imperfect information games).
Von Neumann himself later wrote ``As far as I can see, there could be no theory of games on these bases without that theorem'' \citep{von1953communication}.
We will prove the minimax theorem for the more general case of imperfect information in Section \ref{sec:iig-minmax_theorem}.
}

\section{Nash equilibrium}
\second{
Another key solution concept --- \gls{nash_equilibrium} --- is based on a stationary property.
The idea is that if for a strategy profile $(\pi_i, \pi_{-i})$, none of the players benefit by deviating from their policy, the profile forms an equilibrium (Definition~\ref{def:iig-nash}).

\begin{restatable}[Nash Equilibrium]{defn}{restdefnash}
\label{def:iig-nash}
Strategy profile $(\pi_i, \pi_{-i})$ forms a Nash equilibrium if none of the players benefit by deviating from their policy.
\begin{equation*}
   \forall i \in N, \forall \pi'_i \, : \, R_i(\pi_i, \pi_{-i}) \geq  R_i(\pi'_i, \pi_{-i})
\end{equation*}
\end{restatable}

We denote the set of all such strategy profiles as \gls{neq} --- strategy profile $(\pi_1, \pi_2)$ forms a Nash equilibrium iff $(\pi_1, \pi_2) \in \nashset$.
Note that the $\nashset$ can also be formulated using the best response (Lemma~\ref{lem:pig-nash_best_response})

\begin{lemma}
\label{lem:pig-nash_best_response}
Strategy profile $(\pi_i, \pi_{-i})$ forms a Nash equilibrium iff
\begin{equation*}
    \forall i \in N \, : \, \pi_i \in \brset(\pi_{-i})
\end{equation*}
\end{lemma}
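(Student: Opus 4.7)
The plan is to prove the lemma by unwinding the two definitions and showing that the Nash equilibrium inequality and the best-response characterization are literally the same condition, quantifier-for-quantifier. Since both directions come down to a reformulation, the argument will be short, and the ``main obstacle'' is really just bookkeeping with the notation for $\brset$, which was defined in terms of player $-i$'s best response to player $i$ but is invoked here with the roles swapped.

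First I would restate both sides in a uniform form. By Definition \ref{defn:best_response} applied with the roles of $i$ and $-i$ exchanged, the set $\brset(\pi_{-i})$ consists of exactly those $\pi_i \in \Pi_i$ that satisfy
\begin{equation*}
    \pi_i \in \argmax_{\pi'_i \in \Pi_i} R_i(\pi'_i, \pi_{-i}),
\end{equation*}
which is in turn equivalent to the inequality
\begin{equation*}
    R_i(\pi_i, \pi_{-i}) \geq R_i(\pi'_i, \pi_{-i}) \quad \text{for every } \pi'_i \in \Pi_i.
\end{equation*}
This observation is the entire content of the lemma up to quantification over $i$.

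For the forward direction ($\Rightarrow$), I would assume $(\pi_i, \pi_{-i}) \in \nashset$ and fix an arbitrary player $i \in N$. The Nash condition from Definition \ref{def:iig-nash} then yields $R_i(\pi_i, \pi_{-i}) \geq R_i(\pi'_i, \pi_{-i})$ for every $\pi'_i \in \Pi_i$, which by the equivalence above is exactly $\pi_i \in \brset(\pi_{-i})$. For the reverse direction ($\Leftarrow$), I would assume $\pi_i \in \brset(\pi_{-i})$ for every $i \in N$; applying the equivalence in the other direction, for each $i$ and each $\pi'_i \in \Pi_i$ we obtain $R_i(\pi_i, \pi_{-i}) \geq R_i(\pi'_i, \pi_{-i})$, which is precisely the Nash equilibrium condition. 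The only subtlety, and the thing I would be careful about, is the asymmetric way $\brset$ was originally introduced (as best responses of the opponent): I would write out explicitly the relabeling $i \leftrightarrow -i$ so that the two-player convention is unambiguous and the proof reads cleanly.
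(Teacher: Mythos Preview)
Your proposal is correct and takes essentially the same approach as the paper: the paper's proof is the single line ``Follows directly from the definition of $\nashset$ and $\brset$,'' and what you have written is simply the explicit unwinding of that remark, including the careful $i \leftrightarrow -i$ relabeling that the paper leaves implicit.
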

\begin{proof}
Follows directly from the definition of $\nashset$ and $\brset$.
\end{proof}

}

\section{Nash Equilibrium vs \Maximin{}}
\label{section-Nash_minmax}
\second{
It might be unclear why we introduced both \maximin{} and Nash equilibria as solution concepts.
While the properties discussed do sound intriguing, which solution concept should we prefer: the stationary property of Nash Equilibria, or the worst case reasoning of \maximin{}?
While the \maximin{} strategy is a single strategy that optimizes against the worst case, Nash equilibria was introduced as a strategy profile (that is, a pair of strategies).

It turns out the these solution concepts in two player zero sum games are the same!

\begin{thm}[Nash is \maximin{}]
For two player zero-sum  games:
\begin{equation*}
    \label{thm-Nash_implies_minmax}
    (\pi_1, \pi_2) \in \nashset \implies \pi_1 \in \maximinset_1 \land \pi_2 \in \maximinset_2
\end{equation*}
\label{thm:pig-nash_implies_minmax}
\end{thm}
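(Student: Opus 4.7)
The plan is to prove the implication directly from the definitions, using Lemma~\ref{lem:pig-nash_best_response} together with the zero-sum property, and not needing the Minimax theorem for this direction. I will establish $\pi_1 \in \maximinset_1$ and then note that the argument for $\pi_2$ is symmetric.

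First I would unpack the Nash condition: by Lemma~\ref{lem:pig-nash_best_response}, $\pi_1 \in \brset(\pi_2)$ and $\pi_2 \in \brset(\pi_1)$. Using the zero-sum property (player 2 maximizing $R_2$ is equivalent to minimizing $R_1$), the second condition gives
\begin{equation*}
    R_1(\pi_1, \pi_2) = \min_{\pi'_2 \in \Pi_2} R_1(\pi_1, \pi'_2) = BRV_1(\pi_1),
\end{equation*}
while the first gives $R_1(\pi_1, \pi_2) = \max_{\pi'_1 \in \Pi_1} R_1(\pi'_1, \pi_2)$. So the equilibrium value is simultaneously the best-response value of $\pi_1$ and a max over $\pi'_1$ of a quantity involving the fixed opponent $\pi_2$.

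Next I would establish the key inequality $BRV_1(\pi_1) \geq \underline{v_1}$. For every $\pi'_1 \in \Pi_1$, trivially $R_1(\pi'_1, \pi_2) \geq \min_{\pi'_2} R_1(\pi'_1, \pi'_2) = BRV_1(\pi'_1)$. Taking the maximum over $\pi'_1$ on both sides yields
\begin{equation*}
    BRV_1(\pi_1) = \max_{\pi'_1} R_1(\pi'_1, \pi_2) \;\geq\; \max_{\pi'_1} BRV_1(\pi'_1) = \underline{v_1}.
\end{equation*}
Combined with the opposite inequality $BRV_1(\pi_1) \leq \max_{\pi'_1} BRV_1(\pi'_1) = \underline{v_1}$, which holds by the very definition of $\underline{v_1}$, I get $BRV_1(\pi_1) = \underline{v_1}$, so $\pi_1 \in \maximinset_1$ by the characterisation in~(\ref{eq:pig-minmax_maxmin_set}). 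The analogous argument with the roles of the players swapped (again using zero-sum to turn maximising $R_2$ into minimising $R_1$) gives $\pi_2 \in \maximinset_2$.

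The only mild subtlety — and hence the main thing to be careful about — is the bookkeeping around the sign flip from the zero-sum property when translating ``$\pi_2$ best-responds to $\pi_1$'' into ``$\pi_2$ attains the minimum of $R_1(\pi_1, \cdot)$''. Everything else is an elementary manipulation of $\min$/$\max$. Note in particular that no appeal to Theorem~\ref{thm:pig-minmax} is needed for this direction; the minimax theorem will be what one needs for the converse implication ($\maximin$-pair $\Rightarrow$ Nash), not for the direction claimed here.
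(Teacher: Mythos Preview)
Your proof is correct and follows essentially the same approach as the paper: both use Lemma~\ref{lem:pig-nash_best_response} together with the zero-sum identity to pin down $R_1(\pi_1,\pi_2)=BRV_1(\pi_1)$, and then exploit the elementary inequality $R_1(\pi'_1,\pi_2)\ge BRV_1(\pi'_1)$ to conclude maximin optimality. The only cosmetic difference is packaging: the paper phrases this as a contradiction (assume some $\pi^*_1$ with $BRV_1(\pi^*_1)>BRV_1(\pi_1)$ and derive a violation of the Nash condition), whereas you argue directly that $BRV_1(\pi_1)\ge\underline{v_1}$; your observation that Theorem~\ref{thm:pig-minmax} is not needed for this direction matches the paper as well.
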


\begin{proof}
By contradiction. 
Denote the worst case value of $\pi_i$ as $a=BRV_i(\pi_i)$. 
Note that since the $\pi_{-i}$ is a best response (Lemma \ref{lem:pig-nash_best_response}) and  $R_i = -R_{-i}$ (zero-sum game),
it must be the case that $a = R_i(\pi_i, \pi_{-i})$.
Let there be another policy $\pi^*_i$ that does strictly better in worst case, that is, 
 $b=BRV_i(\pi^*_i)$ and $b > a$. 
 Then the player $i$ could improve by switching to policy
 $\pi^*_i$ : $R_1(\pi^*_i, \pi_{-i}) > R_1(\pi_i, \pi_{-i})$, contradicting the definition of Nash equilibrium.

\end{proof}

\begin{thm}[\Minimax{} is Nash]
For two player, zero-sum  games:
\begin{equation*}
   \pi^*_1 \in \maximinset_1 \land  \pi^*_2 \in \maximinset_2 \implies (\pi^*_1, \pi^*_2) \in \nashset
\end{equation*}
\label{thm:pig-minmax_implies_nash}
\end{thm}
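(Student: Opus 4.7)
The plan is to leverage the minimax theorem (Theorem~\ref{thm:pig-minmax}), which has already been established and tells us that the common value $GV_1 = \underline{v_1} = -\underline{v_2}$ is well-defined for two-player zero-sum games. From this, any maximin policy of player $1$ guarantees at least $GV_1$ against every opponent policy, and any maximin policy of player $2$ forces player $1$ down to at most $GV_1$ regardless of what player $1$ plays. Putting these two one-sided bounds together at the profile $(\pi^*_1, \pi^*_2)$ pins $R_1(\pi^*_1, \pi^*_2) = GV_1$ exactly, at which point the Nash condition becomes immediate.

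\textbf{Step 1: Rewrite maximin membership as uniform bounds.} From $\pi^*_1 \in \maximinset_1$ and Definition~\ref{defn:maximin_value}, $BRV_1(\pi^*_1) = \underline{v_1}$, so
\[
  R_1(\pi^*_1, \pi_2) \;\geq\; \underline{v_1} \quad \text{for every } \pi_2 \in \Pi_2.
\]
Analogously, $\pi^*_2 \in \maximinset_2$ gives $R_2(\pi_1, \pi^*_2) \geq \underline{v_2}$ for every $\pi_1$. Using the zero-sum identity $R_2 = -R_1$ this becomes
\[
  R_1(\pi_1, \pi^*_2) \;\leq\; -\underline{v_2} \quad \text{for every } \pi_1 \in \Pi_1.
\]

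\textbf{Step 2: Invoke the minimax theorem to match the bounds.} Theorem~\ref{thm:pig-minmax}, together with the derivation in equations~(\ref{eq:pig-maxmin_minmax_3})--(\ref{eq:pig-maxmin_minmax_6}), yields $\underline{v_1} = -\underline{v_2} = GV_1$. Substituting $\pi_2 = \pi^*_2$ into the first bound and $\pi_1 = \pi^*_1$ into the second gives
\[
  GV_1 \;\leq\; R_1(\pi^*_1, \pi^*_2) \;\leq\; GV_1,
\]
hence $R_1(\pi^*_1, \pi^*_2) = GV_1$.

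\textbf{Step 3: Verify the Nash conditions.} For any deviation $\pi'_1$ of player $1$, the second bound (with $\pi_1 = \pi'_1$) gives $R_1(\pi'_1, \pi^*_2) \leq GV_1 = R_1(\pi^*_1, \pi^*_2)$, so player $1$ cannot profit by deviating. Similarly, for any $\pi'_2$, the first bound gives $R_1(\pi^*_1, \pi'_2) \geq GV_1$, i.e.\ $R_2(\pi^*_1, \pi'_2) \leq -GV_1 = R_2(\pi^*_1, \pi^*_2)$, so player $2$ cannot profit either. By Definition~\ref{def:iig-nash}, $(\pi^*_1, \pi^*_2) \in \nashset$.

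The only non-trivial ingredient is Step~2, and that work has already been done upstream (the minimax theorem and the algebraic rewriting of $\underline{v_{-i}}$ in terms of $R_i$). Everything else is bookkeeping on the zero-sum identity, so I expect no real obstacle beyond keeping the $R_i$ vs.\ $R_{-i}$ signs straight.
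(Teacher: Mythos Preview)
Your proof is correct and uses essentially the same ingredients as the paper: the maximin lower bounds for each player, the zero-sum identity $R_2 = -R_1$, and the minimax theorem $\underline{v_1} = -\underline{v_2}$. The paper phrases the argument as a contradiction (assume some $\pi'_1$ improves, then chain the inequalities to get $\underline{v_1} > \underline{v_1}$), whereas you give the direct version by first pinning $R_1(\pi^*_1,\pi^*_2) = GV_1$ and then reading off the no-deviation conditions; the logical content is the same.
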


\begin{proof}
Let $\underline{v_i}, \underline{v_{-i}}$ be the $\max\min$ values of the respective players.
We will use the fact that $\underline{v_i} = -\underline{v_{-i}}$ (Theorem \ref{thm:pig-minmax}) to show a contradiction.
Assume that the strategy profile is not Nash and there exists $\pi_1'$ such that
$u_1(\pi_1', \pi^*_2) > u_1(\pi^*_1, \pi^*_2)$:
\begin{align*}
    u_1(\pi'_1, \pi^*_2) & > u_1(\pi^*_1, \pi^*_2) \geq \underline{v_i} \\
    -u_2(\pi'_1, \pi^*_2) = u_1(\pi'_1, \pi^*_2) & > u_1(\pi^*_1, \pi^*_2) \geq \underline{v_i} \\
    \underline{v_i} = -\underline{v_{-i}} \geq -u_2(\pi'_1, \pi^*_2) = u_1(\pi'_1, \pi^*_2) & > u_1(\pi^*_1, \pi^*_2) \geq \underline{v_i} \\
\end{align*}

Which concludes the contradiction.
\end{proof}

Putting Theorem~\ref{thm:pig-nash_implies_minmax} and Theorem~\ref{thm:pig-minmax_implies_nash} together results in Corollary~\ref{thm:pig-minmax_is_nash}.

\begin{corollary}[\Maximin{} and Nash Collapse]
For two player zero sum games, solution concepts of maxmin and Nash equilibria are the same.
\begin{align}
    \pi_1 \in \mathbb{MAXMIN}_1 \land  \pi_2 \in \mathbb{MAXMIN}_2 \iff (\pi_1, \pi_2) \in \NEQ
\end{align}
\label{thm:pig-minmax_is_nash}
\end{corollary}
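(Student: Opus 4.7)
The plan is to observe that Corollary~\ref{thm:pig-minmax_is_nash} is just the biconditional packaging of the two theorems immediately preceding it, so the proof will consist of nothing more than citing each direction. Specifically, the forward implication ($\Rightarrow$) --- that a pair of maximin policies forms a Nash equilibrium --- is precisely Theorem~\ref{thm:pig-minmax_implies_nash}, while the reverse implication ($\Leftarrow$) --- that any Nash equilibrium profile has both components in the maximin set --- is precisely Theorem~\ref{thm:pig-nash_implies_minmax}. Conjoining the two implications gives the equivalence asserted by the corollary.

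Because both source theorems have already been established in full, there is effectively no remaining work for the corollary itself beyond invoking them. The only point worth flagging is that the real load-bearing step in the whole equivalence sits inside Theorem~\ref{thm:pig-minmax_implies_nash}, which invokes the minimax theorem (Theorem~\ref{thm:pig-minmax}) through the identity $\underline{v_i} = -\underline{v_{-i}}$. Without that identity the $(\Rightarrow)$ direction would fail, and so the collapse of maximin and Nash is, in a precise sense, as deep as the minimax theorem itself; the other implication, by contrast, holds in any two-player zero-sum game by a direct deviation argument and requires no such duality.

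Accordingly, I expect the written proof to be essentially two lines: apply Theorem~\ref{thm:pig-minmax_implies_nash} for $(\Rightarrow)$, apply Theorem~\ref{thm:pig-nash_implies_minmax} for $(\Leftarrow)$, and conclude. No further calculation is needed, and there is no technical obstacle to anticipate beyond those already resolved in the two source theorems and in the minimax theorem on which one of them rests.
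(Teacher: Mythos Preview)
Your proposal is correct and matches the paper's own treatment exactly: the text immediately preceding the corollary states that ``Putting Theorem~\ref{thm:pig-nash_implies_minmax} and Theorem~\ref{thm:pig-minmax_implies_nash} together results in Corollary~\ref{thm:pig-minmax_is_nash},'' which is precisely the two-line invocation you describe. Your additional remark that the minimax theorem is the load-bearing ingredient (via Theorem~\ref{thm:pig-minmax_implies_nash}) is accurate and a nice observation, though the paper does not spell this out at the corollary itself.
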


}

\section{Properties of Optimal Policies}
\second{
We now have formal understanding of the \maximin{} and Nash equilibria solutions concepts.
We also know that in our settings, these concepts are the same.
But what motivation do we have to follow such policies?
}

\subsection{Guaranteed Value}
\second{
Following an optimal policy guarantees the \maximin{} game value against any opponent.
Furthermore, Theorem \ref{thm:pig-minmax} tells us that the \maximin{} value of a player is in balance with the \maximin{} value of the opponent $\max_{\pi_i}\min_{\pi_{-i}} R_i(\pi_i, \pi_{-i}) = -\max_{\pi_i}\min_{\pi_{-i}} R_{-i}(\pi_i, \pi_{-i})$.
We thus directly get the important Corollary \ref{cor:pig-both_sides_guaranteed_value}.
This appealing property of optimal policies is of a great importance, and easily motivates such policies.

\begin{corollary}
\label{cor:pig-both_sides_guaranteed_value}
    If we follow an optimal policy when playing as either player, the expected utility against any opponent is greater or equal to zero:
    \begin{align*}
        (\pi_i, \pi_{-i}) \in \nashset{} \, : \, R_i(\pi_i, \pi'_{-i}) + R_{-i}(\pi'_i, \pi_{-i}) \geq 0 \,  \forall \pi'_i, \pi'_{-i}
    \end{align*}
\end{corollary}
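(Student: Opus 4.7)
The plan is to chain together exactly two results already established in the chapter: the equivalence between Nash equilibrium and maximin profiles (Corollary \ref{thm:pig-minmax_is_nash}) and the minimax theorem (Theorem \ref{thm:pig-minmax}). The corollary is really just a ``both sides at once'' reading of the maximin value guarantee, so the proof should be very short.

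First I would unpack what $(\pi_i, \pi_{-i}) \in \nashset$ buys us. By Theorem \ref{thm:pig-nash_implies_minmax} (or equivalently Corollary \ref{thm:pig-minmax_is_nash}), $\pi_i \in \maximinset_i$ and $\pi_{-i} \in \maximinset_{-i}$. By the definition of maximin value, this means
\begin{equation*}
    R_i(\pi_i, \pi'_{-i}) \geq BRV_i(\pi_i) = \underline{v_i} \quad \forall \pi'_{-i}, \qquad R_{-i}(\pi'_i, \pi_{-i}) \geq BRV_{-i}(\pi_{-i}) = \underline{v_{-i}} \quad \forall \pi'_i,
\end{equation*}
since the minimum over opponent policies is attained by a best response and both $\pi_i$ and $\pi_{-i}$ achieve the respective maximin values.

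Next I would add the two inequalities and invoke the minimax theorem. Theorem \ref{thm:pig-minmax}, rewritten in the zero-sum form carried out in equations (\ref{eq:pig-maxmin_minmax_3})--(\ref{eq:pig-maxmin_minmax_6}), gives $\underline{v_i} = -\underline{v_{-i}}$, so $\underline{v_i} + \underline{v_{-i}} = 0$. Adding the two lower bounds therefore yields
\begin{equation*}
    R_i(\pi_i, \pi'_{-i}) + R_{-i}(\pi'_i, \pi_{-i}) \geq \underline{v_i} + \underline{v_{-i}} = 0 \qquad \forall \pi'_i, \pi'_{-i},
\end{equation*}
which is exactly the desired inequality.

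There is no real obstacle here; the whole content is the balance $\underline{v_i} = -\underline{v_{-i}}$ given by the minimax theorem, together with the defining worst-case guarantee of a maximin strategy. The only thing I would be careful about is notation: the two policies $\pi'_i$ and $\pi'_{-i}$ in the statement are independent opponents in two unrelated matches (us playing side $i$ against $\pi'_{-i}$, and us playing side $-i$ against $\pi'_i$), so the inequality must be obtained by summing two separately derived bounds rather than by a single joint best-response argument.
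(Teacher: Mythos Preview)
Your proposal is correct and follows exactly the same route the paper takes: the paper does not spell out a formal proof but derives the corollary ``directly'' from the fact that an optimal (hence maximin) policy guarantees the maximin value against any opponent together with the balance $\underline{v_i} = -\underline{v_{-i}}$ from Theorem~\ref{thm:pig-minmax}. Your write-up is simply a more explicit version of that one-line argument.
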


Corollary \ref{cor:pig-both_sides_guaranteed_value} essentially tells us that if we follow an optimal policy, we can't lose.
If we face an optimal opponent, we draw.
If we face a sub-optimal opponent, we might win (i.e. expect a positive reward), although we are certainly not guaranteed to.
This is because even if the opponent follows a highly exploitable strategy, the optimal policy we follow might not take advantage of any of the mistakes.
Even though this can easily happen in perfect information games, we defer examples and more discussion of this behavior to imperfect information games (Section \ref{sec:iig-suboptimal_vs_opimal}).
}

\subsection{Convexity}
\second{
The optimal policy guarantees us the \gls{game_value} regardless of the opponent we face.
And while there might be multiple optimal strategies, it does not matter which one we follow --- we still enjoy the same guarantees.
But how many optimal policies are there?
What structure does the set of all optimal policies have?
It is easy to verify that any convex combination\footnote{One must be careful not to naively combine the behavioral strategies, proper combination as described in Section \ref{sec:pig-strategy_averaging} is needed} of optimal policies results in an optimal policy.

\begin{lemma}
\label{thm:nash_convex_set}
The set of all Nash equilibria $\NEQ$ is convex.
\end{lemma}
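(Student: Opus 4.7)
The plan is to leverage Corollary~\ref{thm:pig-minmax_is_nash} so that $\NEQ = \maximinset_1 \times \maximinset_2$, and then show that each $\maximinset_i$ is convex. Once we have convexity of each factor, the product is convex in the corresponding joint parameterization, which gives the claim for $\NEQ$. So the real work reduces to a single-player statement: the set of strategies achieving $BRV_i(\pi_i) = GV_i$ is closed under (proper) convex combinations.

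The first technical step is to fix what a convex combination of behavioral strategies means, given the warning in Section~\ref{sec:pig-strategy_averaging}. The cleanest framing is to parameterize strategies so that the expected reward $R_i(\pi_1, \pi_2)$ is linear in the parameterization of each player individually (this is the ``proper average'' of Figure~\ref{fig:strat_averaging_d}, weighted by reach probabilities). Under this parameterization, for the mixture $\bar\pi_1 = \alpha \pi_1 \oplus (1-\alpha)\pi_1'$ and any opponent strategy $\sigma$, we get
\begin{equation*}
R_1(\bar\pi_1, \sigma) \;=\; \alpha R_1(\pi_1, \sigma) + (1-\alpha) R_1(\pi_1', \sigma).
\end{equation*}
This linearity is the crucial ingredient and is why the naive per-state averaging of behavioral strategies does not suffice.

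Next I would use this linearity to show that $BRV_i$ is concave in the (properly parameterized) strategy. Since
\begin{equation*}
BRV_1(\bar\pi_1) \;=\; \min_\sigma R_1(\bar\pi_1, \sigma) \;\geq\; \alpha \min_\sigma R_1(\pi_1, \sigma) + (1-\alpha)\min_\sigma R_1(\pi_1', \sigma),
\end{equation*}
if $\pi_1, \pi_1' \in \maximinset_1$ then the right-hand side equals $\alpha GV_1 + (1-\alpha) GV_1 = GV_1$. Combined with the trivial upper bound $BRV_1(\bar\pi_1) \leq GV_1$, we conclude $BRV_1(\bar\pi_1) = GV_1$, i.e.\ $\bar\pi_1 \in \maximinset_1$. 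The identical argument for player $-i$ closes the case.

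The main obstacle I anticipate is purely notational rather than mathematical: being careful that ``convex combination'' in the statement is interpreted via reach-probability-weighted averaging (equivalently, via realization plans / sequence form introduced later in Section~\ref{sec:iig:sequence_form}), not naive per-state mixing. Once this is set up, the argument is a three-line application of the Minimax Theorem~\ref{thm:pig-minmax} and the equivalence in Corollary~\ref{thm:pig-minmax_is_nash}; no additional game-theoretic machinery is required.
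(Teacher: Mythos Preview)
Your argument is correct. The key ingredients---the factorization $\NEQ = \maximinset_1 \times \maximinset_2$ from Corollary~\ref{thm:pig-minmax_is_nash}, linearity of $R_i$ in the (reach-weighted) parameterization of each player, and the resulting concavity of $BRV_i$ as a pointwise minimum of affine functions---are exactly the right pieces, and you are appropriately careful about what ``convex combination'' means for behavioral strategies.

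The paper takes a slightly different route: it defers the result to the imperfect-information chapter and appeals to the linear-programming formulation of maximin (Theorem~\ref{thm:iig-matrix_lp}). There, convexity of $\maximinset_i$ falls out because the set of optimal solutions to a linear program is a polyhedron. The underlying mathematics is the same---both arguments ultimately rest on $BRV_i$ being a pointwise minimum of affine functions---but the LP route additionally yields that $\NEQ$ is a polytope (piecewise-linear boundary, finitely many extreme points), which is a bit more than bare convexity. Your direct argument is shorter and self-contained within the perfect-information chapter, at the cost of not exhibiting this extra polyhedral structure.
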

\begin{proof}
We defer this result to the more general case of imperfect information games, proven in Section~\ref{sec:iig-matrix_games_lp}, Theorem~\ref{thm:iig-matrix_lp}.
\end{proof}

\begin{corollary}
\label{crl:nash_one_or_infinite}
There is either one unique Nash equilibrium, or infinitely many.
\end{corollary}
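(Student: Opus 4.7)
The plan is to derive the corollary directly from the convexity established in Lemma~\ref{thm:nash_convex_set}. The dichotomy ``one unique equilibrium or infinitely many'' should follow purely from a structural argument: no convex set in a real vector space can contain exactly two (or any finite number greater than one) points, because the line segment between any two distinct elements already consists of infinitely many elements, all of which must remain in the set.

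Concretely, I would argue by cases on $|\NEQ|$. If $|\NEQ| = 1$ we are done. Otherwise, pick two distinct equilibria $(\pi_1^a, \pi_2^a)$ and $(\pi_1^b, \pi_2^b)$ in $\NEQ$. Invoking Lemma~\ref{thm:nash_convex_set}, the family
\begin{equation*}
    (\pi_1^\lambda, \pi_2^\lambda) := \lambda (\pi_1^a, \pi_2^a) + (1-\lambda)(\pi_1^b, \pi_2^b), \qquad \lambda \in [0,1],
\end{equation*}
lies entirely in $\NEQ$, where the combination is interpreted as the \emph{proper} averaging of behavioral strategies described in Section~\ref{sec:pig-strategy_averaging} (reach-weighted, not per-state). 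It then remains to verify that distinct values of $\lambda$ give rise to distinct strategy profiles, so that the map $\lambda \mapsto (\pi_1^\lambda, \pi_2^\lambda)$ is injective on the uncountable set $[0,1]$, producing infinitely many equilibria.

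The only mildly subtle step is this last injectivity check: because behavioral strategies are compared through their induced reach-weighted sequence-form representations rather than pointwise at each state, I would argue that if two profiles $(\pi_1^a, \pi_2^a) \neq (\pi_1^b, \pi_2^b)$ disagree, then their induced distributions over terminal histories differ, and convex combinations of those distributions differ for distinct $\lambda \in [0,1]$. This is the place where one must be careful, as flagged in the footnote attached to Lemma~\ref{thm:nash_convex_set}, but it is exactly the reason we rely on proper averaging rather than naive per-state averaging. Once injectivity is confirmed, the corollary follows immediately.
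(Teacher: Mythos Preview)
Your approach is exactly the paper's: the proof there is the single line ``Follows directly from Lemma~\ref{thm:nash_convex_set},'' and you are spelling out why a convex set with at least two points must be infinite.

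One caution on the injectivity step. The claim ``if two profiles $(\pi_1^a,\pi_2^a)\neq(\pi_1^b,\pi_2^b)$ disagree, then their induced distributions over terminal histories differ'' is not true in general: profiles can differ on sequences that chance or the opponent reaches with probability zero, yet induce identical terminal distributions. Fortunately the detour is unnecessary. The ``proper averaging'' of Section~\ref{sec:pig-strategy_averaging} amounts to working in the realization-plan (sequence-form) coordinates of Section~\ref{sec:iig:sequence_form}, where each profile is literally a point in a finite-dimensional real vector space. In those coordinates the map $\lambda\mapsto \lambda\,(\pi_1^a,\pi_2^a)+(1-\lambda)\,(\pi_1^b,\pi_2^b)$ is an affine parametrisation of a line segment and is trivially injective whenever the endpoints are distinct. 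That is all the injectivity you need.
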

\begin{proof}
Follows directly from Lemma~\ref{thm:nash_convex_set}.
\end{proof}
}

\section{Multiplayer Games}
\label{sec:pig-multiplayer_games}
\second{
While the properties of Nash equilibria do sound very intriguing, they are lost once we leave the comfort of two player zero-sum games.
First, observe that introducing more players is essentially the same as removing the zero-sum property.
For non-zero sum games, one can always add a dummy player with no actions with the surplus reward.
}

\subsection{Nash is not \Maximin{}}
\second{
Unlike in two player zero sum games, the solution concepts of Nash and \maximin{} are different.
See Figure \ref{fig:nash_not_minmax} for simple counterexamples.
}

\begin{figure}
    \centering
    \begin{subfigure}[b]{0.3\textwidth}
        \includegraphics[width=\textwidth]{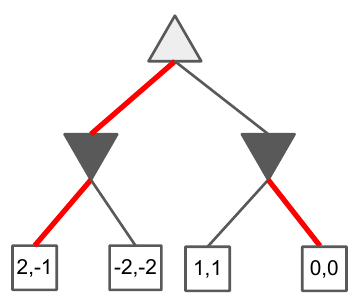}
        \caption{}
        \label{fig:nash_not_minmax_a}
    \end{subfigure}
    \hspace{0.1\textwidth} 
    \begin{subfigure}[b]{0.3\textwidth}
        \includegraphics[width=\textwidth]{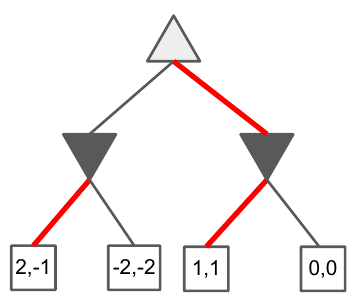}
        \caption{}
        \label{fig:nash_not_minmax_b}
    \end{subfigure}
    \caption{In multiplayer games, Nash equilibria and \maximin{} solution concepts differ. Action selected by a policy are denoted by the bold red line. a) Nash strategy profile, where the individual strategies are not \maximin{}. b) Pair of \maximin{} strategies that do not for Nash strategy profile.}
    \label{fig:nash_not_minmax}
\end{figure}

\subsection{Existence}
\second{
Under what circumstances do optimal policies exist?
It is easy to believe that \maximin{} strategy always exists as it is simply maximizing against a best response value.
But is it also the case for Nash?
It turns our that the existence is guaranteed for a very wide class of games --- even for games with multiple players.
Essentially, one can show that Nash equilibrium is guaranteed to exist for any finite game \citep{nash1950equilibrium, nash1951non}.
}

\subsection{Unique Value}
\label{sec:pig-unique_value}
\second{
There is no longer a unique value of the game, i.e. the minimax theorem no longer holds.
While in two player settings, any Nash equilibrium guarantees the same game value and thus the players could arbitrarily select any Nash to follow, this does not hold in multiplayer settings.
Even worse, it could very well be that $(\pi^a_1, \pi^a_2) \in \nashset,  (\pi^b_1, \pi^b_2) \in \nashset$ and both strategy profiles produce high reward for the players, but if they do not coordinate on which equilibrium to follow and they select different ones --- they end up with $(\pi^a_1, \pi^b_2)$ where the value can be arbitrarily bad for both.
This issue is also referred to as the  \gls{equilibrium_selection_problem}, and we defer an example of this behavior to Section \ref{sec:iig-multiplayer_games}.
}

\subsection{Complexity}
\second{
One can easily argue that \maximin{} is as hard in multiplayer settings as it is for two players.
We can just let a single opponent control all the other players.

For Nash equilibria, the question of complexity in multiplayer settings is more complicated.
We can not use the same argument as for \maximin{}, since Nash equilibria requires that none of the players benefits by deviating from the strategy profile --- that is, each player individually.
But the players could benefit from deviating if multiple agents deviate at once.
In general, the problem is known to be PPAD-complete \citep{papadimitriou1994complexity, chen2006settling, daskalakis2009complexity}.
}

\subsection{What to Do}
\second{
But maybe it is just the case that there is another, simple and intriguing
solution concept for multiplayer games?
Unfortunately, it seems that in general, things are quite gloomy.
There seems to be no consensus of what solution concept would that be,
and the results are mostly negative.
The most promising line of work now focuses on online settings and coordinated strategies rather than on following a fixed strategy.
}

\section{Evaluation}
\second{
Given a sub-optimal policy $\pi_i$, we often want to evaluate how ``close'' to an optimal policy it is.
As the worst-case value is no more than the game value: $\delta(\pi_i) = GV_i - BRV_i(\pi_i)$, common measure then is $NASHCONV(\pi) = \sum_i \delta_i$ and exploitability = 
$\frac{NASHCONV}{|\mathcal{N}|}$.
Further popular concept is $\epsilon$-Nash equilibrium, where the players receive at most $\epsilon$ by deviating from the strategy profile: $\max_i \delta_i(\pi_i) \leq \epsilon$.
Exploitability and $\epsilon$-optimal policies allow to objectively and quantitatively evaluate a policy, and contrasts its worst-case performance to the worst-case performance of an optimal policy.

But exploitability itself does not tell the full story about the strength of an agent. 
Strong chess agent that is beatable by a particularly clever line of play is a arguably a better chess player than an agent that always resigns. 
Yet, both of these agents have the same exploitability. 

Sometimes, we are not interested in the worst-case performance, but rather in the qualitative performance of the agent in head to head evaluation against a specific pool of the opponents.
The issue is that such performance strongly depends the players in the pool, but there are also inherent intransitivies in head-to-head evaluation.
Given three players $\pi^a, \pi^b, \pi^c$, it could very well be that $\pi^a$ beats $\pi^b$, $\pi^b$ beats $\pi^c$ and $\pi^c$ beats $\pi^a$.
}

\subsection{Approximate Evaluation}
\label{pig:approx_evaluation}
\second{
To compute exploitability or nashconv, we need to compute the exact best response.
This requires a full tree traversal (Algorithm \ref{alg:pig_best_response}), which can quickly become intractable in large games.
By fixing policy of one of the players, the game collapses into a single agent environment \citep{bowling2003multiagent}, where the optimal policy corresponds to a best response.
One can thus opt for any standard reinforcement learning methods to learn (or approximate) this optimal best-responding policy \citep{greenwald2017solving} and use the resulting approximate policy to define a corresponding metric --- approximate best response \citep{timbers2020approximate}.
Local best response (Section~\ref{sec:iig-lbr}) is then another method approximating the best response, combining search and heuristic value functions \citep{lisy2017eqilibrium}.
}

\chapter{\Subgames{}}
\label{chap:pig-subgames}

\begin{figure}[ht]
    \centering
    \begin{subfigure}[b]{0.45\textwidth}
        \includegraphics[width=\textwidth]{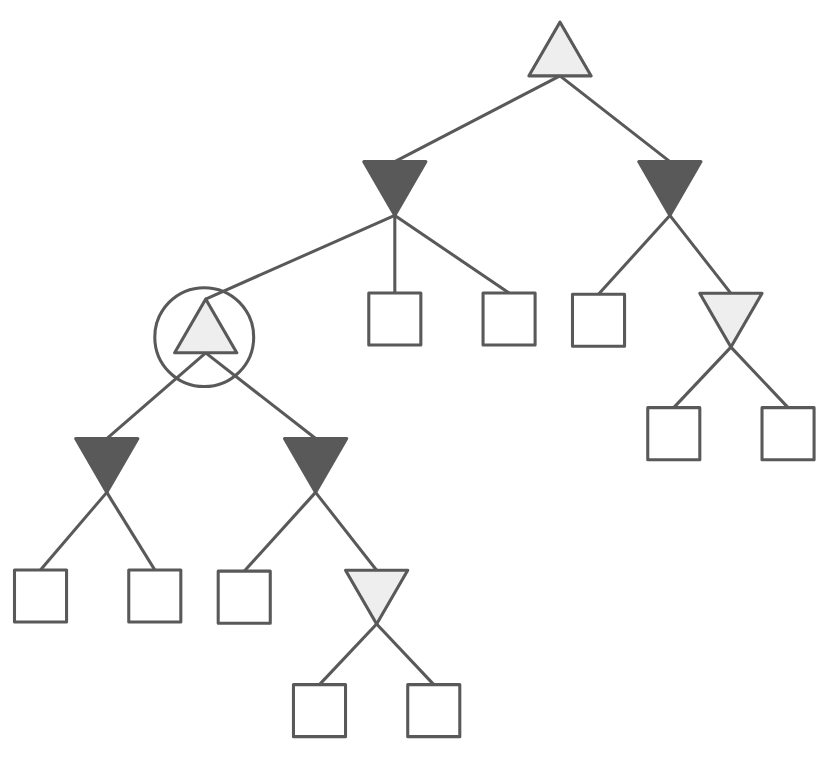}
        \caption{}
    \end{subfigure}
    ~ 
    \begin{subfigure}[b]{0.45\textwidth}
        \includegraphics[width=\textwidth]{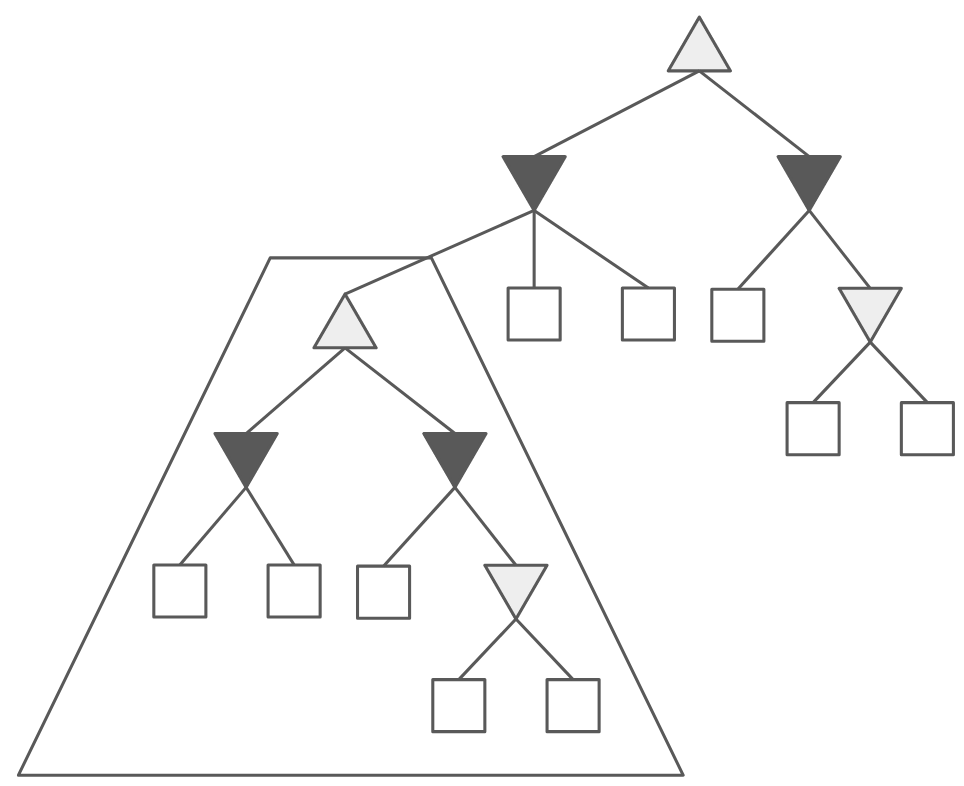}
        \caption{}
    \end{subfigure}
    \caption{a) Current state to reason about.
    b) All the states relevant for the current decision form a \subgame{}.}
    \label{fig:pig_subproblem_reasoning}
\end{figure}

\second{
As the name suggests, a \subgame{} is a \subproblem{} of a game.
\Subgames{} are a fundamental concept for online search methods, and there are two critical ways the are used.
While both cases essentially decompose the problem and combine the solutions, they are very much semantically distinct.

First, \subgames{} allow us to reason about a small \subproblem{} in isolation.
Search methods reason about \subgames{}, namely about the \subgames{} defined by the currently observed state.

Second, \subgames{} are the basis of value functions.
If the \subgame{} defined by the current state is still too large, it allows for further decomposition.
Rather than reasoning about all future states in the \subgame{}, we can look forward only some number of steps and use the value of the future states (future \subgames{}).
}

\section{Perfect Information \Subgame{}}
\label{sec:pig-subgame}
\second{
Given a current state, what is the smallest set of states relevant to the current decision?
In perfect information games, it is simply the state and all the future states, a sub-tree rooted in that state (Figure \ref{fig:pig_subproblem_reasoning}).
To verify that is the case, note that a policy for these states is sufficient to compute a value for the current state, as the policies in all other states are irrelevant.

The defining property of \subgame{} is that --- as the name suggests --- it is a game.
It needs to be well-defined game to be reasoned about in isolation.
Fortunately, this is again straightforward in perfect information games formalism.
All we have to do is to take a sub-tree rooted in a state (Definition \ref{defn:pig_subgame}).

\begin{defn}
\label{defn:pig_subgame}
\Subgame{} of a states $s \in \mathcal{S}$ is a game formed by the sub-tree of $s$.
\end{defn}

}

\section{\Subgame{} Values}
\second{
As \subgame{} forms a well-defined game, we can readily define the value of a \subgame{} (Definition \ref{def:pig-subgame_values}).
}

\begin{defn}[\Subgame{} Value]
\label{def:pig-subgame_values}
\Subgame{} value is the game value of the corresponding game.
\end{defn}

\second{
\Subgame{} values will turn out to be very important for decomposition and search methods, as they will allow allow us to reason about optimal policies in a state by reasoning about future states and the respective \subgame{} values.
}

\section{\Subgame{} Perfect Optimal Strategy}
\label{sec:pig-subgame_perfect}

\begin{figure}[ht]
    \centering
    \begin{subfigure}[b]{0.3\textwidth}
        \includegraphics[width=\textwidth]{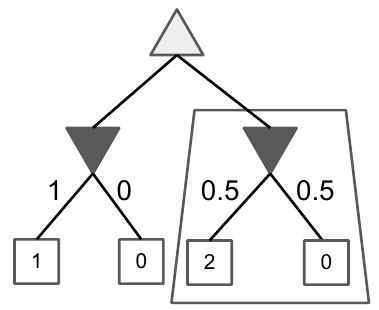}
        \caption{}
    \end{subfigure}
    \hspace{0.2\textwidth} 
    \begin{subfigure}[b]{0.3\textwidth}
        \includegraphics[width=\textwidth]{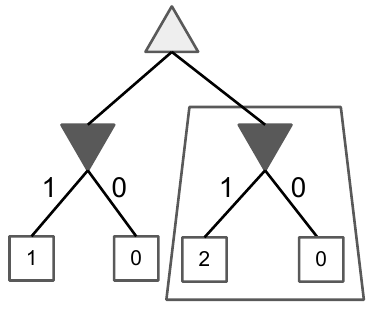}
        \caption{}
    \end{subfigure}
    \caption{a) Optimal policy that guarantees a game value of $1$, but is not \subgame{} perfect as it does not play optimally in the highlighted \subgame{}.
    b) \Subgame{} perfect optimal policy.} 
    \label{fig:pig-subgame_perfect}
\end{figure}

\second{
With the notion of \subgames{} in hand, we can define a useful refinement of the optimal policies.
A strategy in the full game can be optimal and guarantee the game value while playing sub-optimally in some of the \subgames{}.
An important refinement of the optimality concept is then \subgame{} perfect strategy, which is optimal in each \subgame{} (Definition~\ref{def:pig-subgame_perfect}).
See Figure~\ref{fig:pig-subgame_perfect} for an example.

\begin{defn}[\Subgame{} Perfect Strategy]
\label{def:pig-subgame_perfect}
A \subgame{} perfect strategy is optimal for each \subgame{} $s \in \mathcal{S}$.
\end{defn}
}

\chapter{Offline Solving}
\label{chap:pig-offline_solve}
\second{
As previous chapters motivated the policies we are interested in, we will
now have a look at methods that allow us to find the desired strategies.
The setting we start with is that of offline solving.
That is, the algorithm computes the strategy for all the possible decision points prior to game play.
While we will be mostly concerned with tabular algorithms and representations, we also discuss some approximation methods.
}

\section{Minimax Algorithm}
\label{section-minmax}

\begin{figure}[ht]
\floatbox[{\capbeside\thisfloatsetup{capbesideposition={right,top},capbesidewidth=0.5\textwidth}}]{figure}[\FBwidth]
{\caption{Minimax algorithm traverses the game tree bottom-up, producing optimal policies in the \subgames{} and sending the \subgame{} values up the tree.}\label{fig:pig-minimax}}
{\includegraphics[width=0.45\textwidth]{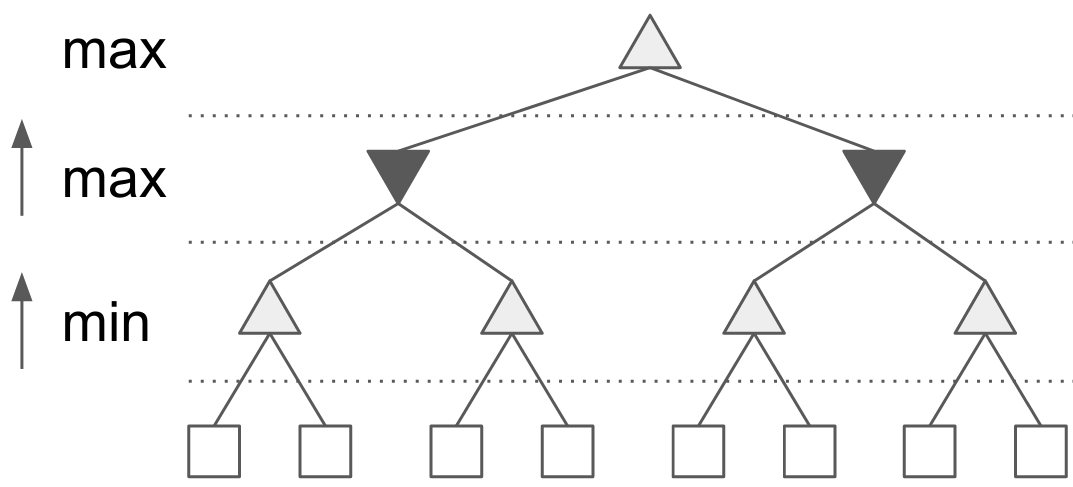}}
\end{figure}


\second{
The \gls{minimax_algo} is a very simple method of computing an optimal policy for a game tree.
This algorithm is often introduced as producing ``optimal policies'' without any deeper elaboration on what that really means.
Previous chapters allow us to get more insights into this algorithm, as we now formally understand optimal policies and their properties (Chapter~\ref{chap:pig-optimal_policies}) as well as \subgames{} (Chapter~\ref{chap:pig-subgames}).

The \gls{minimax_algo} traverses the game tree bottom-up, producing optimal policies in the \subgames{} and sending the \subgame{} values up the tree. These \subgame{} values are then turned into optimal policies for larger \subgames{} until the process reaches the root state (Figure \ref{fig:pig-minimax} and Algorithm \ref{alg:pig-minmax}).
The \gls{minimax_algo} is a great example of decomposition, as it inherently relies on the notion of \subgames{} and \subgame{} values.

Note that as we are computing the policies for both players in the bottom-up fashion, we are not using any knowledge of how the players play further up the tree.
This is because in perfect information, \subgames{} require no such information as the \subgame{} is defined simply as a sub-tree.
This is yet another property that does not hold in imperfect information as we will see in Chapter~\ref{chap:iig_subgames}.
}

\subsection{Resulting Policy}
\second{
To verify that Algorithm \ref{alg:pig-minmax} produces the desired solution, notice that the resulting policies best-respond to each other (see the best response Algorithm~\ref{alg:pig_best_response}).
Lemma~\ref{lem:pig-nash_best_response} then guarantees that such policy pair forms a Nash equilibrium.
Finally thanks to Corollary~\ref{thm:pig-minmax_is_nash}, Nash equilibrium and \maximin{} collapse in our settings.
}

\subsection{AlphaBeta Pruning}
\label{sec:iig-alphabeta}
\second{
While the \gls{minimax_algo} traverses the entire tree, there is a way to potentially traverse only part of the tree while keeping the same guarantees.
\Gls{alphabeta_pruning} allows for potentially visiting only $\sqrt{|\mathcal{S}|}$ of nodes rather than $|\mathcal{S}|$ \citep{knuth1975analysis}.
}

\begin{algorithm}[ht]
\caption{Minimax}
\label{alg:pig-minmax}
\begin{algorithmic}[1]
\State
\Function{Minimax}{$s \in S$}

\If{$s \in Z$} \Return{$R_1(s)$} \Comment{Terminal state.} \EndIf
\State
    \For{$a \in \mathcal{A}$}
        \State $v[sa]$ = \Call{Minimax}{$sa$}
    \EndFor
\State
\If{p(s) = 1} \Comment{Maximizing player to act.}
    \State $\pi(s) = \argmax_{a \in \mathcal{A}(s)}v[sa]$
    \State \Return{$\max_{a \in \mathcal{A}(s)}v[sa]$}
\ElsIf{p(s) = 2} \Comment{Minimizing player to act.}
    \State $\pi(s) = \argmin_{a \in \mathcal{A}}v[sa]$
    \State \Return{$\min_{a \in \mathcal{A}(s)}v[sa]$}
\Else \Comment{Chance player to act.}
    \State \Return{$\sum_{a \in \mathcal{A}(s)} \pi(s, a) v[sa]$}
\EndIf

\EndFunction

\end{algorithmic}
\end{algorithm}

\section{\SelfPlay{} Style Methods}
\label{sec:pig-selfplay_methods}
\second{
\Gls{selfplay} is a powerful idea where agents improve their policies in a self reinforcing loop (Figure \ref{fig:pig-selfplay}).
The hope is that as the strategies keep improving, they will eventually converge to the optimal ones.
In the \selfplay{} framework, agents produce a sequence of strategies $(\pi^t_1, \pi^t_2)$ and algorithms differ in important details.
First is how the algorithm produces the next strategy in the sequence.
Second is which strategy is expected to converge (e.g. the average one or the current one).
}

\begin{figure}[ht]
  \centering
  \includegraphics[width=\textwidth]{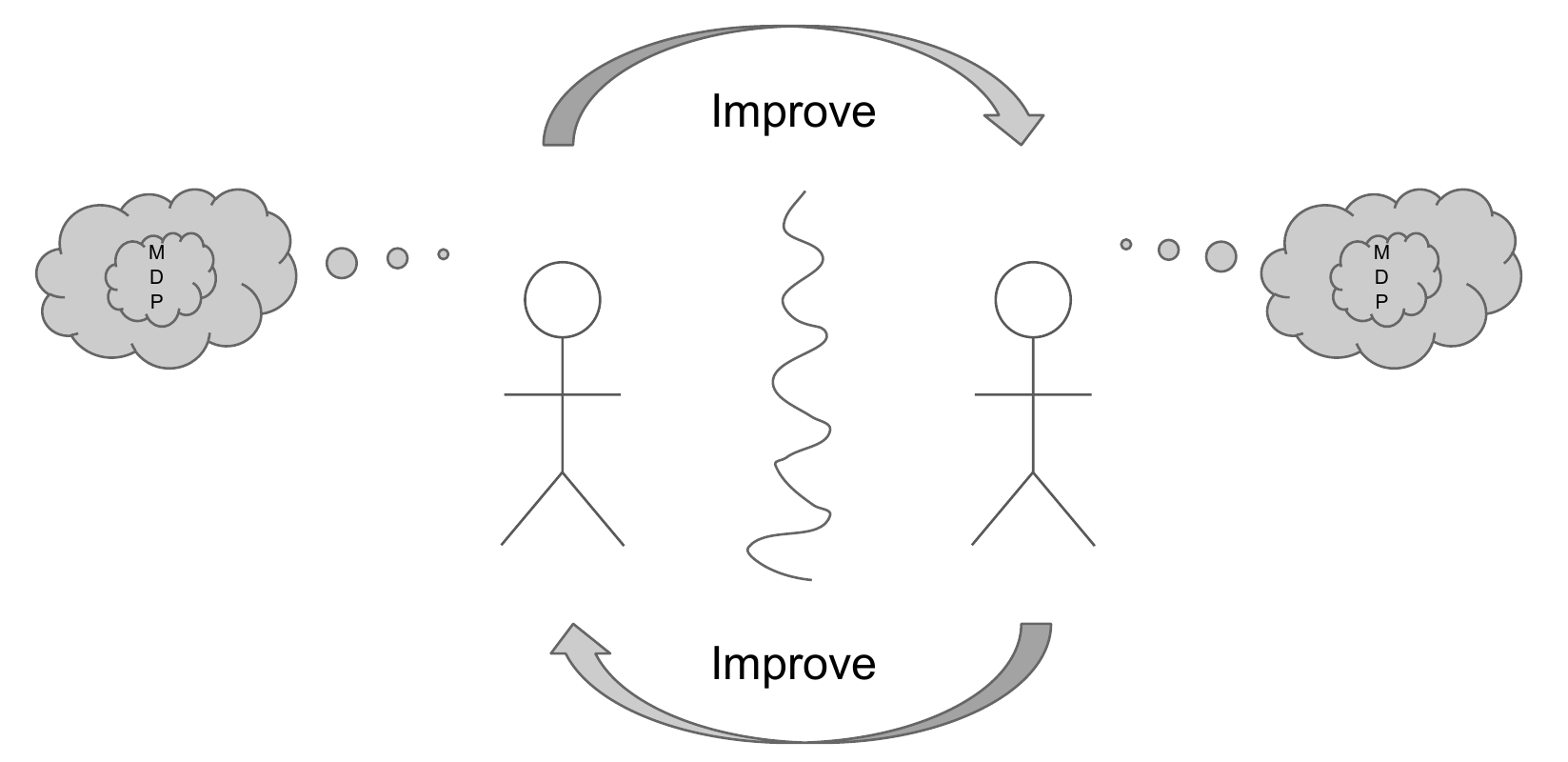}
  \caption{
  \Selfplay{} methods produce a sequence of strategies that improve against each other.
  As fixing one agent makes the environment single-agent, the environment becomes MDP during each improvement step.
  }
\label{fig:pig-selfplay}
\end{figure}

\subsection{Self Play via Independent Reinforcement Learning}

\begin{figure}
\floatbox[{\capbeside\thisfloatsetup{capbesideposition={right,top},capbesidewidth=0.5\textwidth}}]{figure}[\FBwidth]
{\caption{Self play methods that simply best respond to the last policy of the opponent do not necessarily converge. }\label{fig:pig-seflplay_counterexample}}
{\includegraphics[width=0.4\textwidth]{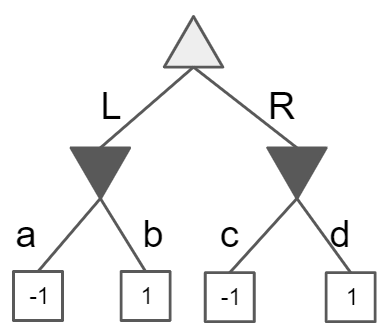}}
\end{figure}


\second{
\Gls{selfplay_irl} is a popular method coming from the reinforcement learning community.
This approach simply uses a single-agent reinforcement learning method to improve the agent's policy against the opponent.
The naive approach is to simply run a single agent \rl{} method of choice independently for both players, each improving against the last strategy of the opponent.
This is indeed at the very heart of many well-known success stories of reinforcement learning methods in two player games \citep{tesauro1995temporal, silver2017mastering_alphagozero}.
The reasoning sounds appealing --- if we keep improving, surely we will end up with an optimal policy?
}

\subsubsection{Convergence}
\second{
What are the convergence guarantees of this method?
To simplify the analysis, we can let the reinforcement learning method fully converge during each self play episode.
We know from the Section \ref{pig:approx_evaluation} that this produces the best response.
We thus end up in a self-play sequence where at each time step, the policy best-responds to the previous policy of the opponent: $\pi^t_i \in \brset_i(\pi^{t-1}_{-i})$.
We refer to this sequence as the \gls{best_responding_sequence}.

Lemma \ref{lem:pig-selfplay_convergence_implies_optimality} shows that if such sequence converges, the resulting strategy profile is optimal.
}

\begin{lemma}
\label{lem:pig-selfplay_convergence_implies_optimality}
If the best responding sequence converges $\exists t \forall t' > t : \pi^{t'} = \pi^t $, the strategy profile $\pi^t$ forms a Nash equilibrium.
\end{lemma}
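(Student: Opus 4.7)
The plan is to reduce the lemma directly to the best-response characterization of Nash equilibria given in Lemma~\ref{lem:pig-nash_best_response}: a strategy profile $(\pi_1, \pi_2)$ is a Nash equilibrium iff $\pi_1 \in \brset(\pi_2)$ and $\pi_2 \in \brset(\pi_1)$. Thus my whole task reduces to checking that at a fixed point of the best-responding sequence, each player's converged policy is a best response to the other's.

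First I would fix the $t$ supplied by the convergence hypothesis, so that $\pi^{t'} = \pi^t$ for every $t' > t$. In particular this applies at $t' = t+1$, giving $\pi^{t+1}_1 = \pi^t_1$ and $\pi^{t+1}_2 = \pi^t_2$. Next I would invoke the defining property of the best-responding sequence at step $t+1$: by construction, $\pi^{t+1}_i \in \brset_i(\pi^t_{-i})$ for each player $i$. Substituting the equalities from the previous step yields $\pi^t_1 \in \brset_1(\pi^t_2)$ and $\pi^t_2 \in \brset_2(\pi^t_1)$, which is precisely the hypothesis of Lemma~\ref{lem:pig-nash_best_response}, and therefore $(\pi^t_1, \pi^t_2) \in \NEQ$.

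I do not foresee any real obstacle here, as the argument is essentially an unfolding of definitions once the best-response characterization of Nash is in hand. The one subtlety worth flagging is that the hypothesis is literal convergence (the sequence is eventually constant), so the recurrence $\pi^{t+1}_i \in \brset_i(\pi^t_{-i})$ carries over to the fixed point without any continuity argument. Were one to try to weaken the assumption to mere asymptotic convergence $\pi^t \to \pi^*$, the same proof strategy would require continuity of the best-response correspondence, which fails in general; that stronger claim would therefore need a separate argument (or additional assumptions) and is not what we are asked to prove here.
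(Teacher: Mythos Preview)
Your proof is correct and follows essentially the same route as the paper: use the convergence hypothesis to equate consecutive iterates, combine with the best-responding property $\pi^{t+1}_i \in \brset_i(\pi^{t}_{-i})$ to obtain mutual best responses, and conclude via Lemma~\ref{lem:pig-nash_best_response}. The paper's version is just a terser rendition of the same argument.
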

\begin{proof}
Since the sequence is best-responding $\pi^{t'}_i \in \brset{}(\pi^{t'-1}_{-i})$ and due to convergence $\pi^{t'}_i=\pi^{t'-1}_i$.
Thus the policies $\pi^{t'}_i$ and $\pi^{t'}_{-i}$ best respond to each other, forming a Nash equilibrium (Lemma~\ref{lem:pig-nash_best_response}).
\end{proof}

\second{
But does this method necessarily converge?
While it is tempting to say yes, the answer is no --- even in perfect information games!
Consider a game in Figure \ref{fig:pig-seflplay_counterexample} and the following sequence:

\begin{equation*}
    (\pi^t_1, \pi^t_2) =
    \begin{cases}
       (L, ad), & \text{if} \, t\ \% \ 2 = 0 \\
      (R, bc), & \text{otherwise}
    \end{cases}
    \label{eq:pig-seflplay_counterexample}
\end{equation*}.
 
The sequence satisfies the best-responding property, but it never converges.
This would not be bad if the sequence would simply alternate between different optimal policies.
But at each time step, $\pi^t_2$ is highly exploitable.
This sequence does not converge, and produces a highly exploitable current strategy as well as the average strategy.
}
\second{
Further analysis of the sequence reveals that a necessary ingredient in this counter-example is that the best response of the second player is never \subgame{} perfect (Section \ref{sec:pig-subgame_perfect}).
A best responding sequence is indeed guaranteed to produce optimal policies if the best response is \subgame{} perfect.
Note that this is true only in perfect information games and we will see in Section~\ref{sec:iig-selfplay_irl} that these approaches fail in imperfect information.
}

\chapter{Online Settings}
\label{chap:pig-online_settings}

\second{
Hopefully by this point, the reader is convinced of the importance of Nash equilibria.
We have learned about its properties and introduced methods producing this optimal policy.
So far, we dealt with offline policies --- prior to playing the  game, we would compute a policy for each state and then store it.
During the actual game-play, we would simply follow this policy.

But search algorithms operate in fundamentally different settings, producing a strategy for a state only once it is visited.
This online setting requires a different and careful analysis, as the offline concepts do not apply.
Unlike offline policies, online algorithms can condition the computation on the past experience and games, allowing for opponent adaptation \citep{bard2016online} or re-using parts of the previous computation \citep{silver2016mastering}.
In general, an online algorithm can produce game dynamics that is not consistent with any offline algorithm (Lemma \ref{thm:pig-online_more_general}).

We will introduce analogous concepts to $\epsilon$-Nash equilibria, $\epsilon$-Soundness, generalizing the concept of Nash equilibria to online settings.
Sound algorithms guarantee (in expectation) the game value against arbitrary opponent.
While $\epsilon$-soundness is the proper measure of worst-case performance, it can be hard to compute for some online algorithms as we might need to construct an exponentially large response game.
We therefore introduce a consistency hierarchy that allows us to formalize when on online algorithm plays consistently with some Nash equilibrium.
The strongest of such presented hierarchies, strong global consistency, then allows one to compute exploitability in a particularly easy way using tabularization.

The results presented in this Chapter (and the following results for imperfect information case in Chapter~\ref{chap:iig-online_settings}
) are one of the contributions of this thesis.
}

\section{Cautionary Example}
\label{sec:pig-online_cautionary_example}

\second{
To further illustrate the deceiving nature of online settings, consider the following algorithm.
The algorithm internally uses an offline algorithm that produces provably optimal strategy.
During play, the online algorithm i) runs the offline algorithm for some more time ii) retrieves a strategy $\pi_i(s)$ and takes an action $a \sim \pi_i(s) $.
As the algorithm always ``follows'' an equilibrium, we would naturally expect this algorithm to be sound.
This is not necessarily the case and such an algorithm can potentially be highly exploitable (Theorem~\ref{thm:pig-local_consistency_unsound}).
See Section~\ref{sec:pig-local_consistency} for a concrete example.
Imperfect information games (Chapter~\ref{chap:iig-online_settings}
) will then bring even more challenging examples.
}

\section{Repeated Game}
\second{
To properly analyze the performance of an algorithm, we need to introduce the \gls{repeated_game}.
A repeated game consists of a sequence of individual matches (e.g. playing some number of chess games against the world champion).
As a match progresses, the algorithm produces a strategy for a visited state on-line, once it actually observes the state.
We are then interested in the accumulated reward of the agent during the span of the repeated game.
Of particular interest will be the expected reward against a worst-case adversary.

Formally, the repeated game $p$ consists of a finite sequence of $k$ individual matches $p = (z_1,\,z_2,\, \hdots,\, z_k)$, where each match $z_i \in \mathcal{Z}$ corresponds to the sequence of states and actions leading to that terminal state.
}

\section{Online Algorithm}
\label{sec:pig-online_algorithm}
\second{
An online algorithm $\search$ maps a state visited during a repeated  game to a strategy, while possibly using and updating its internal state (Def.~\ref{defn:pig-online_algorithm}).
As the state of the algorithm is a function of previously visited (queried) states, we can also use $\search^{(z_1,\,\hdots,\, z_{k-1})}(s)$ to denote its output in a particular internal state corresponding to past experience.
}

\begin{restatable}{defn}{onlinealgorithm}
\label{defn:pig-online_algorithm}
\citep[Definition 2]{vsustr2020sound}
Online algorithm $\search$ is a function $s \times \Theta \mapsto \Delta(\mathcal{A}_i(s)) \times \Theta$ that maps an information state $s \in \mathcal{S}$ to a strategy $\Delta(\mathcal{A}_i(s))$, 
while possibly making use of algorithm's state $\theta \in \Theta$ and updating it.
We denote the algorithm's initial state as $\theta_0$.
\end{restatable}

\subsection{Stateless vs Stateful}
\label{sec:pig-online_stateful_vs_stateles}
\second{
A special case of an online algorithm is a \gls{stateless_algorithm}, where the output of the function is independent of the algorithm's state.
If the output depends on the algorithm's state, we say the algorithm is \gls{stateful_algorithm}.
The distinction between stateful and stateless algorithms has some important consequences, as the state is the ``memory'' of the algorithm allowing any conditioning on the past.
}

\subsection{Reward}
\second{
Given two online players $\search_1, \search_2$, we use $P^k_{\search_1, \search_2}$ to denote the distribution over all the possible repeated games $p$ of length $k$ when these two players face each other.
The average reward of $p$ is then $R_i(p) = \frac{1}{k} \sum_{j=1}^k u_i(z_j)$.
Finally, we use $\EX_{p \sim P^k_{\search_1, \search_2}}[R_i(p)]$ to denote the expected average reward when the players play $k$ matches.

While nothing stops the online algorithm from simply following a fixed policy profile, online settings allow for more general dynamics (Lemma~\ref{thm:pig-online_more_general}).
}

\begin{lemma}
\label{thm:pig-online_more_general}
An online algorithm $\search$ can produce game dynamics $P^k_{\search}$ that no offline strategy can.
\end{lemma}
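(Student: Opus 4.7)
The plan is to exhibit a concrete counterexample: a trivially small game together with a stateful online algorithm $\search$ whose induced distribution $P^k_\search$ is a point mass on an alternating sequence, and then show no offline behavioral strategy can put probability one on such a sequence for $k\geq 2$.

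Concretely, I would take the one-decision game where player $1$ has two actions $L,R$ at the root leading to two distinct terminals (the second player is trivial, or any fixed dummy opponent). Any offline behavioral strategy $\pi_1\in\Pi_1$ is then parameterised by a single number $p=\pi_1(s_0,L)\in[0,1]$, and because an offline strategy does not depend on the match index, the induced distribution $P^k_{\pi_1}$ over sequences $p=(z_1,\dots,z_k)$ is exactly the $k$-fold product of the Bernoulli$(p)$ distribution. In particular, the probability of any specific deterministic sequence is of product form $p^{j}(1-p)^{k-j}$, where $j$ is the number of $L$-matches in the sequence.

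Next I would construct a stateful online algorithm $\search$ whose internal state $\theta\in\mathbb{N}$ counts the match number and whose output at the root is the Dirac strategy on $L$ when $\theta$ is odd and on $R$ when $\theta$ is even, incrementing $\theta$ after each match. By construction $P^k_\search$ is the point mass on the alternating sequence $(L,R,L,R,\dots)$. For an offline $\pi_1$ to reproduce this distribution it would have to satisfy $p^{\lceil k/2\rceil}(1-p)^{\lfloor k/2\rfloor}=1$, which forces simultaneously $p=1$ and $p=0$ as soon as $k\geq 2$, a contradiction. Hence no offline strategy reproduces $P^k_\search$, establishing the lemma.

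I do not expect any real obstacle here: the crux is simply that offline strategies factorise across matches while stateful online algorithms do not, and the alternating-deterministic counterexample makes this sharp. The only subtlety to state carefully is that $\search$ is permitted to read and update $\theta$ between matches (Definition~\ref{defn:pig-online_algorithm}, Section~\ref{sec:pig-online_stateful_vs_stateles}), which is what enables the match-index conditioning that no $\pi_1\in\Pi_1$ can imitate.
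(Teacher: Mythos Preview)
Your proposal is correct and takes essentially the same approach as the paper: the paper constructs precisely the same single-decision game with actions $\{L,R\}$ and the same parity-based stateful alternator $\search(s_1,\theta)=(L,\theta+1)$ if $\theta$ is even and $(R,\theta+1)$ otherwise. Your write-up is in fact more complete than the paper's, which merely exhibits the construction and leaves the impossibility-for-offline-strategies argument implicit; your product-measure computation makes that explicit.
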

\begin{proof}
Consider a simple game with a single decision state $s_1$, $\mathcal{A}(s_1) = \{L, R\}$ and the following stateful online algorithm with state $\theta \in \mathbb{Z}$: 

\begin{align*}
    \search(s_1, \theta) = \begin{cases} 
                        (L, \theta + 1), & \text{if } \theta \, \% \, 2 = 0\\
                        (R, \theta + 1), & \text{otherwise}
                    \end{cases}
\end{align*}

\end{proof}

\section{Soundness}
\second{
We are now ready to formalize the online concept analogous to Nash equilibrium.
Exploitability / $\epsilon$-equilibrium considers the expected utility of a fixed strategy against a worst-case adversary in a single match.
The analogous concept for the online settings in a repeated game is $\epsilon$-soundness (Definition \ref{defn:pig-epsilon_sound}).
Intuitively, an online algorithm is $\epsilon$-sound if and only if it is guaranteed the same reward as if it followed a fixed $\epsilon$-equilibrium.

\begin{restatable}{defn}{soundness}
\label{defn:pig-epsilon_sound}
\citep[Definition 4]{vsustr2020sound}
For an $\epsilon$-sound online algorithm $\search$, the expected average reward against any opponent is at least as good as if it followed an 
$\epsilon$-Nash equilibrium fixed strategy $\pi$ for any number of matches $k$:
\begin{align}
\label{eq:epsilon_sound}
    \forall k \forall \search_2 \, : \, \EX_{p \sim P^k_{\search, \search_2}}[R(p)] \geq \EX_{p \sim P^k_{\pi, \search_2}}[R(p)].
\end{align}
\end{restatable}
}

\section{Response Game}

\begin{figure}[ht]
  \centering
  \includegraphics[width=\textwidth]{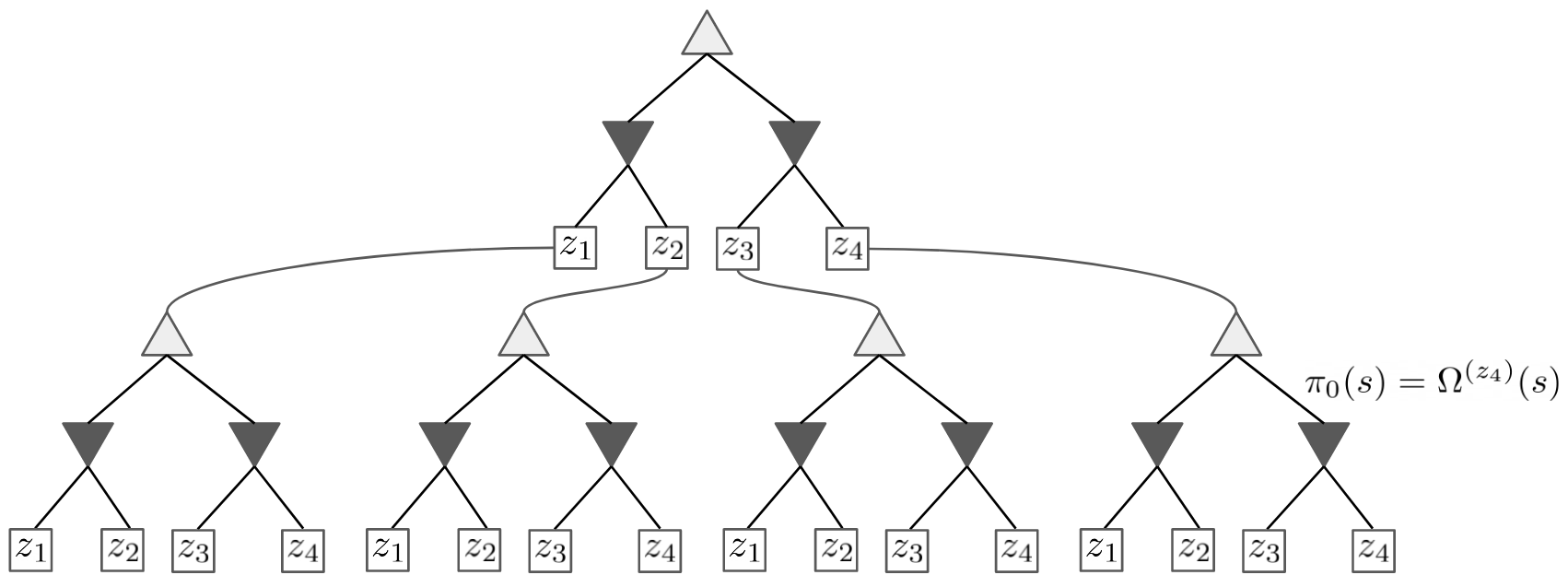}
  \caption{
  To compute the $\epsilon$-soundness, we construct a $k$-repeated game where we replace the decisions of the online algorithm $\search$ with a fixed chance policy $\pi_0$, resulting in a single-agent game of exponential size.
  In the presented example, $k=2$.
  }
\label{fig:pig-kresponse}
\end{figure}

\second{
To compute the $\epsilon$-soundness as in Definition~\ref{defn:pig-epsilon_sound}, we need construct a repeated game, where we replace the decisions of the online algorithm with stochastic (chance) transitions.
As we allow the online algorithm to be stateful and thus produce strategies depending on the game trajectory, the response game must also reflect this possibility.
The resulting game is thus exponential in size as it reflects all possible trajectories of $k$ matches.
The chance policy $\pi_0$ for a state corresponding to past experience $p = (z_1, z_2, \hdots)$ and current state $s$ is then $\pi_0(s^{p.s}) = \search^p(s)$.
We call this single-player game a $k$-step response game (Figure~\ref{fig:pig-kresponse}).
}

\section{Tabularized Offline Strategy}
\label{sec:pig-tabularized_policy}
\second{
When an online algorithm is stateless and thus produces the same strategy for an information state regardless of the previous matches, there is no need for the $k$-response game.
A fixed strategy $\pi$ sufficiently describes the behavior of the algorithm and the notions of exploitability and soundness collapse.
To compute the corresponding fixed strategy, one simply queries the online algorithm for all the information states in the game $\pi(s) = \search(s) \forall s \in \mathcal{S}$.
We refer to this strategy as the \gls{tabularized_strategy}.
}

\section{Search Consistency}
\second{
To prove that an online search algorithm is $\epsilon$-sound, we often want to formally state that the online algorithm plays ``consistently'' with an $\epsilon$-equilibrium.
This allows one to directly bound the $\epsilon$-soundness of the online algorithm.

We introduce three levels of consistency, with varying connections of how closely the online algorithm plays ``just like'' an $\epsilon$-equilibrium.
}

\subsection{Local Consistency}
\label{sec:pig-local_consistency}

\begin{figure}
\floatbox[{\capbeside\thisfloatsetup{capbesideposition={right,top},capbesidewidth=0.75\textwidth}}]{figure}[\FBwidth]
{\caption{Locally consistent algorithm can be highly exploitable.
Consider two Nash equilibria: $\pi_1 = \{s_1:L, s_2:Y \}, \pi_2 = \{s_1:R, s_2:X \}$.
Online algorithm $\search = \{s_1:L, s_2:Y \}$ is locally consistent but highly exploitable.}\label{fig:pig-local_consistency}}
{\includegraphics[width=0.2\textwidth]{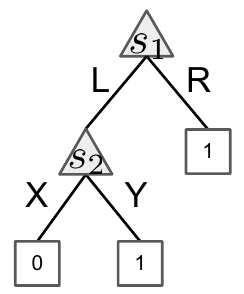}}
\end{figure}


\second{
The weakest of the connections, local consistency simply guarantees that every time we query the online algorithm, there is an $\epsilon$-equilibrium consistent with the produced behavioral strategy for that state (Definition \ref{def:pig-local_consistency}).

\begin{defn}
\label{def:pig-local_consistency}
\citep[Definition 6]{vsustr2020sound}
Algorithm $\search$ is locally consistent with $\epsilon$-equilibria if
\begin{align}
  \forall p=(z_1, \, z_2, \, \hdots,\, z_k) \,  \forall s \sqsubset z_k \,  \exists \pi \in \NEQ \, : \, 
  \search^{(z_1,\,\hdots,\, z_{k-1})}(s) = \pi(s).
\end{align}
\end{defn}

It might seem that local consistency is sufficient as it plays just like an equilibrium.
Figure \ref{fig:pig-local_consistency} presents a counterexample to this intuition. 
The example game includes two states ($s_1, s_2$) for the first player and gives an example of two Nash equilibria ($\pi^1_1, \pi^2_1$) that are in a way ``incompatible''.
If an online algorithm follows $\pi^1_1$ and $\pi^2_1$  in $s_1$ and $s_2$ respectively, the resulting strategy can be highly exploitable (Theorem~\ref{thm:pig-local_consistency_unsound}).

\begin{thm}
\label{thm:pig-local_consistency_unsound}
\citep[Theorem 7]{vsustr2020sound}
An algorithm that is locally consistent might not be sound.
\end{thm}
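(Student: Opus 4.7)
The plan is to exhibit a concrete counterexample along the lines of the construction sketched in Figure \ref{fig:pig-local_consistency}. The key is a small two-player zero-sum perfect-information game admitting two distinct Nash equilibria $\pi^1, \pi^2 \in \NEQ$ whose player-one policies disagree at two different decision states, specifically $\pi^1_1(s_1) = L,\, \pi^1_1(s_2) = Y$ against $\pi^2_1(s_1) = R,\, \pi^2_1(s_2) = X$, while the cross-combination $\tilde{\pi}_1 := \{s_1{:}L,\, s_2{:}X\}$ is not itself part of any Nash equilibrium and admits a best response that beats it by a fixed margin $\delta > 0$ below the game value $\underline{v_1}$. The first task is to pin down the terminal payoffs so that both candidate profiles satisfy the mutual best-response condition of Lemma \ref{lem:pig-nash_best_response} while leaving enough payoff slack for the exploitation gap $\delta$; this reduces to finite arithmetic using Algorithm \ref{alg:pig_best_response} on the game tree.

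Next I would define a stateless online algorithm $\search$ that returns $L$ at $s_1$ and $X$ at $s_2$. Local consistency per Definition \ref{def:pig-local_consistency} is then immediate, because every query is individually witnessed by one of the two equilibria: $\pi^1$ for the query at $s_1$ and $\pi^2$ for the query at $s_2$. Since $\search$ is stateless in the sense of Section \ref{sec:pig-online_stateful_vs_stateles}, its full dynamics across any repeated game are equivalent to playing the tabularized strategy $\tilde{\pi}_1$ of Section \ref{sec:pig-tabularized_policy}, so proving or refuting soundness reduces to analyzing this single fixed profile.

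To refute $\epsilon$-soundness, I would take $\search_2$ to be the stateless algorithm that plays a best response to $\tilde{\pi}_1$ in every match. Then for every $k$, the left-hand side of \eqref{eq:epsilon_sound} equals $R_1(\tilde{\pi}_1, \mathrm{BR}(\tilde{\pi}_1)) = \underline{v_1} - \delta$ by construction. On the right-hand side, for any genuine Nash equilibrium strategy $\pi_1$ the collapse of Nash and maximin in Corollary \ref{thm:pig-minmax_is_nash} yields $R_1(\pi_1, \pi'_2) \geq \underline{v_1}$ for every opponent $\pi'_2$, so the expectation is at least $\underline{v_1}$. Since $\delta$ is independent of $k$, no exact Nash fixed strategy can be dominated by $\search$, violating Definition \ref{defn:pig-epsilon_sound} for every $\epsilon$ below $\delta$ (modulo the standard slack relating $\epsilon$-Nash worst-case values to the game value, which can be absorbed by scaling the payoffs of the constructed game).

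The principal obstacle is the constructive step: calibrating the game so that both candidate profiles are genuine equilibria while the cross-mixture $\tilde{\pi}_1$ is provably exploitable with a quantifiable $\delta$. The figure indicates that a two-state example suffices; beyond that, the verifications are direct and finite.
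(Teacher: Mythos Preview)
Your proposal is correct and follows essentially the same counterexample-based approach as the paper: exhibit a game with two Nash equilibria whose player-one components differ at two states, define a stateless $\search$ that mixes them (one equilibrium's action at $s_1$, the other's at $s_2$), verify local consistency state-by-state, and then observe that the tabularized strategy is exploitable by a fixed margin. Your write-up is in fact more detailed than the paper's, which simply points to Figure~\ref{fig:pig-local_consistency} and the surrounding discussion; note also that the figure caption appears to contain a typo (it lists $\search = \{s_1{:}L, s_2{:}Y\}$, which would just be $\pi^1$), while the accompanying text and your proposal correctly use the cross-combination.
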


Looking at the counterexample in more detail, one can notice that it relies on the fact that one of the Nash strategies is not \subgame{} perfect.
Local consistency is indeed sufficient in perfect information games if the algorithm is consistent with a \subgame{} perfect equilibrium (Theorem~\ref{thm:pig-local_consistency_subgame_perfect}).
Theorem~\ref{thm:pig-local_consistency_subgame_perfect} does not hold in imperfect information settings and we will present a counter-example in Section~\ref{sec:iig-search_consistency}.


\begin{restatable}{thm}{localsubgameperfect}
\label{thm:pig-local_consistency_subgame_perfect}
\citep[Theorem 8]{vsustr2020sound}
In perfect information games, an algorithm that is locally consistent with a subgame perfect equilibrium is sound.
\end{restatable}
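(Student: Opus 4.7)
The plan is to prove by backward induction on the game tree that at every state $s$, the expected reward the algorithm accumulates from $s$ onward in a single match is at least $v(s)$, the game value of the subgame rooted at $s$. Evaluated at the root, this gives per-match expected reward $\geq v(s_0) = GV_1$ against any opponent, matching the worst-case guarantee of a fixed Nash equilibrium strategy and thereby certifying soundness in the sense of Definition~\ref{defn:pig-epsilon_sound}.

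Fix an opponent $\search_2$ and a single match within the repeated game, and let $W(s)$ denote the expected reward accumulated from $s$ onward in that match. I claim $W(s) \geq v(s)$ for every $s \in \mathcal{S}$. The base case is immediate: for $s \in \mathcal{Z}$, $W(s) = R_1(s) = v(s)$. For a chance state, $W(s) = \sum_a \pi_c(s,a) W(sa) \geq \sum_a \pi_c(s,a) v(sa) = v(s)$ by the inductive hypothesis and the definition of $v$. For an opponent state at which the opponent produces some distribution $p$ over actions, $W(s) = \sum_a p(a) W(sa) \geq \sum_a p(a) v(sa) \geq \min_a v(sa) = v(s)$, using that $v(s) = \min_a v(sa)$ at a minimizer's state.

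The crucial case is an algorithm state. By local consistency, $\search(s) = \pi^{(s)}(s)$ for some subgame perfect equilibrium $\pi^{(s)}$. The structural fact I will invoke is that in a subgame perfect equilibrium, every action in the support of the maximizer's strategy at $s$ achieves the subgame value, i.e.\ $v(sa) = v(s)$ whenever $\pi^{(s)}(s,a) > 0$; this is because such $a$ must be a best response within the subgame at $s$ where all descendants already play optimally. Combined with the inductive hypothesis, $W(s) = \sum_a \search(s,a) W(sa) \geq \sum_a \search(s,a) v(sa) = v(s)$. Applying the claim at $s_0$ gives per-match expected reward $\geq GV_1$ against any $\search_2$, matching the worst-case performance of any fixed Nash equilibrium strategy by Theorem~\ref{thm:pig-minmax}.

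The main obstacle is bookkeeping around the algorithm's statefulness: the witnessing SPE $\pi^{(s)}$ may differ from match to match and may be \emph{incompatible} with the SPE witnessing consistency at a descendant of $s$. In perfect information this is harmless because subgame perfection is a purely local property — the support characterization $v(sa) = v(s)$ holds independently for every SPE at a maximizer's state, so no gluing across different SPEs is required, and the induction on $s$ can ignore the algorithm's internal state entirely. This locality is exactly what Theorem~\ref{thm:pig-local_consistency_unsound} shows to fail for non-SPE Nash equilibria, and it is also the property that will break in the imperfect information setting of Chapter~\ref{chap:iig-online_settings}, where $\search(s)$ no longer determines the value of the corresponding subproblem in isolation.
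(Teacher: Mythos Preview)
Your proof is correct. The paper itself does not supply a proof of this theorem, deferring to the cited reference; your backward-induction argument on the subgame value $v(s)$ is the natural route, and the key structural fact you invoke --- that every supported action at a maximizer's node in an SPE already satisfies $v(sa)=v(s)$ --- is precisely what lets the different witnessing equilibria $\pi^{(s)}$ glue together without any global compatibility requirement. One small tightening: the quantity $W(s)$ technically depends on the algorithm's internal state $\theta$ (and on the opponent's internal state), so the induction should formally run over pairs $(s,\theta)$; you acknowledge this in your final paragraph, and the bound $W(s,\theta)\geq v(s)$ indeed holds uniformly in $\theta$ since the support characterization applies to every SPE. With that, the per-match bound $\geq GV_1$ holds for every match regardless of history, which is the worst-case guarantee the paper's notion of soundness requires.
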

}

\subsection{Global Consistency}
\second{
Local consistency guarantees consistency for individual states in isolation.
The problem we have seen is that the combination of behavioral strategies of individual states can yield highly exploitable strategy as there might be no equilibrium consistent with the resulting tabularized strategy.
A natural extension is then to guarantee consistency for all the visited states in combination.
Global consistency guarantees the existence of an equilibrium consistent with all the states visited during the gameplay (Definition~\ref{def:pig-global_consistency}).
Unfortunately, even this stronger notion of consistency does not guarantee soundness (Theorem~\ref{thm:global_consistency_iig}).

\begin{defn}[Global Consistency]
\label{def:pig-global_consistency}
\citep[Definition 9]{vsustr2021sound}
Algorithm $\search$ is globally consistent with $\epsilon$-equilibria if
\begin{align}
  \forall p=(z_1, \, z_2, \, \hdots,\, z_k) \, \exists \pi \in \NEQ \, \forall s \sqsubset z_k \,   : \, 
  \search^{(z_1,\,\hdots,\, z_{k-1})}(s) = \pi(s).
\end{align}
\end{defn}

\begin{restatable}{thm}{globallyconsistentnotsound}
\label{thm:global_consistency_iig}
\citep[Theorem 10]{vsustr2020sound}
An algorithm that is globally consistent with an $\epsilon$-equilibria might not be $\epsilon$-sound.
\end{restatable}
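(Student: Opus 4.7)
The plan is to prove this by an explicit counterexample: a game $G$, a stateful online algorithm $\search$ that is globally consistent with the $\epsilon$-Nash equilibria of $G$, and a stateful adversary $\search_2$ against which $\search$'s expected average reward over a sufficient number of matches $k$ falls strictly below the guarantee of every fixed $\epsilon$-Nash strategy. Since Definition~\ref{def:pig-global_consistency} binds the equilibrium witness $\pi$ to a fixed history $p=(z_1,\ldots,z_k)$ but allows different witnesses for different histories, the natural attack vector is to let $\search$ switch among equilibria across matches while each match individually is still consistent with one complete equilibrium.

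The first step is to select a game admitting at least two distinct $\epsilon$-Nash equilibria $\pi^A,\pi^B$ whose action prescriptions differ on disjoint portions of the game tree, analogous in spirit to the two-equilibrium construction used for local consistency in Figure~\ref{fig:pig-local_consistency}. I would then design $\search$ as a stateful automaton that commits, in each match, to playing either $\pi^A$ or $\pi^B$ \emph{in full}, with the choice determined by features of the previous matches' terminals. For every history $p=(z_1,\ldots,z_k)$, the single equilibrium selected in match $k$ certifies the consistency condition at every state $s\sqsubset z_k$, so global consistency holds by construction.

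The second step is to engineer $\search_2$ that exploits the switching mechanism in the $k$-step response game (Figure~\ref{fig:pig-kresponse}): in early matches, $\search_2$ makes probing moves that deterministically steer $\search$ into the equilibrium most vulnerable to a best response, and in subsequent matches $\search_2$ executes that best response. Against a \emph{fixed} $\epsilon$-Nash strategy, the adversary's best possible exploit per match is bounded by $\epsilon$; the counterexample has to show that the adversary's cross-match coordination, working in concert with $\search$'s equilibrium switching, yields a strictly greater average exploit. The evaluation is then a direct computation of $\EX_{p\sim P^k_{\search,\search_2}}[R(p)]$ on the exponential response tree, compared against $\EX_{p\sim P^k_{\pi,\search_2}}[R(p)]$ for every candidate fixed $\epsilon$-Nash $\pi$.

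The principal obstacle is establishing that the average exploit genuinely exceeds $\epsilon$. By Theorem~\ref{thm:pig-nash_implies_minmax} and the Minimax Theorem, every \emph{exact} Nash strategy in a two-player zero-sum game already guarantees the game value against any opponent, so if $\search$ played genuine Nash strategies in each match the per-match exploit would be zero and $0$-soundness would follow trivially. The counterexample must therefore genuinely leverage the $\epsilon$-slack: the witnesses $\pi^A,\pi^B$ must be proper $\epsilon$-equilibria rather than exact equilibria, and the game structure must allow the adversary to amplify this slack by exploiting multiple matches in a compounding fashion. Producing such a compounding effect in a clean way is where the imperfect-information setting of Chapter~\ref{chap:iig-online_settings} becomes crucial, since there the subgame structure permits the chosen equilibrium in one match to leak information usable in later matches; accordingly, I expect the concrete construction to be carried out in that chapter, as reflected by the \texttt{restatable} declaration of the theorem.
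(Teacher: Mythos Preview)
Your construction cannot produce a counterexample. You have $\search$ commit, in each match, to playing some full $\epsilon$-equilibrium $\pi^A$ or $\pi^B$. But in a two-player zero-sum game, an $\epsilon$-Nash strategy for player~$i$ is an $\epsilon$-\maximin{} strategy (this is exactly $\delta_i(\pi_i)\le\epsilon$, i.e.\ $BRV_i(\pi_i)\ge GV_i-\epsilon$). Hence whichever of $\pi^A,\pi^B$ the algorithm commits to in match~$k$, the expected reward in that match is at least $GV-\epsilon$ against \emph{any} opponent behaviour, irrespective of what happened in prior matches. Averaging over $k$ matches preserves the bound, so your $\search$ is $\epsilon$-sound. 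You correctly identify this obstruction for exact Nash and then hope that $\epsilon$-slack allows ``compounding'' exploitation across matches; it does not, because the $\epsilon$-\maximin{} guarantee is per-match and unconditional. No adaptive adversary $\search_2$ can do better than $GV+\epsilon$ per match against a full $\epsilon$-equilibrium, so there is nothing to amplify.

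The actual leverage in Definition~\ref{def:pig-global_consistency} is that the witness $\pi$ must agree with $\search$ only at states $s\sqsubset z_k$ \emph{actually visited} in the realized trajectory $z_k$; states off the path are unconstrained for that $p$. The counterexample is therefore a \emph{stateless} algorithm whose tabularized strategy $\sigma$ is not itself an $\epsilon$-equilibrium, yet whose restriction to every single root-to-leaf path coincides with some $\epsilon$-equilibrium. Concretely: let chance (or the opponent in the imperfect-information version) select one of several disjoint branches, each containing exactly one state $s_j$ of player~$1$; have $\search$ play the inferior action at every $s_j$. Each match visits a single $s_j$, and the witness for that trajectory is the $\epsilon$-equilibrium that is bad only at $s_j$ and optimal elsewhere. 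The full $\sigma$, however, is bad everywhere and thus exploitable by strictly more than $\epsilon$. This already fails $\epsilon$-soundness at $k=1$: no cross-match machinery is needed, and the repeated-game structure in your plan is a red herring. In the imperfect-information revisiting (Chapter~\ref{chap:iig-online_settings}), the coordinated matching pennies game gives the same phenomenon even with $\epsilon=0$, because there chance separates the two player-$2$ infostates so that no match visits both.
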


But what if the algorithm keeps on playing the repeated game?
While the global consistency with an equilibrium does not guarantee soundness, it guarantees that the average reward eventually converges to the game value in the limit (Theorem~\ref{thm:global_consistency_iig_limit}).

\begin{restatable}{thm}{globalconsistencyiiglimit}
\label{thm:global_consistency_iig_limit}
\citep[Theorem 11]{vsustr2020sound}
For an algorithm $\Omega$ that is globally consistent with an $\epsilon$-equilibria:
\begin{align}
    \forall k \ \forall \search_2 \, : \, \EX_{p \sim P^k_{\search, \search_2}}[R(p)] \geq GV - \epsilon - \frac{ \bigl| \mathcal{S}_1 \bigr| \Delta_R}{k},
\end{align}
\end{restatable}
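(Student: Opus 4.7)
The plan is to argue that, on a per-match basis, global consistency forces the algorithm to behave along the realized trajectory as some $\epsilon$-equilibrium would (yielding reward at least $GV - \epsilon$ for that match), with the residual loss arising only from the fact that different realizable trajectories within a match may pin the algorithm to different equilibria. That residual loss is then amortized across $k$ matches to produce the additive $|\mathcal{S}_1|\Delta_R / k$ term.

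First, I would fix an arbitrary $\search_2$ and, for each match index $j$, condition on the past history $(z_1, \ldots, z_{j-1})$. The algorithm's strategy for match $j$ is then a fixed function $\Omega_j : \mathcal{S}_1 \to \Delta(\mathcal{A})$, and the opponent plays some fixed per-match policy $\pi_2^j$. Global consistency (Definition \ref{def:pig-global_consistency}) provides, for each realizable trajectory $z$ of match $j$, an $\epsilon$-equilibrium $\pi^{j,z}$ with $\pi^{j,z}_1(s) = \Omega_j(s)$ for every $s \sqsubset z$. In the happy case where a single $\pi^{j,*}$ matches $\Omega_j$ on every state reachable under $(\Omega_j, \pi_2^j)$, the distributions over trajectories under $(\Omega_j, \pi_2^j)$ and $(\pi^{j,*}_1, \pi_2^j)$ coincide, and the $\epsilon$-equilibrium guarantee $R_1(\pi^{j,*}_1, \pi_2^j) \geq GV - \epsilon$ yields $\EX[R_1(z_j) \mid \mathrm{hist}] \geq GV - \epsilon$ directly.

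The subtle case is when no uniform $\pi^{j,*}$ exists --- the same loophole that Theorem \ref{thm:global_consistency_iig} uses to defeat soundness in a single match. The plan here is to introduce a potential $\Phi$ over histories that tracks, for each state $s \in \mathcal{S}_1$, whether the algorithm's play at $s$ has already been ``pinned down'' by past matches. I set $\Phi_0 = |\mathcal{S}_1|\Delta_R$; whenever a match's equilibrium commitment forces a conflict at a previously-unresolved state $s$, I charge at most $\Delta_R$ to $s$ and mark it resolved, and subsequent matches may then fold the committed strategy at $s$ into their reference equilibrium at no further cost, because by global consistency any later trajectory through $s$ is again consistent with an equilibrium respecting the committed strategy at $s$. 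This yields a per-match bound $\EX[R_1(z_j) \mid \mathrm{hist}] \geq GV - \epsilon - (\Phi_{j-1} - \Phi_j)$, and summing over $j = 1, \ldots, k$ telescopes the potential differences to at most $\Phi_0 \leq |\mathcal{S}_1|\Delta_R$, yielding the claim after dividing by $k$.

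The main obstacle will be making the potential rigorous: I need to show that at every match one can select a reference equilibrium whose disagreement with $\Omega_j$ on the reachable subtree is chargeable exclusively to newly-resolved states, and that such a choice is always available under only the global-consistency hypothesis (not a stronger per-history commitment). This combinatorial step, together with verifying that the resolution status of a state is monotone across matches, is where I expect the delicacy of the argument to lie.
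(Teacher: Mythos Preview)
The thesis does not contain a proof of this theorem; it is stated by citation to \v{S}ustr et al.\ (2020, Theorem~11), so there is no in-paper argument to compare against. Evaluating your proposal on its own merits:

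The potential-function scheme has a real gap at precisely the step you flag as delicate, and I do not think it can be repaired as written. You claim that once a state $s$ is ``resolved,'' later matches ``may then fold the committed strategy at $s$ into their reference equilibrium at no further cost, because by global consistency any later trajectory through $s$ is again consistent with an equilibrium respecting the committed strategy at $s$.'' But global consistency (Definition~\ref{def:pig-global_consistency}) is a per-trajectory, per-match guarantee: it says only that the algorithm's play along each realized $z_k$ matches \emph{some} $\epsilon$-equilibrium, with no cross-match or cross-trajectory coherence. The algorithm is stateful, so $\Omega_{j}(s)$ can differ from $\Omega_{j'}(s)$; even when it does not, the witnessing equilibrium may change freely. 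Nothing in the hypothesis forces later witnesses to respect an earlier ``commitment,'' so the monotone-resolution property your telescoping relies on is simply not available.

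More seriously, the amortized bound you are aiming for cannot hold in imperfect information under the definition as stated here. Take the coordinated matching-pennies game of Section~\ref{sec:iig-search_consistency}: a \emph{stateless} algorithm that plays $\pi^a_2(s_1)$ at $s_1$ and $\pi^b_2(s_2)$ at $s_2$ (two incompatible equilibria) is globally consistent---each trajectory visits only one of $s_1,s_2$, so a single witness suffices per trajectory---yet it is exploitable by a fixed margin in \emph{every} match. Its average reward is bounded away from $GV$ uniformly in $k$, contradicting the claimed bound once $k$ exceeds $\bigl|\mathcal{S}_1\bigr|\Delta_R$ divided by that margin. So either the theorem is intended for perfect-information trees (where sibling subgames decompose and any mixture of per-subtree equilibrium plays is again an equilibrium, killing the pathology), or the notion of global consistency in the cited paper is strictly stronger than the one recorded here. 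You should pin down which of these holds before investing in the potential argument; in the perfect-information case a much more direct per-match argument is available, and in the imperfect-information case your amortization cannot succeed against the definition as written.
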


\begin{figure}
    \centering
    \begin{subfigure}[b]{0.3\textwidth}
        \includegraphics[width=\textwidth]{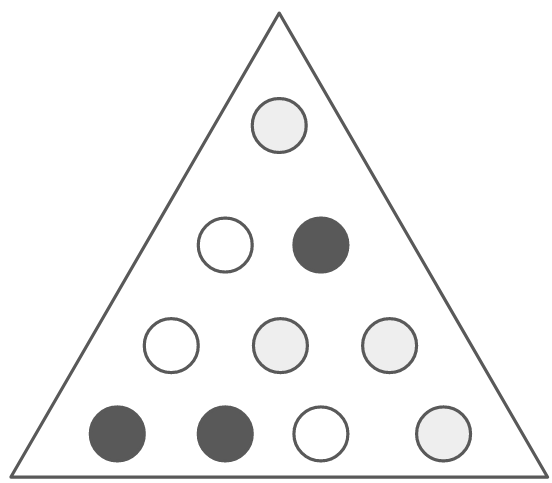}
        \caption{}
        \label{fig:pig-resolving_nonconsistent}
    \end{subfigure}
    ~ 
    \begin{subfigure}[b]{0.3\textwidth}
        \includegraphics[width=\textwidth]{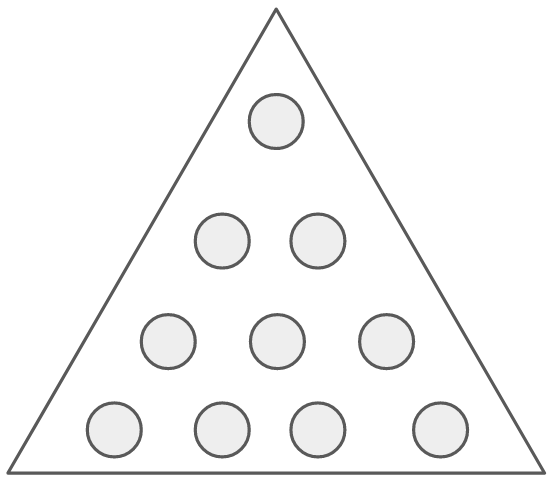}
        \caption{}
        \label{fig:pig-resolving_consistent}
    \end{subfigure}
    \caption{
    (a) Local consistency --- solving \subgames{} independently can lead to solution where policies in all individual states are
    consistent with an optimal policy, but each with possibly a different one.
    For this figure, the different colors represent different optimal policies for the full game.
    (b) Strong global consistency.
    }
    \label{fig:animals}
\end{figure}

}

\subsection{Strong Global Consistency}
\second{
Essentially, the problem with global consistency is that it guarantees the existence of a consistent equilibrium \emph{after} the game-play is generated.
Strong global consistency additionally guarantees that the game-play \emph{itself} is generated consistently with an equilibrium.
In other words, the online algorithm simply exactly follows a predefined equilibrium.

\begin{defn}
\label{def:pig-strong_global_consistency}
\citep[Strong Global Consistency]{vsustr2020sound}
Algorithm $\search$ is strongly globally consistent with $\epsilon$-equilibria if
\begin{align}
     \exists \pi \in \NEQ \,  \forall p=(z_1, \, z_2, \, \hdots,\, z_k) \,\forall s \sqsubset z_k \, : \, \search^{(z_1,\,\hdots,\, z_{k-1})}(s) = \pi(s).
\end{align}
\end{defn}

Strong global consistency guarantees that the algorithm can be tabularized, and the exploitability of the tabularized strategy matches $\epsilon$-soundness of the online algorithm.
}

\section{Search as Re-Solving}
\label{sec:pig-search_as_resolving}
\begin{figure}
    \centering
    \begin{subfigure}[b]{0.3\textwidth}
        \includegraphics[width=\textwidth]{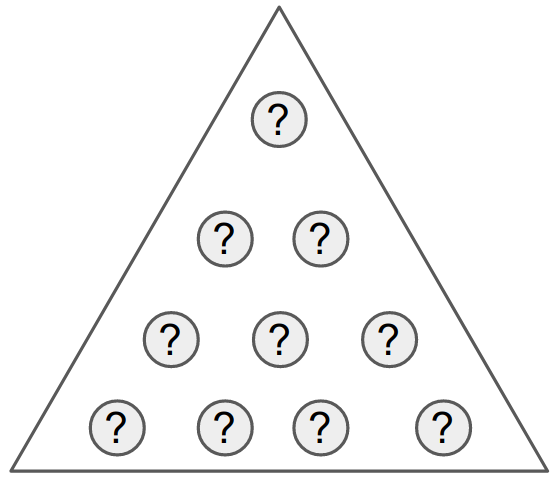}
        \caption{}
    \end{subfigure}
    ~ 
    \begin{subfigure}[b]{0.3\textwidth}
        \includegraphics[width=\textwidth]{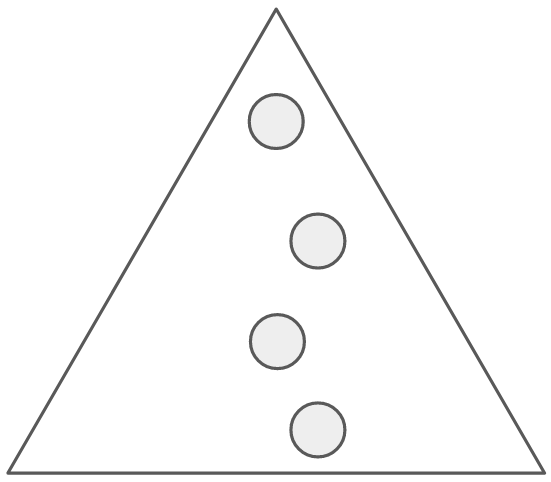}
        \caption{}
    \end{subfigure}
    \caption{
    (a) Possibly unknown offline policy $\pi$ the online search is \resolving{}.
    (b) Online search that is strong globally consistent with $\pi$.
    }
    \label{fig:pig-resolving}
\end{figure}

\second{
The easiest way to argue that an online algorithm is sound is to make sure it is strongly globally consistent with some policy $\pi$.
Such a search method then plays just like an offline policy $\pi$ would, except there is no explicit representation of $\pi$.
Given a state $s$, we do not just solve for an optimal policy for that state.
We rather \resolve{} a policy for that state, making sure it matches the ``original'' solve $\pi$.
Search is then \resolving{} this policy step by step for all the visited states (Figure \ref{fig:pig-resolving}).
}
\chapter{Search}
\label{chapter-search}
\second{
We will now look into a particular variant of online algorithms --- search.
The algorithm is online as it computes the strategy for the current state $s \in \mathcal{S}$ only once the state is observed during game play.
It then computes the strategy for the current state by searching forward, reasoning about the current \subgame{}.

We start with full lookahead settings, where the online algorithm constructs the full \subgame{} to be reasoned about.
We describe the online minimax algorithm that computes a policy for the current state by running the \gls{minimax_algo} (Section~\ref{section-minmax}) on the \subgame{} rooted in the current state.
We then extend this algorithm to limited lookahead, where the search is truncated after some number of moves and the game value of the future \subgames{} is used as a substitute for continued search.
Section~\ref{sec:pig-search_as_resolving} then discusses how to view these search algorithms in the context of \resolving{}, consistency framework an soundness.

We finish with Section~\ref{sec:pig-approximate_value_functions} that includes several approaches for representing (approximate) value functions and Section~\ref{sec:pig-lookahead_tree_size} that includes discussion on the importance of size and structure of the lookahead tree.
}

\section{Full Lookahead Online Minimax Algorithm}
\label{sec:pig-online_minmax}
\second{
Full lookahead \gls{online_minimax_algo} is a stateless online methods that produces a strategy for the current state by searching forward and reasoning about the \subgame{} rooted in the current state.
This allows search to focus on the relevant part for the current decision, as \subgames{} are the smallest sub-problem we can reason about in isolation (Section \ref{chap:pig-subgames}).
Full lookahead \gls{online_minimax_algo} simply runs the minimax algorithm on the full \subgame{} rooted in the current decision $s \in \mathcal{S}$.
It then produces a strategy for the current state $\search(s)$ and disregards the strategies of the other states of that \subgame{}.

The upside of this search algorithm is that if the current state is near the end of the game, we only need to reason about a small \subgame{}.
The downside is that if the state is near the beginning of the game, we have to traverse the entire game tree anyway (Figure~\ref{fig:pig-subgame_full_lookahead}).
Thus the algorithm might be no better than the offline minimax algorithm, although it can avoid the memory needed to store the complete policy.
While the algorithm is arguably not very practical, it is an important building block for the future algorithms and methods presented in this thesis.
}

\section{Limited Lookahead Online Minimax Algorithm}
\label{sec:pig-minimax_limited_lookahead}

\second{
The full lookahead \gls{online_minimax_algo} considers all the states in the \subgame{}, looking forward until the end of the game.
The idea of \gls{limited_lookahead} is to rather look forward only some number of steps, truncate the search and use the value (function) of the future \subgames{} (future \subgames{} of the current \subgame{}, Figure \ref{fig:pig_limited_lookahead}) in place of the continued search.
The core of the idea is to realize that the minimax algorithm only uses values of the future states to derive a strategy.
Given the game values (value under an optimal policy) of \subgames{}, we are thus able to still derive an optimal policy for the current state.

The limited lookahead version of the \gls{online_minimax_algo} builds a limited lookahead tree and runs the minmax algorithm within that tree.
As we still need to know the value of the states at the end of the lookahead, the algorithm evaluates these states using a value function (Algorithm \ref{alg:pig-minmax_limited_lookahead}).
}

\begin{figure}[ht]
    \centering
    \begin{subfigure}[b]{0.3\textwidth}
        \includegraphics[width=\textwidth]{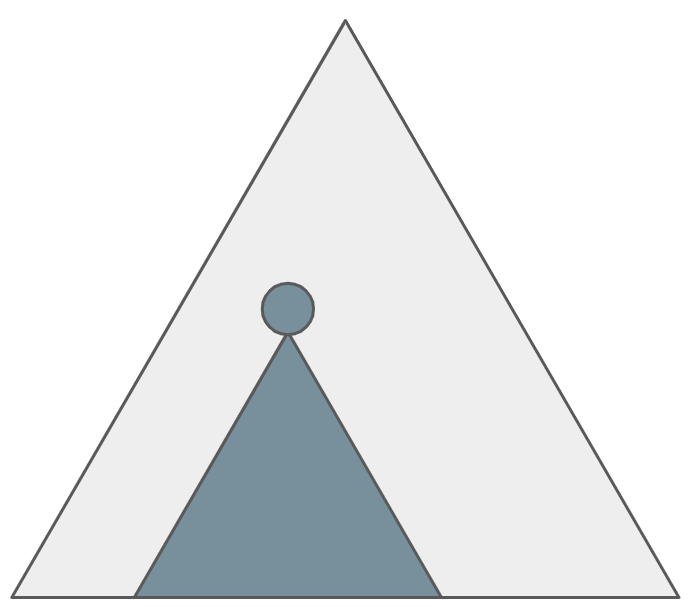}
        \caption{Full lookahead}
        \label{fig:pig-subgame_full_lookahead}
    \end{subfigure}
    \begin{subfigure}[b]{0.3\textwidth}
        \includegraphics[width=\textwidth]{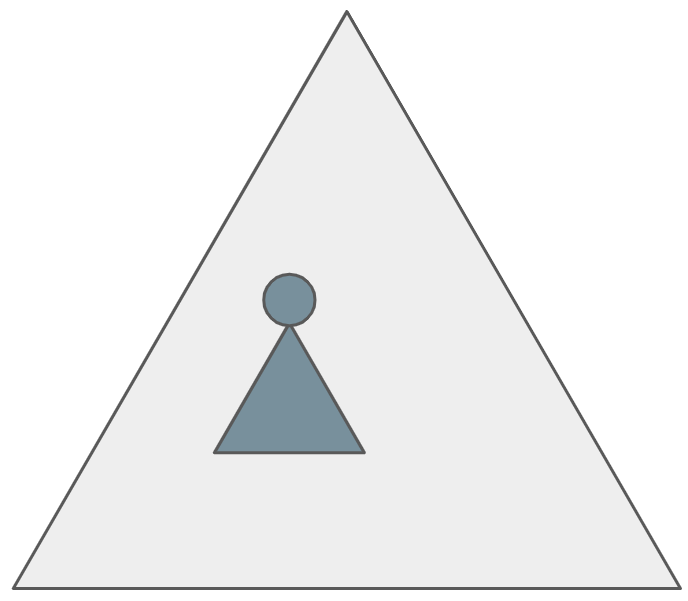}
        \caption{Limited lookahead}
        \label{fig:pig-subgame_limited_lookahead_a}
    \end{subfigure}
    \begin{subfigure}[b]{0.3\textwidth}
        \includegraphics[width=\textwidth]{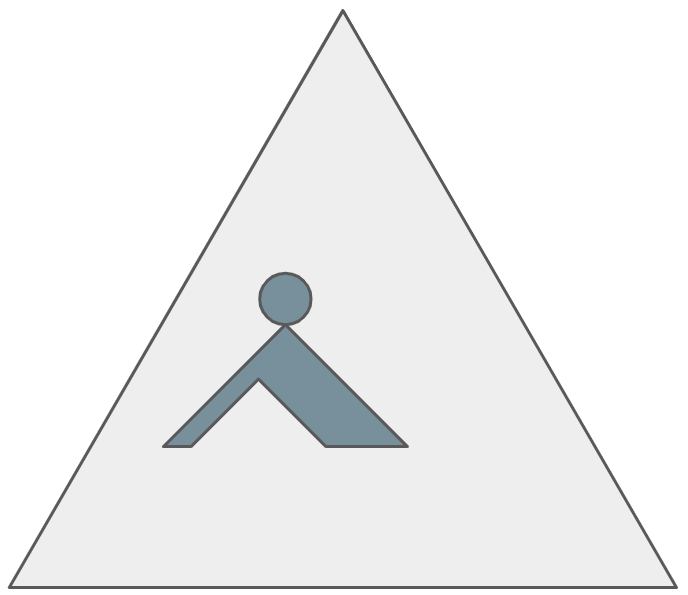}
        \caption{Limited lookahead}
        \label{fig:pig-subgame_limited_lookahead_b}
    \end{subfigure}
    \caption{
    (a) Full lookahead consists of the full subgame rooted in the current state.
    (b) and (c) Limited lookahead consists only of some of the future state.
    Size and shape of the lookahead can vary.
    }
    \label{fig:pig_limited_lookahead}
\end{figure}

\begin{algorithm}[ht]
\caption{Min Max Limited Lookahead}
\label{alg:pig-minmax_limited_lookahead}
\begin{algorithmic}[1]
\State
\Function{MinMaxLookaheadDFS}{$s \in \mathcal{S}$, $V: \mathcal{S} \rightarrow \mathcal{R}$}

\If{$s \in Z$} \Return{$R_1(s)$} \Comment{Terminal state.} \EndIf
\If{$lookahead\_end(s)$}
\State \Return{V(s)} \Comment{Evaluate the state using value the function.}
\EndIf
\State
    \For{$a \in \mathcal{A}$}
        \State $v[sa]$ = \Call{MinMaxLookaheadDFS}{$sa$, $V$}
    \EndFor
\State
\If{p(s) = 1} \Comment{Maximizing player to act.}
    \State $\pi(s) = \argmax_{a \in \mathcal{A}(s)}v[sa]$
    \State \Return{$\max_{a \in \mathcal{A}(s)}v[sa]$}
\ElsIf{p(s) = 2} \Comment{Minimizing player to act.}
    \State $\pi(s) = \argmin_{a \in \mathcal{A}}v[sa]$
    \State \Return{$\min_{a \in \mathcal{A}(s)}v[sa]$}
\Else \Comment{Chance player to act.}
    \State \Return{$\sum_{a \in \mathcal{A}(s)} \pi(s, a) v[sa]$}
\EndIf

\EndFunction

\Function{MinMaxLookahead}{$s \in \mathcal{S}$, $V: \mathcal{S} \rightarrow \mathcal{R}$}
\State \Call{MinMaxLookaheadDFS}{$s$, $V$}
\State \Return{$a \sim \pi(s)$}
\EndFunction

\end{algorithmic}
\end{algorithm}

\subsection{Decomposition}
\second{
There are two places where we just used the decomposition to \subgames{}.
First, the search methods produce a strategy for the current state by reasoning about the \subgame{} rooted in the current state.
Second, \subgames{} are the basis of the value functions in the case of limited lookaheads.
Conceptually similar decomposition will then take place in imperfect information.
}

\section{Re-Solving and Consistency}
\second{
Chapter~\ref{chap:pig-online_settings} detailed the importance of non-locality.
While search reasons inherently locally, it needs to make sure the resulting strategy is strongly globally consistent with an optimal strategy for the full game.
Fortunately, this is particularly easy in this case, as the algorithm produces exactly the same behavioral strategies as the offline minmax algorithm.
Another possible argument is to use Theorem~\ref{thm:pig-local_consistency_subgame_perfect}.

Section~\ref{sec:pig-search_as_resolving} then discussed that in the case of strong global consistency, an online algorithm can be thought of as if it was \resolving{} an offline policy.
In the case of \gls{online_minimax_algo}, the policy being \resolved{} is the policy that the offline minimax algorithm (Algorithm~\ref{alg:pig-minmax}) would produce if we were to run it on the full game (Figure~\ref{fig:pig-online_minmax_resolving}).
}

\begin{figure}[ht]
    \centering
    \begin{subfigure}[b]{0.32\textwidth}
        \includegraphics[width=\textwidth]{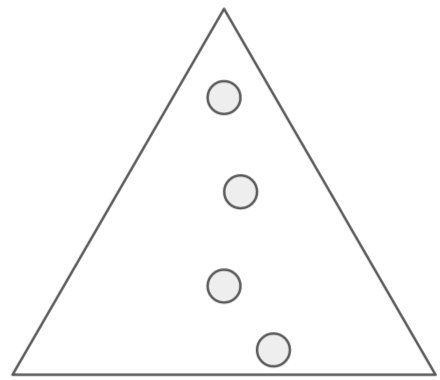}
        \caption{}
        \label{fig:pig-search_minmax_policy}
    \end{subfigure}
    \begin{subfigure}[b]{0.32\textwidth}
        \includegraphics[width=\textwidth]{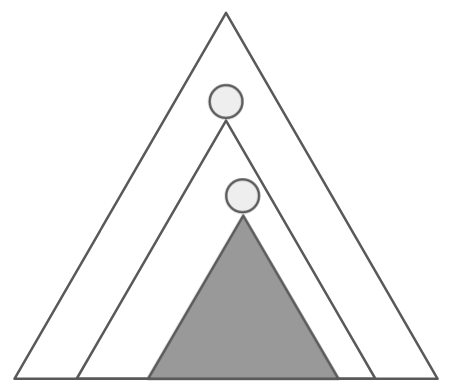}
        \caption{}
        \label{fig:pig-search_minmax_full_lh}
    \end{subfigure}
    \begin{subfigure}[b]{0.32\textwidth}
        \includegraphics[width=\textwidth]{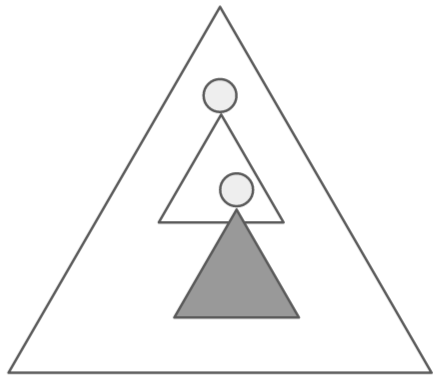}
        \caption{}
        \label{fig:pig-search_minmax_limited_lh}
    \end{subfigure}
    \caption{a) \Resolving{} policy for the visited states. 
            b) Online minimax full lookahead.
            c) Online minimax limited lookahead.
             }
    \label{fig:pig-online_minmax_resolving}
\end{figure}

\section{Approximate Value Functions}
\label{sec:pig-approximate_value_functions}
\second{
Exact value functions allow the search procedure to reason in a limited lookahead, as they provide the exact game value for the relevant future states at then end of the lookahead.
But to compute the game value, we need to solve \subgames{} corresponding to these states anyway.
Maybe instead of using the exact value, we can estimate the value of the future states.

Historically, the first approaches for approximating values were based on game specific heuristics (e.g. material in chess).
Another popular method is to use sampling and estimate the value of a state by sampling some trajectories \citep{kocsis2006bandit, browne2012survey}.

Modern approaches learn the evaluation function through the machine learning paradigm as the nature of the problem allows to generate large data to train on using self-play methods.
Recently, deep learning \citep{lecun2015deep, schmidhuber2015deep} has proven a great fit for such a task as they provide excellent empirical generalization.
}

\section{Lookahead Tree Size}
\label{sec:pig-lookahead_tree_size}
\second{
With an exact value function, the size of the lookahead tree is irrelevant as even a single one-step lookahead is guaranteed to produce an optimal policy.
But once we have only approximate values, one would expect larger trees to result in better policy.
Indeed in practice, the size of the tree does matter.
While the are good reasons for this to be the case, there is no guarantee for this behavior \citep{nau1982investigation}.

Furthermore, size is not the only parameter that matters in practice and structure is often also critical.
We often want to search deeper where the current policy is likely to play or where we are uncertain about the values (similarly to how humans perform search in games).
For example in chess, professional players search very deep where they believe the situation to be interesting.
It is thus also common to dynamically expand the lookahead tree during the search rather than to construct fixed-sized lookaheads.
}

\chapter{Examples}
\label{sec:pig-examples}

\second{
We now list some important milestones, where the combination of search and value functions led to historical achievements in perfect information games.
While the search methods and value functions evolved over the years, the high level ideas can be traced to the very dawn of games research.
}

\section{Samuel's Checkers}
\second{
Samuel's checkers is an exciting application that dates all the way back to $1959$ \citep{samuel1959some}.
It is amazing to see how many of the ideas considered as new and modern already appeared in this program.
While the strength of the program was relatively weak, one must realize how limited both hardware and software were at that time.
And while search algorithms for perfect information games have made a great deal of progress both in strength and generality since then,
the core principles of search combined with self-play learned value functions are already present here.
}

\paragraph{Search Method}
\second{
The search algorithm was a variant of minmax with AlphaBeta pruning (Section~\ref{sec:iig-alphabeta}).
The lookahead tree was dynamically grown using rules based on the depth as well as some game specific properties.
}

\paragraph{Value Functions}
\second{
Perhaps the most impressive part were the value functions.
The value was estimated via a polynomial combinations of hand-crafted game specific features.
Not only were the weights adjusted through self-play --- the learning method even used bootstrapping\footnote{The update/target for the value function being based on the value function estimate of the future states, see e.g. \cite{sutton2018reinforcement}.} techniques.
Finally, the paper also describes a method of learning from prior human games.
}

\section{Deep Blue}
\second{
Being one of the most popular board game in the world, chess playing algorithms have always been of great interest to computer scientists.
Shannon's work from $1950$ already describes a simple variant of minimax search and even includes a section on approximate value function computed as a combination of game specific features \citep{shannon1950xxii}.

Almost $50$ years after that --- in $1997$ --- Deep Blue defeated the world champion Garry Kasparov ($3.5$–$2.5$).
For the first time in history, machines defeated the human world champion in a regulation match \citep{campbell2002deep}.
In $1996$, the previous version of Deep Blue lost to Kasparov by $2$-$4$, making the second match particularly interesting.
}

\paragraph{Search Method}
\second{
Deep Blue used a massively parallel search that combined both hardware and software search.
The search was a variant of AlphaBeta search with dynamically grown tree designed to be highly non-uniform.
The software/hardware search allowed it to search up to $330$ million chess positions per second.
}

\paragraph{Value functions}
\second{
Deep Blue's evaluation was implemented in hardware. This made the evaluation particularly fast, but hard to modify.
The evaluation function consisted of about $8,000$ different chess specific features (``passed pawns'', ``bishop pair'') that were combined to produce the final value estimate.
Making such a complicated game specific value function is a hard and time-consuming task.
Authors themselves stated that ``...we spent the vast majority of our time between the two matches designing, testing, and tuning the new evaluation function.''.
}

\section{TD-Gammon}
\second{
TD-Gammon showed how powerful a combination of self-play learning and neural nets as value function approximation can be \citep{tesauro1995temporal}.
It was the first successful application of this combination for large games, achieving impressive performance.
}

\paragraph{Search Tree}
\second{
The search tree was a fairly small $2$-ply lookahead, with larger trees suggested as a possible future improvement.
}

\paragraph{Value functions}
\second{
TD-Gammon used a feed-forward fully conected neural network for its value function.
The feature representation initially encoded raw board representation, and further game-specific hand-coded features were later added to improve the performance.
}

\section{Alpha Go}
\second{
AlphaGo further showed the power of combining deep learning and self-play.
Go is a prime example of game where hand-crafting value functions or features is particularly difficult, and thus state of the art programs relied on rollout simulations to estimate the state values.

But the strength of such agents was far below professional players, and many predicted that it would take further decades for computers to match the top humans.
AlphaGo shocked many by reaching this milestone in $2016$ \citep{silver2016mastering}.
}

\paragraph{Search Method}
\second{
The search used is a variant of MCTS with a p-UCB formula to guide the simulations.
}

\paragraph{Value functions}
\second{
Value functions are represented with deep convolutional neural network \citep{lecun2015deep, schmidhuber2015deep}, and were pre-trained on human data through supervised training.
Self-play was then used for further improvement.
While some game specific features were still present in this version, further improvements made the agent both more general and stronger.
}

\section{From AlphaGo to MuZero}
\second{
Even though AlphaGo was a huge achievement, there were still some limitations.
It could only play go, it needed human data for the initial supervised training and the value function represented by a neural network received some hand-crafted go features.
Later work removed all of these limitations.
AlphaGoZero uses no human data and removes hand-crafted features, while achieving even stronger performance \citep{silver2017mastering_alphagozero}.
AlphaZero then achieves master level perfomance on go, chess, and shogi --- using a single algorithm \citep{silver2017mastering_alphazero}.
This work has also now been reproduced by Leela Zero, one of the strongest and opensourced chess engines \citep{leelazero}.

The latest representative of this lineage, MuZero makes the approach even more general, as it does not need the rules of the game to construct a game tree for search.
The rules are learned during the self-play and the search uses only the learned model during the planning.
This means that MuZero does not need to access the environment during its search, and only does so when it is taking its action \citep{schrittwieser2020mastering}.
}

\chapter{Summary}
\second{
In the first part of this book --- perfect information games --- we build a framework that allows us to build a particular view on search in games.
First, we introduced optimal policies and their properties making them desirable to be followed.
We then presented offline algorithms producing optimal policies, i.e. algorithms that solve games.
Next, we described how sound online search algorithms follow these optimal offline strategies.

In the second part of this book --- imperfect information games --- we will build upon the formalism, intuition and framework.
We will see that a combination of search and value functions is indeed possible and is also a powerful method in imperfect information.
But we will also see that sound search is substantially harder in the imperfect information case.
}
\part{Imperfect Information}

\chapter{Introduction}

\second{
In the second part of this thesis, we will revisit the concepts introduced in perfect information settings.
We will see that these concepts need to be generalized in order to be either well-defined or to produce the desired policies in imperfect information settings.
But after we do so, we will indeed be able to construct sound search methods for imperfect information games, resulting in agents that can outplay even the best human players.
}

\section{Structure}
\second{
We start again with a quick overview of the chapters.
}

\paragraph{Example Games}
\second{
Chapter~\ref{sec:iig-games} introduces the games used in the following chapters.
}

\paragraph{Formalisms}
\second{
Chapter~\ref{chap:iig-formalisms} includes formal models of imperfect information environments.
Unlike in perfect information, we present multiple models and discuss their connections.
Namely, we introduce factored-observation stochastic games --- one of the contributions of this thesis.
}

\paragraph{Offline Policies}
\second{
Chapter~\ref{chap:iig-policies} argues that the same worst-case reasoning holds in imperfect information.
The concepts of minmax and Nash equilibrium are still guaranteed to exist, with all the same desirable properties as in perfect information.
But unlike in perfect information, the policies might have to be stochastic.
We present some examples as to why the randomization is necessary, and introduce the link between the level of uncertainty in the game and the amount of randomization.
}

\paragraph{\Subgames{}}
\second{
Chapter~\ref{chap:iig_subgames} generalizes the notion of \subgames{}.
Unlike in perfect information, a single state is not sufficient to construct a well-defined sub-problem to be reasoned about.
A distribution over the consistent states of both players is required as the agents can have different observations.
We show how the notion of public information makes the construction of these consistent states particularly simple.
Finally, we also generalize the connected concept of value functions to imperfect information \subgames{}.
}

\paragraph{Offline Solving I}
\second{
Chapter \ref{chap:iig-offline_solve} presents multiple offline algorithms producing optimal policies.
These include direct optimization against the best-responding opponent, self-play via independent reinforcement learning, and fictitious play.
}

\paragraph{Offline Solving II - Regret Minimization}
\second{
Chapter~\ref{sec:iig-offline_solve_regret} deals with the regret minimization framework and its connections to optimal policies.
Importantly, we introduce the powerful idea of counterfactual regret minimization that allows the use of regret minimization in sequential decision making by decomposing the regret to partial regrets in individual information states.
Counterfactual regret minimization will be used as key building blocks for the online search settings in the following chapters.
We also include the family of sampling Monte Carlo variants and the latest addition to this family --- methods that speed up the convergence by lowering the per-iteration variance.
These low-variance methods are another contribution of this thesis.
}

\paragraph{Offline Solve III - Approximate and Abstraction Methods}
\second{
Chapter~\ref{chap:iig-abstraction} discusses methods that do not employ the idea of online search, but still allow one to deal with large games where tabular methods are not feasible.
}

\paragraph{Online Settings}
\second{
Chapter~\ref{chap:iig-online_settings} revisits the online setting, with some minor modifications to the required definitions to match the formalism of imperfect information.
It includes some interesting counter-examples that are only possible in imperfect information.
}

\paragraph{Search}
\second{
Chapter~\ref{chap:iig-search} is arguably the key chapter of this thesis, the culmination of the building blocks presented up to this point.
First, we show that many properties we depended on in perfect information settings simply break in imperfect information.
We then present the building blocks of re-solving and gadget, including another contribution of this thesis --- safe refinement of \subgames{}.
Step by step, we then put the building blocks carefully together into the final algorithm --- continual resolving.
Continual resolving allows for sound search in imperfect information, and was a crucial contribution of DeepStack.
Chapter~\ref{chap:iig-search} and DeepStack are the main contributions of this thesis.
}

\second{
\paragraph{DeepStack}
Chapter~\ref{chap:iig-deepstack} presents DeepStack --- the first computer agent to introduce the combination of sound search and learned value functions for poker, beating professional human players in no-limit Texas hold'em poker.
As human evaluation in no-limit poker is inherently noisy, we also describe some techniques that allow for provably unbiased variance reduction.
The latest of such techniques is AIVAT --- our final contribution that resulted in an impressive $85\%$ reduction in standard deviation when evaluating the DeepStack results.
We finish with a list of some of the latest search algorithms.
}

\chapter{Example Games}
\label{sec:iig-games}
\second{
This chapter briefly introduces some imperfect information games used in the second part of the book.
The quintessential example is poker, where we do not get to see opponent's cards.
Consider the groundbreaking publication of Oskar Morgenstern and John von Neumann ``Theory of Games and Economic Behavior'' \citep{morgenstern1953theory}.
This book laid the ground of modern-day game theory, and contains a full section (section $19$, over $30$ pages) dedicated to Poker.

The importance of poker resulted in substantial body of research where poker games were the only domain used for evaluating the algorithms.
To emphasize the generality of the algorithms presented here, we decided to also include a game with structure drastically different to poker.
}

\section{Kuhn Poker}
\second{
Kuhn poker is a very simple poker game with only $3$ cards in the deck and a single betting round \citep{kuhn1950simplified}.
The deck includes only three cards: (J)ack, (Q)ueen and (K)ing.
Both players are dealt a single card out of a deck, they each put one chip the table (ante) and a simple betting round follows.
First player either (b)ets a chip (adding one more chip on the table) or (c)hecks.
If the first player bets, the second player can either (c)all or (f)old --- both actions terminating the game.
If the first player checks, the second player can either bet one chip or check.
Check terminates the game, while for bet, the first player acts again and can either check or fold --- both actions terminating the game.
If any player folded, they lose the chips on the table.
Otherwise, the player holding the higher card wins these chips.
We will present figures of small poker games as soon as we introduce the necessary formalism 
in Section~\ref{sec:iig-fog_examples} and~\ref{sec:iig-fog_examples}.
}

\section{Leduc Poker}
\second{
Leduc poker is a slightly larger poker game often used as a test domain in imperfect information games.
The card deck consists of $6$ cards (two suits and three ranks), and there are two (limited) betting rounds with a single public board card \citep{southey2005bayes}.
Note that while the game is larger than Kuhn poker, it is still very small.
}

\section{Limit nad No-limit Texas Hold'em Poker}
\second{
Texas hold'em poker is played with the standard $52$ card deck.
The game progresses in four betting rounds:  i) pre-flop ii) flop iii) turn and iv) river.
Cards are dealt at the beginning of each round.
In the pre-flop, each player is dealt two private cards (hand). 
In the later rounds, public (facing up and observable to all players) cards are dealt on the table: three on the flop, one on the turn and one on the river.
During each betting round, the players alternate taking one of the three action types: i) fold ii) check/call and iii) bet/raise.
By folding, the player gives up and loses the money wagered up to this point (pot).
Calling matches the bet of the opponent, and check is the initial call action when the player is facing no bet/raise.
Raise increases a player's wager, and bet is the initial raise when there is no opponent's bet to be raised.
The round ends when both players match their bet, and the game terminates when either of the player folds or when the last round (river) ends.
When neither player folds before the end of the river, both players reveal their hand and the player with the strongest combination of the private and public cards wins the pot.
For the ranking of the possible card combinations and more detailed rules see e.g. \cite{harroch2007poker}.

Finally, the difference between limit and no-limit poker is that in limit poker, there is a single bet size the player can make.
In no-limit, the player is free to make a bet up to their stack size, and all-in bet then refers to agent betting their entire stack
}


\section{Graph Chase}
\second{
Graph chase games are a simple, security-like games.
Players get to move their stones from node to node, alternating their moves.
One player (evader) controls a single invisible stone and is trying to escape the opponent.
The second player (chaser) controls multiple stones (the location of these stones is publicly observable) and is trying to catch the invisible stone (by moving a stone on to the location of the invisible stone).
The evader wins if they gets to survive a specified number of moves, and loses otherwise.
While the rules can surely be made more complicated, this simple variant still makes an interesting game.

Figure~\ref{fig:iig-glasses} shows a small instantiation of this game, with two stones of the chaser and a small graph.
We refer to this game as the \Gls{glasses} due to the shape of the graph.
}

\subsection{Contrast to Poker}
\second{
There are some important conceptual difference to poker.
In poker, all actions are publicly observable, making some of the decomposition constructions easier.
In graph chase games, the chaser does not get to see the exact action (edge) the evader took.
Furthermore, the number of states the player can be in varies as the game progresses.
In poker, this corresponds to the possible hands of a player, which is fixed in individual betting rounds.
These distinctions are important as we need our algorithms to handle this general case.
}

\begin{figure}[ht]
  \centering
  \includegraphics[width=\textwidth]{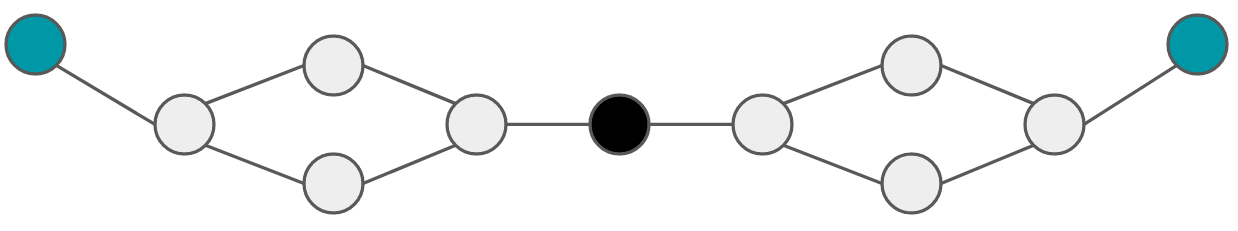}
  \caption{
      Glasses: graph chasing game on a graph with only $11$ nodes.
      Evader starts in the middle (black stone) while the chaser has two (blue) stones at the edges.
      The evader wins if they get to survive for $3$ turns.}
\label{fig:iig-glasses}
\end{figure}

\chapter{Formalisms}
\label{chap:iig-formalisms}

\second{
In this chapter, we introduce three distinct formal models of imperfect information games.
The first one --- \gls{matrix_games} --- is a simple yet powerful model for simultaneous decision making, where the players get to act at the same time.
The second model --- extensive form games - is the most popular formal model of sequential decision making in imperfect information.
It is a natural generalization of the game tree from perfect information setting.
Unfortunately, some design choices of this model make it ill-suited for formal analysis of the search algorithms, motivating the next model.

The third model --- Factored Observation Stochastic Games (FOSG) --- is a recent model build with the needs of search in mind and also one of the contributions of the thesis.
Just like extensive form games, it models sequential decision making in imperfect information.
The defining properties that make it suitable for modern search methods are i) factoring observations to private and public components, and ii) the notion of an agent's state even if they are not to act.

Note that any sub-problem decomposition and search only really makes sense in sequential decision making.
Thus while we use matrix games to introduce some of the essential theory, sub-problems and search methods will be only discussed for FOSGs.
}

\section{Matrix Games}
\second{
It turns out that the simplest formalism for imperfect information games is
based on simultaneous moves.
That is, both players get to make their action at the same time.
At first, it might not be clear why this has anything to do with imperfect information.
The reason is simple --- we can think of such interactions as one of the players acting first, while hiding the action from the opponent.
The opponent then gets to act without knowing what the first player did. 
}

\second{
\Gls{matrix_games} (also referred to as \gls{normal_form_games}) is a simple formalism, where the first player (the row player) chooses a row, while the second player (the column player)  chooses a column (Definition \ref{defn:iig-matrix_game}).
The reward of the the first player is then computed as $x X \T{y}$.

\begin{defn}
\label{defn:iig-matrix_game}
Matrix game consists of:
\begin{itemize}
    \item $\mathcal{N} = \{1, 2\}$ is the set of players.
    \item $\mathcal{A}_1, \mathcal{A}_2$ are the player's set of legal actions.
    \item Player strategies $x \in \Delta(\mathcal{A}_1)$, $y \in \Delta(\mathcal{A}_2)$.
    \item The reward matrix $\underset{|\mathcal{A}_1| \times |\mathcal{A}_2|}{X}$.
    
\end{itemize}
\end{defn}
}

\subsection{Example}
\second{
Table \ref{tab:iig:game_rps} includes the matrix game for the \rps{} game.
In this case, both row and column player have the same set of actions, $A_1 = A_2 = \{ R, P, S \}$.
As a sanity check, the row's player utility under the following strategy profile $(\pi_1 = (0.2, 0.2, 0.6), \pi_2 = (0.4, 0.2, 0.4))$ is
$\pi_1 A \pi_2' = 0.08$.
}

\begin{table}[ht]
\centering
\begin{tabular}{lllll}
  & R & P & S  &  \\
  \cline{2-4}
R & 0                         & 1                         & -1                         &  \\
R & -1                        & 0                         & 1                          &  \\
R & 1                         & -1                        & 0                          & 
\end{tabular}
\caption{Matrix game representation of the (R)ock-(P)aper-(S)cissors game.}
\label{tab:iig:game_rps}
\end{table}

\section{Extensive Form Games}
\second{
\Gls{efg_g} are a natural extension of game trees to imperfect information environments, allowing for sequential interactions.
In imperfect information, players do not directly observe the exact state of the game and thus there might be mutliple world states the players can not tell apart.
Extensive form games achieve this by grouping such states into so-called information sets (or states).
The agent's policy is then defined within these information states.

Extensive form games date all the way back to Von Neumann \citep{morgenstern1953theory}, and the model has been widely adopted by the game theory community.
The importance of the original publication is illustrated by the fact that the commemorative edition was published more than
50 years after the original version \citep{von2007theory}.
Interestingly enough, the book contains a full section (section $19$, over $30$ pages) dedicated to Poker.
And in $2015$, a limit Texas hold'em poker become the largest imperfect information game played by humans to be solved \citep{bowling2015heads} - modeled by this very formalism.
}

\second{
\begin{defn}
\label{defn:iig-extensive_form_game}
\citep[Definition 200.1]{osborne1994course}
An extensive form game is a tuple $(\mathcal{H}, \mathcal{Z}, \mathcal{A}, \mathcal N, p, \pi_c,  u, \mathcal I)$, where:
\begin{itemize}
	\item $\mathcal{H}$ is the set of histories, representing sequences of actions.
	\item $\mathcal{Z}$ is the set of terminal histories (those $z\in \mathcal{H}$ which are not a prefix of any other history).
	We use $g \sqsubseteq h$ to denote the fact that $g$ is equal to or a prefix of $h$.
    \item For a non-terminal history $h\in \mathcal{H}\setminus \mathcal{Z}$, $\mathcal{A}(h) := \{ a \, | \ ha \in \mathcal{H} \}$ is the set of actions available at h.
	\item $\mathcal{N} = \{1, \dots, N\}$ is the player set. In addition,  $c$ is a special player, called ``chance'' or ``nature''.
	\item $p : \mathcal{H} \setminus \mathcal{Z} \rightarrow \mathcal{N} \cup \{c\}$ is the player function partitioning non-terminal histories into $\mathcal{H}_i$, depending on which player acts at $h$. 
    \item The strategy of chance is a fixed probability distribution $\pi_c$ over actions in $\mathcal{H}_c$, $\pi_c(h) \in \Delta(\mathcal{A}(h))$.
    \item The utility function $u=(u_i)_{i\in \mathcal{N}}$ assigns to each terminal history $z$ a reward $u_i(z)\in \R$ received by player $i$ upon reaching $z$.
	\item The information-partition $\mathcal{I} = (\mathcal{I}_i)_{i\in \mathcal{N}}$ captures the imperfect information of $G$.
	For each player, $\mathcal{I}_i$ is a partition of $\mathcal{H}_i$. If $g,h\in \mathcal{H}_i$ belong to the same $I\in \mathcal{I}_i$ then $i$ cannot distinguish between them. For each $I \in \mathcal{I}_i$, the available actions $\mathcal{A}(h)$ must the same for each $h \in I$, and we overload $\mathcal{A}(\cdot)$ as $\mathcal{A}(I) = \mathcal{A}(h)$.
\end{itemize}

\end{defn}
}

\subsection{Strategies}
\second{
\Glspl{behavioral_strategy} prescribe behavior in all the individual information sets (states).
It is a mapping from an information state to a distribution over the actions $\pi_i: I \in \mathcal{I}_i \rightarrow \Delta{A(I)}$.
Section \ref{sec:iig:sequence_form} will present another option for strategy representation --- the sequence form.
}

\subsection{Example}
\label{sec:iig-efg_examples}

\begin{figure}[ht]
  \centering
  \includegraphics[width=6cm]{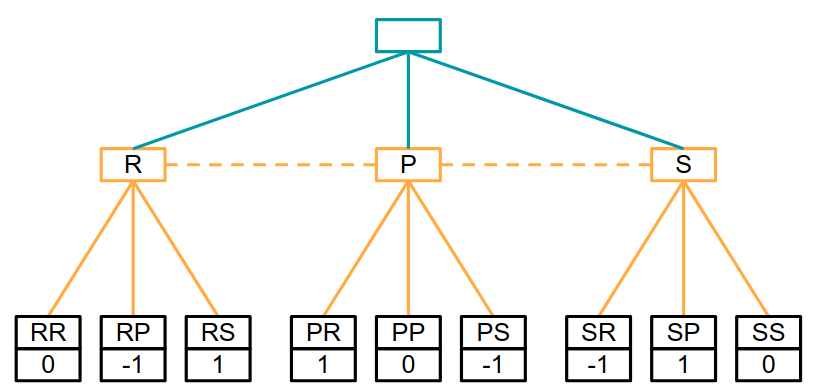}
  \caption{Extensive form game for Rock-Paper-Scissors. 
  Individual states (histories) are labeled with the corresponding history (there are $13$ histories in total, $9$ of which are terminal).
  Below the terminal states, we also include the player $1$ terminal utility.
  Finally, the dotted lines connect histories into information sets/states.
  In this case, both players have a single information set.
  }
\label{fig:iig_efg_rps}
\end{figure}

\begin{figure}[ht]
  \centering
  \includegraphics[width=14.5cm]{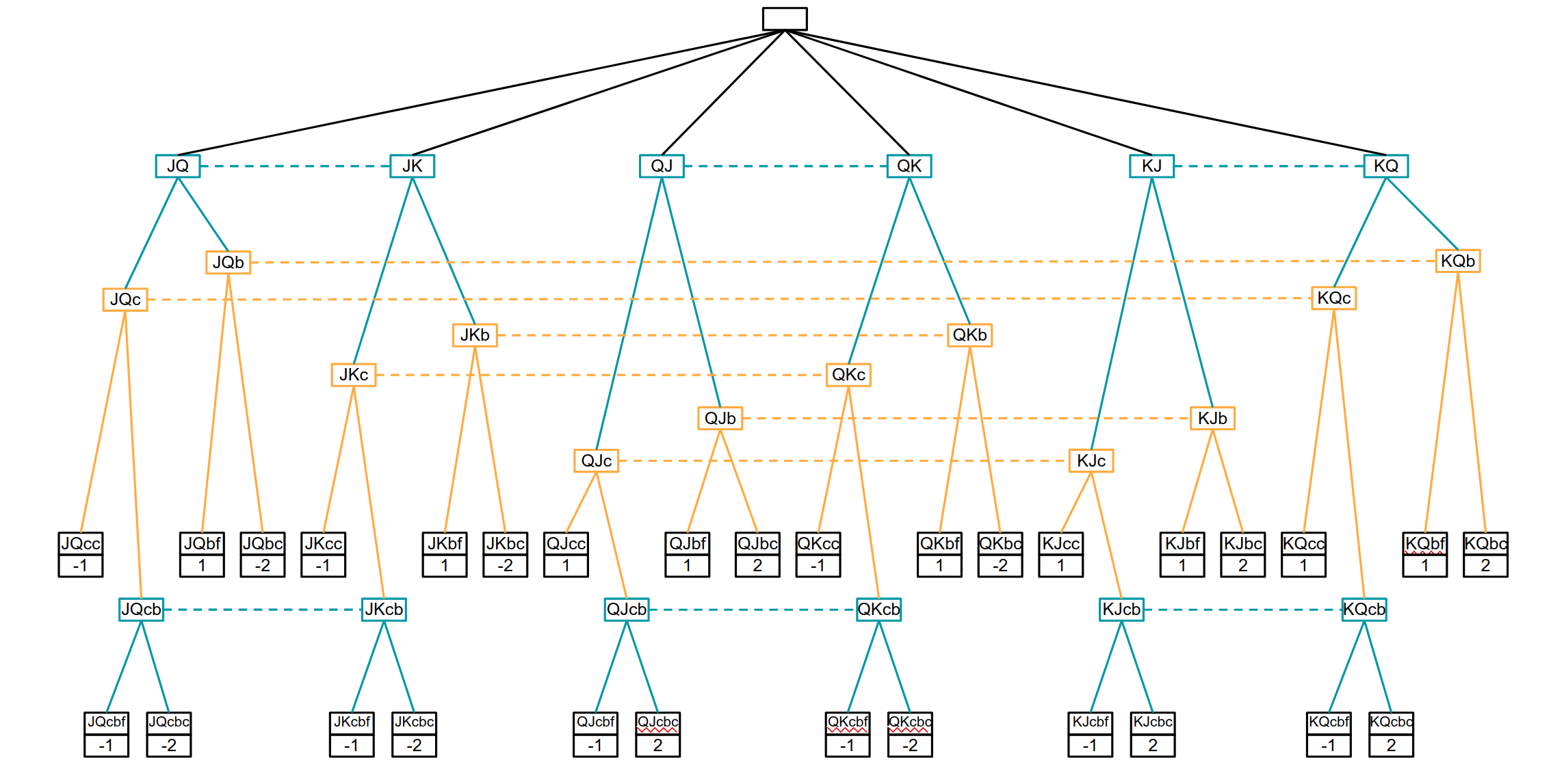}
  \caption{Extensive form game for the Kuhn Poker.
  Dashed lines connect histories grouped within an information set.
  }
\label{fig:iig_efg_kuhn}
\end{figure}

\second{
The first example illustrates how to capture the simultaneous move game of \rps{} in the inherently sequential model of extensive form games (Figure~\ref{fig:iig_efg_rps}).
While illustrated on \rps{}, it is easy to imagine the general construction for an arbitrary matrix game (Corollary~\ref{col:iig:matrix_game_to_efg}).
The idea is to simply let one player to act first, group all the subsequent histories into a single information set so that the opponent does not know what action the first player made \citep{osborne1994course}.

The second example is then the sequential game of Kuhn poker, illustrated in Figure~\ref{fig:iig_efg_kuhn}.

\begin{corollary}
\label{col:iig:matrix_game_to_efg}
Any matrix game can be converted to an equivalent extensive form game having (up to a constant factor) the same size.
\end{corollary}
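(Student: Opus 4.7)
The plan is to give an explicit construction generalizing the \rps{} example in Figure~\ref{fig:iig_efg_rps} and then verify that (i) the size blow-up is bounded by a small constant and (ii) the induced game is strategically equivalent to the original matrix game.

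First I would define the extensive form game $G'$ associated to a matrix game $(\mathcal{A}_1, \mathcal{A}_2, X)$ as follows. Take the set of histories $\mathcal{H} = \{\emptyset\} \cup \{(a) \mid a \in \mathcal{A}_1\} \cup \{(a,b) \mid a \in \mathcal{A}_1, b \in \mathcal{A}_2\}$, with $\mathcal{Z} = \{(a,b)\}$. Set $p(\emptyset) = 1$ and $p((a)) = 2$, with $\mathcal{A}(\emptyset) = \mathcal{A}_1$ and $\mathcal{A}((a)) = \mathcal{A}_2$. The information partition has $\mathcal{I}_1 = \{\{\emptyset\}\}$ and $\mathcal{I}_2 = \{\{(a) \mid a \in \mathcal{A}_1\}\}$ — i.e., all of player 2's histories are placed into a single information set, modelling simultaneity by hiding player 1's move. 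Finally, set $u_1((a,b)) = X_{a,b}$ and $u_2((a,b)) = -X_{a,b}$ (or whichever sign convention the matrix game uses).

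Next I would verify size. The matrix game has size $\Theta(|\mathcal{A}_1||\mathcal{A}_2|)$ (to store the payoff matrix $X$). The constructed $G'$ has $|\mathcal{H}| = 1 + |\mathcal{A}_1| + |\mathcal{A}_1||\mathcal{A}_2|$ histories, two information sets, and $|\mathcal{A}_1||\mathcal{A}_2|$ terminal payoffs to store. Hence $|G'| \le c \cdot |\mathcal{A}_1||\mathcal{A}_2|$ for a small absolute constant $c$, giving the claimed $O(1)$ factor.

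For strategic equivalence I would exhibit the obvious bijection between strategy profiles. A behavioral strategy $\pi_1$ in $G'$ is just a distribution on $\mathcal{A}_1$ (there is one info set with actions $\mathcal{A}_1$), and likewise $\pi_2$ is a distribution on $\mathcal{A}_2$ (one info set grouping all histories $(a)$, with actions $\mathcal{A}_2$). Identify these with the mixed strategies $x \in \Delta(\mathcal{A}_1)$ and $y \in \Delta(\mathcal{A}_2)$ of the matrix game. Then
\begin{equation*}
R_1(\pi_1, \pi_2) \;=\; \sum_{a,b} \pi_1(a)\,\pi_2(b)\, u_1((a,b)) \;=\; \sum_{a,b} x_a y_b X_{a,b} \;=\; x X y^\top,
\end{equation*}
which is precisely the matrix game payoff, so $\brset$, $\maximinset$ and $\nashset$ coincide under this bijection. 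The only point requiring a moment of care is checking that the action sets $\mathcal{A}((a))$ are the same for every $h=(a)$ in player 2's info set, as required by Definition~\ref{defn:iig-extensive_form_game}; this holds by construction since we used $\mathcal{A}_2$ at every such history. No step is a real obstacle — the proof is essentially the bookkeeping above.
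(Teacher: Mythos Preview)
Your proposal is correct and follows exactly the approach the paper indicates: the paper does not give a formal proof but simply points to the \rps{} construction in Figure~\ref{fig:iig_efg_rps} and says the general case is obtained by letting one player act first and grouping all subsequent histories into a single information set for the opponent \citep{osborne1994course}. You have just written out that construction in full and added the routine size and payoff-equivalence checks, which is precisely what the paper leaves implicit.
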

}

\subsection{Extensive Form Games to Matrix Games}
\label{sec:igg-converting_formalism}
\second{
We have already seen how to convert any matrix game to an extensive form game (Corollary~\ref{col:iig:matrix_game_to_efg}).
Lemma~\ref{lem:iig-extensive_form_to_normal_form} then shows that the opposite conversion is also possible under the perfect recall (Section~\ref{sec:iig-efg_limitations}) assumption.
Unfortunately, the resulting matrix game can be exponentially larger than the original sequential representation of extensive form games.
See Figure \ref{fig:iig-efg_to_matrix} for an example transformation (taken from \cite{schmid2013game}).

\begin{lemma}
\label{lem:iig-extensive_form_to_normal_form}
\citep{osborne1994course}
Given any two-player extensive form game with perfect recall, it’s possible to create an equivalent normal form game.
\end{lemma}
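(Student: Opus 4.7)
The plan is to construct the normal form game whose actions are the pure strategies of the extensive form game, and whose payoff matrix records the expected utility of each pure-strategy profile over the chance moves. First, for each player $i\in\{1,2\}$ I would define a pure strategy as a function $\sigma_i$ assigning to every information set $I\in\mathcal{I}_i$ a single action from $\mathcal{A}(I)$, and then take as the matrix-game action set $S_i := \prod_{I\in\mathcal{I}_i}\mathcal{A}(I)$. This set is finite, though its size may be exponential in the size of the extensive form game, which is consistent with the exponential blow-up noted right after the lemma.

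Second, I would populate the reward matrix $X$ entry-wise by
\begin{equation*}
  X[\sigma_1,\sigma_2] \;:=\; \sum_{z\in\mathcal{Z}} P^{(\sigma_1,\sigma_2,\pi_c)}(z)\, u_1(z),
\end{equation*}
where $P^{(\sigma_1,\sigma_2,\pi_c)}$ is the distribution over terminal histories induced by both players committing to their pure strategies and chance playing the fixed distribution $\pi_c$. Since chance is fixed and each $\sigma_i$ is deterministic at every information set, this sum is a well-defined finite expectation and fully specifies a matrix game $(S_1, S_2, X)$ in the sense of Definition~\ref{defn:iig-matrix_game}.

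Third, I would justify the word \emph{equivalent} by showing that the two games yield the same expected utility for any strategy profile once one translates between representations. In one direction, any mixed strategy $x\in\Delta(S_i)$ in the matrix game induces a behavioral strategy by marginalization: the probability of action $a$ at $I$ is $\sum_{\sigma:\sigma(I)=a} x(\sigma)$, normalized by the total mass on pure strategies reaching $I$. The opposite direction --- representing any behavioral strategy as a mixture over pure strategies that preserves the distribution over terminal histories against every opponent --- is exactly Kuhn's theorem, and this is the step where the perfect-recall assumption is essential. Given the translation, expected utilities match by linearity of $P^{\pi}$ in the behavioral probabilities, so best-response values, maximin values, and Nash equilibria all transfer between the two representations.

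The main obstacle is the appeal to Kuhn's theorem. Its proof uses perfect recall to factor the reach probability of any history through the sequence of the acting player's own past actions, so that the product of behavioral probabilities at the player's information sets agrees with a consistent marginal over pure strategies; without perfect recall this factorization fails and behavioral strategies are strictly less expressive than mixed ones. Rather than reproving Kuhn's theorem, I would cite it and use it as a black box, which is the standard treatment in the game-theory literature and the route the paper implicitly takes by hypothesizing perfect recall.
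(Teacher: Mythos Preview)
Your proposal is correct and is exactly the standard construction. Note, however, that the paper does not actually prove this lemma: it is stated with a citation to \citep{osborne1994course} and illustrated by the example in Figure~\ref{fig:iig-efg_to_matrix}, where the row labels (e.g., ``AL-BL-CL-DL'') are precisely the pure strategies $\sigma_i\in\prod_{I\in\mathcal{I}_i}\mathcal{A}(I)$ you define, and the matrix entries are the induced expected utilities. Your write-up supplies the argument the paper defers to the reference, including the appeal to Kuhn's theorem (which the paper invokes later, in Section~\ref{sec:iig:sequence_form}, for the related sequence-form equivalence), so there is no discrepancy in approach---only in level of detail.
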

}

\begin{figure}[ht]
    \centering
    \begin{subfigure}[b]{0.48\textwidth}
        \includegraphics[width=\textwidth]{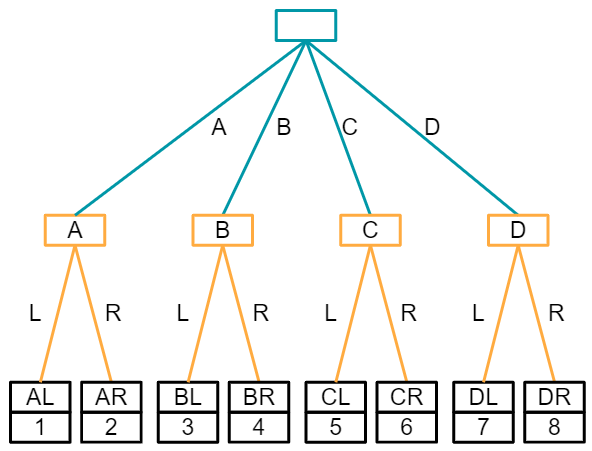}
        \caption{}
        \label{fig:iig-efg_to_matrix_1}
    \end{subfigure}
    \begin{subfigure}[b]{0.48\textwidth}
        \begin{tabular}{lllll}
  &  A & B &  C &  D  \\  
\cline{2-5}
AL-BL-CL-DL& 1 & 3 & 5 & 7\\
AL-BL-CL-DR& 1 & 3 & 5 & 8\\
AL-BL-CR-DL& 1 & 3 & 6 & 7\\
AL-BL-CR-DR& 1 & 3 & 6 & 8\\
AL-BR-CL-DL& 1 & 4 & 5 & 7\\
AL-BR-CL-DR& 1 & 4 & 5 & 8\\
AL-BR-CR-DL& 1 & 4 & 6 & 7\\
AL-BR-CR-DR& 1 & 4 & 6 & 8\\
AR-BL-CL-DL& 2 & 3 & 5 & 7\\
AR-BL-CL-DR& 2 & 3 & 5 & 8\\
AR-BL-CR-DL& 2 & 3 & 6 & 7\\
AR-BL-CR-DR& 2 & 3 & 6 & 8\\
AR-BR-CL-DL& 2 & 4 & 5 & 7\\
AR-BR-CL-DR& 2 & 4 & 5 & 8\\
AR-BR-CR-DL& 2 & 4 & 6 & 7\\
AR-BR-CR-DR& 2 & 4 & 6 & 8\\
\end{tabular}
        \caption{}
        \label{fig:iig-efg_to_matrix_2}
    \end{subfigure}
    \caption{
    (a) Extensive form game game.
    (b) Corresponding matrix game.
    }
    \label{fig:iig-efg_to_matrix}
\end{figure}

\section{Factored-Observation Stochastic Games}
\label{sec:igg-fog}

\second{
\Gls{fog_g} (FOSG) is a recently introduced formalism, and also one of the contributions of the thesis \citep{kovavrik2019rethinking}.
We first discuss the limitations of extensive form games and motivate the new formalism that builds on the notion of observations.
We then formally introduce FOSG and finish with connections between the FOSG and EFG.
}

\subsection{Limitations of EFG Formalism}

\second{
\label{sec:iig-efg_limitations}
The defining distinction of EFGs compared to perfect information game trees is the information partition forming information sets (states).
Information states group histories (world states) that a player cannot tell apart.  
While this allows for a very general class of games, it turns out that grouping arbitrarily states of the acting player is both too general and loses important information.

It is too general as it allows for \gls{imperfect_recall}, which forces the agent to forget previously known information (e.g. past actions taken).
This happens when information states group history with its prefix (i.e. grouping a node with a previous one).
Imperfect recall is troubling as it not only is unrealistic, it has unfortunate complexity consequences.
Many basic concepts including best response become complicated \citep{piccione1996absent, piccione1997interpretation}, and even the existence of Nash equilibria becomes NP-hard \citep{hansen2007finding}.
It is thus common to restrict the set of games to only perfect recall.
Recently, non-timeability has been suggested as another unrealistic property allowed by this model \citep{jakobsen2016timeability}.
The observation is that since the passage of time is observable to agents, some extensive form games can not be implemented.
A game is said to be timeable if it has an exact deterministic timing and $1$-timeable if each label is exactly one higher than its parent’s label  (Definition~\ref{defn:iig-fog_timeability}).
Similarly to perfect recall, authors argue that timeability should be a common assumption.

EFGs lose crucial information inherently present in many environments --- the notion of observations, specifying what and when they were observed by the agents.
Information states represent information available to the acting player, but there is no notion of information state for the non-acting player.
This is problematic as when we are to play, we might need to reason about the states the opponent can be in.
Furthermore, EFGs make no explicit distinction between private and public information.
But the knowledge about what information is available to the non-acting player as well as the factoring of information into private and public are critical concepts for sound search method.
While it is common to recover those concepts in specific cases (e.g. if we are building a poker agent) \citep{burch2014solving}, it turns out that it is impossible to do properly in general \citep{kovavrik2019problems}.

\begin{defn}
\label{defn:iig-fog_timeability}
\citep[Timeability]{jakobsen2016timeability}
For an extensive-form game, a deterministic timing is a labelling of the nodes  with non-negative real numbers such that the label of any node is at
least one higher than the label of its parent. 
A deterministic timing is exact if any two nodes in the same information set have the same label.
\end{defn}
}

\subsection{Augmented EFG}
\second{
Recovering the information lost by the EFG formalism essentially led to the introduction of augmented information sets and public tree \citep{johanson2011accelerating, burch2014solving}.
These additional concepts then result in the augmented extensive form games (Definition~\ref{defn:iig-augmented_extensive_form_game}).

\begin{defn}
\label{defn:iig-augmented_extensive_form_game}
\citep[Definition 3.5]{kovavrik2019rethinking}
Augmented extensive form game is a tuple $(\mathcal{H}, \mathcal{Z}, \mathcal{A}, \mathcal N, p, \pi_c,  u, \mathcal{I})$ where all objects are as in extensive form game except for the information set partitioning $\mathcal{I}$:
\begin{itemize}
	\item $\mathcal{I} = (\mathcal{I}_1, \mathcal{I}_2, \mathcal{I}_{pub})$ is a collection of partitions of $\mathcal{H}$ where each $I_i$ is a refinement of $\mathcal{I}_{pub}$
\end{itemize}

\end{defn}

Note that there are two distinctions to the standard information sets.
First, the information sets partition $\mathcal{H}$ rather than just $\mathcal{H}_i$ and information sets are thus defined even when the player is not acting.
Second, $\mathcal{I}_{pub}$ defines public states and a corresponding public tree where the individual player's partitions are a refinement of this public partitioning.

But this recovery is only possible in specific cases, using game-specific concepts (e.g. dealing cards in poker).
Furthermore, on top of the usual restrictions of perfect recall and timeability, additional restriction is then assumed.
Augmented EFG is said to not have thick public sets if no element of $\mathcal{I}_{pub}$ (and hence of $\mathcal{I}_i$)
contains both some $h$ and $h'$ for $h' \sqsubset h$.
This additional technicality is essentially the perfect recall analog for augmented information sets.
}

\subsection{FOSG Definition}
\second{
Rather than using complex combination of restrictions and augmentations of extensive form games, we argue that a simple observation-based model naturally describes the domain and preserves the necessary information.
Motivated by the issues of extensive form games (Section \ref{sec:iig-efg_limitations}), \gls{fog_g} is a recently proposed formalism of multi-agent imperfect information environments \citep{kovavrik2019rethinking}.

The formalism is a variant of partially observable stochastic games \citep{hansen2004dynamic}, where the game consists of underlying world states, and the probabilistic transition to a next world state is a function of the actions taken by all the agents.
As the game moves from one world state to another, players do not get to directly observe the underlying state.
Rather, they receive an observation that is factored to a private observation and a public observation.
The notions of transitions, world states and observations also make this formalism more familiar to the the reinforcement learning community.
}

\second{
\begin{defn}
\label{defn:iig-factored_observations_games}
\citep{kovavrik2019rethinking}
Factored Observation Stochastic Game is a tuple $G = ( \mathcal N, \mathcal W, w^o, \mathcal A, \mathcal T, \mathcal R, \mathcal O, \mathbb O )$

\begin{itemize}
    \item $\mathcal N = \{1,\dots,N\}$ is the player set.
    \item $\mathcal W$ is the set of world states and $w^0 \in \mathcal W$ is a designated initial state.
    \item $\mathcal A = \mathcal A_1 \times \dots \times \mathcal A_N$ is the space of joint actions.
    \begin{itemize}
        \item The subsets $\mathcal A_i(w) \subset \mathcal A_i$ and $\mathcal A(w) = \mathcal A_1(w) \times \dots \times \mathcal A_N(w)$ specify the (joint) actions legal at $w$.
        \item For $a\in \mathcal A$, we write $a=(a_1,\dots,a_N)$.
        \item $\mathcal A_i(w)$ for $i \in \mathcal N$ are either all non-empty or all empty. A world state with no legal actions is terminal.
    \end{itemize}
    \item After taking a (legal) joint action $a$ at $w$, the transition function $\mathcal T$ determines the next $w' \sim \mathcal T(w,a) \in \Delta (\mathcal W)$.
    \item We write $R = (R_1,\dots, R_N)$, where $ R_i(w,a)$ is the reward $i$ receives when a joint action $a$ is taken at $w$.
    \item $\mathcal O = (\mathcal O_{\textnormal{priv}(1)},\dots,\mathcal O_{\textnormal{priv}(N)}, \mathcal O_{\textnormal{pub}} )$ is the observation function, and  $\mathbb O = (\mathbb O_{priv(1)}, \mathbb O_{priv(2)}, \mathbb O_{pub})$ are the observation sets.
    The observation function $\mathcal O_{(\cdot)} : \mathcal W \times \mathcal A \times \mathcal W \to \mathbb O_{(\cdot)} $ specifies the private observation that $i$ receives, resp. the public observation that everybody receives, upon transitioning from world state $w$ to $w'$ via some $a$.
    \begin{itemize}
        \item For each $i$, we write $\mathcal O_i(w,a,w') = \left( \mathcal O_{\textnormal{priv}(i)}(w,a,w'), {\mathcal O}_{\textnormal{pub}} (w,a,w') \right) \in \mathbb O_i := \mathbb O_{\textnormal{priv}(i)} \times \mathbb O_{\textnormal{pub}}$.
        \item We assume that at the start of the game, each player receives some $\mathcal O_i(w^0)$.
    \end{itemize}
\end{itemize}
\end{defn}
}

\second{
The game then starts in the initial state $w^0$ and follows in turns.
In each turn, each player $i$ player select an action $a_i \in \mathcal A_i(w)$, resulting in a joint action $a=(a_i)_{i \in \mathcal{N}}$.
The game then transitions to a new state $w'~\mathcal T(w,a)$.
This transition generates an observation $\mathcal O(w,a,w')$, from which each player receives $\mathcal O_i(w,a,w') = ( \mathcal O_{\textnormal{priv}(i)}(w,a,w'), {\mathcal O}_{\textnormal{pub}} (w,a,w') )$ (i.e., the public observation together with their private observation).
Finally, each player is assigned the reward $\mathcal R_i(w,a)$.
This process repeats until a terminal state is reached.
}

\first{
As an example, consider the \rps{}.
The set of world states is $\mathcal{W} = \{0, r, p, s, rr, rp, rs, pr, pp, ps, sr, sp, ss \}$, with the initial state $w^0 = 0$.
As only a single player acts at each time step, the non-acting player has a single dummy action $\mathcal{A}_2(r) = \{r, p, s \}, \mathcal{A}_1(r) = \{-\}$.
The transition function is fully deterministic, e.g. $\mathcal{T}(r, (r,-)) = rr$.
Finally, the observation sets in \rps{} are empty.
In poker games, the private observations would encode the private cards, while the public observations provide information about the public (board) cards as well as about the actions taken (fold/call/bet) as the actions Texas hold'em poker are publicly observable.
}

\subsection{Properties}
\second{
As the agents receive observation during the game-play, the full action observation history is necessarily perfect recall.
The formalism does not force agent into imperfect recall (and there are extensions allowing for imperfect recall).

As all the agents act all the time, there is no issue with ill-defined information (states) of the non-acting agents.
This has additional benefit, as this implies a notion of ``tick/time step'' inferred by the number of actions taken, and thus non-timeable games are not possible.

Finally, the factorization to private information and public information allows for public states and public trees.
}

\subsection{Connection to EFG}
\first{
There are clear benefits of the observation based model of FOSGs --- it is relatively simple, encodes the information otherwise lost by EFG, and it contains only timeable perfect-recall games.
We now show that there is no loss of expressive power as any perfect recall timeable EFG can be represented.

First, observe that any FOSG corresponds to some augmented extensive form representation through the $\texttt{FOGToAugEFG}$ transformation (the formal process can be found in \citep{kovavrik2019rethinking}).
Next transformation $\texttt{AugEFGToEFG}$ simply forgets the $\mathcal{I}_{pub}$ and restricts $\mathcal{I}_{i}$ to $\mathcal{H}_{i}$.
Finally, we define $\texttt{FOGtoEFG} = \texttt{AugEFGToEFG} \circ \texttt{FOGToAugEFG}$.
With these transformations in hand, we can state the crucial connections.

\begin{thm}
\citep[Theorem 3.6]{kovavrik2019rethinking}
(FOSGs as augmented EFGs)
Every FOSG G corresponds to an augmented EFG E = \texttt{FOGToAEFG}(G) with perfect-recall and no thick
public sets.
Moreover, any perfect-recall augmented EFG with no thick public sets can be obtained this way.
\end{thm}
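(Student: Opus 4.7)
The plan is to attack the two directions separately. For the forward direction, I would unfold the definition of \texttt{FOGToAugEFG} carefully and verify that the output augmented EFG enjoys the two structural properties. For the converse, I would construct a canonical FOSG from a given perfect-recall augmented EFG without thick public sets, and check that applying \texttt{FOGToAugEFG} to it recovers the original game up to the standard notion of game isomorphism used in \citep{kovavrik2019rethinking}.

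For the forward direction, I first take a FOSG $G=(\mathcal N,\mathcal W,w^0,\mathcal A,\mathcal T,\mathcal R,\mathcal O,\mathbb O)$ and build the tree of joint action sequences starting at $w^0$. Each node is a history $h$ of joint actions, which deterministically induces a sequence of world states and per-player observation sequences $\sigma_i(h)=(\mathcal O_i(w^0),\mathcal O_i(w^0,a^1,w^1),\dots)$ and a public observation sequence $\sigma_{\mathrm{pub}}(h)$. I would then define the information partitions by $h\sim_i h'$ iff $\sigma_i(h)=\sigma_i(h')$ and the same for $\sim_{\mathrm{pub}}$. Because each player's private observation includes the public component, $\mathcal I_i$ refines $\mathcal I_{\mathrm{pub}}$, matching Definition~\ref{defn:iig-augmented_extensive_form_game}. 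Perfect recall is immediate from the fact that the entire past observation sequence is stored in $\sigma_i(h)$, so no information is ever forgotten. For ``no thick public sets'', I note that every action taken contributes a new entry to the public observation sequence (since a joint action was taken and $\mathcal O_{\mathrm{pub}}$ is evaluated at that transition), so $\sigma_{\mathrm{pub}}(h')\neq \sigma_{\mathrm{pub}}(h)$ whenever $h'\sqsubset h$ strictly.

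For the converse, given an augmented EFG $E$ with perfect recall and no thick public sets, I would set the world states of the FOSG to be the histories $\mathcal W := \mathcal H$, with $w^0$ the empty history, and take the transition function to be the deterministic ``append an action'' map (chance handled by folding the chance distribution into $\mathcal T$). The legal joint action sets are constructed so that at each $w=h$ only the player $p(h)$ has non-trivial choices and the other players have a single dummy action, mirroring the construction used in Section~\ref{sec:iig-efg_examples} for matrix games. The observations are defined structurally from $\mathcal I$: let $\mathcal O_{\mathrm{pub}}(h,a,ha)$ be the name of the public cell of $ha$ together with the public cell of $h$, and $\mathcal O_{\mathrm{priv}(i)}(h,a,ha)$ be the name of the $\mathcal I_i$-cell of $ha$. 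I would then check that under $\texttt{FOGToAugEFG}$, two histories land in the same player-$i$ (respectively public) information set iff they already did in $E$; the ``only if'' uses that each observation step records the current cell, and the ``if'' uses perfect recall to align the full observation sequences along both paths.

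The main obstacle I expect is the converse. Two delicate points will require real care: first, ensuring that the chosen observations are consistent across siblings and do not accidentally merge or split cells (this is where the \emph{no thick public sets} hypothesis is used, ruling out a public cell containing both $h$ and a prefix of $h$ whose observation sequences would otherwise coincide); and second, turning the sequential EFG into a simultaneous-move FOSG without changing the information structure, which requires defining dummy actions cleanly and verifying that the induced observation sequences faithfully encode the original information partition. Once these points are handled, both directions combine to give the claimed correspondence.
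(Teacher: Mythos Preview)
The thesis does not supply its own proof of this theorem; it is imported verbatim as \cite[Theorem 3.6]{kovavrik2019rethinking}, and the surrounding text explicitly defers the formal construction of \texttt{FOGToAugEFG} to that reference (``the formal process can be found in \citep{kovavrik2019rethinking}''). So there is no in-paper argument to compare against, and your proposal is in fact more detailed than anything the thesis provides. Your two-direction plan is the natural one and matches the spirit of what the cited source does.

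One genuine imprecision in your forward direction deserves a fix before you write it up. You describe the EFG nodes as ``histories of joint actions'' that ``deterministically induce a sequence of world states''. In a FOSG the transition $\mathcal T(w,a)\in\Delta(\mathcal W)$ is stochastic, so a joint-action sequence alone does \emph{not} determine the world-state trajectory. The correct object (and the one the thesis itself uses in its derived-objects section) is the full trajectory $h=(w^0,a^0,w^1,a^1,\dots,w^t)$, with stochasticity either left in $\mathcal T$ and exposed as chance nodes in the EFG, or absorbed into a chance player. Once you build the EFG tree on these full trajectories, your arguments for perfect recall (observation sequences are monotone extensions along paths) and for no thick public sets (the public sequence strictly lengthens at every step, so no cell can contain both $h$ and a strict prefix $h'\sqsubset h$) go through as you describe.

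For the converse, your construction is the standard one and you have correctly identified where the hypotheses bite: ``no thick public sets'' is exactly what prevents $h$ and $h'\sqsubset h$ from producing identical public observation sequences, and perfect recall is what lets you reconstruct the original $\mathcal I_i$-partition from the accumulated observation sequence. One small point to keep track of: in the FOSG-derived information state $s_i(h)$ the player's own past actions $a_i^k$ are part of the sequence, so when you verify that the induced partition matches the original $\mathcal I_i$ you will use perfect recall twice---once for observations and once for own actions.
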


\begin{thm}
\citep[Theorem 3.10]{kovavrik2019rethinking}
(FOSGs are timeable EFGs)
For any FOSG G, the game E = \texttt{AugEFGToEFG} $\circ$ \texttt{FOGToAugEFG} (G) is a $1$-timeable perfect-recall EFG. 
Moreover, any $1$-timeable perfect-recall EFG can be obtained this way.
\end{thm}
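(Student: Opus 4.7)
The plan is to prove the two claims of the theorem separately, relying on the already-established Theorem 3.6 which handles the FOSG$\leftrightarrow$augmented-EFG correspondence. The FOSG-to-EFG direction is proved by tracking how the two-step construction builds labels and action-observation histories; the converse direction is proved by explicit reconstruction of a FOSG from a given $1$-timeable perfect-recall EFG.

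For the forward direction, I would first note that $\texttt{FOGToAugEFG}$ produces an augmented EFG in which each world-state transition of the FOSG is sequentialized into one action per player, so that between two consecutive FOSG ``ticks'' exactly $|\mathcal{N}|$ EFG edges are inserted. I would then define the label of an EFG node $h$ as the total number of edges on the path from the root to $h$, which trivially satisfies the $1$-timeable requirement that each child's label is exactly one higher than its parent's. The non-trivial part is \emph{exactness}: that two histories in the same information set carry the same label. This follows from the FOSG structure because the public observation sequence received by player $i$ strictly encodes the FOSG tick count (all players observe the passage of time together via $\mathcal{O}_{\textnormal{pub}}$), and within a tick the fixed sequentialization order pins down the intra-tick position; perfect recall follows because each player's action-observation history in the augmented EFG is cumulative by construction, and $\texttt{AugEFGToEFG}$ only discards $\mathcal{I}_{\textnormal{pub}}$ and restricts $\mathcal{I}_i$ to $\mathcal{H}_i$, neither of which can erase past actions from a player's private information.

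For the reverse direction, given a $1$-timeable perfect-recall EFG $E$, I would construct a FOSG $G$ whose associated EFG is $E$. Let $\ell(h)$ be the exact $1$-timing labels guaranteed by timeability. Group histories by label into ``levels'' $L_0, L_1, \dots$, and define the FOSG world states as the histories in $E$ together with the root $w^0 := \emptyset$. The FOSG transition takes one joint action $a = (a_1, \dots, a_N)$ per tick: for the player $p(h)$ acting at $h$, the component $a_{p(h)}$ is the chosen EFG action, while every other $a_j$ is a dummy no-op in $\mathcal{A}_j(h) = \{*\}$. Define the observation functions from the information partitions: $\mathcal{O}_{\textnormal{priv}(i)}(h, a, h')$ encodes everything player $i$ can deduce from the transition (namely, whether $h, h'$ remain in the same information set, and $i$'s own action), and $\mathcal{O}_{\textnormal{pub}}$ encodes a tick counter plus any publicly shared part derivable from the EFG structure. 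Applying $\texttt{FOGToAugEFG}$ then reconstructs an augmented EFG whose $\mathcal{H}, p, \pi_c, u$ agree with $E$ by construction, and $\texttt{AugEFGToEFG}$ recovers $E$ exactly because the restriction of $\mathcal{I}_i$ to $\mathcal{H}_i$ matches the original partition of $E$.

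The main obstacle I anticipate is the reverse construction: showing that the partition induced by action-observation histories in the reconstructed FOSG matches the original $\mathcal{I}$ of $E$, rather than being strictly finer. Concretely, we need the private observations to be coarse enough not to distinguish histories that were in the same information set, yet fine enough (together with the cumulative AOH) to distinguish histories in different information sets. The perfect-recall assumption is what makes this feasible --- it guarantees that information sets respect the sequential structure of each player's past actions --- and $1$-timeability is what lets us align the FOSG tick counter with the EFG depth so that the tick-encoded part of $\mathcal{O}_{\textnormal{pub}}$ does not accidentally split information sets that span what would otherwise appear to be different ``times.'' The careful encoding of $\mathcal{O}_{\textnormal{priv}(i)}$ as ``the $\mathcal{I}_i$-equivalence class of the successor history'' is the key trick that makes both inclusions hold simultaneously.
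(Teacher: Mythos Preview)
The paper does not contain a proof of this theorem: it is stated as a cited result from \cite{kovavrik2019rethinking} (Theorem~3.10 there) with no argument reproduced in the thesis, so there is no in-paper proof to compare your proposal against.

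Evaluating your proposal on its own merits: the overall two-direction structure and the identification of the key obstacle (matching the induced AOH-partition with the original $\mathcal{I}$ in the reverse direction) are correct, and your use of perfect recall plus $1$-timeability to align FOSG ticks with EFG depth is the right idea. One point that deserves more care is the claim that \texttt{FOGToAugEFG} applied to your reconstructed FOSG yields \emph{exactly} $E$. In your FOSG every player acts at every tick (the non-acting players via a dummy singleton $\{*\}$), so whether the round-trip lands on $E$ node-for-node depends on whether \texttt{FOGToAugEFG} sequentializes joint actions into $|\mathcal{N}|$ separate EFG nodes or collapses singleton-action players; the thesis does not specify this, and the actual construction in \cite{kovavrik2019rethinking} must be invoked to close this gap. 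Relatedly, your forward-direction remark that ``between two consecutive FOSG ticks exactly $|\mathcal{N}|$ EFG edges are inserted'' presupposes the sequentializing version, which would then conflict with the reverse direction producing $E$ on the nose. You should fix one convention for \texttt{FOGToAugEFG} and verify both directions under it; the cited paper's convention is that histories in the derived EFG are the world-state/joint-action sequences themselves (so one FOSG tick is one EFG edge), which makes your reverse construction work directly but requires you to revise the forward-direction label count.
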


Finally, it is possible to show that $1$-timeable EFG is no less general than any timeable game (Lemma~\ref{lemma:iig-fog_1timeable_quadratic}).

\begin{lemma}
\label{lemma:iig-fog_1timeable_quadratic}
\citep[Lemma 3.9]{kovavrik2019rethinking}
Any classical timeable EFG can be made $1$-timeable by adding chance nodes with a single noop action, where the resulting $1$-timeable game is no more than quadratically larger.
\end{lemma}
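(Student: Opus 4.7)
The plan is to first replace the given exact timing $t$ by the minimal integer exact timing $t^*$, and then insert a chain of deterministic chance nodes (each with a single $\texttt{noop}$ action of probability one) along every parent--child edge whose $t^*$-gap exceeds one. Because such noop insertions are strategically and informationally inert, the resulting padded game is equivalent to the original and now satisfies $t^*(h) = t^*(\mathrm{parent}(h)) + 1$ on every edge, which is exactly the $1$-timeability condition.

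To construct $t^*$, I would build the directed graph whose vertices are the information sets of the acting players together with singletons for individual chance and root histories, and whose arcs are $I(\mathrm{parent}(h)) \to I(h)$ for every non-root $h \in \mathcal{H}$. I would then set $t^*(I)$ to be the length of the longest directed path ending at $I$, taking $t^*(I(\mathrm{root})) = 0$. This graph is acyclic exactly because the original game is timeable: any cycle would force $t^*(I) > t^*(I)$ for some $I$, contradicting the existence of any exact timing. Hence $t^*$ is well-defined, integer-valued, and exact, and $\max_h t^*(h)$ is at most the number of vertices in the graph, which is at most $|\mathcal{H}|$.

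Given $t^*$, for each edge $(g, h)$ of the original game with $d := t^*(h) - t^*(g) \geq 2$, I would replace the edge by a chain of $d - 1$ fresh chance nodes $c_1, \dots, c_{d-1}$, each with exactly one legal action of probability one and each labelled $t^*(c_j) := t^*(g) + j$. The old edge from $g$ is redirected into $c_1$, successive $c_j$'s connect deterministically, and $c_{d-1}$'s single action leads into $h$; all existing information sets of the original game are preserved on the original histories. Because each $c_j$ has a unique deterministic outcome and carries no decision, every behavioural strategy $\pi$ of the original game lifts uniquely to a strategy $\hat\pi$ of the padded game which induces the same distribution over terminals and hence the same utilities; conversely no additional strategic choices are introduced.

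For the size bound the total number of inserted chance nodes is
\begin{equation*}
\sum_{(g,h) \in E} \bigl( t^*(h) - t^*(g) - 1 \bigr) \leq \bigl( \max_h t^*(h) \bigr) \cdot |E| \leq |\mathcal{H}| \cdot |\mathcal{H}| = |\mathcal{H}|^2,
\end{equation*}
which is the claimed at-most-quadratic blow-up. The main obstacle I anticipate is not the longest-path bound or the arithmetic, both of which are essentially routine, but rather the careful verification that noop insertion is genuinely inert at the level of the EFG's information structure: one must argue that the padded game retains perfect recall, that the existing information-set partition lifts cleanly through the inserted chance chains, and that any downstream augmented-EFG notions used in the surrounding theorems (public partitions, absence of thick public sets, etc.) remain consistent with the padding.
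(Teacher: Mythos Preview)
Your argument is correct: the longest-path construction on the information-set DAG gives a minimal integer exact timing bounded by $|\mathcal{H}|$, and padding each parent--child gap with deterministic noop chance nodes yields a $1$-timeable game with at most $|\mathcal{H}|^2$ additional nodes. The thesis itself does not supply a proof of this lemma---it is quoted as \cite[Lemma 3.9]{kovavrik2019rethinking} without argument---so there is nothing in the present paper to compare your route against; your construction is the natural one and matches what one would expect the cited source to do.
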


}

\subsection{Derived Objects}

\second{
We can now readily derive the trees of histories, cumulative rewards and information states, similar to the objects used in the extensive form games literature. 
}

\second{
\begin{itemize}
    \item History (trajectory) is a finite sequence $h = (w^0, a^0, w^1, a^1, \dots , w^t)$, where $w^i \in \mathcal W$, $a^i \in \mathcal A(w^i)$, and $w^{i+1} \in \mathcal W$ is in the support\footnote{For finite $\mathcal W$, being in support of some probability measure $\mu\in \Delta(W)$ is equivalent to having a non-zero probability under $\mu$.} of $\mathcal T(w^i, a^i)$.
    We denote the set of all legal histories by $\mathcal H$\footnote{To simplify the discussion, we assume that $\mathcal H$ is finite. This can always be enforced by using finite horizon $T$, but some domains satisfy this assumption naturally. Moreover, our results generalize into the standard $\gamma$-discounted rewards setting.}, and the set of terminal histories as $\mathcal{Z}$.
    \begin{itemize}
        \item Since the last state in each $h\in \mathcal H$ is uniquely defined, the notation for $\mathcal W$ can be overloaded to work with $\mathcal H$ (for example $\mathcal A(h) = \mathcal A(w^t)$).
        \item We use $g \sqsubset h$ to denote the fact that $g$ is a prefix of $h$, and $g \sqsubseteq h$ to denote $g$ is a prefix of $h$ or equal to $h$.
    \end{itemize}
    \item The cumulative reward (return or utility) of $i$ at $h$ is the reward accumulated up to this point $R_i(h) = \sum_{i=0}^t R_i(w^i, a^i)$.
\end{itemize}

The combination of private and public observations induce the corresponding information states and trees, and we then use these to define the notion of a strategy.

\begin{itemize}
    \item Player $i$'s action-observation history at $h$ is the sequence of the observations visible to that player (private and public) and the actions taken: $s_i(h) = (o^0_i, a^0_i, o^1_i, a^1_i, \dots, o^t_i)$, where $o^k_i = \mathcal O_i(w^{k-1}, a^{k-1}, w^k)$.
    The space $\mathcal S_i$ of all such sequences can be viewed as the information state space of $i$.
        \begin{itemize}
            \item We use $\mathcal{H}(s)$ to denote the set of compatible histories: \\ $\mathcal{H}(s) = \{ h \in \mathcal{H} | s_i(h) = s\} $
            \item We assume that each $s_i \in \mathcal S_i$ determines which $i$'s actions $\mathcal A_i(s_i)$ are legal: $(\forall h\in \mathcal{H}_i(s_i)): \mathcal A_i(h) = \mathcal A_i(s_i)$
            \item We also use $sao$ to denote a state after taking action $a$ in $s$ and observing $o$.
        \end{itemize}
    \item A policy $\pi_i : s_i \in \mathcal S_i \mapsto \pi_i(s_i) \in \Delta(\mathcal A_i(s_i))$ is a mapping from an information state to the probability distribution over the actions in that state, and $\Pi_i$ is the set of all possible policies.
\end{itemize}

Finally, the factorization of each observation into the private and public parts allows us to define the tree of public states.

\begin{itemize}
    \item The public state corresponding to $h = (w^0, a^0, w^1, a^1, \dots , w^t)$ is the sequence $s_{\textnormal{pub}}(h) := s_{\textnormal{pub}} ( s_i(h) ) := \left( O^0_{\textnormal{pub}}, O^1_{\textnormal{pub}}, \dots, O^{t}_{\textnormal{pub}} \right)$ of all public observations corresponding to $h$.
    \begin{itemize}
        \item The space $\mathcal S_{\textnormal{pub}}$ of all public states is called the public tree.
    \end{itemize}
\end{itemize}
}

\first{
It is often useful to model the stochasticity in the game as being caused by a chance player $c$ rather than via the stochastic transition function $\mathcal{T}$.
The chance player receives observations and takes actions just like the other players, but its policy is fixed. 
The addition of $c$ causes no loss of generality since their actions can always be merged back into the transition function $\mathcal{T}$, but this viewpoint can be in some cases conceptually and formally more convenient.
Finally, we also use $\Delta_R$ to denote the delta of maximim/minimum utilities in the game: $\Delta_R = \max_{z \in \mathcal{Z}}R_i(z) - \min_{z \in \mathcal{Z}}R_i(z)$.
}

\subsection{State Notation and Views}
\label{sec:iig-fog_examples}
\first{

Factored observation games naturally allow for different observer views.
Each world state contains i) player $1$ private information ii) player $2$ private information iii) public information.
Different agents then observe different subsets of that available information, corresponding 

\begin{itemize}
    \item Game state view: i) player $1$ private information ii) player $2$ private information iii) public information
    \item Player $1$ view: i) player $1$ private information iii) public information
    \item Player $2$ view: ii) player $2$ private information iii) public information)
    \item Public observer view: iii) public information
\end{itemize}

Thanks to the perfect recall property, all these views imply a well-formed trees about how the game progresses from the perspective of the respective viewers.
Furthermore, this allows for a convenient notation for identifying states.
We will use $[O_{pub}|O_1|O_2]$, $[O_{pub}|O_1|]$, $[O_{pub}||O_2]$, $[O_{pub}||]$ to identify a concrete world-state, player $1$ state, player $2$ state and public state respectively (see Table~\ref{tab:iig-fog_player_views}).

\begin{table*}\centering
\renewcommand{\arraystretch}{1.3}
\begin{tabular}{@{}llllll@{}}\toprule
 & Notation & $O_{pub}$  & $ O_{1}$ & $ O_{2}$ & Kuhn Example \\  \midrule
$h \in \mathcal{H}$ & $[O_{pub}|O_{1}|O_{2}]$ & \checkmark & \checkmark & \checkmark & $[b|Q|J]$ \\
$s \in \mathcal{S}_1$ & $[O_{pub}|O_{1}|]$ & \checkmark & \checkmark &           & $[b|Q|]$ \\
$s \in \mathcal{S}_2$ & $[O_{pub}||O_{2}]$ & \checkmark &            & \checkmark & $[b||J]$ \\
$s_{pub} \in \mathcal{S}_{pub}$ & $[O_{pub}||]$ & \checkmark     &            &            & $[b||]$  \\
\bottomrule
\end{tabular}
\caption{
Different states/views are defined by different subsets of the information/observation.
Kuhn example shows a state where the first player is dealt (Q)ueen, the second player is dealt (J)ack and the first player made a publicly observable action (b)et (see also Figure~\ref{fig:iig_iig_kuhn}).
}
\label{tab:iig-fog_player_views}
\end{table*}

}

\subsection{Example}
\second{
Figure \ref{fig:iig_iig_kuhn} shows the different views for the game of Kuhn poker.
See that as the game view, player's view and the public view contain decreasingly less information, we are ``zooming in and out'' of the respective states.
Table~\ref{tab:iig-fog_player_views} then shows state notation for a concrete Kuhn state  and Table~\ref{tab:iig-game_size} reports number of world states, infostates and public states for the imperfect information games used in this thesis.

\begin{figure}[ht]
  \centering
  \includegraphics[width=1.0\textwidth]{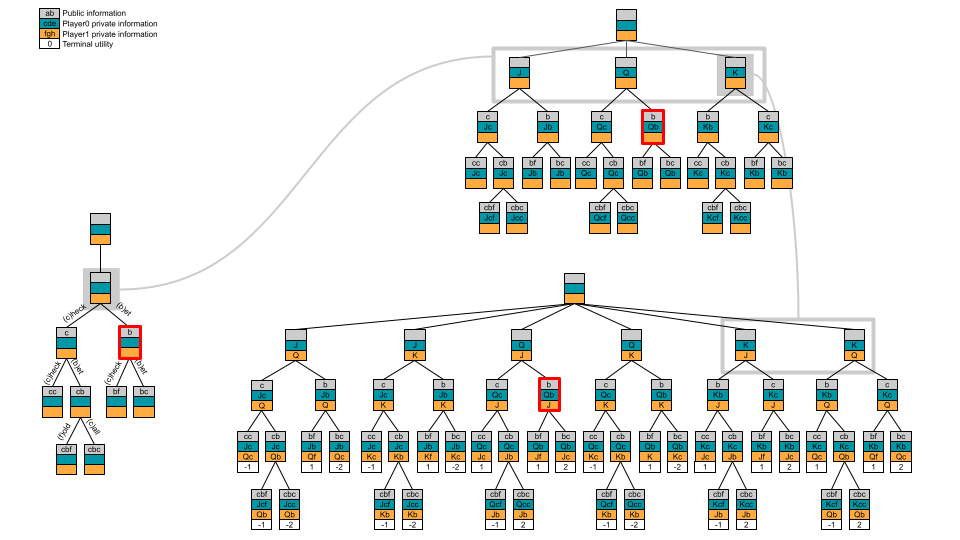}
  \caption{Different levels of information for the Kuhn poker. 
  Top left: Public tree depicting the publicly observable information (i.e. the public view). 
  Top right: Action-observation sequence tree of player 1 (i.e., player $1$ view). 
  Bottom: The world-state tree of the game (i.e. the game view).
  Red: Example states used in Table~\ref{tab:iig-fog_player_views}.
  }
\label{fig:iig_iig_kuhn}
\end{figure}
}

\begin{table*}
\centering
\renewcommand{\arraystretch}{1.3}
\begin{tabular}{@{}llll@{}}\toprule
  & Histories &  Infostates &  Public States \\  \midrule
Kuhn poker & $64$ & $20$ & $12$ \\
Leduc poker & $9,487$ & $1,398$ & $468$ \\
Glasses & $53,907$ & $11,699$ & $6,559$ \\
Limit Texas Hold'em & $3 \cdot 10^{17}$ & $3 \cdot 10^{14}$  & $3 \cdot 10^{11}$ \\
No-limit Texas Hold'em & $6 \cdot 10^{164}$ & $6 \cdot 10^{161}$  & $6 \cdot 10^{158}$ \\
\bottomrule
\end{tabular}
\caption{
The no-limit poker variant is the one used at the Annual Computer Poker Competition (ACPC) since $2010$ and its size was computed in \citep{johanson2013measuring}.
}
\label{tab:iig-game_size}
\end{table*}

\section{History of Formalisms}
\first{
As discussed in Section \ref{intro:rl_vs_gt}, there are two main communities that study sequential decision-making in multi-agent environments --- game theory and reinforcement learning.
We include a brief history of the formalisms used by the communities.

Extensive Form Games are a natural extension of game trees to imperfect information environments, dating back to 1953 \citep{morgenstern1953theory}.
Factored Observation Stochastic Games are a modern formalism that reflects the lessons learned from working with extensive form games as well as having search methods in mind.
The formalism is based on the stochastic games, and aims to bring the communities closer as the next research frontiers require insights and collaboration of both communities.

Reinforcement learning formalisms can be traced back to Markov decision processes (MDP) \citep{bellman1957markovian} and stochastic games \citep{shapley1953stochastic}. 
While the MDP formalism describes single-agent perfect information domains, researchers soon generalized MDPs into partially observable POMDPs \citep{astrom1965optimal}. 
\citet{kaelbling1998planning} then used POMDPs to present novel algorithms for planning and acting, using value functions and sub-problems as crucial concepts.
While stochastic games (SGs) were introduced prior to MDPs, they can be well thought of as ``multiagent MDPs'' (SG is a FOSG with empty observation sets $\mathbb{O}_{priv(i)} = \mathbb{O}_{pub} = \emptyset$).
Indeed, this connection allowed \citep{littman1994markov} to generalize the Q-learning algorithm from MDPs to stochastic games.
When it comes to combining multiple agents and imperfect information, researchers usually use partially observable stochastic games \citep{POSGs2004hansen} (FOSG with no notion of public observations $\mathbb{O}_{pub} = \emptyset)$) and Decentralized POMDPs (POMDP where the agents' goal is shared: $R_i=R_j \, \forall i,j \in \mathcal{N}$) \citep{bernstein2002complexity}.
}

\chapter{Offline Policies}
\label{chap:iig-policies}

\second{
As the imperfect information games formalisms are strictly more general then their perfect information counterparts, we will revisit the concepts of optimal policies introduced in
Chapter \ref{chap:pig-optimal_policies} and see if and how these concepts transfer to imperfect information.

Do the optimality results from perfect information settings hold in imperfect information?
As a reminder, the settings we are dealing with are two-player, zero sum, perfect recall and finite-horizon games.

\begin{itemize}
    \item Is a \maximin{} policy still guaranteed to exist?
    \item Is a Nash equilibrium policy still guaranteed to exist?
	\item Do \maximin{} and Nash equilibria collapse?
	\item Are there still the same worst-case garantees (as in Corollary \ref{cor:pig-both_sides_guaranteed_value})?
	\item Are the optimal policies still deterministic?
\end{itemize}
}

\section{Best Response}
\label{sec:iig-best_response}
\second{
The definition of best response in imperfect information games remains unchanged the perfect information (Section \ref{sec:pig-best_response}).

\restdefbestresponse*

And just as in perfect information, best response can always be a deterministic policy (Lemma \ref{lem:iig-deterministic_br}).
This follows from the fact that fixing the opponent turns the game into a perfect information single-agent environment \citep{bowling2003multiagent}.
To compute the best response, we use the fixed policy of the opponent to compute the transition probabilities of the agent's infoset tree.
Using those transition probabilities, we then simply traverse the tree bottom-up and greedily, selecting the action with the highest expected utility (Algorithm \ref{alg:iig_best_response}).

\begin{lemma}
\label{lem:iig-deterministic_br}
There is always a deterministic best response.
\end{lemma}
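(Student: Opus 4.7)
The plan is to prove Lemma~\ref{lem:iig-deterministic_br} by a dynamic-programming argument on player $i$'s information-state tree, essentially justifying the correctness of Algorithm~\ref{alg:iig_best_response}. First I would fix the opponent's strategy $\pi_{-i}$ and observe that, together with chance, this turns the FOSG into a single-agent decision problem from the perspective of player $i$: the opponent's and chance's actions become part of a stochastic transition function, and only $\pi_i$ remains to be chosen.

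Next I would use the perfect-recall property that is built into the FOSG formalism to argue that the set $\mathcal{S}_i$, ordered by the prefix relation on action-observation histories, forms a tree rooted at the initial information state. The fixed $\pi_{-i}$ together with chance induces, for every $s \in \mathcal{S}_i$, a well-defined belief $b(h \mid s)$ over compatible histories $h \in \mathcal{H}(s)$, and hence well-defined transition probabilities from $s$ to its successor information states $sao$ after player $i$ plays an action $a$.

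With these ingredients in place, I would define state and action values $V^*(s)$ and $Q^*(s,a)$ for the induced single-agent problem and perform the standard bottom-up recursion: at terminal information states the value is the expected cumulative reward under $b$, and at an internal state
\begin{equation*}
    V^*(s) = \max_{a \in \mathcal{A}_i(s)} Q^*(s,a), \qquad \pi^*_i(s) \in \argmax_{a \in \mathcal{A}_i(s)} Q^*(s,a),
\end{equation*}
where ties are broken by any fixed rule so that $\pi^*_i$ is deterministic. An induction on depth of the information-state tree then shows that $\pi^*_i$ attains $\max_{\pi_i} R_i(\pi_i, \pi_{-i})$, and thus $\pi^*_i \in \mathbb{BR}(\pi_{-i})$.

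The main obstacle is justifying that purely local, greedy maximisation at each information state yields a globally optimal policy; this is exactly the step that fails under imperfect recall. I would address it by invoking perfect recall to argue that the choice made at $s$ influences the future only through the subtree rooted at $s$, so expected rewards factorise along this subtree and the global optimum equals the sum of local maxima. Since any behavioural (randomised) policy at $s$ can only produce a convex combination of the quantities $Q^*(s,a)$, it cannot exceed $\max_a Q^*(s,a)$, so restricting to deterministic choices loses nothing and the constructed $\pi^*_i$ is a best response.
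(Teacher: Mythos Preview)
Your proposal is correct and follows essentially the same approach as the paper: fix the opponent's policy to obtain a single-agent problem on the information-state tree (the paper cites \citet{bowling2003multiagent} for this reduction), then perform bottom-up greedy action selection as in Algorithm~\ref{alg:iig_best_response}. Your write-up is in fact more detailed than the paper's one-sentence justification, explicitly spelling out the role of perfect recall and the convex-combination argument for why randomisation cannot help.
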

}
\subsection{Distinction to Perfect Information}
\label{sec:iig-perfect_info_distinction}
\second{
There is a very important distinction to perfect information in the way the best response is calculated.
Previously, we needed to consider the opponent's strategy only in all the future states.
This is because this provided a sufficient information about the transition probabilities between these future states.
There was no need to consider how the opponent plays prior to a state and thus a truly bottom-up pass was possible in a perfect information best response (Algorithm \ref{alg:pig_best_response}).

\begin{figure}[ht]
  \centering
  \includegraphics[width=6cm]{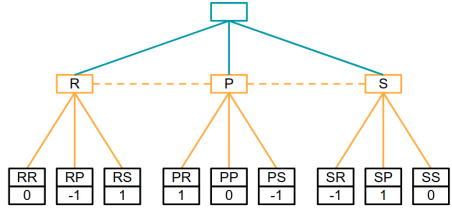}
  \caption{Best response computation for the second player in rock-paper-scissors.
  Single infostate of the player consists of three different histories, and the utility of the policy depends on distribution over these histories.
  That in turn depends on the strategy of the opponent in previous states.
  }
\label{fig:iig_best_response_distribution}
\end{figure}

This is not the case in imperfect information.
The issue is that the state-action-state transition dynamic depends on the distribution over the individual histories grouped within the infostate.
That distribution in turn depends on the past behavior of the opponent.
Consider rock-paper-scissors from the perspective of the second player (i.e. computing best response against a fixed strategy of the first player).
To compute the probability of reaching the terminal state with utility $1$ after the action rock, we need the distribution over the three possible histories grouped in the infostate.
And these three histories correspond to the first player taking rock, paper or scissors (Figure \ref{fig:iig_best_response_distribution}).

See that the Algorithm \ref{alg:iig_best_response} indeed first needs to compute the reach probabilities for all the histories, propagating them down the tree.
We will see important consequences of this property on the imperfect information \subgames{} in Section \ref{chap:iig_subgames}.
}

\begin{algorithm}
\caption{Imperfect Information Game Best Response}
\label{alg:iig_best_response}
\begin{algorithmic}[1]

\Function{ComputeReach}{$s \in S_i, i \in \mathcal{N}$ }
  \For{$h : \mathcal{S}_i(h)$}
    \For{$h' = (haw)$ : $\mathcal{H}$}
      \State \Comment{haw corresponds to some sas'}
      \State h\_reach\_prob[$h'$] = h\_reach\_prob[$h$] $\cdot \mathcal{T}(h, a, h') \cdot \pi_{-i}(a)$
    \EndFor
  \EndFor
  \For{$sao' : \mathcal{S}_i$}
    \State s\_reach\_prob[$sao'$] = $\sum_{h \in \mathcal{H}(sao')}$h\_reach\_prob[$h$]
    \State ComputeReach($sas'$, i)
  \EndFor
\EndFunction
\State \Comment{Return best response value of the state $s$.}
\Function{BestResponseDFS}{$s \in \mathcal{S}_i, i \in \mathcal{N}$}
  \State \Comment{Terminal state, return terminal utility.}
  \If{$\mathcal{A}_i(s) = \emptyset$}
    \State \Return ${R_i(s)}$
  \EndIf
  \State \Comment{Action value is the reach-weighted sum of possible child states}.
  \For{$a : \mathcal{A}(s)$}
    \State sa\_values[$sa$] = $\sum_{sao \in \mathcal{S}_i}$ BestResponseDFS($sao$) $\cdot$ s\_reach\_prob[$sao$]
  \EndFor
\State
\State \Comment{Choosing an action with the best value}
\State $\pi(s) = argmax_{sa}$ sa\_values[sa]
\State \Return $max_{sa}$ sa\_values[sa] 
\EndFunction
\State
\Function{BestResponse}{$s \in \mathcal{S}_i, i \in \mathcal{N}$ }
\State s\_reach\_prob[$s$] = 1
\For{$h: \mathcal{H}(s)$}
  \State h\_reach\_prob[$h$] = 1
\EndFor
\State \Call{ComputeReach}{$s$, $i$}
\State \Call{BestResponseDFS}{$s$, $i$}
\EndFunction
\end{algorithmic}
\end{algorithm}

\section{\Maximin{}}
\second{
Same as in perfect information, the \maximin{} strategy maximizes the agent's value against the worst case opponent.

\restdefminmaxpolicy*
}

\section{Nash Equilibrium}
\second{
The concept and definition of Nash equilibria also remains unchanged.
}

\restdefnash*

\section{Nash Equilibrium vs \Maximin{}}
\second{
There was an important connection in perfect information between the solution concepts of Nash equilibria and \maximin{}.
While these concepts are at a first glance very different, in the case of zero-sum two player games, they collapsed (Theorem \ref{thm:pig-minmax_is_nash}).

This connection crucially relied on the two-player zero-sum property and imperfect information brings no change to these results.
First, the minmax theorem (Theorem \ref{thm:pig-minmax}) still holds as the original referenced proof is for matrix games.
Proofs of Theorem \ref{thm:pig-nash_implies_minmax} and Theorem \ref{thm:pig-minmax_implies_nash} then remain the same as there were no assumptions specific to perfect information.

}





\section{Difference To Perfect Information}
\label{sec:iig-suboptimal_vs_opimal}
\second{
Imperfect information brought little to no change to the concepts of best response, \maximin{} and Nash equilibrium.
The same concepts, motivations and properties of the optimal policies hold in imperfect information.
Optimal policies are still guaranteed to exist and provide the same appealing guarantees (e.g. it guarantees a positive expected reward against any opponent if we get to play both positions, Corollary \ref{cor:pig-both_sides_guaranteed_value})
The reasons to follow optimal policies in imperfect information are thus as appealing as in perfect information.
We just need more powerful algorithms to compute the policies.
This is true for offline algorithms, but even more so for online search algorithms.
}

\subsection{Deterministic or Stochastic Policies}
\label{sec:iig-stochastic_policies}
\second{
One of the reasons why computing an optimal policy is more challenging in imperfect information is the need for careful randomization.
In perfect information, an optimal policy can always be deterministic.
But consider the imperfect information game \rps{}.
It is easy to verify that any deterministic policy receives a worst-case utility of $-1$.
On the other hand, the best response value of the uniform policy is $0$.
Deterministic policies thus might not be sufficient to play optimally in imperfect information settings (Corollary \ref{cor:iig-minmax_stochastic}).

\begin{corollary}
\label{cor:iig-minmax_stochastic}
An optimal policy in imperfect information games might have to be stochastic.
\end{corollary}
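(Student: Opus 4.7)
The plan is to exhibit a single concrete counterexample, namely the game of rock-paper-scissors introduced in Table~\ref{tab:iig:game_rps} and Figure~\ref{fig:iig_efg_rps}, and to show that no deterministic strategy attains the maximin value of the game. By Corollary~\ref{thm:pig-minmax_is_nash} (which carries over to imperfect information as noted in the previous section), this is equivalent to showing that no Nash-equilibrium strategy is deterministic.

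The steps, in order, would be the following. First, I would enumerate the three deterministic strategies available to the first player in rock-paper-scissors: $\pi^R, \pi^P, \pi^S$, each placing all probability mass on a single action. Second, for each of these strategies I would compute the best-response value $BRV_1(\pi)$ using Definition~\ref{defn:best_response}: against $\pi^R$ the opponent plays Paper and wins, against $\pi^P$ the opponent plays Scissors and wins, and against $\pi^S$ the opponent plays Rock and wins. In every case $BRV_1(\pi) = -1$. Third, I would exhibit the uniform stochastic strategy $\pi^U = (1/3,1/3,1/3)$ and compute its best-response value: against any pure opponent action the expected utility is $\tfrac{1}{3}(0) + \tfrac{1}{3}(1) + \tfrac{1}{3}(-1) = 0$, and since a deterministic best response always exists (Lemma~\ref{lem:iig-deterministic_br}), this suffices to conclude $BRV_1(\pi^U) = 0$. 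Fourth, I would combine these to get
\begin{equation*}
    \max_{\pi_1 \in \Pi_1} BRV_1(\pi_1) \geq BRV_1(\pi^U) = 0 > -1 = \max_{\pi_1 \text{ deterministic}} BRV_1(\pi_1),
\end{equation*}
so no deterministic strategy lies in $\maximinset_1$, proving the claim.

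There is essentially no obstacle here beyond bookkeeping; the content of the result is already implicit in the paragraph preceding the corollary. The only thing worth being careful about is distinguishing clearly between ``a best response exists which is deterministic'' (Lemma~\ref{lem:iig-deterministic_br}) and ``an optimal/maximin strategy need not be deterministic'' (the statement at hand), since the asymmetry between the best-response computation and the maximin computation is exactly the phenomenon driving the result.
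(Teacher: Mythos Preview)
Your proposal is correct and takes essentially the same approach as the paper: the paragraph immediately preceding the corollary already sketches this argument, using rock-paper-scissors to observe that every deterministic policy has worst-case utility $-1$ while the uniform policy achieves $0$. Your writeup simply fills in the bookkeeping of that sketch, including the helpful remark that Lemma~\ref{lem:iig-deterministic_br} lets you restrict attention to pure opponent responses when computing $BRV_1(\pi^U)$.
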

}

\subsubsection{Level of Uncertainty and the Support Size}
\second{
There is a rather interesting connection between the level of uncertainty and the support size (Definition~\ref{defn:pig-support}).
In perfect information games, there is only a single state of the opponent consistent with the player's state.
In other words, there is no uncertainty about the opponent, the opponent can only be in one state and thus there's always an optimal strategy with support size one.

In imperfect information, there is uncertainty about the state of the opponent; there are multiple states the opponent can be in given a player's state.
It turns out there is direct connection between the level of  uncertainty and the number of actions required under an optimal policy \citep{schmid2014bounding}.
}

\section{Multiplayer Games}
\label{sec:iig-multiplayer_games}
\second{
While Section \ref{sec:pig-multiplayer_games} already discussed that leaving the comfort of two-player zero sum settings greatly complicates matters, Section \ref{sec:pig-unique_value} deferred an example of the \gls{equilibrium_selection_problem}.
This is because there is a particularly nice matrix game that illustrates this issue --- the Game of Chicken \citep{rapoport1966game}.
It has been inspired by another famous game ---- Prisoner's dilemma \citep{rapoport1965prisoner}, and is now often re-branded as the ``crossroad game'' where the both players are allowed to either (S)top at the crossroad or (G)o.
If both players stop, they both receive zero reward.
If one of the players stop and the other goes, the cautious player receive a zero reward while the daring player receives positive reward.
But if both players go at the same time, they crash and both receive a large negative reward.
See Table \ref{tab:iig-game_chicken} for the corresponding matrix game.
}


\begin{table}[h!]
\centering
\begin{tabular}{@{}lllllll@{}}
& (S)top & (G)o & \hspace{0.1\textwidth} & & (S)top & (G)o  \\ 
\cline{2-3} \cline{6-7}
(S)top & 0 & 0 && (S)top & 0 & 1  \\
(G)o & 1 & -100 && (G)o & 0 & -100  \\
\end{tabular}
\caption{Chicken game. 
Left: Player $1$ utility. 
Right: Player $2$ utility.
}
\label{tab:iig-game_chicken}
\end{table}

\second{
There are three Nash equilibria in this game:

\begin{enumerate}
    \item Row player S, column player G
    \item Row player G, column player S
    \item Both players share the same strategy: $p(S) = \frac{100}{101}, p(G) = \frac{1}{101}$
\end{enumerate}

The equilibrium selection problem is that the column player can select the first equilibrium, while the row player selects the second one.
In this case, both players select the (G)o action and crash, which violates reward guarantee, and is not a Nash equilibrium.
}
\chapter{\Subgames{}}
\label{chap:iig_subgames}
\second{
This chapter generalizes sub-games in imperfect information settings.
Just as in perfect information, a \subgame{} is going to be a well-defined game, a sub-problem of the original one based on the notion of sub-tree.

But while a \subgame{} in perfect information was simply defined by a sub-tree rooted in the current state (Section \ref{sec:pig-subgame}), things are more complicated in imperfect information.
As the players have different views of the game, the possible states of the players differs and so do their respective (infostate) trees.
There are two conceptual generalizations for \subgames{} in imperfect information games.

\begin{itemize}
    \item Set of states we need to reason about.
    \item Distribution over the possible initial states.
\end{itemize}
}

\section{Set of States to Reason About}
\first{
Given a player's state, what is the set of states we need to reason about?
We must consider all the states the other player can potentially be in.
In imperfect information games, there might be multiple such consistent states consistent with the current observation.
And to reason about any of these states, the opponent has to now consider all the possible states the first player could be in.
This recursive reasoning continues until we find the set of all relevant states for the current situation.
}

\subsection{Consistent States}
\second{
To formalize the set of states we need to reason about, we first define \gls{consistent_states} --- the set of (opponent's) states consistent with the current state (Definition \ref{def:igg-consistent_states}).
The corresponding operation $\consistent : \mathcal{S} \rightarrow 2^{\mathcal{S}}$ is then referred to as the \gls{consistency_operation}.

\begin{defn}[Consistent States]
\label{def:igg-consistent_states}
Given a state $s \in \mathcal{S}_i$, the set of (opponent's) consistent states $\consistent(s)$ is:
\begin{align}
\label{eq:igg-consistent_states}
    \consistent(s) = \{ s' \in \mathcal{S}_{-i} \, | \, \mathcal H(s) \cup \mathcal H(s') \neq \emptyset \}
\end{align}
\end{defn}
}

\subsection{Common Information Set}
\second{
The \gls{consistency_operation} represents a single step of the recursive process.
We need to repeat the operation until we find a closure --- set of states closed under the \gls{consistency_operation}.
This particular closure is referred to as the \gls{common_info_set} (Definition \ref{def:igg-common_info_set}).
The \gls{common_info_set} represents the minimal set of relevant states.

\begin{defn}[Common Information States]
\label{def:igg-common_info_set}
Given a state $s$, the common information set is the closure under the consistency operation: $cl_{\consistent}(s)$.
\end{defn}
}

\subsection{Common vs Public Information}
\second{
While common information defines the minimal set of relevant states, there is an easier way to find a set closed under the consistency operation without the need for the recursive construction process (closure).
We just need to consider all the states sharing public information.

As the common information set is a subset of the public state set: $cl_{\consistent}(s) \subseteq \mathcal{S}_{pub}(s)$, the set of states sharing a public information is closed under the consistency operation $\consistent$.
The only downside of the public state is that it can potentially be strictly larger.
In many games though (e.g. in all example games), $cl_{\consistent}(s)$ and $\mathcal{S}_{pub}(s)$ are exactly the same and thus the notion of public states is preferable due to its simplicity.

}

\subsection{Future States}
\second{
Introduced sets described the necessary states relevant for the current situation.
In a \subgame{}, we must reason about the current states as well as all the reachable future states.
Given state $s$, we need to reason about $\mathcal{S}_{pub}(s)$ and all the future states of this set $\{f | s' \sqsubset f , s' \in \mathcal{S}_{pub}(s) \}$.
But do we need to repeat the construction of consistent states for these future states?
Given a future state $f$, are there any more relevant states that are not already included?
Theorem \ref{thm:iig-consistent_future_states} guarantees that this is not the case.

\begin{thm}
\label{thm:iig-consistent_future_states}
For any future states $f$ of a state $s$:
\begin{equation*}
 s \sqsubset f, f' \in \mathcal{S}_{pub}(f)  \implies \exists s' \in  \mathcal{S}_{pub}(s) : s' \sqsubset f'
\end{equation*}
\end{thm}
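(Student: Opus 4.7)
The plan is to leverage the fact that in a FOSG, public information is accumulated one public observation per time step, so the public state $s_{pub}(h)$ of any history $h$ is an initial segment of the public observation sequence of any descendant of $h$. I would argue directly from the definition of $\mathcal{S}_{pub}$ (states sharing the same sequence of public observations) combined with $1$-timeability (Lemma~\ref{lemma:iig-fog_1timeable_quadratic}), which guarantees that the length of the public observation sequence equals the depth of the state.

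First I would fix arbitrary $f$ with $s \sqsubset f$ and $f' \in \mathcal{S}_{pub}(f)$. By definition of the public state, $s_{pub}(f) = s_{pub}(f')$, so both $f$ and $f'$ share the identical sequence of public observations $(O^0_{pub}, O^1_{pub}, \dots, O^t_{pub})$. By $1$-timeability, $f$ and $f'$ have the same length $t+1$ as histories (or equivalently, the same depth in the public tree). Let $k$ be the depth of $s$, so that $s$ corresponds to the length-$(k+1)$ prefix of $f$'s public observation sequence, namely $s_{pub}(s) = (O^0_{pub}, \dots, O^k_{pub})$.

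Next I would define $s'$ to be the unique prefix of $f'$ of depth $k$. This prefix exists because $f'$ has depth $t \geq k$ (since $s \sqsubset f$ implies $k \leq t$, and $f'$ has the same depth as $f$). By construction $s' \sqsubset f'$. Because public observations only depend on the transition at each step and are recorded along the history, the public observation sequence of $s'$ is the length-$(k+1)$ prefix of $s_{pub}(f')$. Since $s_{pub}(f') = s_{pub}(f)$, this prefix equals $s_{pub}(s)$, and therefore $s' \in \mathcal{S}_{pub}(s)$.

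The main obstacle is purely notational: one must carefully match the notion of ``depth'' in the history tree with the length of the public-observation sequence, which requires invoking the FOSG convention that exactly one public observation $O^j_{pub}$ is emitted per transition and that the initial state contributes $O^0_{pub}$. Once that bookkeeping is fixed, the conclusion follows immediately from taking prefixes. The result can be stated more strongly: the map $h \mapsto s_{pub}(h)$ commutes with taking prefixes, so the public tree faithfully reflects the prefix relation of $\mathcal{H}$, which is precisely what the theorem asserts.
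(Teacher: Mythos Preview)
Your argument is correct and is essentially the paper's one-line proof (``follows from perfect recall --- infostates form a tree'') spelled out in full: you take the depth-$k$ prefix of $f'$ and observe that its public-observation sequence coincides with that of $s$. One minor point: invoking Lemma~\ref{lemma:iig-fog_1timeable_quadratic} is unnecessary here, since in a FOSG the length of $s_{pub}(h)$ equals the depth of $h$ directly by the definition of $s_{pub}$ (one public observation per transition plus $O^0_{pub}$); that lemma concerns converting timeable EFGs into $1$-timeable ones, which is not needed on the FOSG side.
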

\begin{proof}
Follows from the perfect recall property --- there is a unique trajectory from a root of the game to each infostate (infostates form a directed tree).
\end{proof}

All the states we need to reason about are thus the states in a public state and all its children --- a public sub-tree.
}

\subsection{Public Subtree}
\second{
We have now finished the first essential piece of the subgame generalization.
In perfect information, the set of relevant states was simply the sub-tree rooted in the current state.
In imperfect information, we need to use the public state sub-tree.

Note that in the case where common information would be preferable to public information (the corresponding set being substantially smaller), all the results apply to the \subgames{} formed by common information.
}

\subsection{Walk-through Example}
\second{
To get some more intuition about the notions of consistency, common and public information sets as well as the sub-trees, we will illustrate these concepts on a simple poker game with asymmetrical dealing of cards presented in the Figure~\ref{fig:iig_common_info1}.
}

\begin{figure}[ht]
  \centering
  \includegraphics[width=14cm]{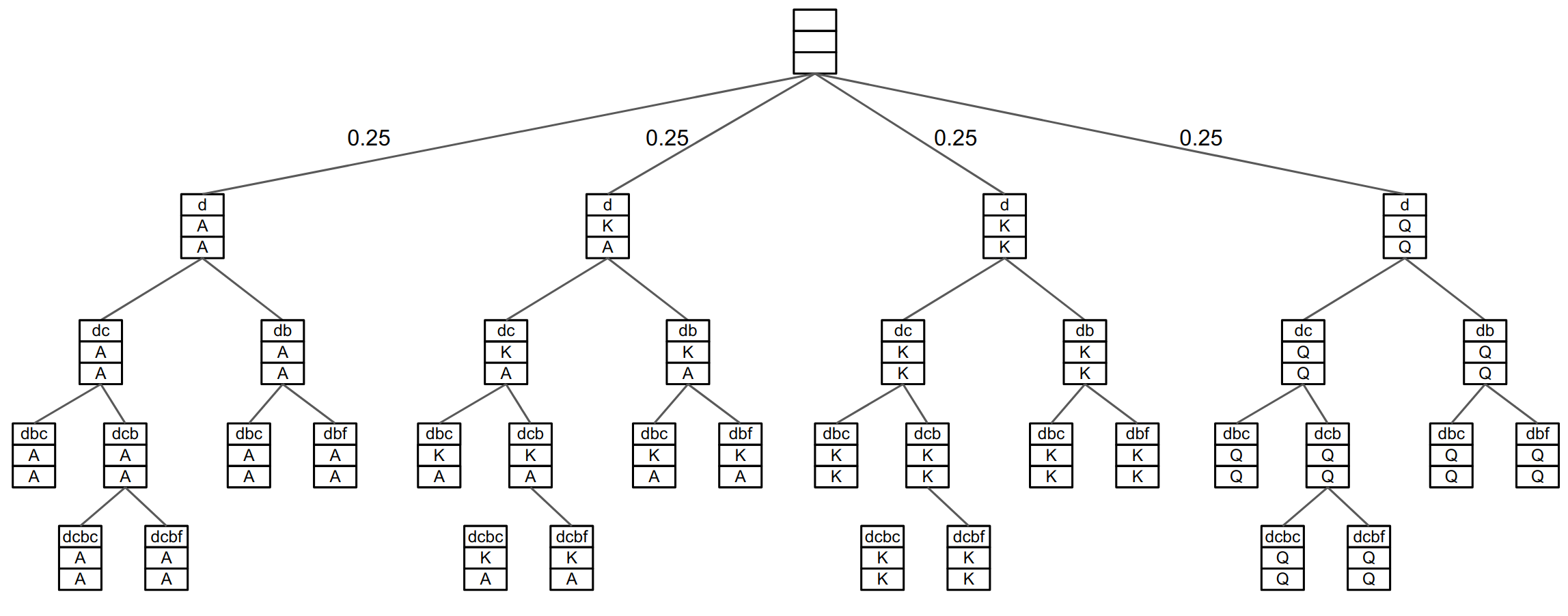}
  \caption{
    Simple poker game with a three card deck \{Q, K A\}, but the possible deals for the first and second player are \{(A, A), (K, A), (K, K), (Q, Q)\}.
    This asymmetrical deal distribution helps to illustrate the details of common and public information sets.
  }
\label{fig:iig_common_info1}
\end{figure}

\subsubsection{Common Information States}
\second{
We start with the concept of consistent states and the corresponding common information set, and we will consider two different initial states: the first player being dealt Queen or Ace.
}

\paragraph{First Player Dealt Queen}
\second{
The state we start our recursive construction is $s = [d|Q|]$ - see also Figure \ref{fig-iig_subgame_game_23}.
}

\begin{enumerate}
    \item Common information states start with $cl_{\consistent}^0(s) = \{[d|Q|]\}$.
    \item $\consistent([d|Q|]) = \{[d||Q]\}$, expanding the set $cl_{\consistent}^1(s) = \{[d|Q|], [d||Q]\}$.
    \item $\consistent([d||Q]) = \{[d|Q|]\}$, ending our recurrent reasoning.
    \item $cl_{\consistent}(s) = \{[d||Q], [d|Q|]\}$
\end{enumerate}

\begin{figure}
\begin{subfigure}{1.0\textwidth}
  \centering
  \includegraphics[width=14cm]{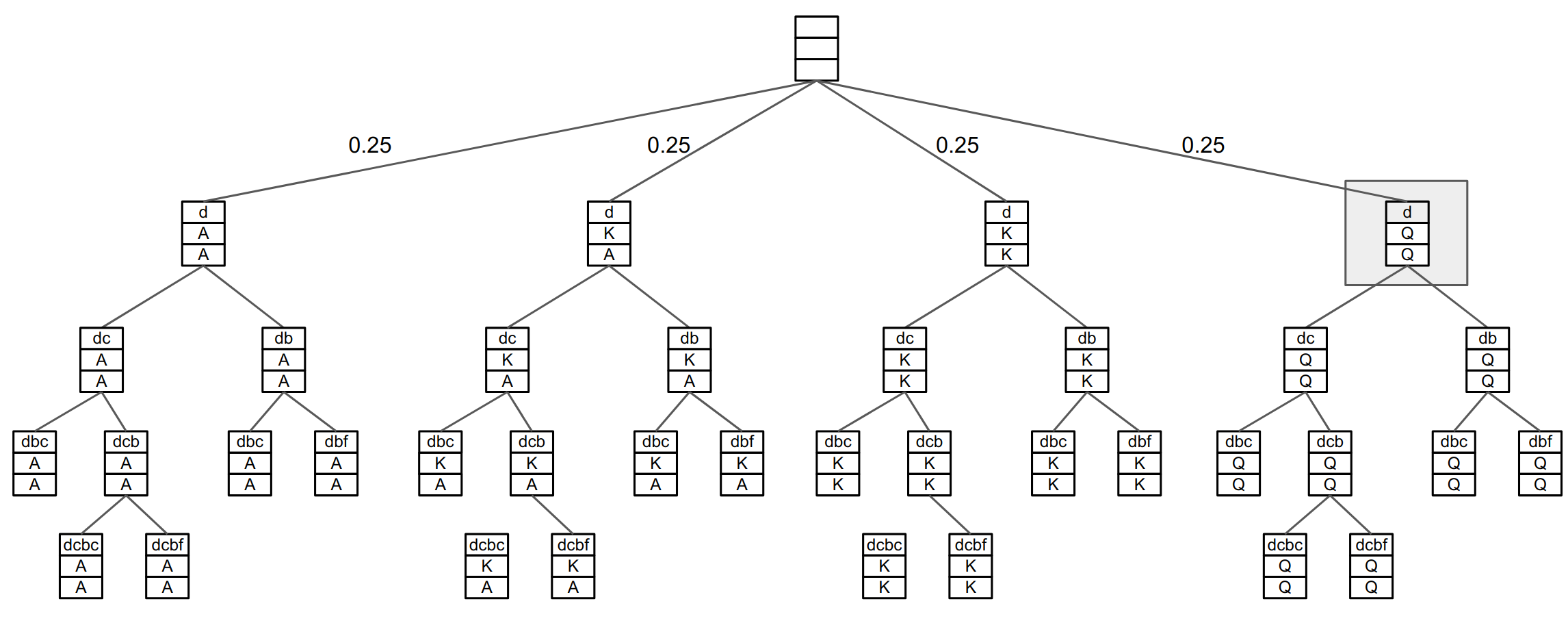}
  \caption{Infoset $[d|Q]$ of the first player.}
  \label{fig-iig_subgame_game_2}
\end{subfigure}
\newline
\begin{subfigure}{1.0\textwidth}
  \centering
  \includegraphics[width=14cm]{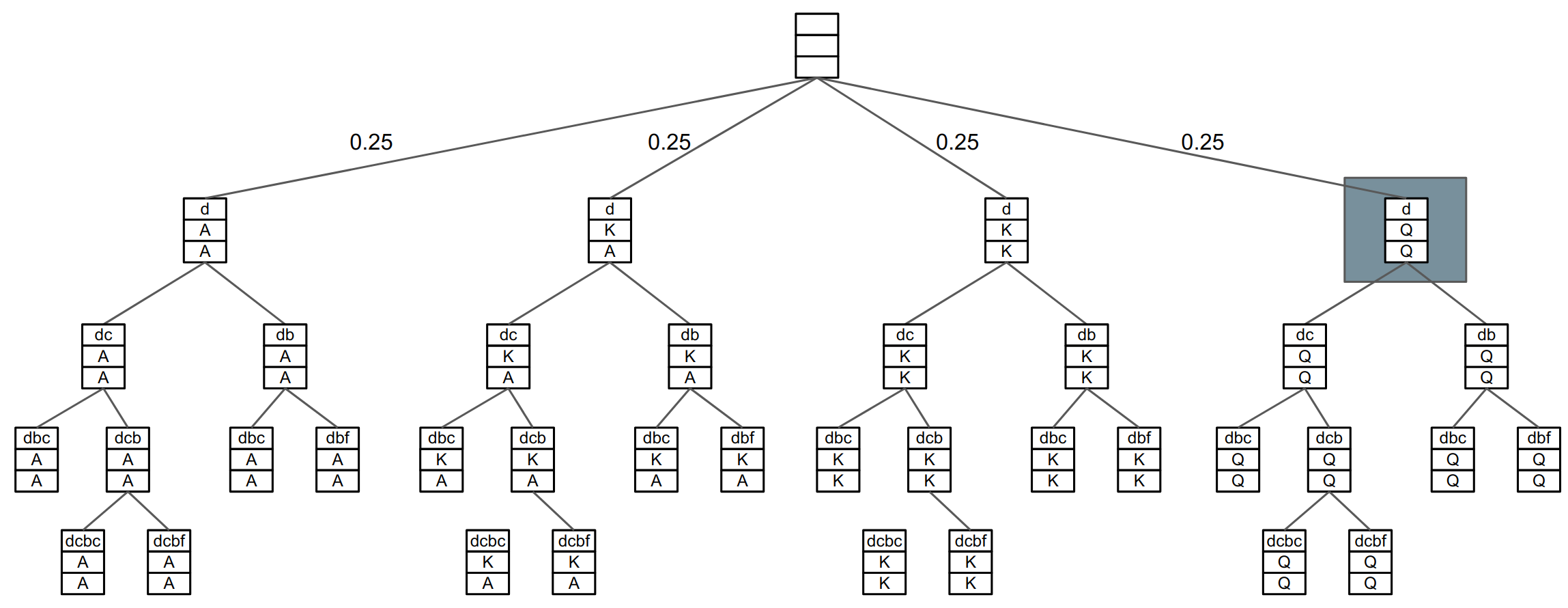}
  \caption{Infoset $[d|Q]$ of the second player.}
  \label{fig-iig_subgame_game_3}
\end{subfigure}
\caption{(a) When reasoning about the player's state $[d|Q]$, we need to reason about all the possible current and future states of both players - and continue this reasoning process until we get a closure.
(b) First level of such recursive reasoning.}
\label{fig-iig_subgame_game_23}
\end{figure}

\paragraph{First Player Dealt Ace}
\second{
The state we start our recursive construction is $s = [d|A|]$ - see also Figure \ref{fig-iig_subgame_game_A1A4}.
}
\begin{enumerate}
    \item Common information states start with $cl_{\consistent}^0(s) = \{[d|A|]\}$.
    \item $\consistent([d|A|]) = \{[d||A]\}$, expanding the set $cl_{\consistent}^1(s) = \{[d|A|], [d||A]\}$.
    \item $\consistent([d||A]) = \{[d|A|], [d|K|]\}$, and $cl_{\consistent}^2(s) = \{[d|A|], [d||A], [d|K|]\}$.
    \item $\consistent([d|K|]) = \{[d||K]\}$, and $cl_{\consistent}^3(s) = \{[d|A|], [d||A], [d|K|], [d||K]\}$.
    \item $\consistent([d|K|]) = \{[d||K]\}$, ending our recurrent reasoning.
    \item $cl_{\consistent}(s) = \{[d|A|], [d||A], [d|K|], [d||K]\}$.
\end{enumerate}

\begin{figure}
\begin{subfigure}{0.94\textwidth}
  \centering
  \includegraphics[width=\textwidth]{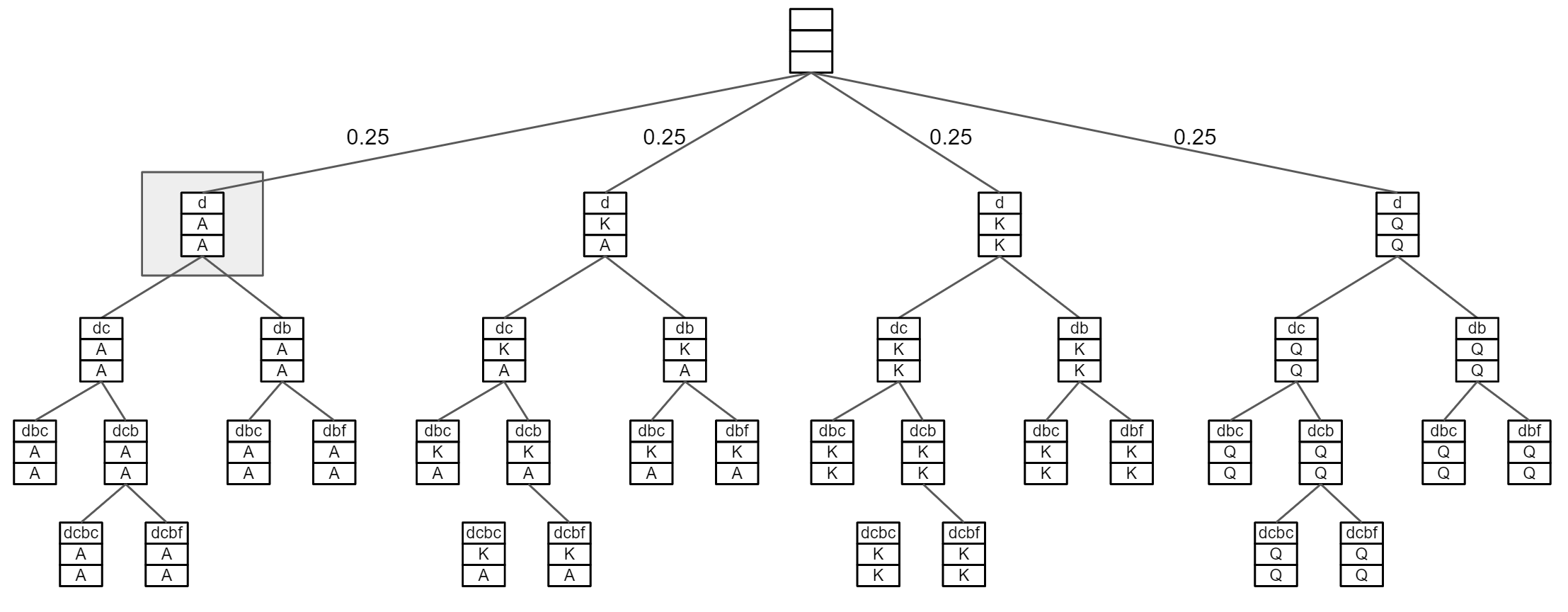}
  \caption{$cl_{\consistent}^0(s) = \{[d|A|]\}$ }
  \label{fig-iig_subgame_game_A1}
\end{subfigure}
\newline
\begin{subfigure}{0.9\textwidth}
  \centering
  \includegraphics[width=0.9\textwidth]{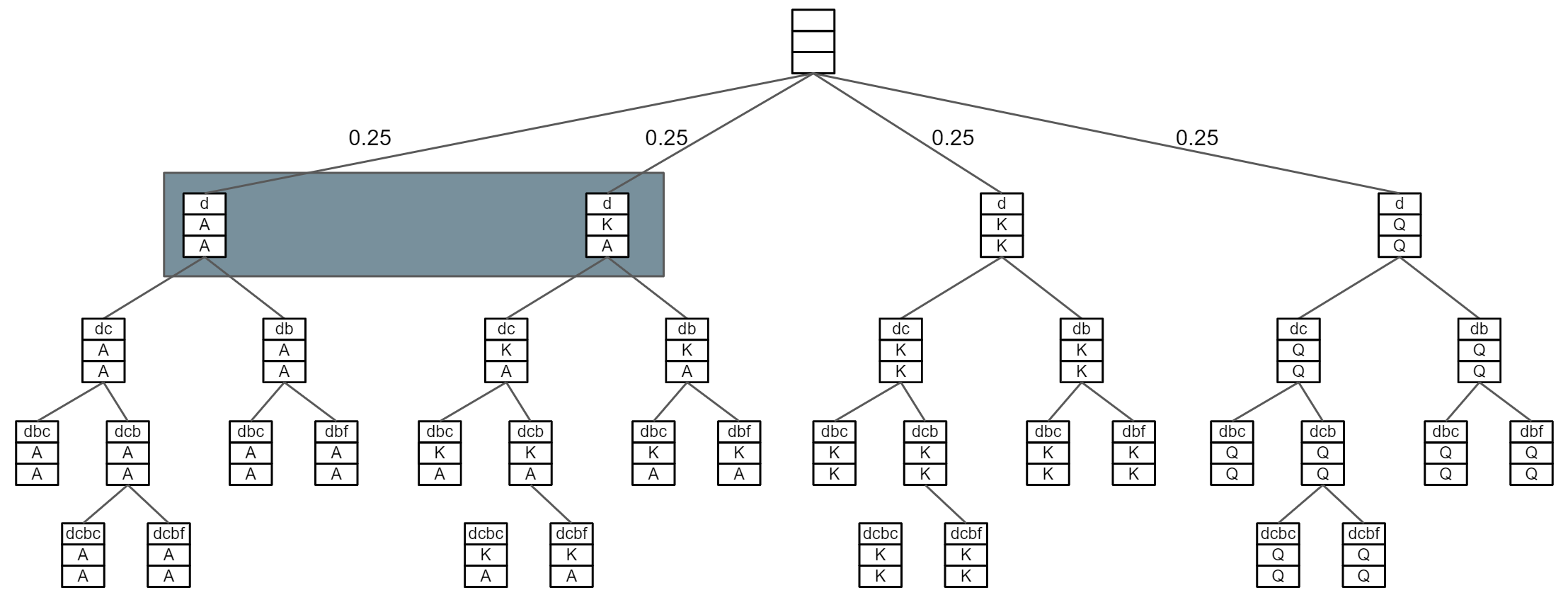}
  \caption{$\consistent([d|A|]) = \{[d||A])\}$, $cl_{\consistent}^1(s) = \{[d|A|]), [d||A]\}$}
  \label{fig-iig_subgame_game_A2}
\end{subfigure}
\begin{subfigure}{0.9\textwidth}
  \centering
  \includegraphics[width=0.9\textwidth]{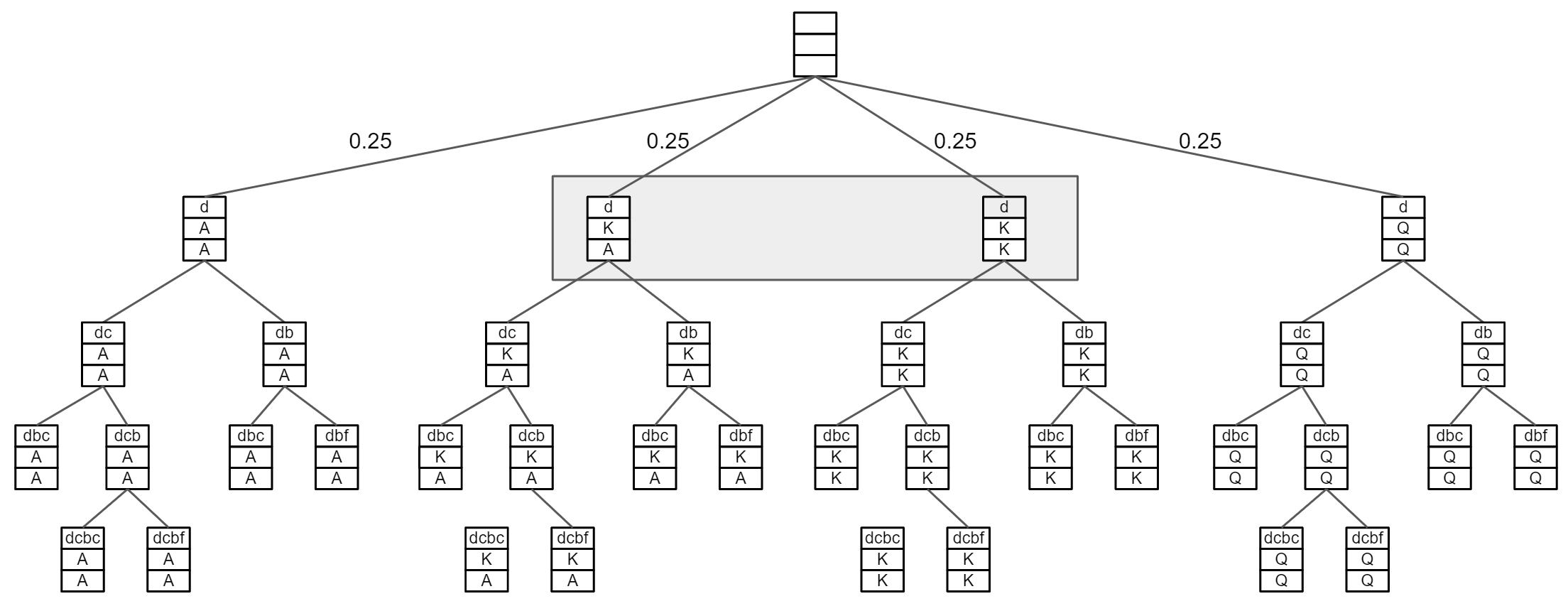}
  \caption{$\consistent([d||A]) = \{[d|K|])\}$, $cl_{\consistent}^2(s) = \{[d|A|]), [d||A], [d|K|]\}$}
  \label{fig-iig_subgame_game_A3}
\end{subfigure}
\begin{subfigure}{0.94\textwidth}
  \centering
  \includegraphics[width=0.94\textwidth]{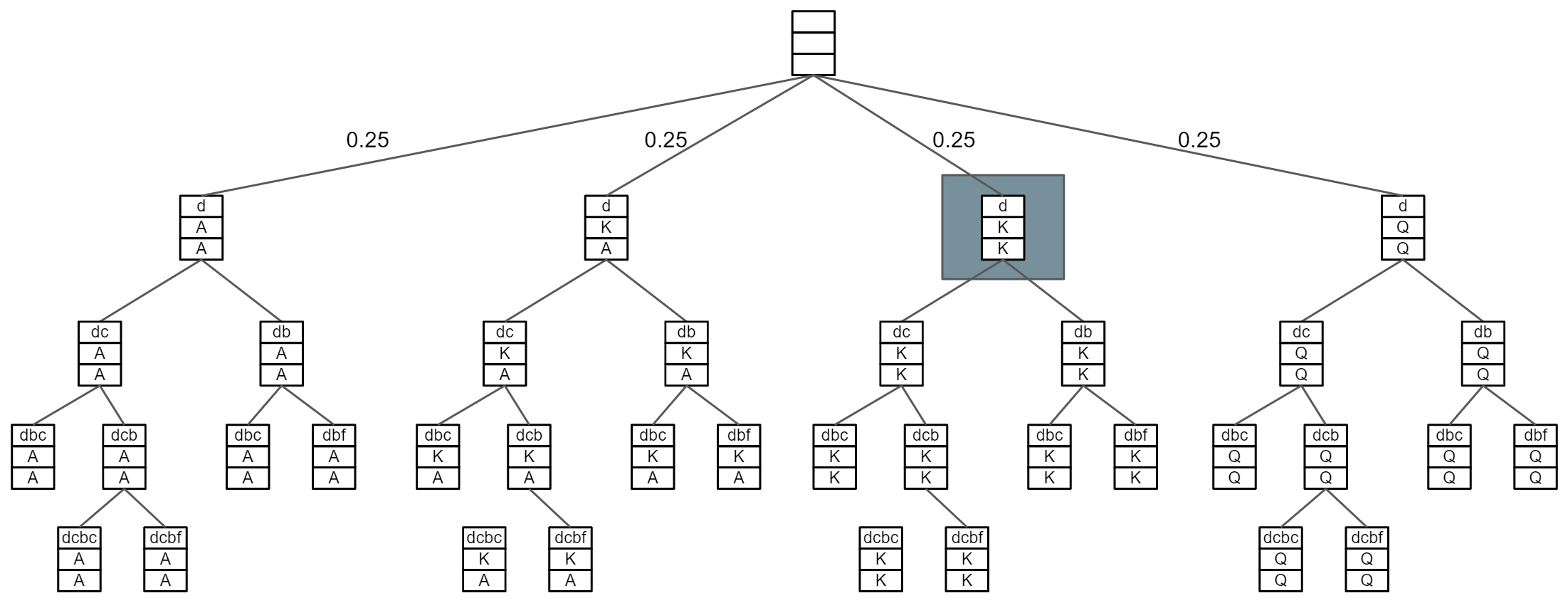}
  \caption{$\consistent([d|K|]) = \{[d||K])\}$, $cl_{\consistent}^2(s) = \{[d|A|]), [d||A], [d|K|],  [d||K]\}$}
  \label{fig-iig_subgame_game_A4}
\end{subfigure}
\caption{$s=[d|A|]$}
\label{fig-iig_subgame_game_A1A4}
\end{figure}

\subsubsection{Public States}
\second{
There is no need for the recurrent construction when public states are used, we simply use the corresponding public state $P(s)$ of the current state.
In this particular examples, both initial states we considered (player dealt Queen or Ace) are within the same public state (Figure \ref{fig:iig-subgame_public_state}) and the public state is larger than the common information sets.
This is because of the particular choice of the asymmetrical deal.
In typical poker variants, the sets based on common and public information are identical.
}

\begin{figure}[ht]
  \centering
  \includegraphics[width=14cm]{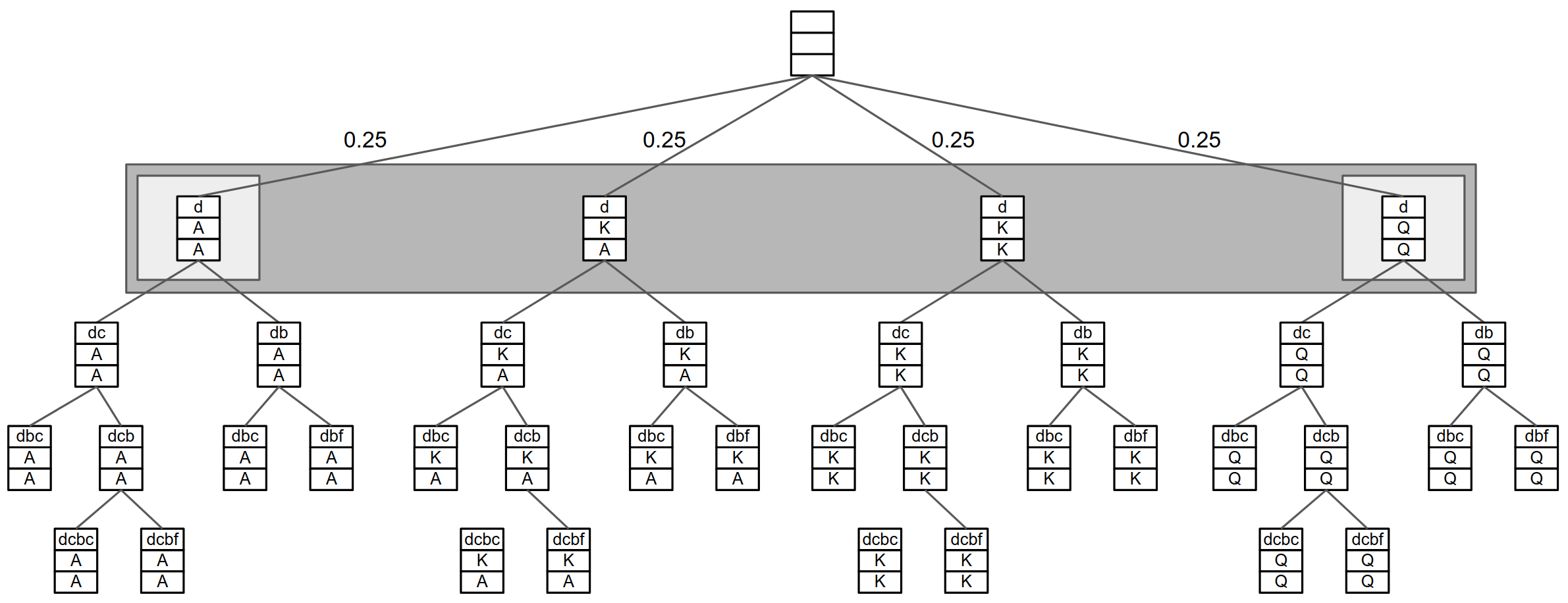}
  \caption{
  Two initial states $s_1 = [d|A|]$, $s = [d|A|]$, $s_2 = [d|Q|]$ share the same public states.
  In this example, public state is larger than the common information sets.
  }
\label{fig:iig-subgame_public_state}
\end{figure}

\section{Distribution}
\second{
The set of states formed by a public sub-tree identifies the states we can reason about in isolation.
In perfect information, a strategy for all the states in a sub-tree was sufficient for value computation of these states.
That is not the case in imperfect information, where the transition dynamics and consequently the utilities depend not just on the strategy in the sub-tree, but also on the strategy above.
We have already seen this property in the best response calculation (Section \ref{sec:iig-perfect_info_distinction} and Algorithm \ref{alg:iig_best_response}).
Concretely, the transition dynamics in a sub-tree are fully identified by the distribution over the possible initial histories and a strategy for all the infostates of that sub-tree.
}

\subsection{Factorized Distribution}
\label{sec:iig-factorized_distribution}
\second{
Computing values in a sub-tree requires a strategy for all such states as well as a distribution over the initial histories.
But not any distribution over those histories can result from an agent's play as they have control only over actions in their respective infostate (rather than for histories).

Given an initial public state, the distribution over the histories is a function of i) fixed chance factor and ii) players' strategies above the public state.
The chance factor is fixed and can be different for any history.
Agent strategies on the other hand are not fixed and are the same for all histories grouped in a single infostate.

For a history $h$, the reach probability is a product of the chance contribution $P_c(h) = \Pi_{h'aw \sqsubseteq h} \mathcal{T}(h', a, w)$ and players' reach probabilities: $P^{\pi}(h) = \Pi_{i \in \mathcal{N}} P^{\pi_i}_i(h)$.
A player's reach probability $P^{\pi_i}_i(h)$ is then a product of the strategies in their infostate sequence leading to the that history: $P^{\pi}_i(h) = \Pi_{h'aw \sqsubseteq h} \pi_i(s_i(h), a)$.
As this quantity is equal for all the histories in the infostate $s_i(h)$, we have $P^{\pi}_i(s_i(h)) = P^{\pi}_i(h)$.

We can thus compute $P^\pi(h) = P_c(h) \Pi_{i \in \mathcal{N}} P^{\pi_i}_i(s_i(h))$, factoring this term to each player's infostate reach probabilities.
In order to compute a distribution over the histories in a public state, we only need to know reach probabilities for all the players' infostates in that public state.
The importance of this observation is that the number of infostates is often substantially smaller.

Consider Texas Hold'em poker as an illustrative example.
The number of player's infostates in a public state is $\binom{52}{2}$ (player is dealt two private cards), while the number of individual histories is $\binom{52}{4}$ (two private cards are dealt to each player).
To represent the distribution over all these histories, we only require reach probabilities for the infostates of both players: $2\binom{52}{2}$, which is more than $100$ times smaller from $\binom{52}{4}$.
}

\section{Public \Subgame{}}
\label{sec:iig-public_subgame}
\second{
We now have all we need for the most critical concept for search --- generalization of \subgame{}s in imperfect information.
While in perfect information, a sub-game was identified simply by a sub-tree, imperfect information sub-games are identified by:

\begin{enumerate}
    \item Public state $s_{pub} \in \mathcal{S}_{pub}$
    \item Per-player distribution over their info-states $\Delta(\mathcal{S}_i(s_{pub}))$.
\end{enumerate}

As we will see, this sub-game allows for all the same decomposition concepts as in perfect information.
Namely, it allows for:
\begin{itemize}
    \item Solving for optimal policies in \subgames{}.
    \item Defining value functions for \subgames{}.
    \item Decomposition.
    \item Construction of provably safe search methods.
\end{itemize}
}

\subsection{Public Sub-Game is a Game}
\second{
Simple construction reveals that the public \subgame{} forms a well-defined imperfect information game.
The ideas is that we construct a game as if it started in the public sub-tree with the appropriate distribution over the initial states.
We simply construct a game rooted in a chance node that ``deals out'' the initial histories/states , with probability of each history being the (normalized) product of each player's contribution (including chance) --- Definition \ref{defn:iig-public_subgame}.

\begin{defn}[Public \Subgame{}]
\label{defn:iig-public_subgame}
    Public sub-game defined by a tuple $(s_{pub} \in \mathcal{S}_{pub}, \Delta(\mathcal{S}_1(s_{pub})), \Delta(\mathcal{S}_2(s_{pub})))$ is a game rooted in a chance state $w_0$ with actions dealing out initial histories: $\mathcal{A}_0(w_0) = \mathcal{H}(s_{pub})$, and the strategy (distribution):  $\mathcal{T}(w_0, h) \propto \Delta(\mathcal{S}_1(h)) \, \Delta(\mathcal{S}_2(h)) \, P_c(h)$ 
\end{defn}
}

\section{Value Functions}
\label{sec:iig-value_functions}
\second{
Analogously to value functions for perfect information games, value functions map \subgames{} to values.
And as a public \subgame{} is a game, we also use the value under an optimal policy.

An important generalisation is that in perfect information, the value of a \subgame{} was a single scalar.
In imperfect information, it is a vector of values, a value for each initial infostate (Table~\ref{tab:iig-value_functions}).
As nothing limits the imperfect information case to contain a single state of both players, we also get Observation \ref{obs:iig_value_functions_generalization}.
}

\begin{table*}
\centering
\begin{tabular}{@{}lll@{}}
\toprule
 &  \multicolumn{1}{c}{Input} &  \multicolumn{1}{c}{Output} \\ \midrule
 Perfect Information & $s \in \mathcal{S}$ & $V(s)$  \\ 
 Imperfect Information & $s_{pub} \in \mathcal{S}_{pub}, \Delta(\mathcal{S}_1(s_{pub})), \Delta(\mathcal{S}_2(s_{pub})) $ & $V(s) \,\forall s \in \mathcal{S}_i(s_{pub})$ \\
\bottomrule
\end{tabular}
\caption{Value function in perfect and imperfect information settings.}
\label{tab:iig-value_functions}
\end{table*}

\begin{observation}
Perfect information value functions are a degenerate case of imperfect information value function with a single initial state
\label{obs:iig_value_functions_generalization}
\end{observation}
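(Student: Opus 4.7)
The plan is to exhibit a canonical embedding of perfect information games into the FOSG formalism in which all observations are public, and then verify that under this embedding the imperfect information value function of Table~\ref{tab:iig-value_functions} collapses, pointwise, to the scalar perfect information value function.

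First I would argue that every perfect information game tree (Definition~\ref{defn:game_tree}) can be cast as a FOSG whose private observation sets $\mathbb O_{\textnormal{priv}(i)}$ are singletons and whose public observation $\mathcal O_{\textnormal{pub}}(w,a,w')$ reveals the taken joint action (the turn-based structure is handled by giving the non-acting player a single dummy action, as in Section~\ref{sec:iig-fog_examples}). Under this embedding, for every world state $w \in \mathcal{W}$ the corresponding public state $s_{pub}$ contains exactly one history, namely $h = w$, and consequently each player's infostate collection $\mathcal S_i(s_{pub})$ is the singleton $\{s_i(h)\}$.

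Next I would show that the second and third components of the input tuple for the imperfect information value function become trivial: since $|\mathcal S_i(s_{pub})| = 1$, the only element of $\Delta(\mathcal S_i(s_{pub}))$ is the Dirac measure on that unique infostate. Thus the input tuple $(s_{pub},\,\Delta(\mathcal S_1(s_{pub})),\,\Delta(\mathcal S_2(s_{pub})))$ carries exactly the same information as $s$, and the input signature of the imperfect information value function reduces to that of the perfect information value function.

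Finally I would verify that the outputs also match. By Definition~\ref{def:pig-subgame_values}, the perfect information value is the game value of the subgame rooted at $s$. For the embedded FOSG, the public \subgame{} of Definition~\ref{defn:iig-public_subgame} has its initial chance node dealing the unique history $h = w$ with probability one, so the resulting game is isomorphic to the sub-tree rooted at $s$. The output value vector of Table~\ref{tab:iig-value_functions} therefore has a single entry, and that entry is precisely $V(s)$. The main obstacle I anticipate is purely bookkeeping rather than mathematical: making the embedding of perfect information games into FOSGs fully rigorous (in particular, ensuring that the singleton-observation construction really forces $|\mathcal S_i(s_{pub})| = 1$ for every $s_{pub}$), after which the collapse of the value function input and output follows immediately.
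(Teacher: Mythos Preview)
Your proposal is correct and considerably more detailed than what the paper actually provides. In the paper this is stated as a bare observation with no proof; the only justification given is the single sentence ``As nothing limits the imperfect information case to contain a single state of both players, we also get Observation~\ref{obs:iig_value_functions_generalization}.'' In other words, the paper treats the claim as self-evident from Table~\ref{tab:iig-value_functions}: when $|\mathcal S_i(s_{pub})|=1$ for both players, the input distribution is forced and the output vector has a single entry, which is the subgame value.

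Your embedding argument (making private observations trivial so that public states coincide with world states) is exactly the right way to make this rigorous, and it is fully compatible with the paper's intent. The only difference is that you spell out the FOSG embedding and the collapse of both input and output explicitly, whereas the paper leaves all of this implicit. There is no genuine divergence in approach, just a difference in level of detail.
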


\subsection{Example}
\second{
Table~\ref{tab:iig-value_function_rps} illustrates the value function in \rps{}.
Consider the \subgame{} irooted in the public state just after the first player acted and the second player is to act.
That public state contains a single infostate of the second player and three infostates of the first player (after the rock, paper or scissors action).
The \subgame{} and the value function thus requires a distribution over the three possible initial states of the first player, and the probability of the second player being in their single state is $1$.
The output then consists of a single value for each of the players' states (three values for the first player and one value for the second player).
}

\begin{table*}
\centering
\begin{tabular}{@{}lllllll@{}}
\toprule
    \multicolumn{3}{c}{Input} & \multicolumn{3}{c}{Output} & \\
    \cmidrule{1-2} \cmidrule{4-5} 
    $\Delta(\mathcal{S}_1(s_{pub}))$ &$ \Delta(\mathcal{S}_2(s_{pub}))$  && $V_1$ & $V_2$ && Optimal Policy \\ \midrule
    $(0.2, 0.2, 0.6)$ & $(1) $  && $(0, 1, -1)$ & $(0.4)$ && $\pi_2 = (1, 0, 0)$ \\
    $(0.4, 0.3, 0.3)$ & $(1) $  && $(-1, 0, 1)$ & $(0.1)$ && $\pi_2 = (0, 1, 0)$ \\ 
    $(\nicefrac{1}{3}, \nicefrac{1}{3}, \nicefrac{1}{3})$ & $(1) $ && $\pi_2 \begin{pmatrix} 0 & 1 & -1\\ -1 & 0 & 1\\ 1 & 0 & -1\\ \end{pmatrix}$ & $(0)$ && any policy $\pi_2$ \\
\bottomrule
\end{tabular}
\caption{Value function in rock-paper-scissors for the \subgame{} rooted in the second public state.
Note that as the \subgame{} is zero-sum, the game values are in balance $\Delta(\mathcal{S}_1(s_{pub})) \T{V_1} + \Delta(\mathcal{S}_2(s_{pub})) \T{V_2} = 0$.
Value for the first player in the \subgame{} is maximized when the initial beliefs are uniform $(\nicefrac{1}{3}, \nicefrac{1}{3}, \nicefrac{1}{3})$.
}
\label{tab:iig-value_function_rps}
\end{table*}

\subsection{Unique Game Value, Multiple State Values}
\second{
While the game value is unique and thus the same under any optimal policy, individual state values can differ.
Different optimal policies can produce different infostate values.
In the \rps{} example, any policy of the second player is optimal under the uniform initial distribution of the first player.
This in turn corresponds to different state values of the first player (Table \ref{tab:iig-value_function_rps}).

But why do we care about these individual state values at all?
Why is not the game value sufficient?
Its importance will become clear once these value function are used within search, where we will need the individual state values.
}

\subsection{Structure}
\second{
As the value function in imperfect information is more complex due to the beliefs, there are some important questions related to the structure of the function. 
Do small perturbations of the input distribution lead to drastic changes in the output?
Is there some structure?
The structure of the function becomes even more relevant once we get to function approximation and learning (e.g. with neural networks), as learning/generalization with no structure is not possible \citep{wolpert1996lack}.

The game value itself ($\Delta S_i(s_{pub}) \T{V_i}$) has a particularly nice, piece-wise linear structure.
See Section \ref{sec:iig_graphical_values} and Figure \ref{fig:values_graphical_interpretation} for some graphical intuition, or e.g. \cite{vcermak2017algorithm, seitz2019value, kovarik2020value, brown2020combining} for more analysis and properties.
But how about the individual state values $V_i$?
In general, small input perturbations of the input distribution can lead to large changes of the output state values (Table \ref{tab:iig-value_function_rps_small_perturbtions}).
In other words, small perturbation in the definition of the game can lead to drastic changes in how the game is played optimally, but only to small change in the game value.

\begin{table*}
\centering
\begin{tabular}{@{}lllllll@{}}
\toprule
    \multicolumn{3}{c}{Input} & \multicolumn{3}{c}{Output} & Optimal Policy\\
    \cmidrule{1-2} \cmidrule{4-5} 
    $\Delta(\mathcal{S}_1(s_{pub}))$ &$ \Delta(\mathcal{S}_2(s_{pub}))$  && $V_1$ & $V_2$ && \\ \midrule
     $(\nicefrac{1}{3}+2\epsilon, \nicefrac{1}{3}-\epsilon, \nicefrac{1}{3}+\epsilon)$ & $(1)$ && $(0, 1, -1)$ & $(\epsilon)$ && $\pi_2 = (0, 1, 0)$ \\
    $(\nicefrac{1}{3}-2\epsilon, \nicefrac{1}{3}+4\epsilon, \nicefrac{1}{3}-2\epsilon)$ & $(1)$ && $(-1, 0, 1)$ & $(2\epsilon)$ && $\pi_2 = (0, 0, 1)$ \\
    $(\nicefrac{1}{3}-3\epsilon, \nicefrac{1}{3}-3\epsilon, \nicefrac{1}{3}+6\epsilon)$ & $(1)$ && $(1, -1, 0)$ & $(3\epsilon)$ && $\pi_2 = (1, 0, 0)$ \\
\bottomrule
\end{tabular}
\caption{Small perturbations in the input distribution can lead to large changes in the underlying optimal policy and thus output state values, but to only small changes in the game value.
}
\label{tab:iig-value_function_rps_small_perturbtions}
\end{table*}

Fortunately, this sensitivity happens only at the critical points of the otherwise piece-wise linear value function (Figure \ref{fig:values_graphical_interpretation}).
Furthermore, it is possible to bound the state value changes when the underlying policy is not optimal, but rather $\epsilon$-optimal (Theorem \ref{thm:iig_state_values_epsilon_smooth}).

\begin{thm}
    Let $(V_1^a, V_2^a)$ be the state values of a \subgame{} $G^a = (s_{pub}, d_1^a \in \Delta(\mathcal{S}_1(s_{pub})), d_2^a \in \Delta(\mathcal{S}_2(s_{pub})))$ under an optimal policy $\pi^a$.
    Le $d_1^b, d_2^b$ be $\epsilon$-perturbed distributions ($|d_i^a-d_i^b|_\infty < \epsilon$).
    There is an $\epsilon_2$-optimal policy $\pi^b$ for the perturbed \subgame{}  $G^b = (s_{pub},  d_1^b, d_2^b)$ producing $\epsilon_2$-perturbed state values $(V_1^b, V_2^b)$ where $\epsilon_2 < \epsilon \Delta_R$.
\label{thm:iig_state_values_epsilon_smooth}
\end{thm}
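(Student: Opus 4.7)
The natural choice is $\pi^b := \pi^a$, since $G^a$ and $G^b$ share the same public state $s_{pub}$ and hence the same infostates and legal actions; the plan is to show that this single policy simultaneously yields state values close to $(V_1^a, V_2^a)$ in $G^b$ and is $\epsilon_2$-optimal for $G^b$, for some $\epsilon_2 < \epsilon \Delta_R$.

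For the state-value bound, I would observe that fixing $\pi^b = \pi^a$ leaves the continuation dynamics inside the subtree unchanged, so for every history $h$ in the public subtree the continuation utility $u_i(h, \pi^a)$ is identical in $G^a$ and $G^b$. Only the reach probabilities of histories change, and by Section~\ref{sec:iig-factorized_distribution} each such reach is a product of the perturbed $d_{-i}(s_{-i}(h))$, the unchanged chance factor $P_c(h)$, and the unchanged player policy contribution. Writing the counterfactual state value as the linear functional $V_i^{cf}(s) = \sum_{h \in \mathcal{H}(s)} d_{-i}(s_{-i}(h)) P_c(h) u_i(h, \pi^a)$, and using $\|d_{-i}^a - d_{-i}^b\|_\infty < \epsilon$ together with the bound $|u_i(h,\pi^a)| \le \Delta_R$, a one-line linear expansion gives $|V_i^{a,cf}(s) - V_i^{b,cf}(s)| < \epsilon \Delta_R$. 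The same bound transfers to the normalized conditional values of Table~\ref{tab:iig-value_function_rps} after handling the posterior normalization.

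The $\epsilon_2$-optimality of $\pi^a$ in $G^b$ follows from the same linearity applied at the root. For any alternative policy $\pi'$, the expected root reward in the subgame is (up to chance) a bilinear functional of $(d_1, d_2)$ with coefficients of magnitude at most $\Delta_R$, so $|U_i(\pi'; G^a) - U_i(\pi'; G^b)| < \epsilon \Delta_R$ uniformly in $\pi'$. Combining this with $U_i(\pi^a; G^a) \ge U_i(\pi'; G^a)$ for every $\pi'$ (optimality of $\pi^a$ in $G^a$) yields $U_i(\pi^a; G^b) \ge U_i(\pi'; G^b) - \epsilon \Delta_R$, i.e.\ $\pi^a$ is $\epsilon\Delta_R$-optimal in $G^b$. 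Taking $\epsilon_2$ as the strict maximum of the two bounds concludes the argument.

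The main obstacle is the passage from counterfactual to normalized (conditional) state values, since naively dividing by the reach of an infostate can appear to blow up when that reach is small. The cleanest resolution is to carry the argument on counterfactual values throughout, where every quantity is linear in $(d_1, d_2)$, and transfer to normalized values by bounding the joint quantity $P(s) V_i^{\mathrm{norm}}(s)$ rather than $V_i^{\mathrm{norm}}(s)$ in isolation; this uses the elementary fact that the conditional expectation of a random variable with range at most $\Delta_R$ can move by at most $\Delta_R$ times the TV shift of the conditioning distribution, which here is controlled by the $\ell_\infty$ perturbation $\epsilon$. Either route produces a single $\epsilon_2 < \epsilon \Delta_R$ satisfying both the state-value and the optimality requirements.
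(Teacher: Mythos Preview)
Your approach is essentially the same as the paper's: set $\pi^b := \pi^a$ and argue that because only the initial distributions change while the continuation dynamics under $\pi^a$ are identical, both the state values and the optimality gap in $G^b$ shift by less than $\epsilon \Delta_R$. The paper's proof is a single sentence (``the change in distribution over the terminal states is bounded by $\epsilon$''), so your elaboration --- the explicit linear/bilinear dependence on $(d_1,d_2)$ and the discussion of counterfactual versus normalized values --- is more detailed than what the paper provides, but the underlying idea is identical.
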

\begin{proof}
Following the original policy $\pi^a$ in the perturbed game is an $\epsilon_2$-optimal policy producing $\epsilon_2$ perturbed values, as the change in distribution over the terminal states is bounded by $\epsilon$.
\end{proof}

}
\chapter{Offline Solving I}
\label{chap:iig-offline_solve}

\second{
We will now describe offline solving methods for imperfect information games.
Because we include multiple distinct approaches, we split the methods into three separate chapters.

Chapter \ref{chap:iig-offline_solve} deals with optimization methods and self-play methods.
Section \ref{sec:iig-matrix_games_direct_optimization} starts with matrix games, and we analyse the structure of the worst case value as a function of the agent's policy (Section \ref{sec:iig_graphical_values}).
This is then used in construction of a linear optimization problem that produces an optimal policy (Section \ref{sec:iig-matrix_games_lp}) and allows us to prove the minimax theorem.
Section \ref{sec:iig-sequence_games_lp} generalizes the construction of the linear optimization problem to the sequence games and formalizes the sequence form.
Section \ref{sec:iig-selfplay_irl} discusses the application of (single agent) reinforcement learning methods, showing that naive application of such methods in self-play fails to converge even in simple imperfect information games.
Section \ref{sec:iig-multi_agent_rl} touches on some sound multi-agent reinforcement learning methods and Section \ref{sec:iig-fictitious_play} introduces fictitious play.
Double oracle and its related methods are then mentioned in Section \ref{sec:iig-double_oracle}.

Chapter \ref{sec:iig-offline_solve_regret} moves to the most important offline solving methods of this thesis --- methods based on the regret minimization framework that are central for the final search techniques.
Section \ref{sec:iig-regret_setup} presents the online convex learning concept of regret minimization, and Section \ref{sec:iig-regret_algorithms} includes some regret minimization algorithms.
Section \ref{sec:iig-regret_to_nash} then shows the striking connection between regret minimization and optimal policies, where we prove the Folk Theorem (\ref{thm:iig:folk_theorem}) that shows that if both players use a regret minimizer in self-play settings, the average policies converge to an optimal policy profile.
Section \ref{sec:iig-cfr} describes how to minimize regret in sequential settings --- counterfactual regret minimization (CFR).
CFR decomposes the full regret into a sum over individual regrets at each information state, allowing one to independently minimize these partial regrets.
Section \ref{sec:iig-cfr_plus} then mentions some modern members of the CFR algorithm family, such as CFR+ and DCFR.
Section \ref{sec:iig-mccfr} describes Monte Carlo variants of CFR (MCCFR) that sample trajectories rather than traversing the entire game tree at each iteration.
As the sampling introduces noise and slows down convergence, Section \ref{sec:iig-mccfr} describes VR-MCCFR, modern variance reduction methods that drastically speed up the convergence (a contribution of this thesis).

Chapter \ref{chap:iig-abstraction} deals with large games where we cannot directly use tabular methods.
Section \ref{sec:iig-abstraction_methods} discusses abstraction approaches that were for a long time state of the art for large imperfect information games (until the dawn of search).
Section \ref{sec:iig-non_tabular_methods} then briefly mentions some other non-tabular methods that can tackle large games.
}
\section{Direct Optimization for Matrix Games}
\label{sec:iig-matrix_games_direct_optimization}
\second{
An optimal policy maximizes its best response value $\max_{\pi_i} \brv(\pi_i)$.
For matrix games, the $\brv(\pi_i)$ function has a particularly simple form (\ref{eq:iig-minmax_opt_function}).

\begin{align}
\label{eq:iig-minmax_opt_function}
\brv(\pi_i) =  \min_{\pi_{-i}} \pi_i A \pi_{-i} 
\end{align}

Since \ref{eq:iig-minmax_opt_function} is a point-wise minimum of a affine function, the function is concave \citep{boyd2004convex}.
}

\subsection{Graphical Interpretation}
\label{sec:iig_graphical_values}
\second{
To build more intuition about the concave best response value function, we will graphically analyze the function on a small matrix game.
Table \ref{tab:iig:game_convexoptim} shows a game with $2$ actions for the row player, and $3$ actions for the column player.

\begin{table}[ht]
\centering
\begin{tabular}{llll}
  & X & Y & Z  \\ \cline{2-4}
A & 8 & 3 & 2  \\
B & 2 & 6 & 8  \\
\end{tabular}
\caption{Simple game for the analysis of the best response value function.}
\label{tab:iig:game_convexoptim}
\end{table}

Visualising the game in a particular way (Figure \ref{fig:iig:convex_optim1}) provides nice insight into the structure of the best response value function.
We can see that the function is concave and piece-wise linear --- key observation for algorithms that directly maximize the function.
This graphical reasoning can be traced to \cite{GTK51}.

\begin{figure}[ht]
    \centering
    \begin{subfigure}[b]{0.3\textwidth}
        \includegraphics[width=\textwidth]{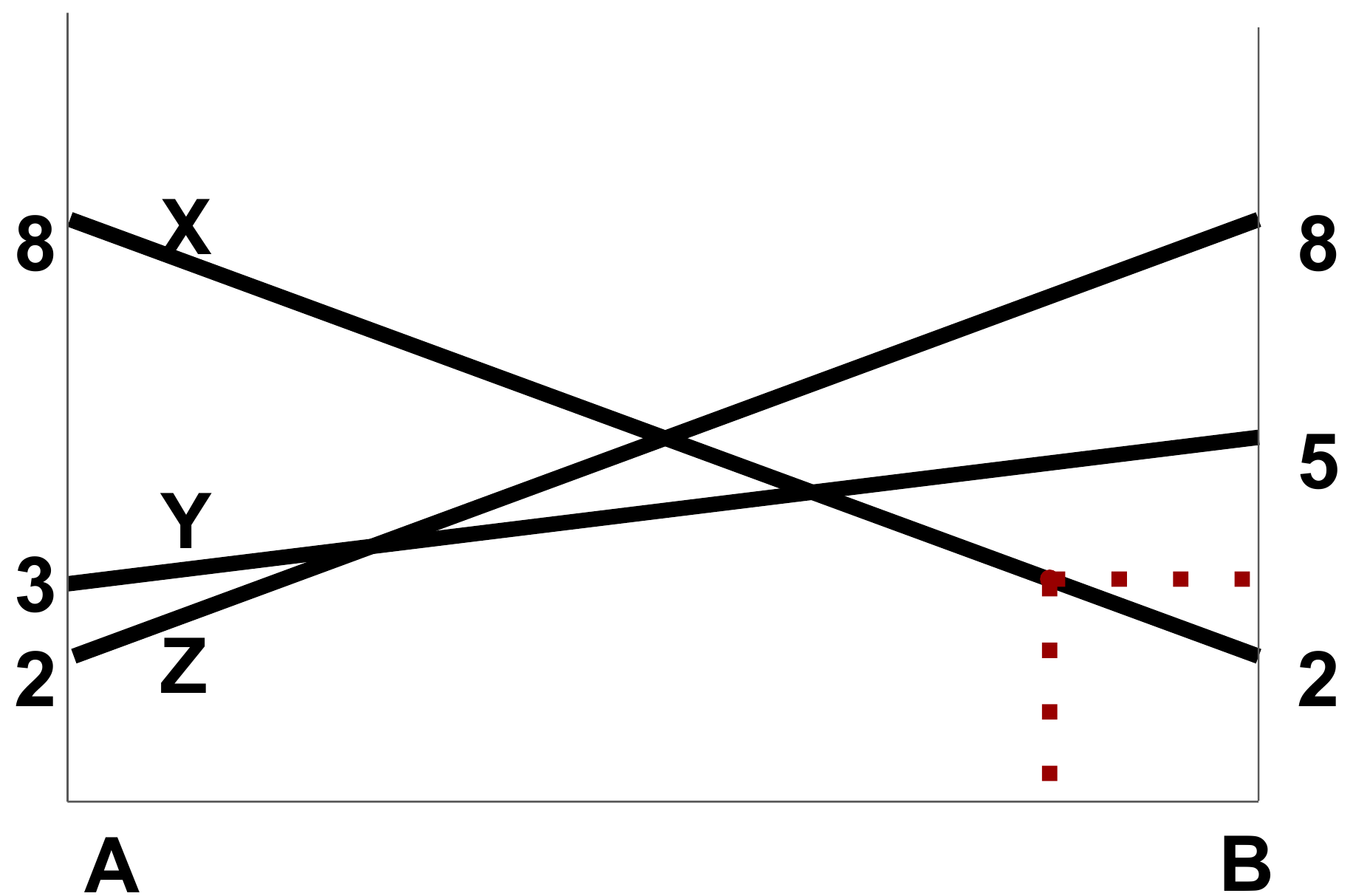}
        \caption{Visualization of Game \ref{tab:iig:game_convexoptim}.}
        \label{fig:iig:convex_optim1}
    \end{subfigure}
    ~ 
    \begin{subfigure}[b]{0.3\textwidth}
        \includegraphics[width=\textwidth]{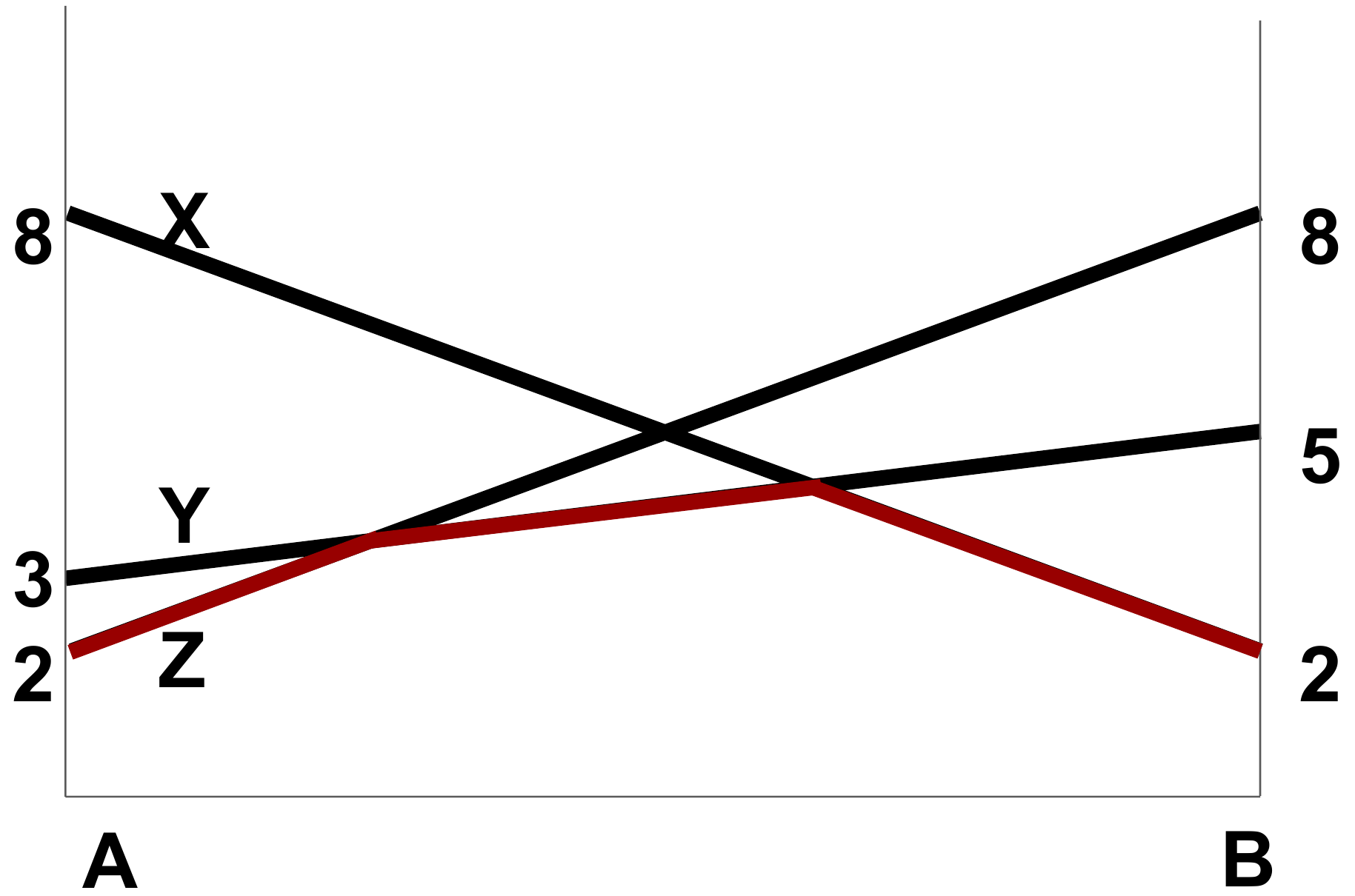}
        \caption{Worst case function we try to maximize.}
        \label{fig:iig:convex_optim2}
    \end{subfigure}
     ~ 
    \begin{subfigure}[b]{0.3\textwidth}
        \includegraphics[width=\textwidth]{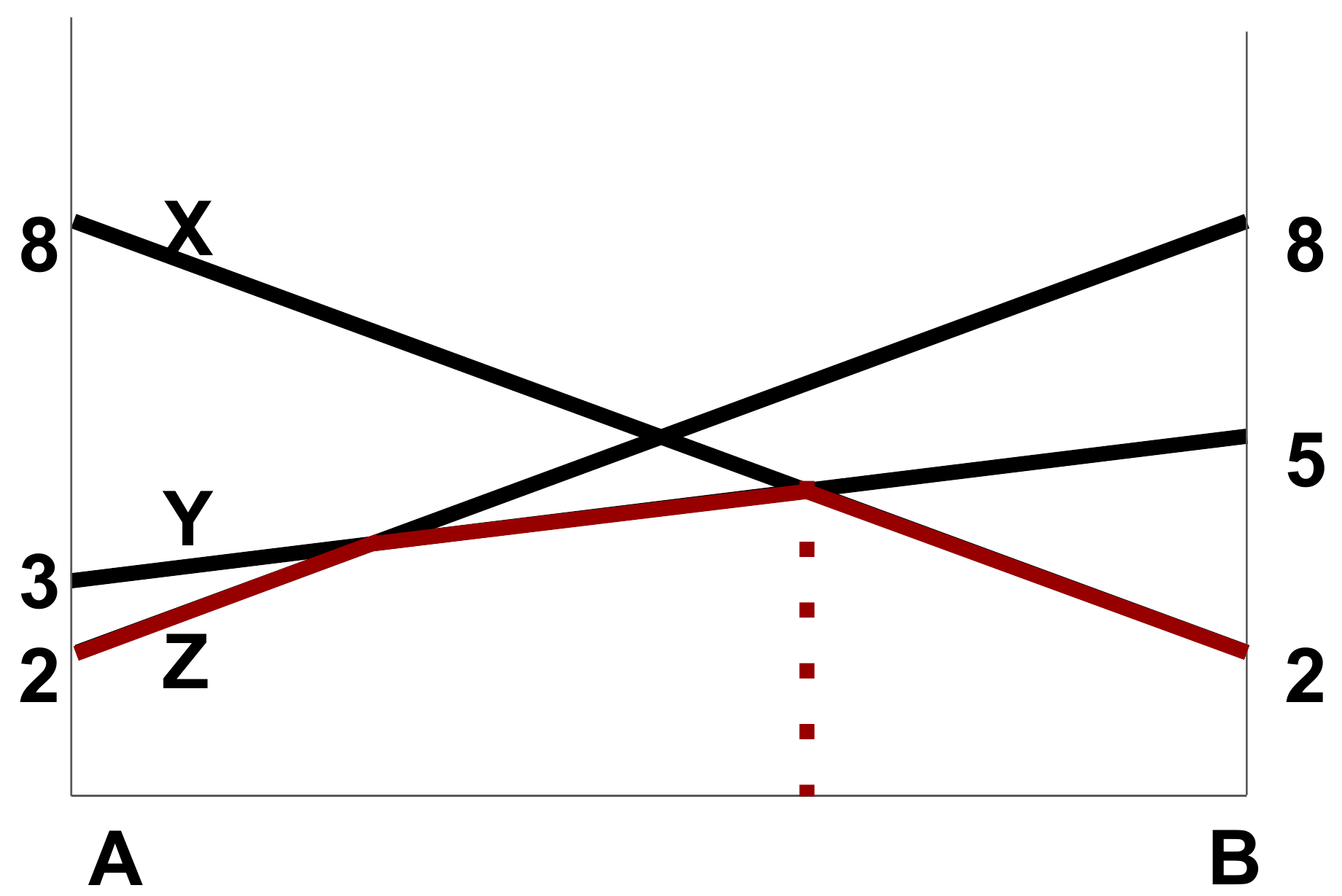}
        \caption{Strategy that maximizes the worst case function.}
        \label{fig:iig:convex_optim3}
    \end{subfigure}
    \caption{
    a) We can see the three lines corresponding to the value of the row play when the opponent responds with one of the three available actions.
    The x-axis corresponds to the strategy of the row player.
    As an example, the depicted red point is the return for the row player
    when they play $\pi_1 = (0.2A, 0.8B)$ and the opponent responds with action Z.
    b) Best response value function is then simply minimum of the lines (worst case best response).
    c) Strategy maximizing the worst case function.
    }
    \label{fig:values_graphical_interpretation}
\end{figure}

}

\section{Matrix Games Linear Programming}
\label{sec:iig-matrix_games_lp}
\second{
The piece-wise linear structure of \ref{eq:iig-minmax_opt_function} can be leveraged to create a closed-form linear program that optimizes the function in question.
We will construct a linear optimization for both the row and the column players, and then use the strong duality to prove the minmax theorem.
}

\subsection{Row Player}
\second{
First, notice that \maximin{} solution for the row player is:
\begin{align}
\label{eq:iig:minmax0}
    \max_{x \in \Pi_i} \min_{y \in \Pi_{-i}} x^\top A y
\end{align}

Since (\ref{eq:iig:minmax0}) is bi-linear, we need to decouple the $x$ and $y$ by introducing new variable.
Given a strategy $x$ for the row player, the best-responding opponent simply chooses the column with the smallest utility.

\begin{align}
\label{eq:iig:minmax1}
    \min_{y \in \Pi_{-i}} x^\top A y
\end{align}

This can be re-formulated as:

\begin{align}
\label{eq:iig:minmax2}
\begin{split}
    \max_{u \in \mathbb{R}} u \\
    x^\top A \geq u
\end{split}
\end{align}

We now just use this formulation in (\ref{eq:iig:minmax0}), resulting in the linear program (\ref{eq:iig:minmax_row_lp}) and Theorem \ref{thm:iig-matrix_lp}.

\begin{align}
\label{eq:iig:minmax_row_lp}
\begin{split}
    \max_{u \in \mathbb{R}, x \in \Pi_i } u \\
    x^\top A \geq u
\end{split}
\end{align}

\begin{thm}
\label{thm:iig-matrix_lp}
\citep[Theorem 1.11]{nisan2007algorithmic}
    Linear program \ref{eq:iig:minmax_row_lp} produces a \maximin{} strategy for the row player ($x$) and the corresponding \maximin{} value ($u$).
\end{thm}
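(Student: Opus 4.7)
The plan is to verify, by a short sequence of equivalences, that the feasible set and objective of the LP \eqref{eq:iig:minmax_row_lp} encode exactly the maximin problem \eqref{eq:iig:minmax0} for the row player. First I would observe that for any fixed row strategy $x \in \Pi_1$, the inner program $\min_{y \in \Pi_2} x^\top A y$ is the minimization of a linear function over the probability simplex $\Pi_2$, so by the fundamental theorem of linear programming its optimum is attained at a vertex, i.e.\ at some pure column strategy. Hence $\min_{y \in \Pi_2} x^\top A y = \min_j (x^\top A)_j$, which is exactly the pointwise minimum of the affine functions depicted in Figure~\ref{fig:iig:convex_optim2}.

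Next I would epigraph-ify this inner min in the standard way: the quantity $\min_j (x^\top A)_j$ equals the largest scalar $u$ satisfying $(x^\top A)_j \geq u$ for every column $j$, i.e.
\begin{align*}
    \min_{y \in \Pi_2} x^\top A y \;=\; \max\{ u \in \mathbb{R} \,:\, x^\top A \geq u \cdot \mathbf{1} \}.
\end{align*}
Substituting this reformulation back into the outer maximization $\max_{x \in \Pi_1}$ merges the two $\max$ operators into a single joint maximization over $(x,u) \in \Pi_1 \times \mathbb{R}$, subject to the linear constraints $x^\top A \geq u$. This is precisely the program \eqref{eq:iig:minmax_row_lp}, so any optimum $(x^*, u^*)$ of the LP satisfies $x^* \in \maximinset_1$ and $u^* = \underline{v}_1$ by Definition~\ref{defn:maximin_value}.

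The only mild subtlety is keeping straight that ``$x \in \Pi_1$'' in the LP hides the implicit linear constraints $x \geq 0$ and $\mathbf{1}^\top x = 1$, so the program truly is a linear program; this is purely notational. The substantive content (and the only place a lemma is invoked) is the vertex-attainment of the linear inner min over the simplex. I do not expect any genuine obstacle — this is a textbook epigraph reformulation — and the same template will be reused to derive the analogous LP for the column player, which will then enable the minimax theorem via LP duality as suggested by the surrounding text.
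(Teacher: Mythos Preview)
Your proposal is correct and follows essentially the same derivation the paper sketches in equations \eqref{eq:iig:minmax0}--\eqref{eq:iig:minmax_row_lp}: reduce the inner minimization over the simplex to a minimum over pure columns, then epigraph-reformulate to obtain the LP. The paper does not give a separate proof beyond this derivation (it cites \citet{nisan2007algorithmic}), so your write-up is if anything more explicit than what appears in the text.
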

}
\subsection{Column Player}
\second{
Analogical reasoning for the column player starts with $\min_{y} \max_{x} x^\top A y$ and results in
(\ref{eq:iig:minmax_column_lp}) and Theorem \ref{thm:iig-matrix_lp}.

\begin{align}
\label{eq:iig:minmax_column_lp}
\begin{split}
    \min_{v \in \mathbb{R}, y \in \Pi_{-i}} v \\
    A y \leq v
\end{split}
\end{align}

\begin{thm}
\label{thm:iig-matrix_lp2}
    Linear program \ref{eq:iig:minmax_column_lp} produces a \maximin{} strategy for the column player ($y$) and the corresponding \maximin{} value ($-v$).
\end{thm}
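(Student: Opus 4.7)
The plan is to mirror the derivation used for Theorem~\ref{thm:iig-matrix_lp} almost step by step, but starting from the column player's perspective. Since the game is zero-sum, the column player's own utility on a terminal is $-x^\top A y$, so their maximin objective is $\max_{y \in \Pi_{-i}} \min_{x \in \Pi_i} (-x^\top A y)$, which is equivalent to $-\min_{y \in \Pi_{-i}} \max_{x \in \Pi_i} x^\top A y$. Hence it suffices to show that the LP (\ref{eq:iig:minmax_column_lp}) computes $v = \min_y \max_x x^\top A y$ together with an argmin $y$; the column player's maximin value is then $-v$, accounting for the sign in the statement.

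Next I would decouple the inner maximization. Fixing $y$, the row player's best response is a deterministic choice of the best row, so
\begin{align*}
    \max_{x \in \Pi_i} x^\top A y = \max_k (Ay)_k.
\end{align*}
This max of a finite set of linear expressions can be written as the LP
\begin{align*}
    \min_{v \in \R}\ v \quad \text{subject to} \quad Ay \leq v,
\end{align*}
where the inequality is componentwise and the optimal $v$ equals $\max_k (Ay)_k$. Substituting this reformulation back into the outer $\min_y$ collapses the two nested optimizations into the single LP (\ref{eq:iig:minmax_column_lp}) over the joint variables $(v, y)$ with $y \in \Pi_{-i}$ (i.e.\ $y \geq 0$, $\mathbf{1}^\top y = 1$).

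Finally I would argue the equivalence in both directions. Any feasible $(v, y)$ gives an upper bound $v \geq \max_x x^\top A y$, so the LP optimum is at least $\min_y \max_x x^\top A y$; conversely, setting $v := \max_k (Ay^*)_k$ for an optimal $y^*$ of the minimax problem yields a feasible LP solution attaining this value, so the LP optimum equals $\min_y \max_x x^\top A y$. Combined with the sign flip from the zero-sum reduction, the optimal $y$ is a maximin policy for the column player and $-v$ is their maximin value.

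I do not expect any real obstacle here, since the argument is a direct dual of the one already given for the row player; the only point that requires a bit of care is keeping the sign conventions straight, in particular remembering that the column player's maximin \emph{value} is $-v$ while $v$ itself is the game value measured in the row player's payoffs. Theorem~\ref{thm:pig-minmax} then guarantees that the optima of (\ref{eq:iig:minmax_row_lp}) and (\ref{eq:iig:minmax_column_lp}) satisfy $u = v$, which ties the two constructions together.
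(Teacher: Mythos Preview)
Your proposal is correct and follows exactly the approach the paper indicates: the paper simply states that ``analogical reasoning for the column player starts with $\min_{y} \max_{x} x^\top A y$'' and yields (\ref{eq:iig:minmax_column_lp}), and you have spelled out precisely that analogy, including the sign bookkeeping that gives the column player's maximin value as $-v$. If anything, your write-up is more detailed than the paper's own one-line justification.
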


}

\subsection{Minimax Theorem}
\label{sec:iig-minmax_theorem}
\second{
We are now ready to prove the crucial minimax theorem.

\restminmaxtheorem*{}

The key observation is that the linear programs for the \maximin{} strategies of the row player \ref{eq:iig:minmax_row_lp} and the column player \ref{eq:iig:minmax_column_lp} form a pair of dual linear programs.
And since both are feasible, strong duality holds and thus the optimal
values of these programs are equal $v = -u$ \citep{bertsimas1997introduction}.

It is not a coincidence that we have used linear programming and duality to find optimal policies in two-person zero-sum games.
There is a one-to-one connection between linear programming and solving optimal policies, as it is possible to convert one problem to another \citep{boyd2004convex}.
}

\section{Sequential Games Linear Programming}
\label{sec:iig-sequence_games_lp}
\second{
This section generalizes the linear programming formulation of optimal policies from matrix games to sequential decision making.
Even though it is possible to convert a sequential game to matrix form, this is not practical as the resulting matrix game can be exponentially large compared to the sequential representation (Section~\ref{sec:igg-converting_formalism}).
}

\subsection{Sequence Form}
\label{sec:iig:sequence_form}
\second{
While a behavioral strategy defines how to act in individual information states, this representation is not directly useful for linear/convex optimization due to its non-convexity (probability of reaching a state $s$ is a product of the behavioral probabilities on its infostate path $\Pi_{s'as'' \sqsubseteq s} \pi(s', a)$)
Fortunately, there is a convex representation of a strategy --- the sequence form.
The idea is to represent a probability of a player reaching a state as the product of the sequence of a player's actions leading to that state.
One then must make sure that these reach probabilities are well formed, that the probability mass of reaching all children state sum to their parent.

In this section, we present a sequence form representation in the FOSG formalism by adapting the notation similar to that for extensive form games in \cite{nisan2007algorithmic}.
}

\subsubsection{Realization Plan}
\second{
We use $\mathcal{B}_i$ to denote the set of all state-action sequences of a player $i$:
$\mathcal{B}_i = \{ sa | s \in \mathcal{S}_i, a \in \mathcal{A}(s) \}$, 
and $b_i(ha) = s_i(h)a$.
The sequence form realization probability of a state-action sequence $sa \in \mathcal{B}_i$ becomes $\prod_{s'a' \sqsubseteq sa} \pi_i(s', a')$.
For game with perfect recall, Kuhn's theorem \citep{kuhn1953extensive, aumann1961mixed} states that
realization plan is equivalent to behavioral strategy.

We use $x$ and $y$ to denote the vectors of these quantities for all such sequences of player $1$ and $2$
respectively. 
Recall that the terminal utility (return) of reaching $z \in \mathcal{Z}$ is simply the accumulated reward on that trajectory:

\begin{align}
    u_i(z) = R_i(z) = \sum_{haw \sqsubseteq z} R_i(h, a, w)
\end{align}

We are now ready to define the sequence form payoff matrix $A \in \mathbb{R}^{|\mathcal{B}_1| \times |\mathcal{B}_2|} $,
with $A_{\sigma \tau}$ for indexes $\sigma \in \mathcal{B}_1, \tau \in \mathcal{B}_2$ defined as the chance-reach-weighted sum over the corresponding terminal states:

\begin{align}
    A_{\sigma \tau} = \sum_{(haw) \in Z \, : \, b_1(ha) = \sigma, b_2(ha) = \tau } P_{\mathcal{T}}(z) u_1(z)
\end{align}

With this notation, the expected value given realization plans of both players becomes simply:

\begin{align}
    x^\top A y
\end{align}

Not just any positive real valued vectors form a realization plan though.
A realization plan fulfills two important properties (ensuring that the probability mass of children sequences sum to their parent):

\begin{align}
    x_{\emptyset} &= 1 \\
    \sum_{a' \in \mathcal{A}(sao)} x_{saoa'} &= x_{sa}
\end{align}

These conditions can be compactly formulated using zero-one matrices $E, F$ and zero-one vectors $e, f$ \citep{nisan2007algorithmic}:
\begin{align}
    Ex = e, \, x & \geq \boldsymbol{0} \\
    Fy = f, \, y & \geq \boldsymbol{0}
\end{align}
}

\subsection{Optimal Policies}
\label{sec:iig-sequence_form_lp}
\second{
We will now use the sequence form to construct a linear optimization problem for computing an optimal policy.
Given a fixed realization plan $y$ of the second player, we can compute the best response for the first player as:

\begin{align}
\begin{split}
    \max  x^\top A y \\
    Ex = e, \, x & \geq \boldsymbol{0}
\end{split}
\end{align}

This can be turned into the following LP that solves for Nash equilibrium (for more details see \cite{nisan2007algorithmic}).

\begin{align}
\begin{split}
    & \min_{u, v} e^\top u  \\
    & Fy = f, E^\top u -Ay \geq \mathbf{0}, y \geq \mathbf{0}
\end{split}
\end{align}
}

\section{\Selfplay{} via Independent Reinforcement Learning}
\label{sec:iig-selfplay_irl}
\second{
We now consider another approach for producing optimal offline policies --- methods rooted in the \selfplay{} paradigm.
The first such method is \selfplay{} via independent reinforcement learning, where the agents best-respond to each other at each time step.
In perfect information, this sequence\footnote{\Subgame{} perfect best response.} converged to an optimal policy (Section~\ref{sec:pig-selfplay_methods}).
The same algorithm fails to produce optimal policies in imperfect information games.
Table~\ref{tab:iig:game_rps_shifted} presents a simple matrix game with optimal policies $\pi_1 = (\frac{2}{3}, \frac{1}{3})$ and $\pi_2 = (\frac{2}{3}, \frac{1}{3})$ for the row and column player respectively.
A best-responding sequence in this game $p_1 = \{A, B, A, B \hdots\}$, $p_2 = \{B, A, B, A \hdots\}$ then does not converge as the strategy at each time $p^t_i$ is highly exploitable.
Furthermore, note that the average strategies then are $\pi_1 = (\frac{1}{2}, \frac{1}{2})$ and $\pi_2 = (\frac{1}{2}, \frac{1}{2})$ and thus also exploitable.

\begin{table}[ht]
\centering
\begin{tabular}{lll}
 & A & B \\
\cline{2-3}
A & 1 & -2 \\
B & -2 & 4 \\
\end{tabular}
\caption{Matrix game where best-responding sequence does not converge.}
\label{tab:iig:game_rps_shifted}
\end{table}

While the same argument could be made using the \rps{}, the example would be far less illustrative.
The problem is that the relative frequencies of the individual actions in the best responding sequence $p_1 = \{R, P, S, R, P, S, \hdots\}$, $p_2 = \{R, P, S, R, P, S, \hdots\}$ match the optimal policy.
One could then conclude that average strategy converges, but that is not generally the case.
}

\subsection{Deterministic Optimal Policies}
\second{
As optimal policies under imperfect information might have to be stochastic (Corollary~\ref{cor:iig-minmax_stochastic}), the failure of the selfplay method to converge could be attributed to the fact that it produces a deterministic best response policy at each time step.
But a best-responding sequence can fail to converge even in a game with an optimal deterministic policy.
Table~\ref{tab:iig:game_rpswater} presents a modified \rps{} game with added action (W)ater.
Water draws against all the actions, and both players playing water is an optimal policy in this game (there are other optimal policies).
Sequence $p_1 = \{R, P, S, R, P, S, \hdots\}$, $p_2 = \{R, P, S, R, P, S, \hdots\}$ is then a best-responding sequence failing to converge in this game.

\begin{table}[ht]
\centering
\begin{tabular}{lllll}
  &  R & P & S & W \\ \cline{2-5}
R & 1  & 0 & -1  & 0 \\
P & -1 & 1  & 0  & 0 \\
S & 0 & -1   & 1  & 0 \\
W & 0 & 0   & 0 & 0 \\
\end{tabular}
\caption{Rock-paper-scissors-water: game where deterministic optimal policy exists (both player playing water), but best-responding selfplay sequence can fail to converge.}
\label{tab:iig:game_rpswater}
\end{table}

}
\section{Multi-agent Reinforcement Learning}
\label{sec:iig-multi_agent_rl}
\second{
The message of the convergence failure should be that we just cannot naively take a single-agent reinforcement learning algorithm and run it independently in multi-agent settings through \selfplay{}.
Single-agent reinforcement learning methods lack theoretical guarantees in these settings, often producing highly exploitable policies.
While we have simplified the analysis by allowing the algorithm to fully converge to best response at each \selfplay{} step, this approach fails to produce optimal policies even if the agents improve their policies less aggressively (e.g. using few steps of policy gradient or Q-learning).
For more insights and analysis see e.g. \cite{bailey2018multiplicative}.
Despite the lack of theoretical guarantees, this approach is relatively popular and it has recently been applied to e.g. Dota 2 \citep{berner2019dota}.
As such approach fails to produce optimal policies for \rps{}, it is a questionable choice for large two-player imperfect information game.

Rather than naively applying single-agent algorithms, one can properly design the algorithms for multi-agent settings.
There are many modern methods that take the complex multi-agent dynamics into account, providing strong convergence guarantees \citep{muller2019generalized, vinyals2019alphastar, perolat2020poincar}
None of these methods allow for search nor provide any building blocks for the next chapters and thus we do not dive any deeper into details.
}

\section{Fictitious Play}
\label{sec:iig-fictitious_play}
\second{
Simple modification to the best-responding sequence leads to the \gls{fictitious_play} algorithm.
\Gls{fictitious_play} best-responds to the average strategy of the opponent rather than the last one.

Let $\overline{\pi^t_i}$ denote the average strategy of the player $i$ up to time $t$.
The fictitious self-play sequence is then $\pi^{t}_i \in \brset{}(\overline{\pi^{t'-1}_{-i}})$.
Best-responding to the average policy has important theoretical consequences, as the average strategy profile then converges to optimal.
This algorithm dates all the way back to \citep{brown1951iterative}, with a convergence proof followed shortly after \citep{robinson1951iterative}.

\Gls{fictitious_play} is very simple to implement and often serves as a building block for other algorithms.
For example in large games, one can use single-agent reinforcement learning methods to approximate the individual best responses \citep{heinrich2015fictitious}.
Weakened fictitious play then replaces the best-response to the average strategy with a strategy that is only an increasingly better one \citep{van2000weakened, leslie2006generalised}
The problem of this algorithm is its convergence speed, where the number of iterations required can grow exponentially \citep{harris1998rate}.

} 

\section{Double Oracle Methods}
\label{sec:iig-double_oracle}
\second{
Rather than using best response to produce the self-play sequence, it can be used to iteratively build and expand the matrix game to be solved.
Double oracle methods \citep{mcmahan2003planning} iteratively expand the subset of actions to be considered by both the row and column player at time $t$ :  $\mathcal{A}^t_1 \subseteq \mathcal{A}_1$, $\mathcal{A}^t_2 \subseteq \mathcal{A}_2$.
This restricted matrix game $\mathcal{A}^t_1, \mathcal{A}^t_2$ is then solved e.g. via linear optimization.
Next, both players compute a best response against the resulting optimal policy by considering all the actions in the original unrestricted game $br^t_i \in \mathcal{A}_i$.
The restricted game is then expanded $\mathcal{A}^{t+1}_i = \mathcal{A}^t_i \, \cup \, br^t_i$ and the process repeats.
Double oracle methods terminate when the restricted game does not expand any more ($\mathcal{A}^{t+1}_i = \mathcal{A}^{t}_i$) and the key result is that the optimal policy for the restricted game is then also optimal for the original game.
This approach often terminates when the restricted game is substantially smaller than the original game.
But in the worst case, the restricted game fully recovers the original game (even when the support size is smaller).

Policy-Spaced Response Oracles (PSRO) then extends this idea for large games where the best response calculation is intractable, and uses reinforcement learning methods to approximate the best response \citep{lanctot2017unified}.
The restricted game consists of individual policies represented by neural networks and is referred to as the meta-game.
There are now multiple parallel and efficient variants of this approach \citep{mcaleer2020pipeline}.
Furthermore, AlphaStar (an AI for the game StarCraft II) builds on similar methods and the resulting meta-game with $888$ rows and columns has been released \citep{vinyals2019alphastar, vinyals2019grandmaster}.

Finally, double oracle methods can be efficiently generalized to sequential settings \citep{bosansky2014exact}.
Combination with PSRO results in efficient PSRO methods in the sequential representations \citep{mcaleer2021xdo}.
}

\chapter{Offline Solving II - Regret Minimization}
\label{sec:iig-offline_solve_regret}

\second{
Currently, the most successful algorithms for solving imperfect information games are based on the regret minimization framework.
Regret minimization is a general, online convex learning concept where an agent repeatedly makes decision against an unknown environment \citep{zinkevich2003online}.
Regret then measures the difference between the accumulated reward and the reward that a best time-independent action would receive in hindsight (e.g. always playing rock).
And while the environment can be adversarial as the reward vector at each time step can depend on the selected action, it is still possible to provide strong guarantees with regard to hindsight performance of regret.
An algorithm is said to be Hannan consistent if the regret grows sub-linearly --- the average regret converging to zero.

Regret minimization has an elegant and important connection to games when used in self-play.
In self-play, the reward vector can be computed using the opponent's policy and both players then select a next strategy using a regret minimizer.
It is then possible to show that the average strategy of the players is $\epsilon$-optimal for $\epsilon$ no greater than the sum of players' average regrets. 
And as the average regret converges to zero, the average strategy converges to optimal.

Finally, minimizing regret in sequential settings can be decomposed into individual regret minimizers in the information states --- counterfactual regret minimization.
This is particularly important as it will be used as a key building block of decomposition and search methods.
}

\section{The Setup}
\label{sec:iig-regret_setup}
\second{
While there are multiple concepts of regret, the one used in this book is often referred to as external regret\footnote{External regret with a comparison class of all the actions.}.
The agent repeatedly makes a decision against an unknown environment and observes a reward vector (reward for each action).

Formally, the agent's set of actions is $\mathcal{A}$ and at each time step $t$ the agent chooses a policy $\pi^t \in \Delta(\mathcal{A})$.
The agent then receives a reward vector $x_t \in \mathcal{R}^{|\mathcal{A}|}$ and the agent's value is the weighted sum $v^t = \sum_{a \in A} \pi^t(a) \, x^t(a) =  \pi^t \, {x^t}^\intercal$.
The cumulative reward up to time $T$ is simply $X_{\pi}^T = \sum_{t=1}^T \pi^t \, {x^t}^\intercal$.
External regret of action $a$, $R_a^T$ then contrasts this to a cumulative reward that would have been received if we rather followed 
action $a$ at each time step $R_a^T = \sum_{t=1}^T x_t(a) - X_{\pi}^T$.
Finally, external regret is then $R^T = \max_{a \in \mathcal{A}} R^T_a$.
In other words, how much we regret not taking the best action in retrospect.
}

\section{Regret Minimization Algorithms}
\label{sec:iig-regret_algorithms}
\second{
We now present some popular regret minimization algorithms.
To simplify the notation and analysis, we assume that the action reward $x^t(a) \in [0, 1]$.
No generality is lost as for $x^t(a) \in [a, b]$, the resulting bounds are simply scaled by a constant factor $|b-a|$}

\subsection{``Always Rock''}
\second{
Selecting the same action $a_1$ at each time step leads to worst case $R^T = T$, as we can assign $x^t(a_1) = 0, \, x^t(a') = 1 \, \forall t$.
}

\subsection{Simple Greedy}
\label{sec:iig-regret_simple_greedy}
\second{
Instead of selecting the same action, we might prefer the action with the highest accumulated regret $\pi^t = \argmax_{a \in \mathcal{A}} R^T_a$ (and selects the action with the smallest index if there are multiple maxima).
It is easy to show that the regret can be bounded by (\ref{eq:iig:greed_rm_bound}).

\begin{equation}
\label{eq:iig:greed_rm_bound}
   R^T_{\pi_{greedy}} \le \frac{T}{|\mathcal{A}|}
\end{equation}

This is not much of an improvement as we need sub-linear regret for Hannan consistency.
Turns out that in order to guarantee sub-linear regret, it is necessary to produce stochastic policies $\pi^t$ \citep{nisan2007algorithmic}.
}

\subsection{Hedge}
\second{
Arguably the most popular regret minimization algorithm --- Hedge \citep{freund1997decision} ---
is a parametric algorithm based on the weighted majority algorithm \citep{littlestone1994weighted}, selecting an action exponentially proportional to its regret (\ref{eq:iig-hedge}).

\begin{equation}
\label{eq:iig-hedge}
    \pi_t = \frac{e^{\beta R_{t-1}}}{\sum_{a \in A} e^{\beta R_{t-1}}}
\end{equation}

Given the sequence length $T$ ahead of time and setting $\beta = \sqrt{\frac{2 \log(|\mathcal A|)}{T}}$, Hedge provides sub-linear upper bound on the regret (\ref{eq:iig-hedge_bound}).

\begin{equation}
\label{eq:iig-hedge_bound}
   R^T(\pi_{hedge}) \le \Delta \sqrt{2 T \log(|\mathcal A|)} + \log(|\mathcal A|)
\end{equation}

If we do not know T, we can use the standard doubling trick and the asymptotic bound of (\ref{eq:iig-hedge_bound}) remains the same.
In practice, the $\beta$ parameter can be tuned for the game and can significantly improve performance.
Furthermore, there are parameter-free variants e.g. NormalHedge \citep{chaudhuri2009parameter}.
}

\subsection{Regret Matching}
\second{
\label{sec:iig-regret_matching}
One particularly simple algorithm is \gls{regret_matching}.
One simply plays an action proportionally to the positive regret of that action \citep{blackwell1956analog, hart2000simple}.
Let $R(a)^+ = \max(R(a), 0)$, regret matching then sets the action probability as follows:
\begin{equation}
\label{eq:iig:regret_matching}
    \pi^t_{rm}(a) = \frac{R(a)^+}{\sum_{a' \in \mathcal{A}} R(a')^+} 
\end{equation}

When $\sum_{a' \in \mathcal{A}} R(a')^+ = 0$, we simply select a uniform random policy $\pi^t_{rm}(a) = \frac{1}{|\mathcal{A}|}$.
Note that from the equation \ref{eq:iig:regret_matching}, we see that unlike Hedge (and many others), regret matching is not parametric.
This is a powerful property, since in practice we don't have to find parameters that work well for our particular problem.
Regret matching enjoys the following bound (\ref{eq:iig:regret_matching_bound}).

\begin{equation}
\label{eq:iig:regret_matching_bound}
  R^T(\pi_{rm}) \leq \sqrt{ |\mathcal{A}| T}
\end{equation}
}

\section{Connection To Nash Equilibrium}
\label{sec:iig-regret_to_nash}
\second{
We are now ready to show the striking connection from regret to Nash equilibria.
Namely, we will connect the average regret $\frac{R^T(\pi)}{T}$ to $\epsilon$-Nash equilibrium.
The idea is to use regret minimization in self-play, where we iteratively update policies of the players.
\Selfplay{} utilities are then used to define the reward vector, which in turn forms the regrets to be minimized (Figure~\ref{fig:iig-simultaneous_updates} and Algorithm~\ref{alg:iig-selfplay_regret_matrix}
 ).

At time $t$, a player (regret minimizer) produces their strategy $\pi^t_i$ and the reward vector is computed as the utility against a strategy of the opponent  $\pi^t_{-i}$.
For a matrix game, the reward vector of the player $i$ is $x^t_i = A \pi^t_{-i}$ and the current regret is: 
$r^t_i = A {\pi^t_{-i}} - \pi^t_i A {\pi^t_{-i}}$ .
}

\begin{algorithm}
\caption{\Selfplay{} Regret Minimization in Matrix Games}
\label{alg:iig-selfplay_regret_matrix}
\begin{algorithmic}[1]
\Function{RegretMatching}{regrets}
    \State positive\_regrets = regrets$^+$
    \State pos\_regret\_sum = $\sum_i$ positive\_regrets$_i$
    \If{pos\_regret\_sum = 0}
      \Return uniform\_policy
    \EndIf
    \State
    \Return positive\_regrets $/$ pos\_regret\_sum
\EndFunction
\State
\Function{RunSelfplay}{$A \in \mathcal{R}^{n \times n}$, steps}
    \State \Comment{Initialize regrets and average policies}
    \State $r_1$ = $\vec{0} \in \mathcal{R}^{1 \times m}$
    \State $r_2$ = $\vec{0} \in \mathcal{R}^{1 \times n}$
    \State $\overline{\pi}_1$ = $\vec{0} \in \mathcal{R}^{1 \times m}$
    \State $\overline{\pi}_2$ = $\vec{0} \in \mathcal{R}^{1 \times n}$
    \For t in steps
        \State \Comment{Compute the current policy}
        \State $\pi_1$ = RegretMatching($r_1$)
        \State $\pi_2$ = RegretMatching($r_2$)
        \State \Comment{Compute the current reward vector}
        \State $x_1 = \T{(A \T{\pi_2})}$
        \State $x_2 = \pi_1 A$
        \State \Comment{Compute the received value}
        \State $v_1 = \pi_1 \T{x_1}$
        \State $v_2 = \pi_2 \T{x_2}$
        \State \Comment{Compute current regret and add it to the cumulative regret}
        \State $r_1 += x_1 - v_1$
        \State $r_2 += x_2 - v_2$
        \State \Comment{Cumulate average policies}
        \State $\overline{\pi}_1 += \pi_1$
        \State $\overline{\pi}_2 += \pi_2$
    \EndFor
    \State \Comment{Return average policies}
    \Return $\{ \overline{\pi}_1 / steps, \overline{\pi}_2 / steps \}$
\EndFunction
\end{algorithmic}
\end{algorithm}

\subsection{Convergence Analysis}
\second{
There is a crucial connection between the average regret $\frac{R^T}{T}$ and the optimality of the average strategy of the \selfplay{}.
As the regret essentially measures by how much we could have improved and the regrets in \selfplay{} are a function of opponent's policy, there is a direct connection to $\epsilon$-Nash equilibrium (Theorem~\ref{thm:iig:folk_theorem}).

\begin{thm}
\label{thm:iig:folk_theorem}
Let the accumulated regret in \selfplay{} of the players be $R^T_1$ and $R^T_2$.
The averaged strategy profile $(\pi_1, \pi_2)$ is then $\epsilon$-Nash for
$\epsilon = \frac{R^T_1 + R^T_2}{T}$.
\end{thm}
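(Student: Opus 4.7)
The plan is to show that the sum of the two players' average regrets equals (by a bilinearity argument) the sum of the individual deviation gains $\delta_i(\overline{\pi}_i)$ from the averaged profile, and then conclude that each $\delta_i$ is bounded by $\tfrac{R^T_1 + R^T_2}{T}$ because both deviation gains are non-negative.

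Concretely, I would start by writing out the per-player regrets in the self-play setup of Algorithm~\ref{alg:iig-selfplay_regret_matrix}. For player 1,
\[
 R^T_1 \;=\; \max_{\pi_1^*} \sum_{t=1}^{T} \bigl( u_1(\pi_1^*, \pi_2^t) - u_1(\pi_1^t, \pi_2^t) \bigr),
\]
and using the zero-sum identity $u_2 = -u_1$ the player 2 regret can be rewritten as
\[
 R^T_2 \;=\; \sum_{t=1}^{T} u_1(\pi_1^t, \pi_2^t) \;-\; \min_{\pi_2^*} \sum_{t=1}^{T} u_1(\pi_1^t, \pi_2^*).
\]
Adding these cancels the $\sum_t u_1(\pi_1^t, \pi_2^t)$ term, giving
\[
 R^T_1 + R^T_2 \;=\; \max_{\pi_1^*} \sum_{t=1}^{T} u_1(\pi_1^*, \pi_2^t) \;-\; \min_{\pi_2^*} \sum_{t=1}^{T} u_1(\pi_1^t, \pi_2^*).
\]

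The key step is then to pull the sum inside. Since $u_1(\pi_1, \pi_2) = \pi_1 A \pi_2^\top$ is bilinear, $\sum_t u_1(\pi_1^*, \pi_2^t) = T \cdot u_1(\pi_1^*, \overline{\pi}_2)$ and similarly $\sum_t u_1(\pi_1^t, \pi_2^*) = T \cdot u_1(\overline{\pi}_1, \pi_2^*)$. Dividing by $T$ and denoting $\epsilon = (R^T_1 + R^T_2)/T$,
\[
 \epsilon \;=\; \max_{\pi_1^*} u_1(\pi_1^*, \overline{\pi}_2) \;-\; \min_{\pi_2^*} u_1(\overline{\pi}_1, \pi_2^*).
\]
Inserting and subtracting $u_1(\overline{\pi}_1, \overline{\pi}_2)$ in the middle splits this into the two players' deviation gains $\delta_1(\overline{\pi}_1) = \max_{\pi_1^*} u_1(\pi_1^*, \overline{\pi}_2) - u_1(\overline{\pi}_1, \overline{\pi}_2)$ and $\delta_2(\overline{\pi}_2) = u_1(\overline{\pi}_1, \overline{\pi}_2) - \min_{\pi_2^*} u_1(\overline{\pi}_1, \pi_2^*)$, so that $\delta_1 + \delta_2 = \epsilon$.

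To finish, I observe that both $\delta_i \geq 0$ (each player can always guarantee value at most its own utility against the other by choosing $\pi_i^* = \overline{\pi}_i$), hence $\max_i \delta_i \leq \delta_1 + \delta_2 = \epsilon$, which is precisely the definition of an $\epsilon$-Nash equilibrium from Section~\ref{pig:approx_evaluation}. The main (mild) obstacle is being careful about signs when converting player 2's regret into a statement about $u_1$ using the zero-sum property; everything else is bookkeeping and the single bilinearity exchange.
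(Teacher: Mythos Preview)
Your argument is correct and is essentially the same as the paper's: both add the two regrets, use bilinearity to replace time sums by averages, and bound each player's deviation gain by $(R^T_1+R^T_2)/T$. Your presentation is in fact slightly tidier, since you make the exact identity $\delta_1+\delta_2=\epsilon$ explicit and then use $\delta_i\ge 0$, whereas the paper arrives directly at the inequality $\delta_i\le\epsilon$ via the same cancellation and bilinearity step.
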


\begin{proof}
WLOG we show that player $1$ cannot improve by more than $R^T_1 + R^T_2$ by switching to an arbitrary
strategy.
First, notice that $\sum_t \bar{\pi_1} A \pi^t_2 = \sum_t \bar{\pi_1} A \bar{\pi_2} = \sum_t \pi^t_1 A \bar{\pi_2}$.
\begin{align*}
    \max_{\pi^*_1} (\sum_t \pi^*_1(A) \bar{\pi_2}) -  \sum_t \pi^t_1 (A) \pi^t_2 = \max_{\pi^*_1} (\sum_t \pi^*_1(A) \pi^t_2) -  \sum_t \pi^t_1 (A) \pi^t_2  \le R^T_1 \\
    \sum_t \bar{\pi_1}(-A) \bar{\pi_2}  -  \sum_t \pi^t_1 (-A) \pi^t_2  \le \max_{\pi^*_2} (\sum_t \bar{\pi_1}(-A) \pi^*_2)  -  \sum_t \pi^t_1 (-A) \pi^t_2 \le R^T_2 \\
    \sum_t \bar{\pi_1}(A) \bar{\pi_2}  -  \sum_t \pi^t_1 (A) \pi^t_2 \ge -R^T_2 \\
    \max_{\pi^*_1} (\sum_t \pi^*_1(A) \bar{\pi_2}) - \sum_t \bar{\pi_1}(A) \bar{\pi_2} \le  R^T_1 + R^T_2 \\ 
    \max_{\pi^*_1} \pi^*_1(A) \bar{\pi_2} -  \bar{\pi_1}(A) \bar{\pi_2} \le \frac{R^T_1 + R^T_2}{T}
\end{align*}
\end{proof}

Theorem~\ref{thm:iig:folk_theorem} connects the average regret and the quality of the average strategy.
Thus, all we need is a regret minimizer with a sub-linear regret growth (Hannan consistent) and we have a method that provably converges to Nash equilibria.
}

\subsection{Possible Bounds}
\second{
Regret matching and many other regret minimizers provably enjoy $\frac{1}{\sqrt{T}}$ convergence speed of the average regret.
This matches the lower bound of the general adversarial case --- for any algorithm, it is possible to construct an adversarial sequence where the average regret converges no faster \citep{nisan2007algorithmic}.

When the environment is not strictly adversarial, it is possible to leverage the structure in the predictive regret framework.
A particularly useful structure is when the reward vector does not drastically change over time --- gradual variations \citep{chiang2012online}.
The idea of gradual variations can be leveraged in \selfplay{},  as we get to ``controll'' both sides.
The reward vector is then unlikely to shift drastically as the agents' policies change only gradually.
This structure is then used in the predictive regret framework, where we predict the next reward vector.
The better the prediction, the faster the convergence and there are predictive regret algorithms with $\frac{1}{T}$ convergence speed \citep{syrgkanis2015fast, farina2019optimistic, farina2020faster}.
This speed is then optimal for the \selfplay{} settings, as it matches the corresponding lower bound \citep{daskalakis2011near} .
}

\section{Alternating Updates}
\label{sec:iig-alternating_updates}
\second{
When we applied regret minimization in \selfplay{}, we updated both players at the same time.
The players used regret minimizer to produce their policy $(\pi^t_1, \pi^t_2)$ which is then used to compute the current regret.
This is referred to as simultaneous updates (Figure \ref{fig:iig-simultaneous_updates}).

There is another option --- alternating updates --- where we update players one by one.
We use $(\pi^t_1, \pi^t_2)$ to run a regret minimization step for one agent, producing $\pi^{t+1}$.
That policy is then used for a regret minimizaton step of the other agent, where the current regret is based on $(\pi^{t+1}_1, \pi^t_2)$ (Figure \ref{fig:iig-alternating_updates}).

For simultaneous updates, the regret updates of both players are based on the $(\pi^t_1, \pi^t_2)$ sequence.
For alternating updates, one player uses  $(\pi^t_1, \pi^t_2)$ while the other one  $(\pi^{t+1}_1, \pi^t_2)$. 
Alternating updates in \selfplay{} is thus not symmetric --- the reward vectors are not being computed symmetrically as in simultaneous updates.
An important caveat is that Theorem \ref{thm:iig:folk_theorem} holds only for simultaneous updates as black box regret minimization with alternating updates might fail to converge \citep{farina2019online}.
One must use additional assumptions about the regret minimizer to recover the convergence guarantess.
Fortunately, regret matching is one such regret minimizer \citep{burch2019revisiting}.
}

\begin{figure}[ht]
    \includegraphics[width=\textwidth]{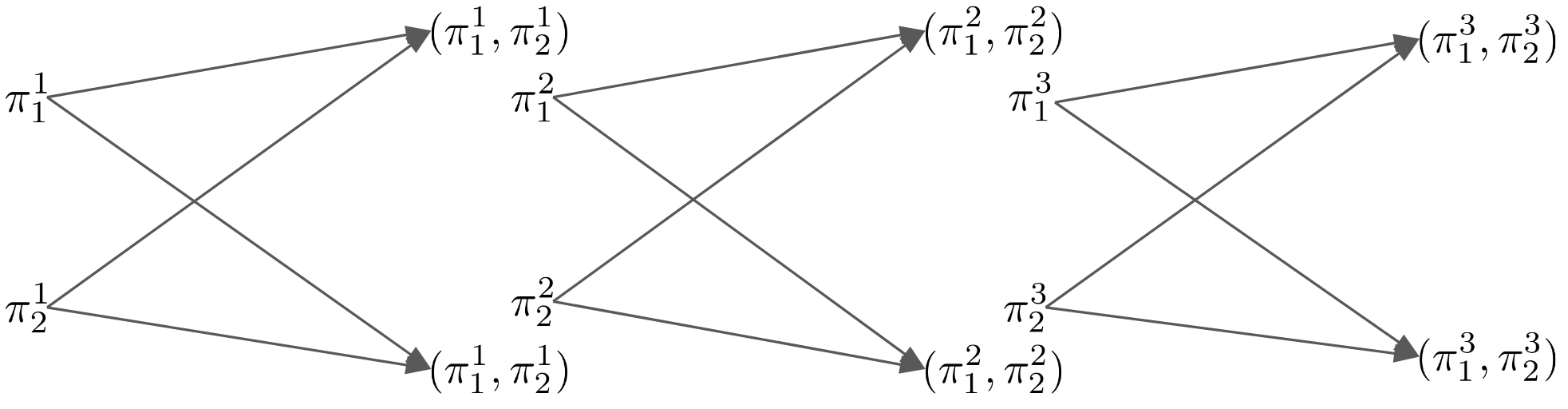}
    \caption{Regret minimization in \selfplay{} --- simultaneous updates.
    Both players update their policy at the same time and the regret update is symmetrical as it uses the same strategy pair for both player updates.}
    \label{fig:iig-simultaneous_updates}
\end{figure}

\begin{figure}[ht]
    \includegraphics[width=\textwidth]{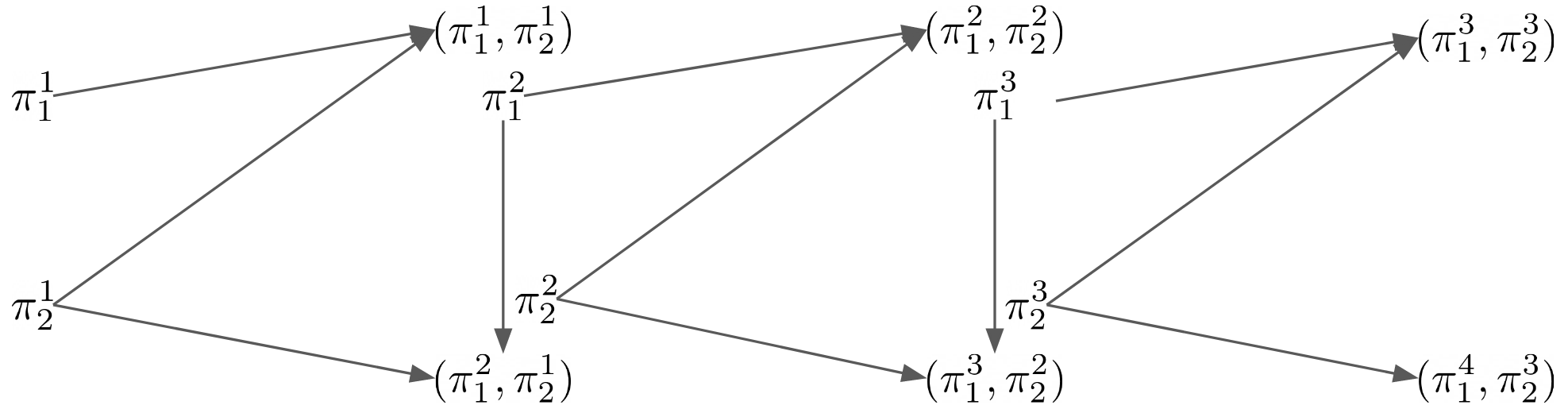}
    \caption{Regret minimization in \selfplay{} --- alternating updates.
    Players update their policy one by one and the regret update of one player uses different strategy pair than the opponent does.}
    \label{fig:iig-alternating_updates}
\end{figure}


\section{Counterfactual Regret Minimization}
\label{sec:iig-cfr}
\second{
In matrix games, we computed the regret against the best action in retrospect.
In other words, we compared to all possible pure strategies.
But what does that mean in sequential settings?
As we know from Section \ref{sec:igg-converting_formalism}, we can always convert to a matrix game.
While possible, this is extremely impractical due to the exponential size explosion.

Fortunately, it is possible to minimize regret directly in the sequential representation of the game.
\gls{cfr_g} (CFR) decomposes the full regret into small individual regrets in each information state (counterfactual regrets).
One can then bound the full regret by a sum of all these partial regrets, allowing one to minimize these partial regrets independently \citep{zinkevich2007regret}.
It is then possible to generalize the idea from sequential games to a more general case of convex sequential decision making \citep{farina2018online}.
}

\subsection{Counterfactual Values}
\second{
We first define state and state-action values.
Those terms (analogous to their reinforcement learning counterparts) will allow for a particularly easy view and definition of CFR.

The state value $v^\pi_i(h)$ is the expected future reward under policy $\pi$ to player $i$ given the history $h$.
The state-action value $q^{\pi}_i(h,a_i)$ is defined analogously, except that the player $i$ first takes the action $a_i$.
Counterfactual reach $P^{\pi}_{-i}(h)$ of a given history $h$ for player $i$ is the reach probability of that history when player $i$ attempts to reach $h$.

\begin{align*}
P^{\pi}_{-i}(h) := P_{\mathcal{T}}(h) \prod \nolimits_{j \in \mathcal N \setminus \{i\}} P^{\pi}_j(h)
\end{align*}

Now we can define counterfactual-weighted state and state-action values.
The counterfactual-weighted state-action value for player $i$ of action $a \in \mathcal{A}_i$ at state $s\in \mathcal{S}_i$ is (\ref{eq:iig-cf_q_values}).

\begin{align}
\label{eq:iig-cf_q_values}
q^{\pi}_{i,c}(s,a) := \sum\nolimits_{h \in \mathcal{H}(s)} P^{\pi}_{-i}(h) q^{\pi}_i(h,a)
\end{align}

Having these counterfactually-weighted action-values, we define the corresponding state-values as (\ref{eq:iig-cf_v_values}).

\begin{align}
\label{eq:iig-cf_v_values}
v_{i,c}^\pi(s) := \sum \nolimits_{a\in \mathcal{A}_i(s)} \pi_i(s,a) q^\pi_{i,c}(s,a)
\end{align}

Note that this value is not conditional on reaching $s$, as is standard in \rl{}, but instead depends on the counterfactual probability $P^{\pi}_{-i}$ of reaching $s$.
}

\subsection{Counterfactual Regret}
\second{
Given a strategy sequence $\pi^0, \dots, \pi^{t-1}$, we can use counterfactual state and state-action values to define the counterfactual regrets as follows:
\begin{align}
R^t_i(s,a) &= \sum\nolimits_{k=0}^{t-1} \left( q^{\pi^k}_{i,c}(s,a) - v^{\pi^k}_{i,c}(s) \right) \label{eq:iig-counterfactual_regret}\\
R^t_i(s) &= \max_{a \in \mathcal{A}(s)} R^t_i(s,a)
\label{eq:iig_cfr_regret}
\end{align}

Setting $ R^t_i(s)^{+} = \max(R^t_i(s), 0)$, the key result is that one can bound the full regret by the sum of the positive counterfactual regrets (Theorem \ref{thm:iig-cfr})  \citep{zinkevich2007regret}.

\begin{thm}
\label{thm:iig-cfr}
\citep[Theorem 3]{zinkevich2007regret}
\begin{align*}
R^t_i \leq \sum_{s \in \mathcal{S}_i}   R^t_i(s)^{+}
\end{align*}
\end{thm}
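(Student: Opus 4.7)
The plan is to prove the bound by tree induction on the infostate tree of player $i$, defining an auxiliary ``full'' counterfactual regret at each infostate that dominates the contribution of that subtree to overall regret.

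For each $s \in \mathcal{S}_i$, I would introduce a \emph{full counterfactual regret} $R^{t,\mathrm{full}}_i(s)$ defined as the maximum over alternative policies $\pi^*_i$ (agreeing with $\pi^k_i$ outside the subtree rooted at $s$) of the counterfactually weighted improvement in the subtree at $s$. At a terminal (leaf) infostate of player $i$ --- one with no descendant infostates of player $i$ in the subtree --- the only possible deviation is a one-step change at $s$, so $R^{t,\mathrm{full}}_i(s) = R^t_i(s)^+$. Note the positive part appears because the alternative policy in the definition is a max over deviations, including the no-deviation policy with value $0$.

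The crux is the inductive step. For an internal infostate $s$ I would show
\begin{align*}
R^{t,\mathrm{full}}_i(s) \;\leq\; R^t_i(s)^+ \;+\; \sum_{s' \in \mathrm{child}_i(s)} R^{t,\mathrm{full}}_i(s'),
\end{align*}
where $\mathrm{child}_i(s)$ denotes the immediate player-$i$ descendant infostates in the subtree rooted at $s$. The argument is that any full deviation in the subtree can be split into (i) a choice of which action to play at $s$ itself, contributing at most $R^t_i(s)^+$ to the counterfactually weighted value (the $^+$ coming from the freedom to not deviate at $s$ at all), and (ii) independent deviations in each descendant subtree. Perfect recall is essential here: the child infostates $s'$ partition the continuations below $s$, and the counterfactual reach probabilities $P^\pi_{-i}$ used in the definition of $q^\pi_{i,c}$ and $v^\pi_{i,c}$ factor cleanly across these subtrees, so the contributions from different children add without interaction. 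Unrolling the recursion down to leaves then yields
\begin{align*}
R^{t,\mathrm{full}}_i(s) \;\leq\; \sum_{s' \sqsupseteq s,\, s' \in \mathcal{S}_i} R^t_i(s')^+.
\end{align*}

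Finally I would connect $R^{t,\mathrm{full}}_i$ at the ``root'' (aggregated over the initial infostates of player $i$) back to the externally defined regret $R^t_i$. Since the external regret compares against the best fixed alternative behavioral policy, and any such policy can be viewed as a collection of per-infostate deviations, the full counterfactual regret at the root upper-bounds $R^t_i$. Combining this with the unrolled bound gives $R^t_i \leq \sum_{s \in \mathcal{S}_i} R^t_i(s)^+$, as desired.

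I expect the main obstacle to be the bookkeeping in the inductive step: justifying rigorously that the counterfactual weighting lets one split a global deviation into independent local deviations at each infostate without double-counting or losing a $P^{\pi}_{-i}$ factor, and confirming that the ``$+$'' truly arises from the optional-deviation interpretation of the max over $\pi^*_i$ rather than appearing artificially. Perfect recall (guaranteed in FOSGs) will be used repeatedly to ensure the infostate tree is well-defined and that descendant subtrees of distinct children of $s$ do not share infostates.
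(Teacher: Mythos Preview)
Your proposal is correct and follows precisely the standard argument of \cite{zinkevich2007regret}, which is what the paper cites without reproducing a proof: one introduces the full counterfactual regret $R^{t,\mathrm{full}}_i(s)$, establishes the recursive bound $R^{t,\mathrm{full}}_i(s) \leq R^t_i(s)^+ + \sum_{s' \in \mathrm{child}_i(s)} R^{t,\mathrm{full}}_i(s')$ by splitting any deviation into a local action change at $s$ and independent deviations in the perfect-recall subtrees, and then unrolls to bound the external regret at the root. Your identification of perfect recall as the structural ingredient and of the $(\cdot)^+$ as arising from the option not to deviate is exactly right; the bookkeeping you flag as the main obstacle is handled in the original by Lemmas~5--6, and your sketch already captures their content.
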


Theorem \ref{thm:iig-cfr} has critical consequences.
It decomposes the full regret into a sum of immediate regrets and allows us to independently minimize the immediate state regrets \ref{eq:iig_cfr_regret} (e.g. using regret matching).
Furthermore, computing the counterfactual values, regrets and updates can be done via a single tree traversal (Section \ref{sec:iig-cfr_public_tree_implementation}).
The same decomposition idea to individual partial regrets can then be generalized to sequential decision processes for convex sets \citep{farina2019online}.
}

\subsection{Average Policy}
\second{
Combining Theorem \ref{thm:iig-cfr} and Theorem \ref{thm:iig:folk_theorem} guarantees that if we use Hannan consistent regret minimizer in each infostate, the average strategy converges to optimal.
During the averaging, one must not forget to properly weight the behavioral strategy by the current reach probability (Section \ref{sec:pig-strategy_averaging}).
}

\subsection{Counterfactual Best Response}
\label{sec:iig-counterfactual_best_response}
\first{
Policy $\pi_i$ with non-negative counterfactual regret sum $\sum_{s \in \mathcal{S}_i} R_i(s)^{+} = 0$ is optimal and thus also a best response $\pi_i \in \brset$ (Lemma~\ref{lem:pig-nash_best_response}).
But not every best response $\pi'_i$ has zero regrets.
If a state $s \in \mathcal{S}_i$ has zero-reach probability $P_{\pi'_i}(s) = 0$, the policy might form a best response while $R_i(s)^{+} > 0$.
We thus define a refinement of best response policy --- the \gls{counterfactual_best_response}.
This concept can be thought of as a particular notion of \subgame{} perfect policy and  will be beneficial in later sections.

\begin{defn}[Counterfactual Best Response]
Counterfactual best response against a policy $\pi_i$ is a policy of the opponent with zero counterfactual regrets for all $s \in \mathcal{S}_{-i}$.
\end{defn}

We denote the set of such counterfactual best response policies as $\cfbrset(\pi_i)$ and we use $\cfbrv^{\pi_i}_{-i}(s)$ to denote the counterfactual best response value of a state $s \in \mathcal{S}_2$ --- counterfactual value given that the player $i$ follows $\pi_i$ and the opponent follows $\pi_{-i} \in \cfbrset(\pi_i)$.
}

\subsection{Public Tree Implementation}
\label{sec:iig-cfr_public_tree_implementation}

\begin{figure}[ht]
  \centering
\includegraphics[width=\textwidth]{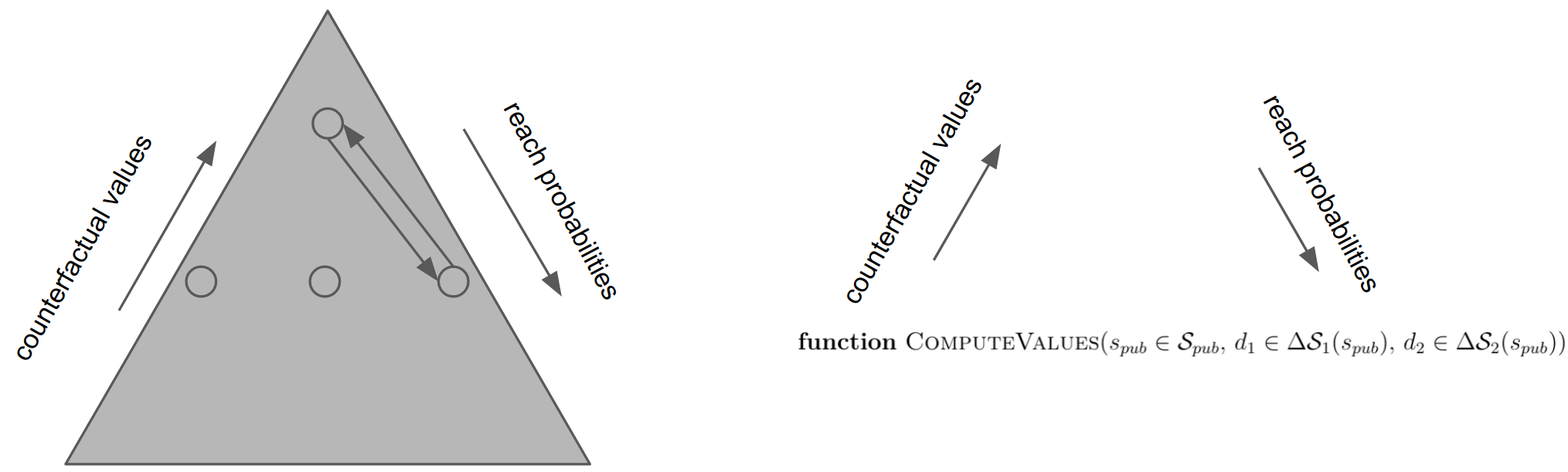}
  \caption{Public tree implementation of CFR traverses the public tree and send the reach probabilities down the tree and counterfactual values up the tree.}
\label{fig:iig-cfr_public_trees}
\end{figure}

\second{
The CFR algorithm iteratively i) produces policy $\pi^t_i(s)$ using counterfactual regret in individual states ii) computes the counterfactual values and updates the regrets iii) averages the policy.
One such iteration can be efficiently implemented via a single game tree traversal, where the traversal sends the reach probabilities down the tree (to be used as the $P^{\pi}_{-i}$ reach weights as in Equation~\ref{eq:iig-cf_q_values}) and values up the tee (propagating the values as in Equation~\ref{eq:iig-cf_q_values}).
Algorithm \ref{alg:iig_cfr} then shows an implementation where we traverse a public tree.
While there is no strong reason for this choice now and one could traverse the history or infoset trees, a public tree implementation will be beneficial once we will use value functions within the CFR algorithm.
Note that the public tree recursive traversal operates with reach probabilities and counterfactual values for all the states of a public state (Figure \ref{fig:iig-cfr_public_trees}), which will become particularly useful for the use of value functions.
}

\begin{algorithm}
\caption{Public Tree CFR}
\label{alg:iig_cfr}
\begin{algorithmic}[1]


\Function{PublicTreeCFR}{}

\For{$i : [1..T]$}
\State \Call{ComputeNextPolicies}{}
\State \Call{UpdateAveragePolicies}{}
\State \Call{ComputeValues}{$root_{pub}, 1, 1$}
\State \Call{UpdateRegrets}{}
\EndFor
\State \Call{NormalizeAveragePolicies}{}
\EndFunction
\State
\State \Comment{Recursively traverses the public tree, sending reach probabilities down and counterfactual values up.}
\Function{ComputeValues}{$s_{pub} \in \mathcal{S}_{pub}$, $d_1 \in \Delta \mathcal{S}_1(s_{pub})$, $d_2 \in \Delta \mathcal{S}_2(s_{pub})$}
\label{alg:func-compute_values}
  \State \Comment{Terminal public state.}
  \If{$s_{pub} \in \mathcal{Z}_{pub}$}
      \For{$i \in \mathcal{N}$}
        \For{$s \in \mathcal{S}_i(s_{pub})$}
            \State $P_{-i}(h) \gets d_{-i}(s) P_{\mathcal{T}}(h)$
            \State $v_{i,c}(s) = \sum_{h \in \mathcal{H}(s)} P_{-i}(h) R_i(h)$
        \EndFor
      \EndFor
      \Return
  \EndIf
  \State \Comment{Update the distribution for children.}
  \For{$i \in \mathcal{N}$}
    \For{$s \in \mathcal{S}_i(s_{pub})$}
        \For{$sao \in \mathcal{S}_i$}
            \State $d_i(sao) = d_i(s) \pi_i(a)$
        \EndFor
    \EndFor
  \EndFor
  
  \State \Comment{Recursion.}
  \For{$s_{pub}o_{pub} : \mathcal{S}_{pub}$}
    \State \Call{ComputeValues}{$s_{pub}o_{pub}$, $d_1(s_{pub}o_{pub})$}
  \EndFor
  
  \State \Comment{Compute state-action values.}
  \For{$i \in \mathcal{N}$}
    \For{$s \in \mathcal{S}_i(s_{pub})$}
        \State $v_{i,c}(s) = \sum_{sao \in \mathcal{S}_i} \pi(s, a) v_{i,c}(sao)$
        \For{$a \in \mathcal{A}_i(s)$}
            \State $q_{i,c}(s, a) = \sum_{sao \in \mathcal{S}_i} v_{i,c}(sao)$
        \EndFor
    \EndFor
  \EndFor
\EndFunction
\State
\Function{UpdateRegrets}{}
\For{$s \in \mathcal{S}$}
    \For{$a \in \mathcal{A}(s)$}
        \State $R(s, a) += q_{i,c}(s, a) - v_{i,c}(s)$
    \EndFor
\EndFor
\EndFunction
\State
\Function{ComputeNextPolicies}{}
\For{$s \in \mathcal{S}$}
    \State $\pi(s)$ = \Call{RegretMatching}{$R(s)$}
\EndFor
\EndFunction

\end{algorithmic}
\end{algorithm}

\section{CFR+}
\label{sec:iig-cfr_plus}
\second{
CFR+ is an algorithm with an interesting history and impressive empirical performance.
Initially discovered by a hobbyist scientist \citep{tammelin2014solving}, CFR+ drastically improved state of the art methods and has been used to solve limit Texas hold'em poker --- the largest imperfect information game played by humans to be solved to this day \citep{bowling2015heads}, and the resulting agent Cepheus has also been open sourced\footnote{\href{http://poker.srv.ualberta.ca}{http://poker.srv.ualberta.ca}}.
}

\second{
CFR+ modifies vanilla CFR\footnote{As originally introduced in \citep{zinkevich2007regret}.} in three ways: 
i) it uses regret matching plus 
ii) it uses alternating updates
iii) and it uses linear averaging of the policy.

\paragraph{i) Regret Matching Plus}
\second{
\Gls{regret_matching_plus} (RM$^+$) selects actions just like regret matching, except that it caps all the regrets to zero after each iteration.
That is, we set $R(a) \leftarrow R(a)^+$.
RM$^+$ enjoys the same regret bounds as RM does.
}

\paragraph{ii) Alternating updates}
See Section \ref{sec:iig-alternating_updates}

\paragraph{iii) Linear averaging}

The average strategy in Theorem~\ref{thm:iig:folk_theorem} is a uniformly weighted average of the individual (current) policies $\overline{\pi_i} = \frac{1}{T} \sum^T_{t = 1} \pi^t_i $.
CFR+ rather uses linearly increasing weight for the later iterations $\overline{\pi_i} = \frac{2}{(T)(T+1)} \sum^T_{t = 1} t \pi^t_i $.
This averaging has been proven to be sound for CFR+, but does not work in general as the analysis does not apply to CFR.
}

\subsection{Empirical Performance}
\second{
CFR+ has some exciting empirical performance.
It often greatly outperforms CFR and even many algorithms with a $\frac{1}{T}$ convergence bound.
Figure~\ref{fig:iig-cfr+_vs_cfr} compares the convergence speed of CFR+ and CFR on Leduc poker and \glasses{}.
}

\begin{figure}[ht]
    \centering
    \begin{subfigure}[b]{0.45\textwidth}
        \includegraphics[width=\textwidth]{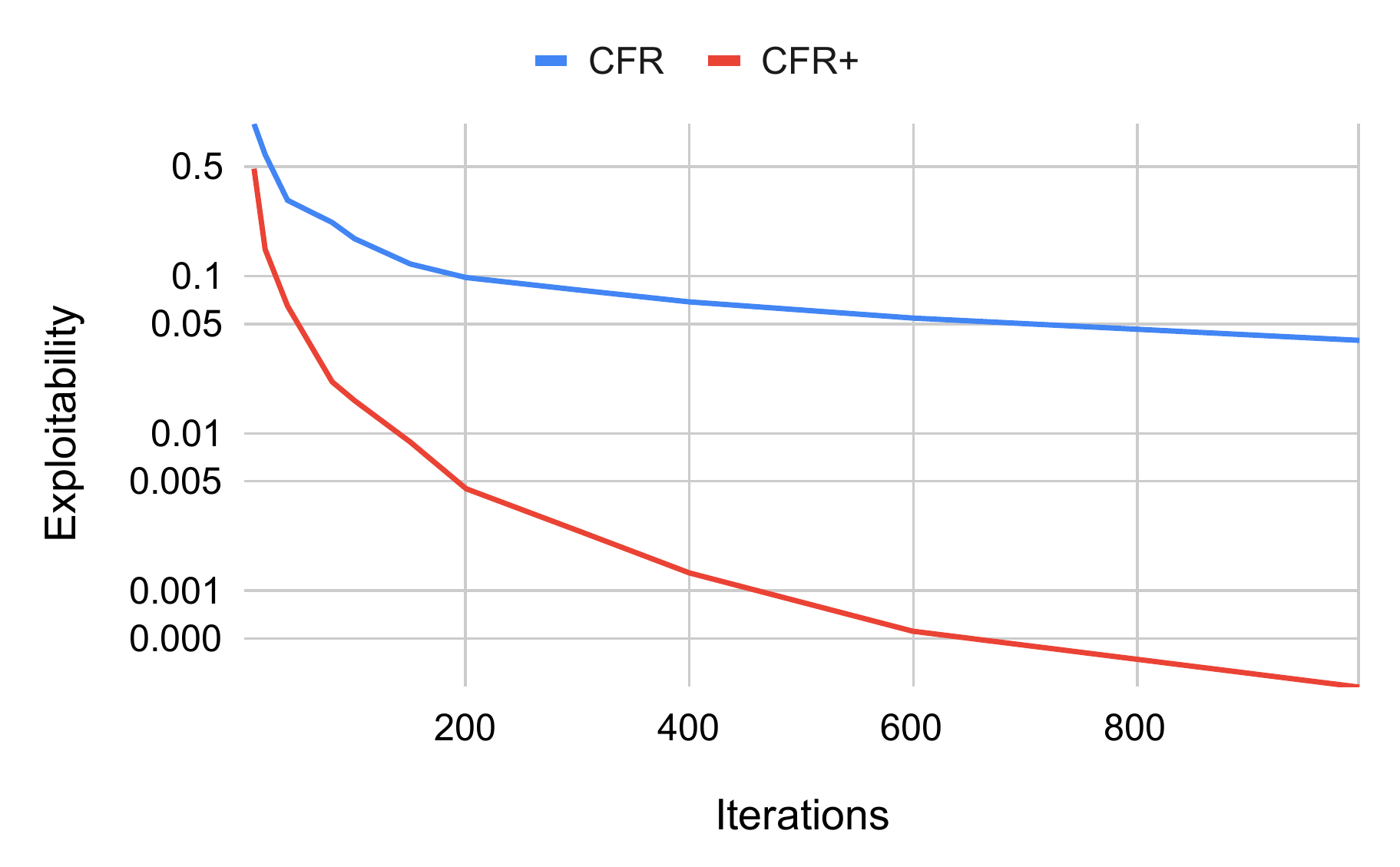}
        \caption{Leduc poker.}
    \end{subfigure}
    ~ 
    \begin{subfigure}[b]{0.45\textwidth}
        \includegraphics[width=\textwidth]{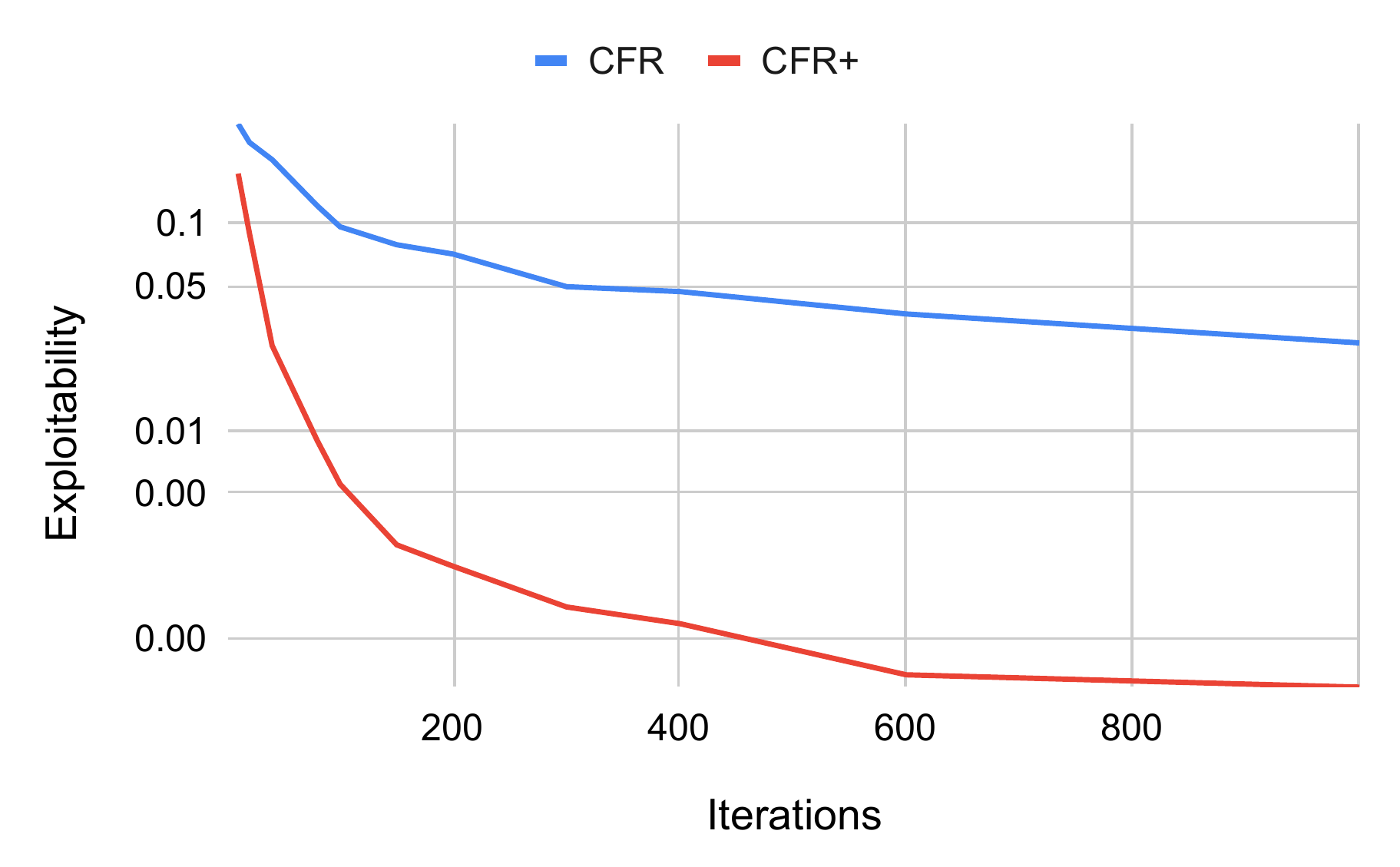}
        \caption{\Glasses{}.}
    \end{subfigure}
    \caption{
    Convergence of CFR and CFR+ tabular solvers.
    }
    \label{fig:iig-cfr+_vs_cfr}
\end{figure}

\second{
Why does CFR+ drastically outperform CFR and often converges much faster than its theoretical bound of $\frac{1}{\sqrt{T}}$?
The correct answer is that no-one really knows, though there are some appealing properties.
CFR+ has been initially analysed in \citep{tammelin2015solving}, where the authors show that the algorithm also guarantees a sub-linear tracking regret \citep{herbster1998tracking}.
Furthermore, alternating updates in the case of CFR+ are also soundly motivated \citep{burch2019revisiting}.

Unfortunately, it seems that the strong empirical performance of the algorithm is indeed just empirical.
While CFR+ often converges much faster than its bound would suggest, there are known cases where the convergence speed is much closer to the $\frac{1}{\sqrt{T}}$ bound, performing no better than CFR.
The common belief is that this algorithm's convergence rate is not $\frac{1}{T}$ in the worst case \citep{burch2018time, farina2019optimistic}.
}

\section{Monte Carlo CFR}
\label{sec:iig-mccfr}
\second{
While CFR methods traverse the full game tree at each iteration, Monte Carlo methods sample a subset of all such trajectories.
Technically, Monte Carlo CFR (MCCFR) is a family of algorithms defined by a partition $\mathcal{Q}$ of the terminal histories $\mathcal{Z}$, sampling a block $\mathcal{Q}_i \in \mathcal{Q}$ at each iterations and updating only the relevant information states for that block.
In combination with a proper importance sampling correction term, one can then guarantee that the values and updates are in expectation the same as in CFR, and derive a probabilistic bound for the regrets \citep{lanctot2009monte, Lanctot13phdthesis}.

This has a clear benefit of much faster per-iteration time, but the price one has to pay is that the values are noisy due to the variance introduced by the sampling.
Smaller blocks $Q_i$ provide faster iterations and higher variance.
Different sampling schemes thus essentially balance per-speed iteration and variance.
}

\subsection{Outcome Sampling}
\second{
We describe a particular member of the MCCFR family that samples a single history at each iteration.
That is, the partition $\mathcal{Q} = \mathcal{Z}$, making the notation particularly concise.
Furthermore, sampling a single history emphasizes similarities to some reinforcement learning algorithms and will simplify the reduced-variance variant of the algorithm presented in the next section.

Outcome sampling samples a single history $z \in \mathcal{Z}$, updating only the visited information states on the sampled trajectory.
To make sure each terminal history is sampled with non-zero probability, we restrict the sampling policy $\xi \in \Delta A(s)$ for all $s$ so that $\xi(s,a) > 0$ everywhere.
On each iteration $z \sim \xi$, the algorithm replaces counterfactual value with sampled counterfactual value $\tilde{v}^{\pi}_{i,c}(s | z)$.
\begin{align}
\label{eq:iig-sampled_cfv}
\tilde{v}^{\pi}_{i,c}(s | z) = \tilde{v}^{\pi}_{i,c}(h | z) = \frac{ P^{\pi}_{-i}(h) P^{\pi}(h,z) R_i(z)}{P^{\xi}(z)}, \mbox{ for } h \in s, h \sqsubseteq z, 
\end{align}
and $0$ for histories that were not played ($h' \not \sqsubseteq z$).
Note that the term is simply a counterfactual value computed from a single history $z$ with the importance sampling term $P^{\xi}(z)$.
Such estimate is then provably unbiased 
$\EX{}_{z \sim \xi}[\tilde{v}^{\pi}_{i,c}(s | z)] = v^{\pi}_{i,c}(s, I)$ \citep[Lemma 1]{lanctot2009monte}.
As a result, we can replace $v_{i,c}$ and $q_{i,c}$ in Equation~\ref{eq:iig-counterfactual_regret} with $\tilde{v}_{i,c}$ and $\tilde{q}_{i,c}$ respectively, resulting in the accumulate estimated regrets.

\begin{align}
 \tilde{R}^t(s,a) = \tilde{q}^{\pi}_{i,c}(s, a) - \tilde{v}^{\pi}_{i,c}(s)    
\end{align}

The corresponding regret bound then requires an additional term $\frac{1}{\min_{z \in \mathcal{Z}}P(z)}$, which is exponential in the length of $z$ (Theorem \ref{thm:iig-mccfr_regret_bound}).
Similar observations have been made in reinforcement learning \citep{arjona2018rudder}.

\begin{thm}
\label{thm:iig-mccfr_regret_bound}
\citep[Theorem 5]{lanctot2009monte}
For any $p \in (0,1]$, when using outcome-sampling MCCFR where $\forall z \in \mathcal{Z}$ either $P^{\pi}_{-i}(z) = 0$ or $P^{\xi}(z) \geq \delta > 0$ at every timestap, with probability $1-p$, average overall regret is bounded by $R^T_i \leq (1+\frac{\sqrt{2}}{\sqrt{p}}) (\frac{1}{\delta})\Delta_R \frac{\sqrt{|A_i|}}{\sqrt{T}}$
\end{thm}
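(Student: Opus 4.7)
The plan is to combine the counterfactual regret decomposition (Theorem~\ref{thm:iig-cfr}) applied to the \emph{sampled} regrets with a concentration argument that controls the deviation between the sampled cumulative regret and the true one. The sampled quantity $\tilde{R}^T_i(s,a)$ is a sum of terms whose expectation equals $R^T_i(s,a)$ by the unbiasedness of $\tilde{v}^{\pi}_{i,c}$ (Eq.~\ref{eq:iig-sampled_cfv}), and whose range is controlled by the condition $P^{\xi}(z) \geq \delta$. This gives two ingredients: a deterministic bound on $\tilde{R}^T_i$ coming from regret matching, plus a stochastic deviation bound that turns it into a statement about the true $R^T_i$.

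First I would apply Theorem~\ref{thm:iig-cfr} to the sampled regrets to get $\tilde{R}^T_i \leq \sum_{s \in \mathcal{S}_i} \tilde{R}^T_i(s)^{+}$, and then note that outcome-sampling MCCFR effectively runs regret matching in each infoset on a sequence of estimated reward vectors $\tilde{q}^{\pi^t}_{i,c}(s,\cdot)$. Second, by the assumption that $P^\xi(z) \geq \delta$ whenever $P^\pi_{-i}(z) > 0$, every sampled counterfactual value satisfies $|\tilde{v}^\pi_{i,c}(s \mid z)| \leq \Delta_R/\delta$, and the same bound holds for $\tilde{q}^\pi_{i,c}(s,a)$. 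Plugging this range into the regret matching bound (Eq.~\ref{eq:iig:regret_matching_bound}), summed over infosets, yields a deterministic bound of the form $\tilde{R}^T_i \leq (\Delta_R/\delta)\sqrt{|A_i|\,T}$ (possibly with additional structural factors that we expect to absorb into $|A_i|$ as in \cite{zinkevich2007regret}).

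Third, I would translate this sampled bound into one on $R^T_i$ by controlling $R^T_i - \tilde{R}^T_i$. Because $\tilde{v}^{\pi^t}_{i,c}(s \mid z_t)$ is conditionally unbiased for $v^{\pi^t}_{i,c}(s)$ and the per-step deviation is bounded by $\Delta_R/\delta$, the differences form a bounded martingale-difference sequence. Applying either Azuma-Hoeffding or a second-moment/Markov argument gives, with probability $\geq 1-p$, a bound $|R^T_i - \tilde{R}^T_i| \leq (\sqrt{2}/\sqrt{p})(\Delta_R/\delta)\sqrt{|A_i|\,T}$ (the polynomial-in-$1/p$ form, rather than $\sqrt{\log(1/p)}$, corresponds to a second-moment bound, which matches the stated constant). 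Combining the deterministic and probabilistic pieces gives the theorem, after dividing by $T$ to obtain the average regret.

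The main obstacle I expect is the concentration step: one has to verify that the increments $\tilde{v}^{\pi^t}_{i,c}(s \mid z_t) - v^{\pi^t}_{i,c}(s)$ genuinely form a martingale-difference sequence with respect to the natural filtration generated by $z_1,\dots,z_t$, despite the fact that the policies $\pi^t$ themselves depend on all past samples through regret matching, and that the bounded-increment constant is exactly $\Delta_R/\delta$ rather than $2\Delta_R/\delta$ or similar. Getting the leading constant $\sqrt{2}$ and the $1/\sqrt{p}$ (rather than $\sqrt{\log(1/p)}$) scaling right will require careful accounting, likely via a variance bound combined with Markov's inequality applied to a suitable non-negative functional of the martingale, rather than a direct Hoeffding-type tail.
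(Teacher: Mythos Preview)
The paper does not actually prove this theorem: it is stated with attribution to \cite[Theorem~5]{lanctot2009monte} and no proof is supplied in the thesis. So there is nothing in the paper itself to compare your argument against.

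That said, your sketch follows the same architecture as the proof in the cited reference: unbiasedness of $\tilde v^{\pi}_{i,c}$, a uniform $\Delta_R/\delta$ range bound from the assumption $P^{\xi}(z)\ge\delta$, regret-matching bounds applied to the sampled counterfactual regrets at each information set, and then a second-moment/Markov concentration step (yielding the $\sqrt{2}/\sqrt{p}$ factor rather than a $\sqrt{\log(1/p)}$ one). One point to be careful about: in the original argument the concentration is carried out at the level of individual $(s,a)$ sampled regrets and then aggregated through the CFR decomposition, not by bounding $R^T_i-\tilde R^T_i$ directly as a single martingale; doing it your way would introduce an extra union-bound or variance-aggregation factor that needs to be tracked. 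Also, the ``structural factors'' you mention are genuinely present in the original bound (the $M_i$ constant in \cite{lanctot2009monte}); the thesis statement suppresses them, so do not expect them to vanish in a careful write-up.
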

}

\subsubsection{Recurrent Computation}
\second{
A particularly insightful formulation is to first recurrently formulate sampled state and state-action values $\tilde{q}^{\pi}_i(h, a | z) $ and $\tilde{v}^{\pi}_i(h | z)$, just like one would compute and propagate the values in a tree traversal.

\begin{align}
\label{eq:iig-mccfr_boostraped_q}
\tilde{q}^{\pi}_i(h, a | z) = \left\{ \begin{array}{ll}
     \tilde{v}^{\pi}_i(ha | z) / {\xi(h,a)}   & \mbox{if $ha \sqsubseteq z$} \\
     0 & \mbox{if $h \sqsubset z$, $ha \not\sqsubseteq z$} \\
     0 & \mbox{otherwise}\end{array} \right.
\end{align}
 
\begin{align}
\label{eq:iig-mccfr_boostraped_v}
\tilde{v}^{\pi}_i(h | z) = \left\{ \begin{array}{ll}
R_i(h) & \mbox{if $h = z$} \\
\sum_{a} \pi(h,a) \tilde{q}^{\pi}_i(h, a | z) & \mbox{if $h \sqsubset z$} \\
0 & \mbox{otherwise}\end{array}\right.
\end{align}

These values are then turned into corresponding sampled counterfactual values by

\begin{align}
\tilde{q}^{\pi}_{i,c}(h, a | z) = \frac{P^{\pi}_{-i}(h)}{P^{\xi}(h)} \tilde{q}^{\pi}_{i}(h, a | z)
\end{align}
}

\subsection{Convergence Speed}
\second{
Unfortunately, MCCFR methods usually converge slower than their full-tree traversal counterparts (even when the comparison takes into account the faster iteration time).
The main problem with the sampling variants is that they introduce variance that can have a significant effect on long-term convergence \citep{gibson2012generalized}.
Furthermore, CFR+ loses its magic and MCCFR+ is usually outperformed by MCCFR \citep{burch2018time}.
One way to look at this problem is that it is unlikely to get convergence near $\frac{1}{T}$ in the
sampling settings, as the central limit theorem tells us that the average value converges to the expected one at the rate of $\frac{\sigma}{\sqrt{T}}$.
Thus one would have to drastically decrease the sampling variance to get rid of this term.
}

\section{Variance Reduced MCCFR}
\label{sec:iig-vrmccfr}

\begin{figure}[ht]
  \centering
  \includegraphics[width=8cm]{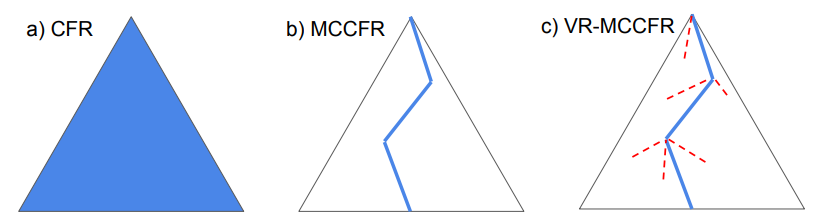}
  \caption{High-level overview of Variance Reduction MCCFR (VR-MCCFR) and related methods. 
  a) CFR traverses
the entire tree on every iteration. 
b) MCCFR samples trajectories and computes the values only for the sampled actions, while the off-trajectory actions are treated as zero valued. While MCCFR uses importance sampling weight to
ensure the values are unbiased, the sampling introduces high
variance. 
c) VR-MCCFR follows the same sampling framework as MCCFR, but uses baseline values for both sampled actions (in blue) as well as the off-trajectory actions (in red). 
These baselines use control variates and send up
bootstrapped estimates to decrease the per-iteration variance
thus speeding up the convergence.}
\label{fig:iig-vrmccfr}
\end{figure}

\second{
Variance Reduced MCCFR (VR-MCCFR) \citep{schmid2019variance} and its later variants \citep{davis2020low} are powerful methods that decrease the per-sample variance in MCCFR, resulting in orders of magnitude faster convergence.
VR-MCCFR is also another contribution of this thesis.

At the core of the method is the general idea of control variates.
The idea is to estimate the value of non-sampled actions and use the control variates technique to make sure the expectation of the action-values remains unbiased regardless of the estimates.
While any estimate leads to unbiased values, poor estimates can increase the variance rather than decrease it.
The closer the estimates are to the true value, the lower the variance.
Furthermore, VR-MCCFR recursively combines the estimates as it propagates values up the tree, propagating the benefits of the value estimates.
See Figure \ref{fig:iig-vrmccfr} for comparison of CFR, MCCFR and VR-MCCFR.
}

\subsection{Control Variates}
\label{sec:iig-control_variates}
\second{
Suppose we are trying to estimate a mean from samples $X = (X_1, X_2, \hdots, X_n)$.
The Monte Carlo estimator is then simply $\tilde{X}^{mc} = \frac{1}{n} \sum^n_{i=1} X_i$.
A control variate is a random variable $Y$ with a known mean $\EX[Y]$ to be paired with the original variable, resulting in a new random variable $Z_i = X_i + c(Y_i - \EX[Y])$ (with a corresponding Monte Carlo estimator $\tilde{Z}^{mc}$).
Since $E[Z_i] = \EX[X_i]$ for any $c$ we can use $\tilde{Z}^{mc}$ instead of $\tilde{X}^{mc}$
with variance $Var[Z_i] = Var[X_i] + c^2Var[Y_i] + 2cCov[X_i, Y_i]$ \citep{owen2016monte}.
So when $X$ and $Y$ are positively correlated and $c < 0$, variance is reduced when $Cov[X,Y] > \frac{c^2}{2} Var[Y]$.
}

\subsection{Baseline Enhanced Estimated Values}
\second{
VR-MCCFR constructs value estimates using control variates.
We first define an action-dependent baseline $b_i(s, a)$ to be used as a control variate to approximate or be correlated with $\EX[\tilde{q}^{\pi}_i(s, a)]$.
We also define a sampled baseline $\tilde{b}_i(s, a)$ for which $\EX[\tilde{b}_i(s, a)] = b_i(s, a)$.  
We can now readily use these terms in the control variate framework to construct a new baseline-enhanced estimate for the state-action values:

\begin{align}
\label{eq:baseline-enhanced-cfv}
\tilde{q}^{b, \pi}_{i,c}(s, a) = \tilde{q}^{\pi}_{i,c}(s, a) - \tilde{b}_i(s, a) +  b_i(s, a) 
\end{align}

First, note that $\tilde{b}_i$ is a control variate with $c = -1$ and thus the expectation remains unchanged (Lemma \ref{lemma:iig-vrmccfr_baseline_unbiased}).
Equation \ref{lemma:iig-vrmccfr_baseline_unbiased} essentially corresponds to a local adjustment of sampled counterfactual values in a single state.
Figure \ref{fig:iig-vrmccfr_comparison} illustrates such local computation for CFR, outcome sampling MCCFR and outcome sampling VR-MCCFR.
}

\begin{lemma}
\label{lemma:iig-vrmccfr_baseline_unbiased}
\citep[Lemma 1]{schmid2019variance}
For any $i \in \mathcal{N} - \{c\}, \pi_i, s \in \mathcal{S}_i, a \in \mathcal{A}_i(s)$, if $\EX[\tilde{b}_i(s, a)] = b_i(s, a)$ and $\EX[\tilde{q}^{\pi}_{i,c}(s, a)] = q^{\pi}_{i,c}(s, a)$, then $\EX[\tilde{q}_{i,c}^{b, \pi}(s, a)] = q^{\pi}_{i,c}(s, a)$.
\end{lemma}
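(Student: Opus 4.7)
The plan is to prove this by a direct one-line application of linearity of expectation, exploiting the two hypotheses to cancel the baseline terms. The statement is essentially the formal restatement of the control variate identity already discussed in Section~\ref{sec:iig-control_variates}, specialized to $c=-1$ and applied to the sampled counterfactual value.

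First I would expand $\EX[\tilde{q}_{i,c}^{b, \pi}(s, a)]$ using the defining Equation~\ref{eq:baseline-enhanced-cfv}, writing
\begin{align*}
\EX[\tilde{q}_{i,c}^{b, \pi}(s, a)] = \EX\bigl[\tilde{q}^{\pi}_{i,c}(s, a) - \tilde{b}_i(s, a) + b_i(s, a)\bigr].
\end{align*}
Note that $b_i(s,a)$ is a deterministic quantity (not a random sample), so it passes through the expectation unchanged. By linearity of expectation, the right-hand side decomposes as
\begin{align*}
\EX[\tilde{q}^{\pi}_{i,c}(s, a)] - \EX[\tilde{b}_i(s, a)] + b_i(s, a).
\end{align*}

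Next I would apply the two hypotheses of the lemma: the second hypothesis $\EX[\tilde{q}^{\pi}_{i,c}(s, a)] = q^{\pi}_{i,c}(s, a)$ (which is the standard unbiasedness of sampled counterfactual values from Section~\ref{sec:iig-mccfr}, cf.\ the discussion following Equation~\ref{eq:iig-sampled_cfv}) and the first hypothesis $\EX[\tilde{b}_i(s, a)] = b_i(s, a)$ (which is assumed of the sampled baseline by construction). Substituting yields
\begin{align*}
\EX[\tilde{q}_{i,c}^{b, \pi}(s, a)] = q^{\pi}_{i,c}(s, a) - b_i(s, a) + b_i(s, a) = q^{\pi}_{i,c}(s, a),
\end{align*}
which is the desired conclusion.

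There is no real obstacle here: the entire argument is a cancellation of the $-\tilde{b}_i(s,a) + b_i(s,a)$ contribution in expectation, which is precisely the mechanism by which a control variate leaves the mean unchanged regardless of the quality of the baseline. The only subtlety worth flagging in the write-up is that $b_i(s,a)$ must indeed be treated as deterministic (a fixed function, not a random variable whose distribution depends on the trajectory), so that $\EX[b_i(s,a)] = b_i(s,a)$; this is consistent with how the action-dependent baseline is introduced in the preceding paragraph.
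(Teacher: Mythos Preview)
Your proof is correct and is exactly the control-variate argument the paper points to: the thesis does not spell out a proof of this lemma but simply remarks that ``$\tilde{b}_i$ is a control variate with $c = -1$ and thus the expectation remains unchanged,'' citing \cite[Lemma~1]{schmid2019variance}. Your expansion via linearity of expectation and cancellation of $-\EX[\tilde b_i(s,a)] + b_i(s,a)$ is precisely that observation made explicit.
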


\begin{figure}[ht]
    \centering
    \begin{subfigure}[b]{0.3\textwidth}
        \includegraphics[width=\textwidth]{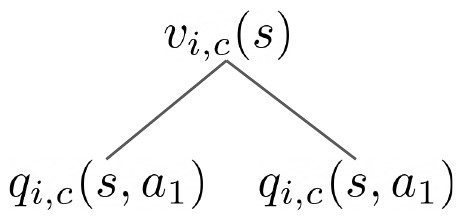}
        \caption{}
    \end{subfigure}
    \begin{subfigure}[b]{0.3\textwidth}
        \includegraphics[width=\textwidth]{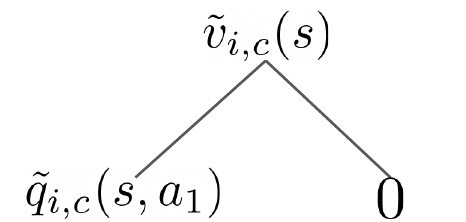}
        \caption{}
    \end{subfigure}
    \begin{subfigure}[b]{0.3\textwidth}
        \includegraphics[width=\textwidth]{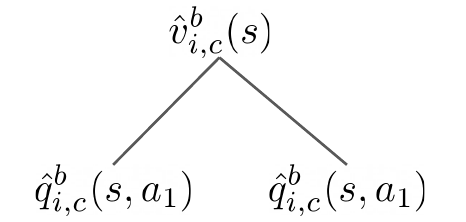}
        \caption{}
    \end{subfigure}
    \caption{a) CFR computes the exact value by traversing all the actions.
             b) Outcome sampling MCCCFR samples a single action and uses zero for the values of the non-sampled actions.
             c) VR-MCCFR uses baselines to adjust the estimates of all the actions. }
\label{fig:iig-vrmccfr_comparison}
\end{figure}

\subsubsection{Recursive Bootstrapping}
\second{
Given the recursive nature of the counterfactual values propagation, we can propagate the already baseline-enhanced
counterfactual values rather than the noisy sampled values.
This allows us to propagate the benefits up the tree rather than using the control variates trick in isolation.
Just as we recursively formulated sampled state and state-action values (Equation \ref{eq:iig-mccfr_boostraped_q} and \ref{eq:iig-mccfr_boostraped_q}) that were then used to formulate the sampled counterfactual values for outcome sampling, we recursively formulate the baseline-enhanced variants.

\begin{align}
\label{eq:bootstrapped-ub}
\tilde{q}^{b, \pi}_i(h, a | z) = \left\{ \begin{array}{ll}
 b_i(s(h), a) + \frac{\tilde{v}^{b, \pi}_i(ha | z) - b_i(s(h), a)}{\xi(h,a)} & \mbox{if $ha \sqsubseteq z$} \\
 b_i(s(h), a) & \mbox{if $h \sqsubset z$, $ha \not\sqsubseteq z$} \\
 0 & \mbox{otherwise}\end{array} \right.
\end{align}
and
\begin{align}
\label{eq:bootstrapped-ub-history}
\tilde{v}^{b, \pi}_i( h | z) = \left\{ \begin{array}{ll}
  u_i(h) & \mbox{if $h = z$} \\
  \sum_{a} \pi(h,a) \tilde{q}^{b, \pi}_i(h, a | z) & \mbox{if $h \sqsubset z$} \\
  0 & \mbox{otherwise} \end{array}\right.
\end{align}

Just like in outcome sampling, we then turn these estimates of state and state-action values into estimates of counterfactual values, which are in turn used to update the regrets.

\begin{align}
\label{eq:iig-vrmmcfr_bootstrapped_cfv}
\tilde{q}^{b, \pi}_{i,c}(s, a | z) = \tilde{q}^{b,\pi}_{i,c}(h, a | z) = \frac{P^{\pi}_{-i}(h)}{P^{\xi}(h)} \tilde{q}^{b, \pi}_{i}(h, a | z)
\end{align}

The resulting values can be then 
Note that these values collapse to the original outcome sampling for $b_i(s, a) = 0$ ($\tilde{q}^{b, \pi}_i(s, a)$ becomes $\tilde{q}^{\pi}_i(s, a)$).
Outcome sampling (and other MCCFR variants) can thus be viewed as VR-MCCFR algorithm with particular choice of zero baseline.

\begin{lemma}
\label{lemma:iig-vrmccfr_bootstrapped_unbiased}
\citep[Lemma 2]{schmid2019variance}
Let $\tilde{q}_{i,c}^{b, \pi}$ be defined as in Equation~\ref{eq:iig-vrmmcfr_bootstrapped_cfv}. 
Then, for any $i \in \mathcal{N} - \{c\}, \pi_i, s \in \mathcal{S}_i, a \in \mathcal{A}(s)$, it holds that $\EX_z[\tilde{q}_{i,c}^{b, \pi}(s, a | z)] = q_{i,c}^{\pi}(s, a)$.
\end{lemma}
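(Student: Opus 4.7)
The plan is to prove this by backward induction over the depth of the history $h \in \mathcal{H}$, showing the stronger pair of statements
\begin{align*}
  \EX_z[\tilde{v}^{b,\pi}_i(h|z)] &= P^{\xi}(h) \cdot v^{\pi}_i(h),
  \\
  \EX_z[\tilde{q}^{b,\pi}_i(h,a|z)] &= P^{\xi}(h) \cdot q^{\pi}_i(h,a),
\end{align*}
simultaneously for every $h$ and every $a \in \mathcal{A}(h)$, where $P^{\xi}(h) = \sum_{z \succeq h} P^{\xi}(z)$ is the probability that $h$ lies on the sampled trajectory. The claim of the lemma then follows in one line: since $\tilde{q}^{b,\pi}_i(h,a|z) = 0$ whenever $h \not\sqsubseteq z$, we have
\begin{equation*}
  \EX_z\!\left[\tilde{q}^{b,\pi}_{i,c}(s,a|z)\right]
  = \sum_{h \in \mathcal{H}(s)} \frac{P^{\pi}_{-i}(h)}{P^{\xi}(h)} \EX_z[\tilde{q}^{b,\pi}_i(h,a|z)]
  = \sum_{h \in \mathcal{H}(s)} P^{\pi}_{-i}(h) q^{\pi}_i(h,a)
  = q^{\pi}_{i,c}(s,a),
\end{equation*}
by Equation~\ref{eq:iig-cf_q_values}.

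For the base case (terminal $h = z'$ for some $z' \in \mathcal{Z}$), the definition of $\tilde{v}^{b,\pi}_i$ gives $\tilde{v}^{b,\pi}_i(h|z) = u_i(h)$ exactly when $z = h$ and $0$ otherwise, so the expectation collapses to $P^{\xi}(h) \cdot u_i(h) = P^{\xi}(h) \cdot v^{\pi}_i(h)$. For the inductive step on $\tilde{v}$, note that for non-terminal $h$ the value $\tilde{v}^{b,\pi}_i(h|z)$ is nonzero only when $h \sqsubset z$, and then it equals $\sum_a \pi(h,a) \tilde{q}^{b,\pi}_i(h,a|z)$; since $\tilde{q}^{b,\pi}_i(h,a|z)$ is itself nonzero only when $h \sqsubseteq z$, we obtain $\EX_z[\tilde{v}^{b,\pi}_i(h|z)] = \sum_a \pi(h,a) \EX_z[\tilde{q}^{b,\pi}_i(h,a|z)]$, and the $v$-statement follows from the $q$-statement together with the identity $v^{\pi}_i(h) = \sum_a \pi(h,a) q^{\pi}_i(h,a)$.

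The heart of the argument is the inductive step for $\tilde{q}$. Splitting on the three cases of Equation~\ref{eq:bootstrapped-ub} and using $\Pr[ha \sqsubseteq z] = P^{\xi}(h)\xi(h,a)$ and $\Pr[h \sqsubset z,\, ha \not\sqsubseteq z] = P^{\xi}(h)(1-\xi(h,a))$:
\begin{align*}
  \EX_z[\tilde{q}^{b,\pi}_i(h,a|z)]
  &= P^{\xi}(h)\xi(h,a)\!\left(b_i(s(h),a) + \tfrac{\EX[\tilde{v}^{b,\pi}_i(ha|z)\mid ha \sqsubseteq z] - b_i(s(h),a)}{\xi(h,a)}\right)
  \\
  &\quad + P^{\xi}(h)(1-\xi(h,a))\, b_i(s(h),a).
\end{align*}
Applying the induction hypothesis via $\EX[\tilde{v}^{b,\pi}_i(ha|z) \mid ha \sqsubseteq z] = \EX_z[\tilde{v}^{b,\pi}_i(ha|z)]/(P^{\xi}(h)\xi(h,a)) = v^{\pi}_i(ha)$, the two $b_i$ terms from the ``sampled'' and ``not sampled'' branches combine exactly as in Lemma~\ref{lemma:iig-vrmccfr_baseline_unbiased} and cancel, leaving $P^{\xi}(h)\cdot v^{\pi}_i(ha) = P^{\xi}(h) \cdot q^{\pi}_i(h,a)$.

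The main obstacle I expect is purely bookkeeping: being careful that $\xi$-probabilities are consistently unpacked into ``probability of reaching $h$'' times ``probability of choosing $a$ at $h$'' times ``probability of continuing past $ha$'', and that the off-trajectory branch (where $h \sqsubset z$ but $ha \not\sqsubseteq z$) correctly contributes the raw baseline $b_i(s(h),a)$ so that the control-variate cancellation of Lemma~\ref{lemma:iig-vrmccfr_baseline_unbiased} goes through at every level rather than just at the leaves. Once that accounting is set up cleanly, the bootstrapped recursion does not introduce any bias beyond what is already handled locally.
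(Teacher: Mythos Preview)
Your argument is correct. The thesis itself does not give a proof of this lemma --- it merely cites \cite{schmid2019variance} --- but your backward induction on tree depth, first establishing $\EX_z[\tilde v^{b,\pi}_i(h\mid z)] = P^{\xi}(h)\,v^{\pi}_i(h)$ and $\EX_z[\tilde q^{b,\pi}_i(h,a\mid z)] = P^{\xi}(h)\,q^{\pi}_i(h,a)$ at every history and then lifting to the counterfactual level, is exactly the standard argument used in the cited source.
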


}

\subsection{Choice of Baselines}
\second{
While any choice of baseline leads to unbiased estimates, the closer the estimate to the true value, the better (Theorem \ref{thm:iig-vrmccfr_baseline_variance}).
A typical choice of the estimates is to simply use values from previous iterations (e.g. with exponentially decaying weight), as this value is unlikely to drastically change.
This simple choice already leads to a drastic speed improvement over MCCFR.

\begin{thm}{\cite[Theorem 2]{gibson2012generalized}}
\label{thm:iig-vrmccfr_baseline_variance}
For some unbiased estimator of the counterfactual values $\tilde{v}_i$ and a bound on the difference in its value $\tilde{\Delta}_i = |\tilde{v}_i(\pi, I, a) - \tilde{v}^{\pi}_i(I, a')|$, with probability $1-p$
\[
\frac{R_i^T}{T} \leq \left( \tilde{\Delta}_i + \frac{\sqrt{ \max_{t, s, a} Var[r_i^t(s,a) - \tilde{r}_i^t(s,a)] }}{\sqrt{p}} \right) \frac{|\mathcal{S}_i| |\mathcal{A}_i|}{\sqrt{T}}.
\]
\end{thm}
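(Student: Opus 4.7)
The plan is to adapt the standard CFR regret-decomposition argument, replacing the deterministic counterfactual values with the unbiased estimator $\tilde{v}_i$ and then absorbing the stochastic deviation through a concentration inequality. The proof proceeds in three logical stages: (i) a per-information-state regret bound in terms of the effective value range, (ii) a global aggregation of these bounds via Theorem~\ref{thm:iig-cfr}, and (iii) a high-probability control of the gap between estimated and true regrets using the variance assumption.

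First I would invoke the regret-matching bound applied to the estimated counterfactual rewards $\tilde{r}_i^t(s,a) = \tilde{q}_{i,c}^{\pi^t}(s,a) - \tilde{v}_{i,c}^{\pi^t}(s)$. Because regret matching run on a stream of reward vectors whose coordinates lie in an interval of width at most $\tilde{\Delta}_i$ achieves $\tilde{R}_i^T(s) \le \tilde{\Delta}_i \sqrt{|\mathcal{A}_i| T}$, summing over all $s \in \mathcal{S}_i$ via Theorem~\ref{thm:iig-cfr} yields a bound on the \emph{estimated} overall regret of order $\tilde{\Delta}_i |\mathcal{S}_i| \sqrt{|\mathcal{A}_i| T}$. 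Dividing by $T$ produces the first additive term $\tilde{\Delta}_i |\mathcal{S}_i||\mathcal{A}_i|/\sqrt{T}$ (after absorbing $\sqrt{|\mathcal{A}_i|}$ into $|\mathcal{A}_i|$ for a uniform statement).

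Next I would translate between estimated and true regret. Since the estimator is unbiased, the per-iteration difference $D_i^t(s,a) = r_i^t(s,a) - \tilde{r}_i^t(s,a)$ has zero mean conditional on the history up to time $t$, so $\{D_i^t(s,a)\}_t$ forms a martingale difference sequence whose per-step variance is bounded by $\max_{t,s,a} \mathrm{Var}[r_i^t(s,a) - \tilde{r}_i^t(s,a)]$. I would apply Chebyshev's inequality (or an Azuma/Freedman-type martingale concentration) to the sum $\sum_{t=1}^T D_i^t(s,a)$, obtaining, with probability at least $1-p$, a deviation of order $\sqrt{T \cdot \max \mathrm{Var}[\cdot]/p}$ for each $(s,a)$, and then take a union bound (or aggregate through the same per-state decomposition) to account for all information states.

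Finally I would combine the two pieces: the true regret $R_i^T$ at state $s$ is at most the estimated regret plus the maximum deviation $|\sum_t D_i^t(s,a)|$. Summing over $\mathcal{S}_i$ via Theorem~\ref{thm:iig-cfr} and dividing by $T$ gives the second additive term $\sqrt{\max \mathrm{Var}[\cdot]}/(\sqrt{p}\sqrt{T}) \cdot |\mathcal{S}_i||\mathcal{A}_i|$, matching the stated bound. The main obstacle will be step (iii): carefully handling the dependence of $\pi^t$ on previously drawn samples so that the martingale structure is valid, and ensuring the variance bound carries through the non-i.i.d. sequence of per-state rewards rather than requiring independence; using a conditional-variance (Freedman-style) inequality should sidestep this cleanly, at the cost of replacing Chebyshev by a slightly sharper martingale concentration result.
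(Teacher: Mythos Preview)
The paper does not contain its own proof of this statement: the theorem is quoted verbatim from \cite{gibson2012generalized} and no proof environment follows it in the text. There is therefore nothing in the present paper to compare your argument against line by line.

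That said, your three-stage plan is precisely the structure of the original proof in \cite{gibson2012generalized}: bound the per-infoset regret of regret matching on the estimated reward stream using the effective range $\tilde{\Delta}_i$, aggregate via the CFR decomposition (Theorem~\ref{thm:iig-cfr}), and then control $\sum_t (r_i^t - \tilde r_i^t)$ with a second-moment concentration argument. One small remark: the martingale subtlety you flag in step~(iii) is handled in the original by a direct Chebyshev bound on the variance of the cumulative difference (the unbiasedness is conditional on the filtration generated by past samples, so the cross terms vanish in expectation), rather than by Freedman; your proposed refinement would only sharpen constants, not change the argument. The loose replacement of $\sqrt{|\mathcal{A}_i|}$ by $|\mathcal{A}_i|$ that you note is indeed present in the stated bound and is simply a deliberate coarsening, not a gap in your reasoning.
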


Another key result is that there exists a perfect baseline that leads to zero-variance estimates at the updated information sets (Lemma~\ref{lemma:iig-vr_mccfr_oracle_baseline}).
Since the oracle baseline corresponds to the exact value, it is as hard to compute as the full tree traversal of CFR and thus impractical.
On the other hand, it can be used to analyze the performance of the algorithm and how our choice of baseline compares to the optimal one.
Note that even with the oracle baseline, VR-MCCFR is still not identical to CFR as VR-MCCFR only updates a subset of information states (albeit with the exact counterfactual values) while CFR updates the entire tree.

\begin{lemma}{\cite[Lemma 3]{schmid2019variance}}
\label{lemma:iig-vr_mccfr_oracle_baseline}
There exists a perfect baseline $b^*$ and optimal unbiased estimator $\tilde{v}^{*,\pi}_i(s, a)$ such that under a specific update scheme:  $Var_{h,z \sim \xi, s \in \mathcal{S}, h \sqsubseteq z}[\tilde{v}^{*, \pi}_i(h, a|z)] = 0$.
\end{lemma}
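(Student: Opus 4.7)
The plan is to construct the claimed baseline/estimator pair explicitly and verify by induction on the tree that the bootstrapped state values become deterministic along any sampled trajectory.

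First I would unwind what ``zero variance at updated information sets'' means. Since the variance is taken over $z \sim \xi$ conditioned on $h \sqsubseteq z$, it suffices to exhibit $b^*$ and $\tilde{v}^{*,\pi}_i$ such that for every fixed $h$ on the sampled trajectory the random variable $\tilde{v}^{*,\pi}_i(h \mid z)$ takes one and the same value, namely the true counterfactual (or state) value $v^\pi_i(h)$, irrespective of how the remainder of $z$ is sampled. The ``specific update scheme'' will simply be the scheme that only updates infostates $s(h)$ along $z$ and that permits the baseline to depend on history (equivalently, it permits the oracle to select, for each histories $h$ in $s$ reached along a given rollout, the value relevant to that $h$); this is exactly the extra freedom we need to overcome the mismatch between history-level truth and infostate-level baselines.

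Define the oracle baseline by $b^*_i(s(h), a) := v^\pi_i(ha) = q^\pi_i(h,a)$, and take $\tilde{v}^{*,\pi}_i$ to be the bootstrapped estimator of Equations~\eqref{eq:bootstrapped-ub}--\eqref{eq:bootstrapped-ub-history} specialised to $b^*$. I would then induct on the distance from a terminal along $z$. The base case is immediate: at the terminal $h=z$ the estimator evaluates to $R_i(z) = v^\pi_i(z)$, which is a constant. For the inductive step, suppose $\tilde{v}^{*,\pi}_i(ha \mid z) = v^\pi_i(ha)$ for the unique child $ha$ of $h$ traversed by $z$. Plugging into Equation~\eqref{eq:bootstrapped-ub}, the sampled action $a$ contributes
\begin{align*}
\tilde{q}^{b^*,\pi}_i(h,a \mid z) \;=\; b^*_i(s(h),a) + \tfrac{v^\pi_i(ha) - b^*_i(s(h),a)}{\xi(h,a)} \;=\; v^\pi_i(ha),
\end{align*}
because the numerator of the correction term vanishes by the choice of $b^*$. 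Every non-sampled action $a'$ contributes $\tilde{q}^{b^*,\pi}_i(h,a' \mid z) = b^*_i(s(h),a') = v^\pi_i(ha')$. Summing with $\pi(h,\cdot)$ as in Equation~\eqref{eq:bootstrapped-ub-history} gives $\tilde{v}^{*,\pi}_i(h \mid z) = \sum_{a'} \pi(h,a') v^\pi_i(ha') = v^\pi_i(h)$, which is constant in $z$. Unbiasedness of $\tilde{v}^{*,\pi}_i$ in the sense of Lemma~\ref{lemma:iig-vrmccfr_bootstrapped_unbiased} still holds because $b^*$ is a valid choice of baseline (any infostate-compatible realisation of it is unbiased), so the estimator is both unbiased and variance-free on the updated infostates.

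The main obstacle I expect is the baseline-indexing tension: $b_i(s,a)$ must be a function of the infostate $s$, yet the zero-variance argument demands knowing $v^\pi_i(ha)$, which depends on the history $h$ inside $s$. The lemma addresses this through the ``specific update scheme'' clause, so the cleanest plan is to state that scheme carefully (allowing the oracle to supply the history-indexed true value whenever it is used on a sampled trajectory, equivalently, update only at the particular $h \sqsubseteq z$ being traversed rather than summing over $\mathcal{H}(s)$). Once this is made precise, the induction above goes through without any further computation.
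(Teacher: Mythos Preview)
Your proposal is correct and matches the approach of the cited source: define the oracle baseline as the true expected value $b^*_i(h,a)=q^\pi_i(h,a)$, then argue by backward induction along the sampled trajectory that the bootstrapped estimator collapses to the exact value at every visited history, so the estimator is deterministic (hence zero variance). The thesis itself does not give a proof but defers to \cite{schmid2019variance}; your identification of the history-versus-infostate indexing as the sole subtlety, and your resolution of it via the ``specific update scheme'' (permitting history-level baselines at updated nodes), is exactly the point of that clause.
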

}

\subsection{Convergence Speed and Variance}
\second{
We evaluate the algorithm on Leduc poker with exponentially decaying weight for the average baseline (weight $\alpha = 0.5$), comparing MCCFR, MCCFR+, VR-MCCFR, VR-MCCFR+, and VR-MCCFR+ with the oracle baseline.
Variance reduced variants converge significantly faster than MCCFR and the relative speedup grows as the baseline improves during the computation. 
VR-MCCFR+ achieves two orders of magninute speedup compared to MCCFR.
VR-MCCFR+ with the oracle baseline initially outperforms VR-MCCFR+, but they get closer as time progresses and the learned baseline improves (Figure~\ref{fig:iig-vrmccfr_convergence}).
}

\begin{figure}[ht]
  \centering
  \includegraphics[width=8cm]{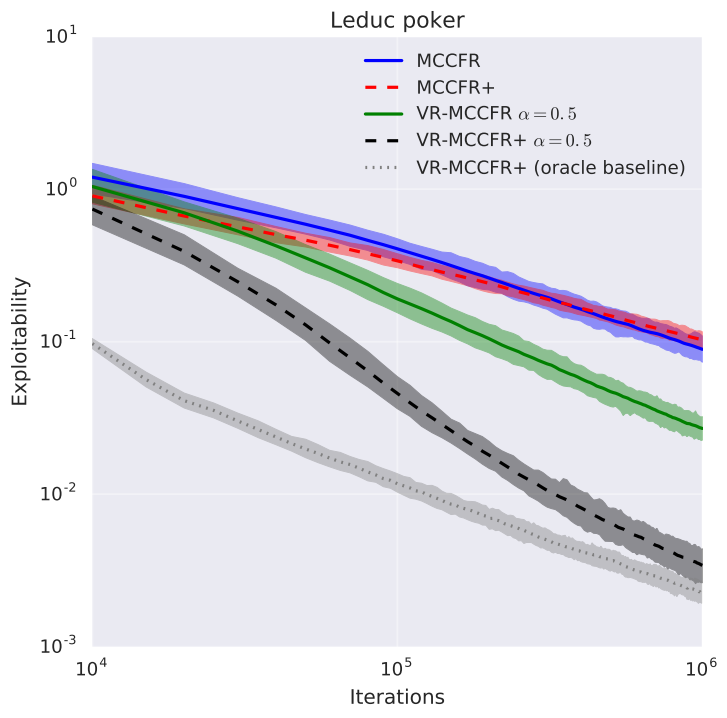}
  \caption{Convergence of exploitability for different MCCFR variants on logarithmic scale. 
VR-MCCFR converges
substantially faster than plain MCCFR. VR-MCCFR+ bring
roughly two orders of magnitude speedup. 
VR-MCC
with oracle baseline (actual true values are used as baselines) is used as a bound for VR-MCCFR’s performace to
show possible room for improvement. 
When run for $10^6$ iterations VR-MCCFR+ approaches performance of the oracle version. 
The ribbons show 5th and 95th percentile over 100 runs.}
\label{fig:iig-vrmccfr_convergence}
\end{figure}

\second{
To verify that the observed speedup is indeed thanks to the lower variance, we measured the variance of counterfactual value estimates for MCCFR+ and MCCFR (Figure~\ref{fig:iig-vrmccfr_variance}).
We sampled $1,000$ trajectories rooted in visited information sets, with each trajectory sampling a different estimate of the counterfactual value. 
While the variance of value estimates in MCCFR seems to be more or less constant during the computation, the variance of VR-MCCFR and VR-MCCFR+ value estimates is substantially lower and continues to decrease. 
}

\begin{figure}[ht]
  \centering
  \includegraphics[width=0.5\textwidth]{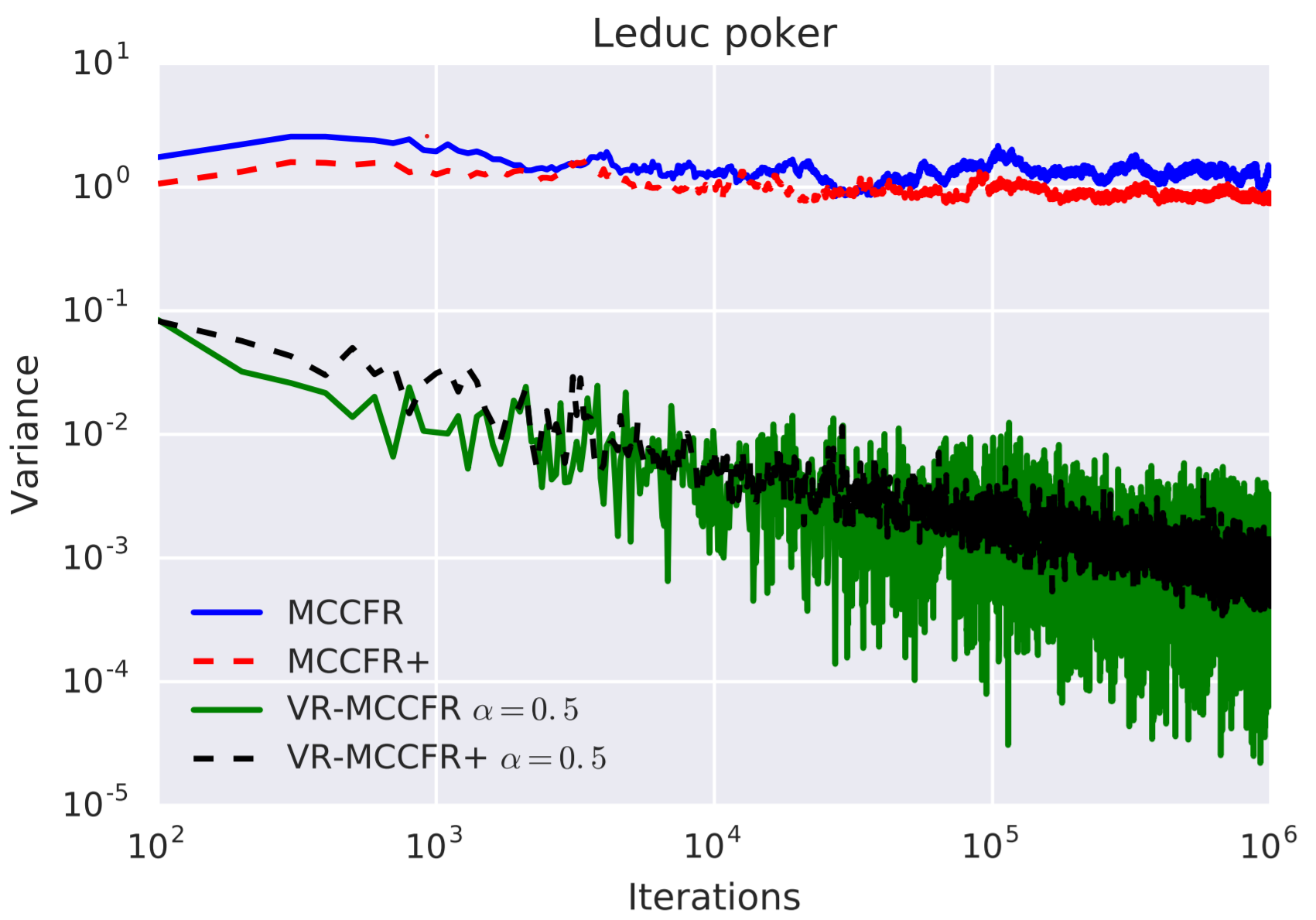}
  \caption{The empirical variance of the counterfactual values is significantly lower for VR-MCCFR and further decreases as the estimates improves over time. }
\label{fig:iig-vrmccfr_variance}
\end{figure}

\subsection{Connection to RL Baseline Methods}
\second{
VR-MCCFR has close connections to well-used baselines in reinforcement learning
methods (e.g. REINFORCE algorithm with baselines \citep{williams1992simple}).
That is simply because these baseline techniques of reinforcement learning are simply control variates in disguise.
An important difference in imperfect information settings of VR-MCCFR, state-action baselines perform significantly faster than state baselines.
In single agent environments of reinforcement learning, state-action baselines tend to perform no better than state baselines \citep{tucker2018mirage}.
This is likely due to the fact that the optimal policy is deterministic and one ends up sampling the action with the best value.
In imperfect information, this is not the case as we have to keep sampling all the actions at each iteration to guarantee convergence.
}

\chapter{Offline Solve III - Approximate and Abstraction Methods}
\label{chap:iig-abstraction}
\second{
Tabular methods are limited to games small enough for strategies to be stored explicitly.
In the same way there is no hope to use tabular methods to solve chess, we can not hope to use this approach for large imperfect information games.
For perfect information games, a common approach is then to use the online search methods.
But as this was for a long time thought to be impossible in imperfect information, there are some other non-search methods one can opt for. 
}

\section{Abstraction Methods}
\label{sec:iig-abstraction_methods}
\second{
First method is to simply use the tabular methods, but apply those on a smaller, abstracted game (Figure \ref{fig:iig_abstraction_translation}).
The hope is that if the abstracted game strategically resembles the full game, the resulting strategy will approximate the optimal solution.

\begin{figure}[ht]
  \centering
  \includegraphics[width=0.5\textwidth]{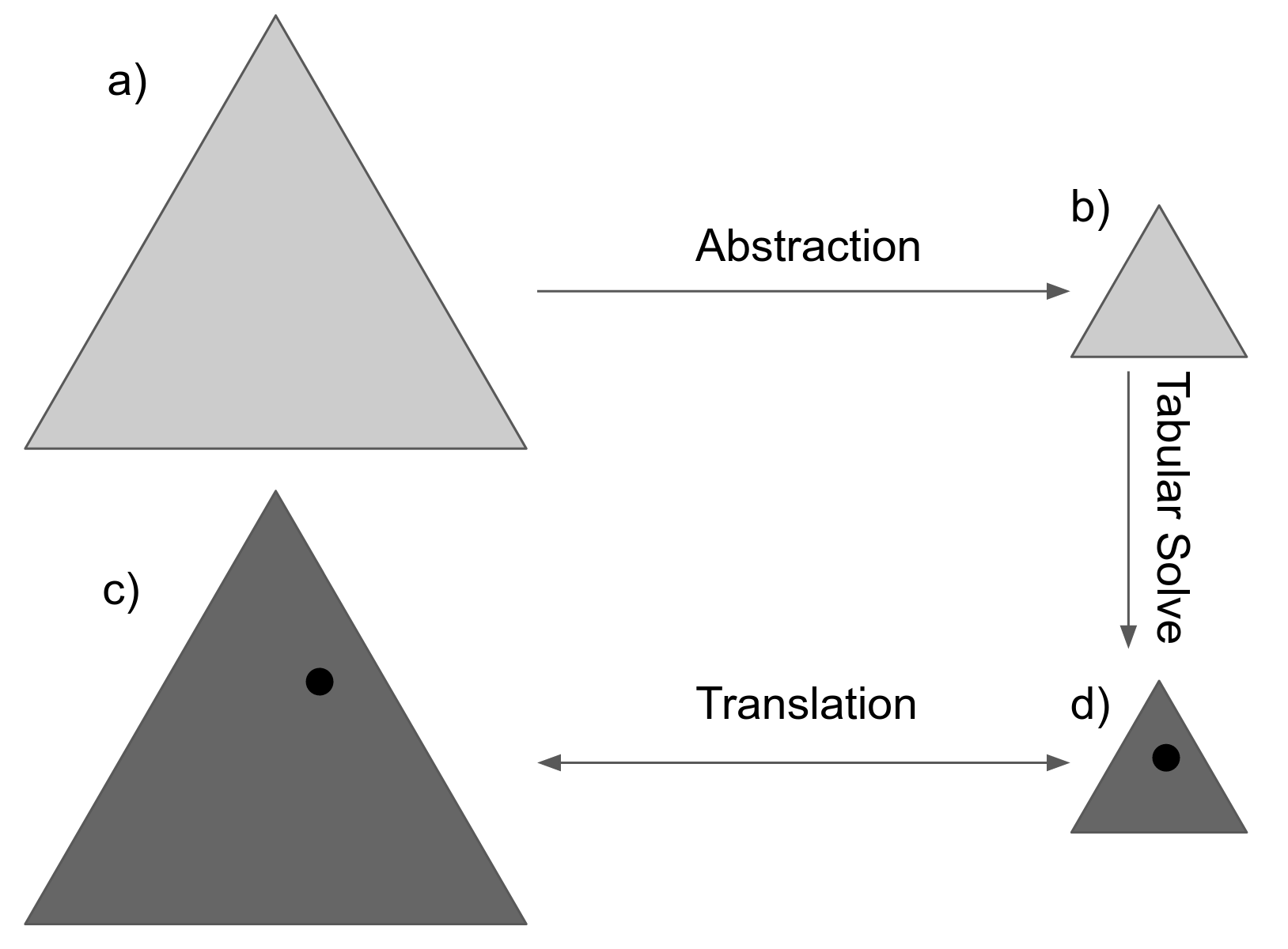}
  \caption{
  The full game (a) is first abstracted into a smaller, tractable abstracted game (b).
  The abstracted game is then solved via a tabular offline method, producing a policy (d).
  In order to play the original game, we need to translate a state from the full game back into the abstracted game.
  A translation issue arises as there are necessarily fewer states in the abstracted game.
  }
\label{fig:iig_abstraction_translation}
\end{figure}

This abstraction approach indeed was for a very long time state of the art method for large imperfect information games \citep{sandholm2010state, johanson2016robust}.
Considerable research was then devoted to larger and smarter abstraction techniques \citep{hawkin2011automated, johanson2013evaluating, ganzfried2014potential, schmid2015automatic}.

The ideal case is a lossless abstraction \citep{gilpin2007lossless}, but this is not possible for large games where one usually solves as large of an abstraction as possible.
But larger abstractions do not necessarily lead to a better exploitabality in the original game.
Even more surprisingly, this might not be the case even if one abstraction is strictly more finer-grained \citep{waugh2009abstraction}.
In practice though, larger abstractions tend to produce stronger policies \citep{johanson2016robust}
}

\subsection{Translation Issue}
\second{
The fundamental problem with abstraction methods is the translation.
This became apparent relatively early on in work on no-limit poker, where the agents are allowed to bet any amount of chips up to their stack size.
For a stack size of $20,000$ chips, there are about $20,000$ actions corresponding to different bet sizes, making this action tree intractably large.
Abstraction can thus only contain a subset of the actions, misunderstanding many of them --- e.g. the closest states for the opponent's bet of $12,000$ could be $10,000$ and $15,000$.
An agent then has no way to properly understand pot-odds in such states --- a key concept for optimal play in poker \citep{chen2006mathematics}.

Soft translation tries to interpolate the strategy between close states in the abstraction \citep{schnizlein2009probabilistic}.
But soft translation is in many ways just a Band-Aid on a bullet wound.
Local best response has been able to exploit even the best of the abstraction agents by huge margins \citep{lisy2017eqilibrium}.
It is fair to say that abstraction based methods are now mostly obsolete.
To see the naivety of the abstraction approach, consider the abstraction approach for a chess agent (Figure \ref{fig:iig-astraction_chess}).

\begin{figure}[ht]
  \centering
  \includegraphics[width=0.8\textwidth]{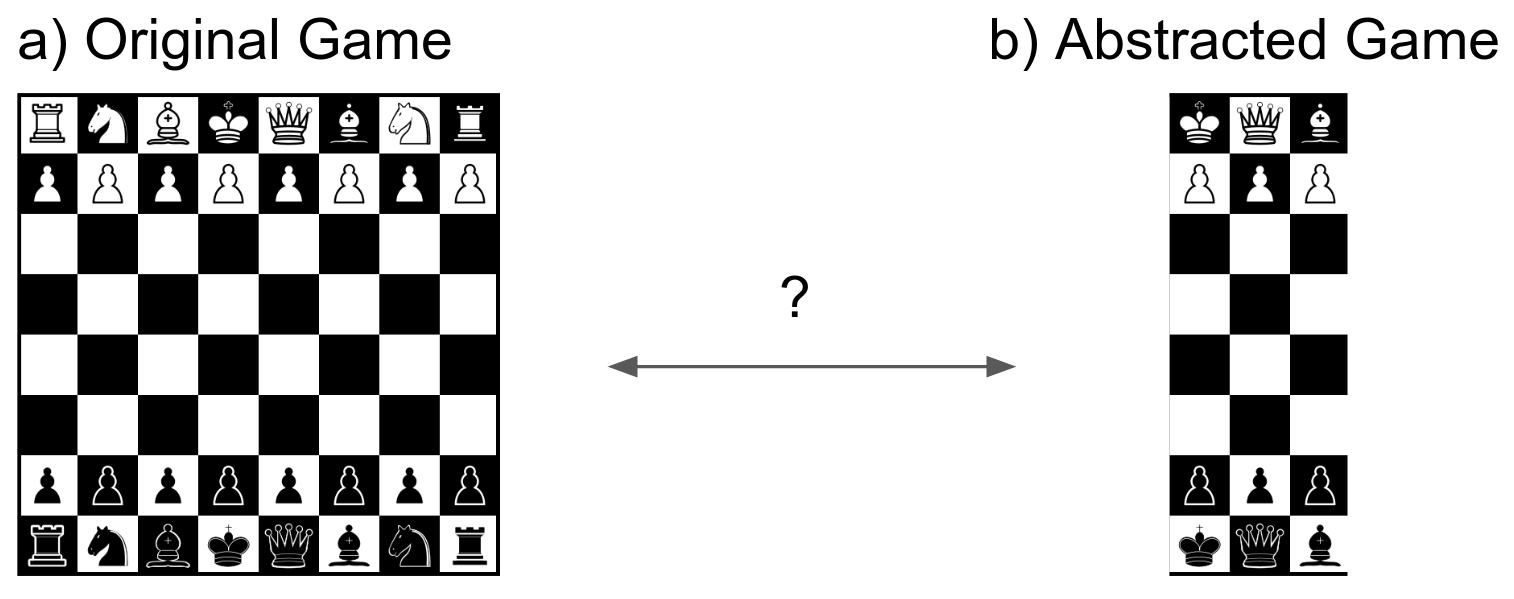}
  \caption{
  Abstraction method for chess requires constructing an abstracted game (b), and then to map the states and actions between the original game (a) and the abstracted game (b)
  }
\label{fig:iig-astraction_chess}
\end{figure}

}

\subsection{Local Best Response}
\label{sec:iig-lbr}

\second{
\Gls{lbr} (LBR) has shown that due to the translation issue, even the best abstraction-based no-limit poker agents are heavily exploitable by relatively simple techniques \citep{lisy2017eqilibrium}.
As exact best response value in the large game of no-limit poker is intractable, local best response approximates the value by approximating the best-responding policy.
By fixing the opponent, the environment becomes a single-agent environment where the optimal policy is best-responding \citep{bowling2003multiagent} and we can thus use any single-agent algorithm to find (or to approximate) such policy.
The performance against the resulting policy than serves as a lower bound on the agent's exploitability.

Local best response uses a small search tree with a poker-heuristic value function.
In its search tree, it only looks two actions forward (at most one action of the opponent) and considers only a subset of  the aviable actions (e.g. fold, call pot bet).
For its value function, it uses an expected rolout value when both players follow the call action until the end of the game (this value can be efficiently and exactly computed for Texas hold'em poker).

Table~\ref{tab:iig-translation_lbr_vs_bots} reports the performance of different poker agents against the local best response technique.
Interestingly, all the agents are substantially more exploitable than simply folding each hand (it is easy to verify that the exploitability of ``always fold'' policy is $750mbb$).

Furthermore, Table~\ref{tab:iig-translation_lbr_vs_bots} includes an agent that uses no card abstraction and uses the F,C,P,A action abstraction.
When LBR considers only these actions, there is no translation issue and LBR indeed fails to exploit this agent.
But once LBR considers more actions, it also exploits this agent and further confirms the translation issue of abstraction techniques.

\begin{table*}[ht]
\centering
\renewcommand{\arraystretch}{1.3}
\begin{tabular}{@{}lllll@{}}
\toprule
LBR Pre-flop Actions & F, C & C & C & C \\
LBR Flop Actions & F, C & C & C & 56bets \\
LBR Turn Actions & F, C & F, C, P, A & 56bets & F, C \\
LBR River Actions & F, C & F, C, P, A & 56bets & F, C \\
\midrule
Hyperborean (2014) & 721 $\pm$ 56 & 3852 $\pm$ 141 & 4675 $\pm$ 152 & 983 $\pm$ 95 \\
Slumbot (2016) & 522 $\pm$ 50 & 4020 $\pm$ 115 & 3763 $\pm$ 104 & 1227 $\pm$ 79 \\
Act1 (2016) & 407  $\pm$ 47 & 2597 $\pm$ 140 & 3302 $\pm$ 122 & 847 $\pm$ 78 \\
Full Cards [100 BB] & -424 $\pm$ 37 & -536 $\pm$ 87 & 2403 $\pm$ 87 & 1008 $\pm$ 68 \\
\bottomrule
\end{tabular}
\caption{
Local best response performance against strong abstraction-based poker agents in no-limit Texas hold'em poker [mbb/g].
The list of actions considered by the technique varies in different poker rounds, and we report four different configurations.
Actions are (F)old, (C)all (P)ot bet and (A)ll-in. 
For the full list of $56$ bets see \citep{lisy2017eqilibrium}.
Note that a policy that simply always folds each hand is exploitable by $750mbb$.
}
\label{tab:iig-translation_lbr_vs_bots}
\end{table*}

}

\section{Non-tabular Methods}
\label{sec:iig-non_tabular_methods}
\second{
Another option for solving games that are intractable for the tabular methods is to represent the strategies and other necessary values (e.g. regrets) implicitly (e.g. using neural networks), relying on the generalization power of the networks.
Together with methods that do not have to traverse the full game tree (e.g. MCCFR or VR-MCCFR), this approach allows for algorithms that approximately solve very large games.

The first non-tabular method, RCFR \citep{waugh2015solving} used regression trees and needed to keep track of a dataset that was substantially larger than the full game.
This limited the approach from being useful for exactly what one would hope to use non-tabular methods for --- large games.
The first two successful instances of non-tabular method in large games are then DeepCFR \citep{brown2018deep} and double\footnote{Not to be confused we double q-learning methods in reinforcement learning where the two networks are used to eliminated bias. Double here refers to separate networks for regrets and policies.} neural counterfactual regret minimization (DNCFR) \citep{li2018double}, both combine deep learning and MCCFR and being published around the same time.
Both algorithms use separate networks to represent i) regrets and ii) the average policy.
MCCFR is used to collect batches for training, using multiple trajectories to decrease the variance of the training data.

Single DeepCFR follows in these steps and shows how to replace the average policy network by storing a buffer of past regret networks \citep{steinberger2019single}.
The idea is that since the current strategy is a simple function of regrets, storing some of the previous networks amounts to storing these current strategies.
One can then approximate the average policy from previous strategies in the buffer.
Later work then combines modern VR-MCCFR methods into the DREAM algorithm \citep{steinberger2020dream}.
ARMAC is another MCCFR method that is closely related to DREAM \citep{gruslys2020advantage}.
Finally, there are recent methods that do not build on the CFR family such as properly modified follow the regularized leader algorithm \citep{perolat2020poincar}.
}

\chapter{Online Settings}
\label{chap:iig-online_settings}
\first{
This chapter re-visits the online setting as introduced in Section \ref{chap:pig-online_settings}.
There are only minor differences to account for the formalism of the imperfect information games, and most of  definitions remain intact.
The only notable difference is a particular global consistency connection, where Theorem \ref{thm:pig-local_consistency_subgame_perfect} no longer holds, making search in imperfect information more challenging.
}

\first{
Furthermore, optimal policies for imperfect information games might have to be stochastic (Corollary~\ref{cor:iig-minmax_stochastic}).
Repeated games then allow us to correctly capture the dynamics of online algorithm that an ``averaged'' offline policy never could.
Consider a simple online algorithm for \rps{} that simply produces the sequence (rock, paper, scissors, rock, paper, $\hdots$).
Despite its ``average'' strategy being optimal, this online algorithm is clearly highly exploitable.
}

\section{Repeated Game}
\first{
A repeated game with imperfect information is just like a repeated game with perfect information, except that the individual matches consist of imperfect information games.
The definition has to reflect this, and we will stick with the FOSG formalism.

Formally, The repeated game $p$ consists of a finite sequence of $k$ individual matches $p = (z_1, z_2, \hdots, z_k)$, where each match $z_i \in \mathcal{Z}$ is a sequence of world states and actions $\match_i = (w_i^0,\,a_i^0\,w_i^1,\,a_i^1\,\hdots,\,a_i^{l_i-1},\,w_i^{l_i})$ (ending in a terminal world state $w_i^{l_i}$).
For each visited world state in the match, there is a corresponding player state (information state) $s_i(w^t_i)$, i.e. their private perspective of the game.
For perfect information games, the notion of player state and world state collapsed as the player gets to observe the world perfectly.
}

\section{Online Algorithm}
\first{
Just as in Section \ref{sec:pig-online_algorithm}, an online algorithm $\search$ then simply maps a player's state observed during a match to a strategy (Definition \ref{defn:pig-online_algorithm}).

\onlinealgorithm*

Given two players $\search_1, \search_2$, we use $P^k_{\search_1, \search_2}$ to denote the distribution of game-plays when these two players face each other.
The average reward of $i$ is $R(p) = 1/k \sum_{i=1}^k u_p(z_i)$ and we denote $\EX_{p \sim P^k_{\search_1, \search_2}}[\mathcal{R}_p]$ to be the expected average reward when the players play $k$ matches. 
}

\section{Soundness}
\first{
Definition of sound algorithm also remains unchanged 

\soundness*
}

\section{Search Consistency}
\label{sec:iig-search_consistency}

\first{
Concepts local, global and strongly global consistencies remain unchanged.
An important discrepancy is that Theorem~\ref{thm:pig-local_consistency_subgame_perfect} does not hold in imperfect information.

\localsubgameperfect*

Local consistency is not sufficient in imperfect even if the algorithm is consistent with \subgame{} perfect equilibrium.
To illustrate this, we will construct a simple game where following is true.

\begin{itemize}
    \item The game has multiple \subgame{} perfect Nash equilibria and $\pi^a, \pi^b \in \nashset{}$.
    \item There are two states for the second player $\mathcal{S}_2 = \{s_1, s_2\}$
    \item Following $\pi^a$, $\pi^b$ in $s_1$ and $s_2$ respectively produces highly exploitable policy.
\end{itemize}

\begin{figure}[ht]
  \centering
  \includegraphics[width=0.5\textwidth]{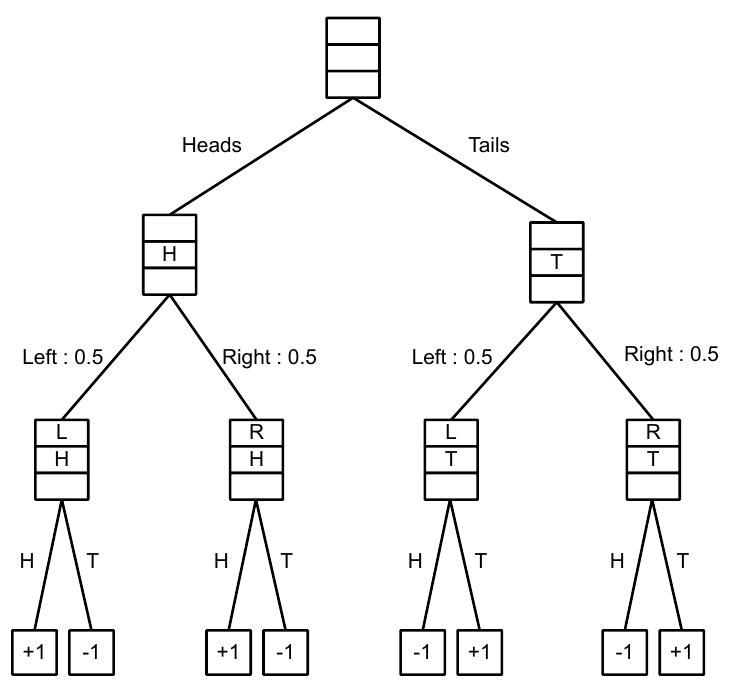}
  \caption{
    Coordinated matching pennies.
    First player makes a private action (H)eads or (Tails).
    Follows a chance player, selecting uniformly one of the publicly observable actions (L)eft or (R)ight.
    There are now two infostates for the second player $s_1 = [L||], s_1 = [R||]$ and four histories.
    Second player now chooses (L)eft or (R)ight.
    In their first infoset $s_1$, they win if they match the opponent's action.
    In their second infoset $s_2$, they win if they select the opposite action.
  }
\label{fig:iig_global_consistency_counterexample}
\end{figure}

Coordinated matching pennies (Figure~\ref{fig:iig_global_consistency_counterexample}) satisfies all three properties.
The two information states of player two are $s_1=[L||], s_1=[R||]$.
It is easy to verify that any optimal policy needs to satisfy $\pi_2(s_1)(H) = \pi_2(s_2)(T)$.
In other words, the player has to coordinate between these two information states.
Thus we can construct optimal policies $\pi^a_1, \pi^a_2$ as $\pi^a_2(s_1, L) = 1, \pi^a_2(s_1, L) = 0$ and $\pi^b_2(s_1, L) = 0, \pi^a_2(s_1, L) = 1$.
Now if the search technique follows $\pi^a_1$ in $s_1$ and $\pi^a_2$ in $s_2$, the resulting policy is highly exploitable.

While there are multiple notions of \subgame{} perfect policies in imperfect information settings, note that two information states $s_1, s_2$ have no predecessors and thus this counter-example holds regardless of the notion choice. 
}

\subsection{Online Outcome Sampling}

\begin{figure}[ht]
    \centering
    \begin{subfigure}[b]{0.44\textwidth}
        \includegraphics[width=\textwidth]{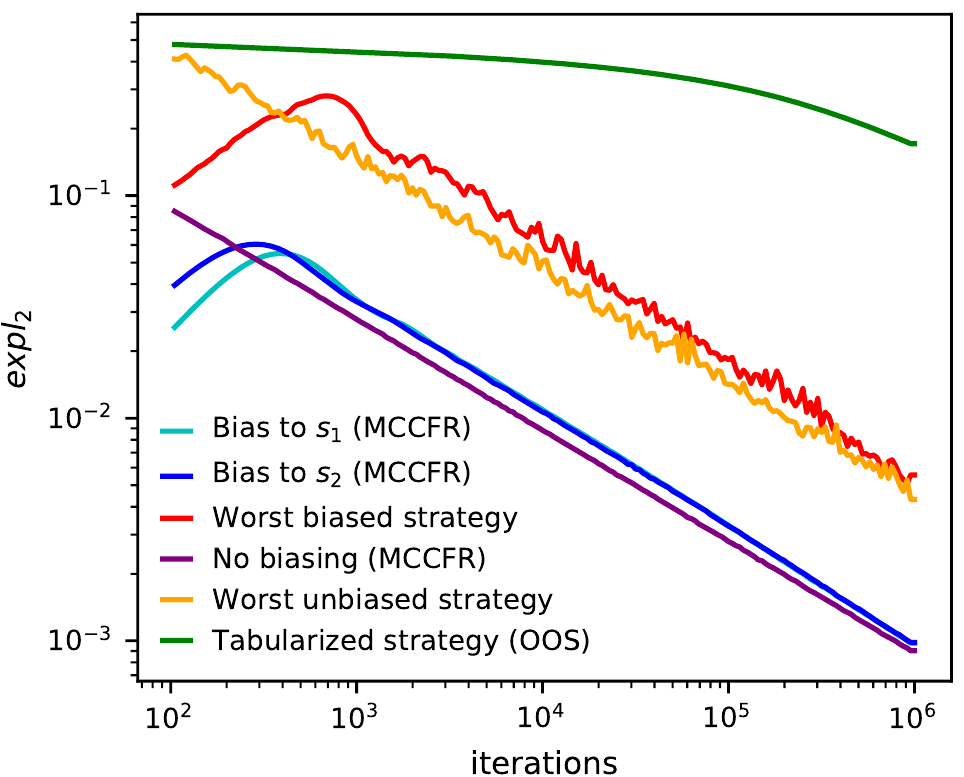}
        \caption{Coordinated matching pennies.}
        \label{fig:iig-oos1}
    \end{subfigure}
    \begin{subfigure}[b]{0.52\textwidth}
        \includegraphics[width=\textwidth]{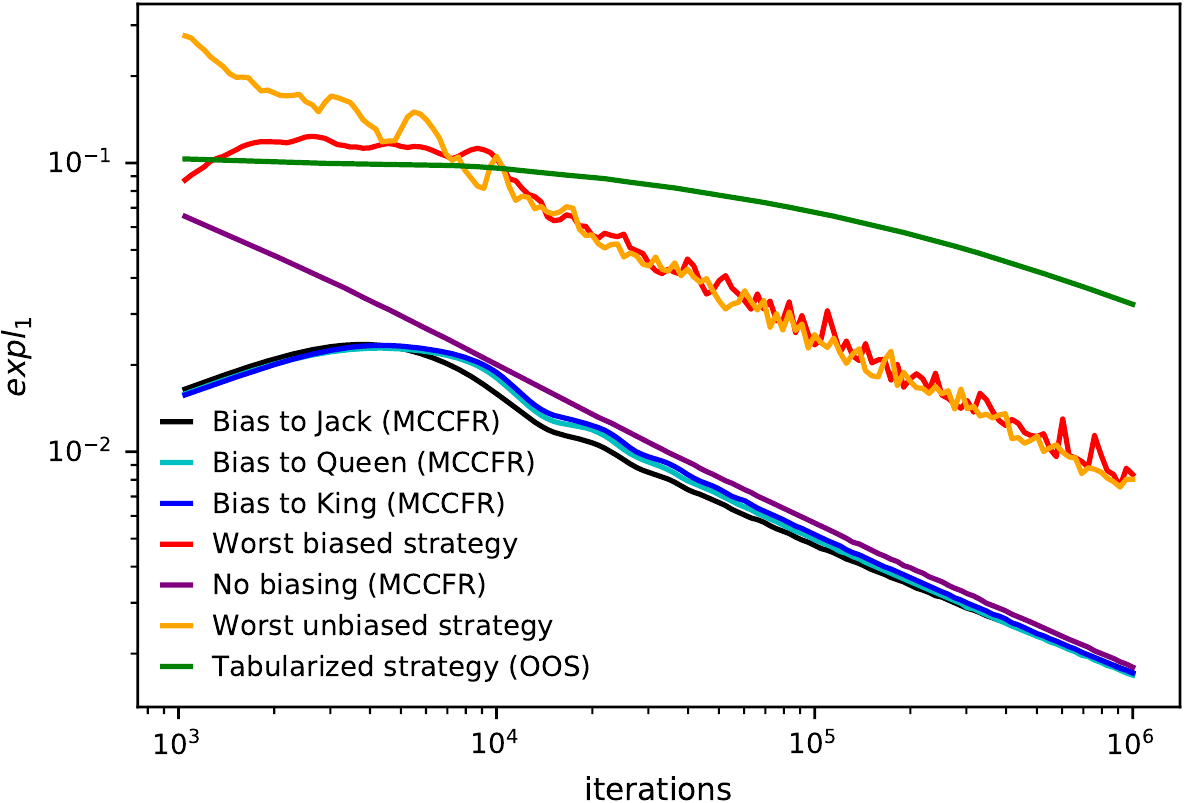}
        \caption{Kuhn poker.}
        \label{fig:iig-oos2}
    \end{subfigure}
    \caption{Exploitability of individual MCCFR policies and tabularized OOS.}
    \label{fig:iig-oos}
\end{figure}

\first{
A particularly interesting example of an algorithm that is only locally consistent is the Online Outcome Sampling 
(OOS) \citep{lisy2015online}.
At high level, OOS runs the offline MCCFR algorithm (Section~\ref{sec:iig-mccfr}
) in the full game (while also gradually building the tree), parameterized to increase the sampling probability of the current information state. 
The algorithm then plays based on the resulting strategy for that particular state. 
The problem is that these individual MCCFR runs can converge to different equilibria as the MCCFR is parameterized differently
in each information state. 
In other words, the OOS algorithm exactly suffers from the fact that it is only locally consistent.
Figure~\ref{fig:iig-oos} compares exploitability of the individual MCCFR policies $\pi^a_2(s_1), \pi^b_2(s_2)$ that OOS is locally consistent with, and the exploitability of the tabularized policy the OOS actually follows $\search(s_1) = \pi^a_2, \search(s_2) = \pi^b_2$.
The experiment is run on coordinated matching pennies and Kuhn poker, where the individual MCCFR runs were then parametrized to favor a particular Nash equilibrium --- see the Appendix of \cite{vsustr2020sound} for all the experimental details.
}
\chapter{Search}
\label{chap:iig-search}
\second{
This chapter introduces online search algorithms for imperfect information.
Just as in perfect information, \subgames{} and value function serve as the key building blocks.
And while these concepts are already more complex in imperfect information, the matter is farther complicated by the fact that strong global consistency is not trivial to achieve as we have to coordinate the policy across information states.
Furthermore, the base algorithm used for the search techniques is counterfactual regret minimization.
This algorithm is particularly suitable for value functions, as it already decomposes the full regret to the sum of individual partial infostate regrets.

\begin{figure}[ht]
    \centering
    \begin{subfigure}[b]{0.3\textwidth}
        \includegraphics[width=\textwidth]{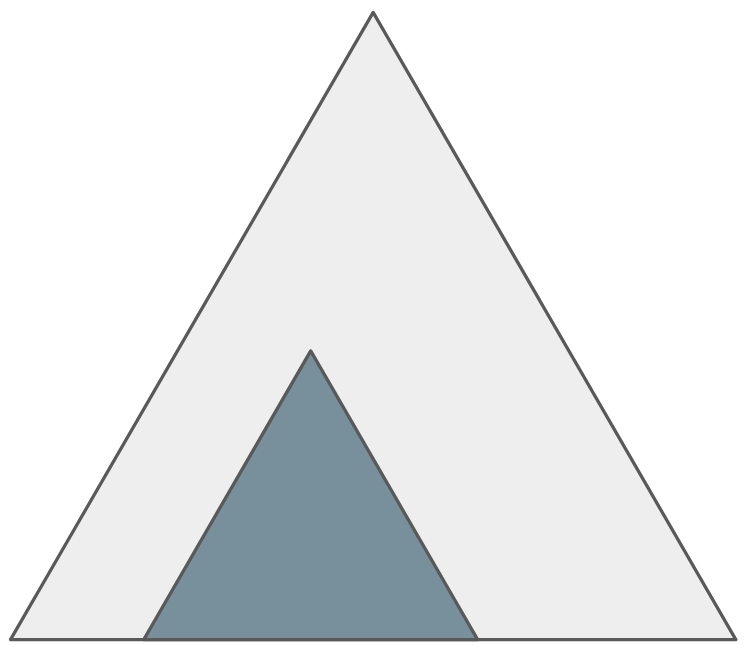}
        \caption{}
        \label{fig:iig-search_1}
    \end{subfigure}
    \begin{subfigure}[b]{0.3\textwidth}
        \includegraphics[width=\textwidth]{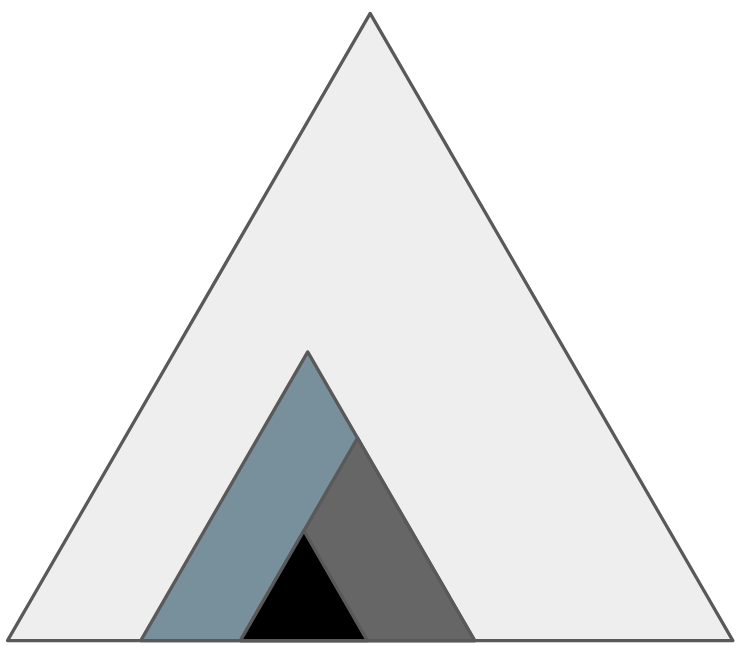}
        \caption{}
        \label{fig:iig-search_2}
    \end{subfigure}
    \begin{subfigure}[b]{0.3\textwidth}
        \includegraphics[width=\textwidth]{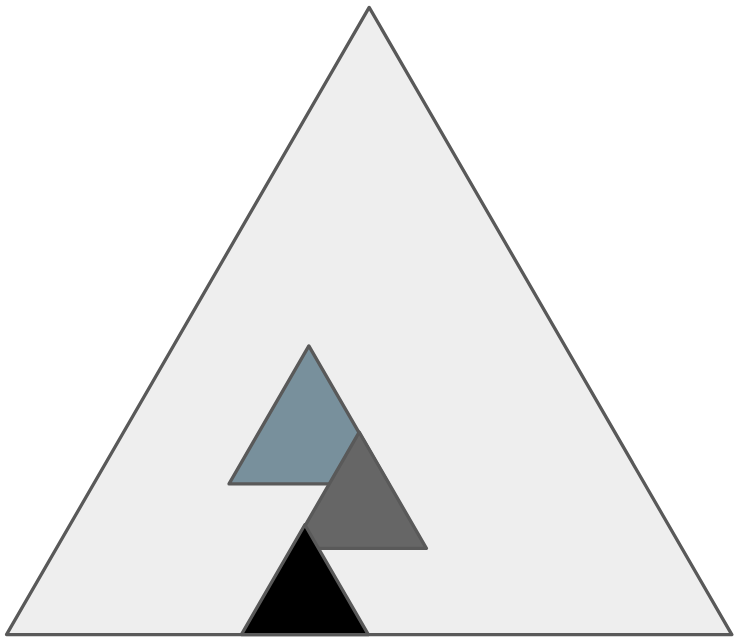}
        \caption{}
        \label{fig:iig-search_3}
    \end{subfigure}
    \caption{a) \Subgame{} (re)solve full lookahead.
            b) Continual \resolving{} with full lookahead.
            c) Continual \resolving{} with limited lookahead and value functions. }
    \label{fig:iig_search_intro}
\end{figure}

We first start with a single step search algorithm that reasons about the current \subgame{} (Figure~\ref{fig:iig-search_1}), analogically to full lookahead online minimax algorithm of Section~\ref{sec:pig-online_minmax}.
The full lookahead setting is where we will deal with the complication of consistency.
We begin with unsafe \resolving{}, an unsound approach that motivates the key ingredient in the future sound methods --- constrain counterfactual values that ensure the consistency of individual local policies.
These values are then used within another key concept --- \resolving{} games (and \resolving{} gadgets), where a particular gadget game guarantees to produce a \subgame{} satisfying the constraints.
Follows another contribution of the thesis, where a particular modification to the gadget allows to improve (refine) a \subgame{} policy while still ensuring the strong global consistency.

We then extend this algorithm to multiple steps, where we continually use safe \resolve{} method during the gameplay --- the continual \resolving{} algorithm (Figure \ref{fig:iig-search_2}).
Continual resolving with full lookaheads is analogous to the full looakehad minmax search algorithm. 
The main complication is that the algorithm keeps updating the constrains to be used for the safe resolving process.

We then combine this algorithm with generalized value functions, allowing to use limited lookaheads within the CFR algorithm.
This results in the final algorithm --- continual resolving with limited lookaheads and value functions (Figure \ref{fig:iig-search_3}).
The final algorithm is the culmination of all the building blocks introduced in this book, presenting a safe search algorithm for imperfect information games.

We finish with the final contribution of the thesis --- Deepstack.
DeepStack was the first to introduce the continual \resolving{} algorithm, combining sound search and learned value functions for poker \citep{moravvcik2017deepstack}.
The combination of continual \resolving{} and neural networks as value functions resulted in substantial improvement over the previous methods and DeepStack became the first program to beat professional human players in no-limit Texas hold'em poker.
}

\section{Full Lookahead Unsafe \Resolving{}}
\label{sec:iig-unsafe_resolving}
\second{
In imperfect information, the notion of the \subgame{} was defined as a combination of i) public state and ii) distribution over the players' infostates (Section~\ref{sec:iig-public_subgame}).
We first analyze what happens if we \resolve{} a \subgame{} where the distribution over the player's initial states is computed using an optimal policy of the full game.
That is we:

\begin{enumerate}
    \item Compute optimal policy profile $(\pi^*_i, \pi^*_{-i})$.
    \item Select a public state $s_{pub}$ as a root of a \subgame{}.
    \item Compute reach distribution $\Delta(\mathcal{S}_i(s_{pub})), \Delta(\mathcal{S}_{-i}(s_{pub}))$ using the agent's respective policies and the corresponding reach probabilities $P^\pi$.
    \item Solve the \subgame{} defined by $s_{pub}, \Delta(\mathcal{S}_i(s_{pub})), \Delta(\mathcal{S}_{-i}(s_{pub}))$, producing \subgame{} policy profile $(\pi^{sub*}_i, \pi^{sub*}_{-i})$.
\end{enumerate}

Is the resulting $(\pi^{sub*}_i, \pi^{sub*}_{-i})$ consistent with $(\pi^*_i, \pi^*_{-i})$?
Is such algorithm sound?
In perfect information, the answer is positive as the search minmax algorithm fits this framework.
And while this approach has has also been used in imperfect information \citep{ganzfried2015endgame} (referred to as the ``\subgame{} solving''), it both loses its theoretical guarantees and empirically produces highly exploitable policies.

Surprisingly, this idea breaks already in \rps{} \citep{ganzfried2015endgame, burch2014solving}.
Consider a \subgame{} rooted in the second public state (just after the first player has acted).
The public state contains a single decision infostate of the second player $\{[||]\}$, and three possible states for the first player $\{[|R|], [|P|], [|S|]\}$.
As the optimal strategy for the first player is to play uniformly in the first state of the game (prior to reaching this \subgame{}), distribution over their \subgame{} states is also uniform.
The resulting initial distributions of the \subgame{} are then $\Delta(\mathcal{S}_1(s_{pub})) = (\frac{1}{3}, \frac{1}{3}, \frac{1}{3})$ and  $\Delta(\mathcal{S}_2(s_{pub})) = (1)$.
See Figure~\ref{fig:iig_rps_unsafe_resolve_beliefs} for illustration.

It is easy to verify that any policy in this \subgame{} is optimal.
But while any policy is optimal in the \subgame{}, not any strategy is optimal in the original game (e.g. playing always rock is highly exploitable).
Thus simply solving a \subgame{} where the distribution is constructed from prior optimal policies is not even a locally consistent approach.
This approach is also referred to as the unsafe \resolving{}.
}

\begin{figure}[ht]
    \centering
    \begin{subfigure}[b]{0.48\textwidth}
        \includegraphics[width=\textwidth]{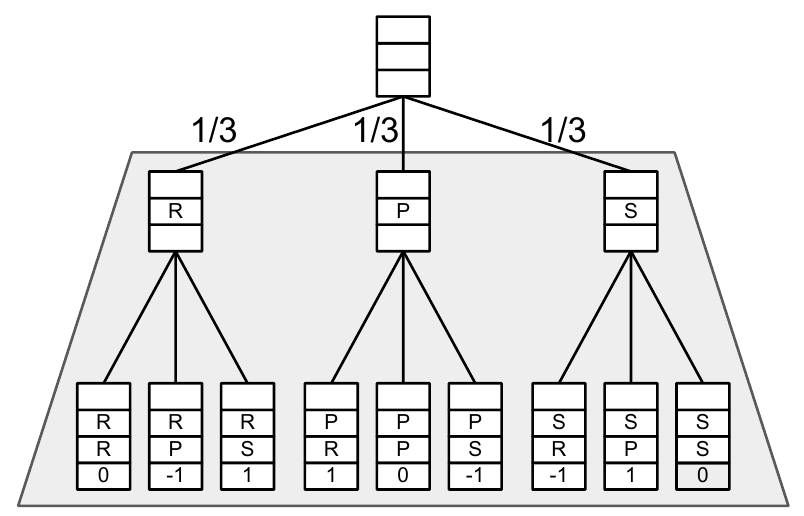}
        \caption{}
        \label{fig:iig_rps_unsafe_resolve_beliefs}
    \end{subfigure}
    \begin{subfigure}[b]{0.48\textwidth}
        \includegraphics[width=\textwidth]{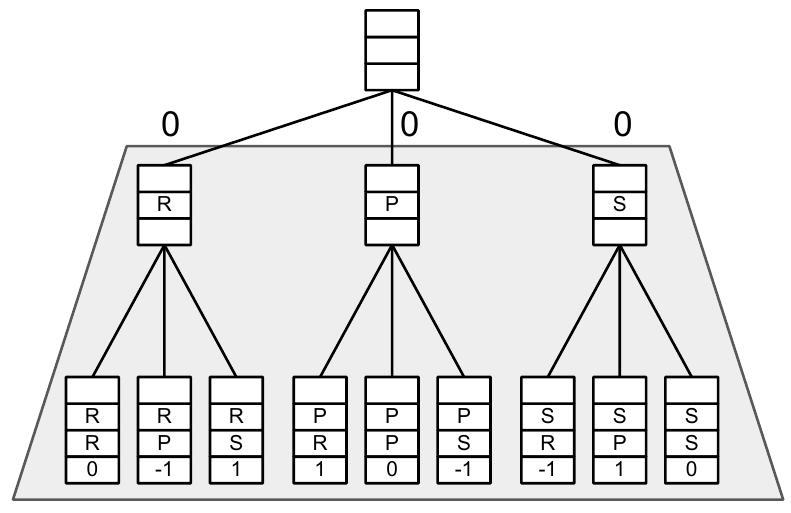}
        \caption{}
        \label{fig:iig_iig_rps_unsafe_resolve_values}
    \end{subfigure}
    \caption{a) Public \subgame{} with the initial distribution over the states based on a optimal policy in the full game. 
    b) Counterfactual values corresponding to that optimal policy.}
    \label{fig:iig_rps_unsafe_resolve}
\end{figure}

\subsection{Why Did It Break?}
\second{
After closer inspection of the counterexample, we can investigate why exactly this approach failed.
Understanding the issue will allow us to design a new, safe approach that will be crucial in constructing the final algorithms.

Why was the strategy optimal for the sub-game but not locally consistent?
Remember that the optimal policy is maximizing the worst case value.
When we constructed the \subgame{} and solved for the optimal policy, we allowed the opponent to best-respond
in the \subgame{}, but we fixed the reach probabilities (sub-game initial distribution) using an optimal policy prior to reaching this \subgame{}.
The opponent could not change their policy prior to this sub-game and alter their reach probabilities.
}

\second{
Why was it beneficial for the opponent to change their policy prior to this \subgame{}?
Consider their \subgame{} values $V_1$ at the beginning of the \subgame{}.
Under the optimal (uniform) policy of the second player, the values are $V_1 = (0, 0, 0)$.
But after the unsafe \resolving{} of the \subgame{}, the values for these states become $(0, 1, -1)$.
When weighted by the uniform reach probability, the \subgame{} value remains unchanged
$\T{(\frac{1}{3}, \frac{1}{3}, \frac{1}{3})}(0, 0, 0) = \T{(\frac{1}{3}, \frac{1}{3}, \frac{1}{3})}(0, 1, -1) = 0$.
The issue is that it is now beneficial for the opponent to change their policy prior to this \subgame{}, by reaching their second state more often and improve their total best-response value.
}

\subsubsection{Value Perspective}
\second{
Useful viewpoint at the issue is to realize that in perfect information, there is a single root state of the opponent and a single counterfactual value of the opponent that we care to minimize.
In imperfect information, there are multiple potential states of the opponent $\mathcal{S}_{-i}(s_{pub})$ and corresponding value vector $V_{-i}(s_{pub})$.
What policy should be then preferred, policy $\pi^a_i$ where $V_{-i} = (1, -1)$ or $\pi^b_i$ where $V_{-i} = (-1, 2)$?
Fixed initial beliefs provide a weighting over those individual values resulting in a single weighted value to be minimized.
But as the weights (reach probabilities) can change, so can the resulting value.

Note that if at any time an imperfect information $s_{pub}$ consists of only a single opponent's state, there is no issue and the same can be argued for a \subgame{} where there are no prior decisions of the opponent,
}

\section{Safe \Resolving{}}
\label{sec:iig-safe_resolve}
\second{
Safe \resolving{} methods make sure the opponent can't increase their \subgame{} value by altering their reach probabilities.
To prevent this, we need to find a policy in the \subgame{} for which the individual counterfactual values\footnote{Counterfactual best response values.} of the opponent remain the same as under the original optimal policy of the full game (or even better, are no greater than the original ones).
This then guarantees strong global consistency with a policy having the \resolved{} values.

Unsafe \resolve{} used i) public state $s_{pub}$ ii) player's reach $\Delta(\mathcal{S}_i (s_{pub}))$ iii) opponent's reach $\Delta(\mathcal{S}_{-i} (s_{pub}))$ and solved the related \subgame{}.
Safe \resolving{} on the other hand uses
\begin{enumerate}[label=(\roman*)]
    \item public state $s_{pub}$.
    \item player's reach $\Delta(\mathcal{S}_i (s_{pub}))$.
    \item opponent's counterfactual values $V^{bound}_{-i}(s_{pub})$.
\end{enumerate}

Safe \resolving{} then produces a policy for which $V_{-i}(s_{pub}) \leq V^{bound}_{-i}(s_{pub})$.
Note that this is always possible as the original optimal policy we are \resolving{} satisfies this constraint.
}

\subsection{Optimization Formulation}
\label{sec:iig-resolving_optimization}
\second{
The $V_{-i}(s_{pub}) \leq V^{bound}_{-i}(s_{pub})$ constraints is relatively straightforward to add to the linear programming formulation (Section \ref{sec:iig-sequence_form_lp}), as the vector $u$ already present in the optimization corresponds to the negative (counterfactual) best response values of the opponent.

\begin{align}
\label{eq:iig-safe_resolve_lp_formulation}
\begin{split}
    & \min_{u, v} e^\top u  \\
    & Fy = f, E^\top u -Ay \geq \mathbf{0}, y \geq \mathbf{0} \\
    & u_{s} \leq V_{-i}(s_{pub}, s) \, \forall s \in \mathcal{S}_i(s_{pub})
\end{split}
\end{align}

Linear optimization problem \ref{eq:iig-safe_resolve_lp_formulation} then produces \subgame{} policy for the player $i$ that is consistent with the original, full game optimal policy we used to derive $\Delta(\mathcal{S}_i (s_{pub}))$ and $V^{bound}_{-i}(s_{pub})$ \citep{burch2014solving, moravcik2016refining}.
}

\subsection{Gadget Formulation}
\label{sec:iig-cfrd_gadget}
\second{
But there is a more elegant and general approach, the gadget game formulation \citep{burch2014solving}.
The idea is to carefully construct a game for which the resulting optimal policy is guaranteed to satisfy the constraints.
Technically, we will construct a gadget game for which there exists a trivial mapping between an optimal solution of the gadget game and the desired policy.

The core of the construction is that we insert a gadget on top of the \subgame{} (often referred to as the ``CFR-D gadget'').
It starts with a chance node that first distributes the opponent into their root information states $\mathcal{S}_{-i}(s_{pub})$, and the opponent then gets to choose (in each respective infostate) whether to (t)erminate and receive the constraint values, or to (f)ollow and play the original sub-game.
This clever construction forces the player to play the \subgame{} so that the opponent's values are no greater than the corresponding (t)erminate values.
Figure \ref{fig:iig:cfrd_gadget} shows a high-level idea of the gadget construction and Figure \ref{fig:iig:cfrd_gadget_rps} shows a concrete construction for \rps{}.
Note that for the sake of clarity, the figures omit the initial chance state.

The beauty of the gadget game formulation is that it simply constructs a new game of similar size as the \subgame{}.
This allows us to use any algorithm for solving imperfect information games, e.g. any method from the CFR family.
}

\begin{figure}[ht]
    \centering
    \begin{subfigure}[b]{0.45\textwidth}
        \includegraphics[width=\textwidth]{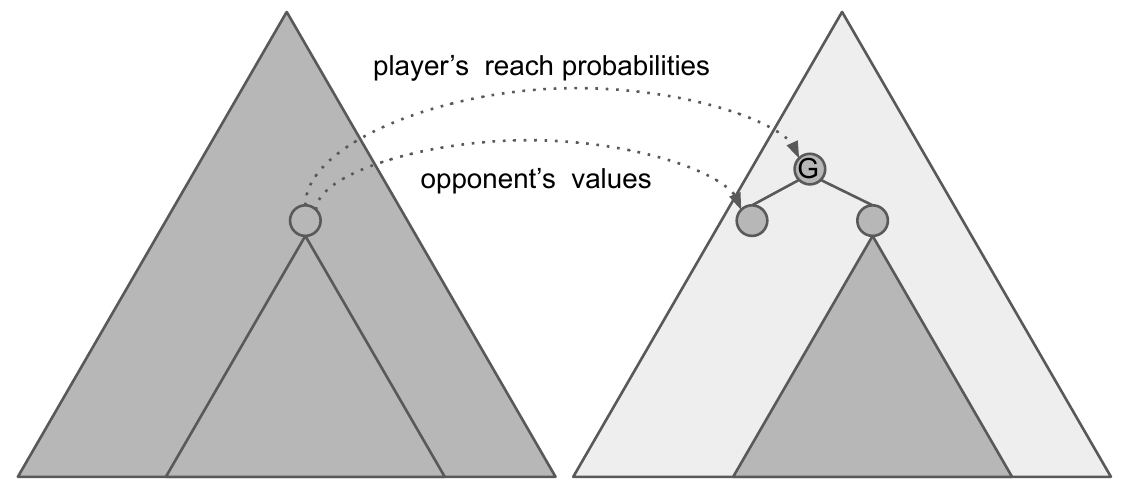}
        \caption{}
    \end{subfigure}
    \begin{subfigure}[b]{0.45\textwidth}
        \includegraphics[width=\textwidth]{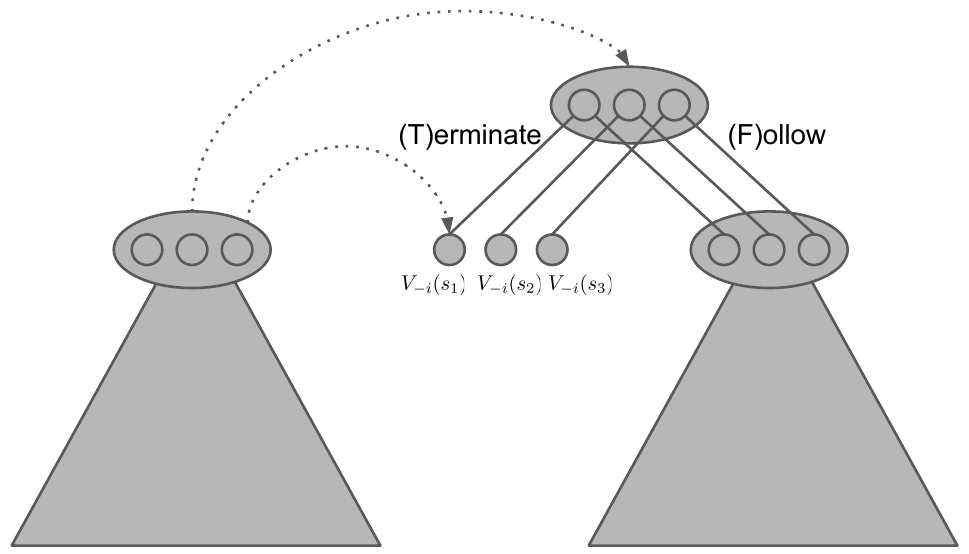}
        \caption{}
    \end{subfigure}
    \caption{
    a) The gadget game is constructed for a specific public \subgame{}, and the construction needs i) player’s reach probabilities and ii) the opponent’s counterfactual values. 
    b) The opponent gets to play first, and in each infoset chooses whether to
(t)erminate and receive the constraint values, or to (f)ollow and play the original
sub-game.
    }
    \label{fig:iig:cfrd_gadget}
\end{figure}

\begin{figure}[ht]
    \centering
    \begin{subfigure}[b]{0.42\textwidth}
        \includegraphics[width=\textwidth]{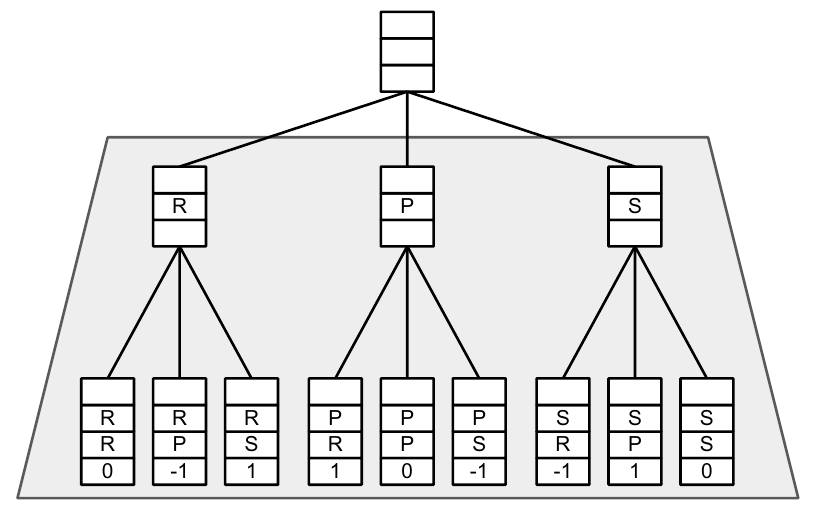}
        \caption{}
    \end{subfigure}
    \begin{subfigure}[b]{0.55\textwidth}
        \includegraphics[width=\textwidth]{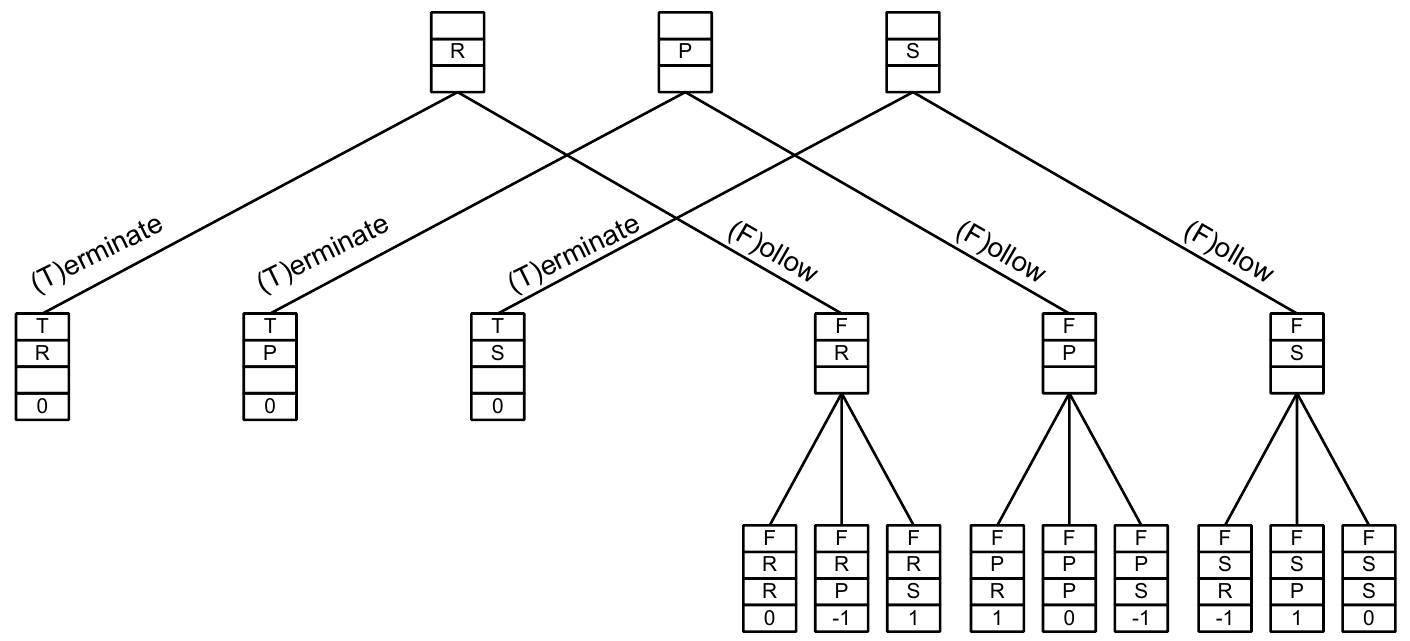}
        \caption{}
    \end{subfigure}
    \caption{a) \Subgame{} of \rps{}.
    b) CFRD gadget for \resolving{} the \subgame{}.
    Note that the zero values for the (T)erminate action come from the optimal solution.}
    \label{fig:iig:cfrd_gadget_rps}
\end{figure}

\subsubsection{Stackelberg Equilibria}
\second{
The same idea of gadget construction has recently been generalized for the
solution concept of Stackelberg equilibria \citep{ling2021safe}.
The paper follows two steps.
First, it shows that safe \resolving{} for Stackleberg equilibria can be achieved using a carefully constructed upper and lower bound constraints (compared to just upper bound).
Second, it introduces modified gadget construction for the lower bound constraints (and uses the ``CFRD gadget'' construction for the upper bound constraints).
}

\section{CFR-D}
\label{sec:iig-cfrd}
\second{
The notion of safe \resolving{} and the gadget formulation was first introduced by the CFR-D algorithm \citep{burch2014solving}.
The CFR-D algorithm decomposes the game tree into trunk and \subgames{}, storing and averaging the policy only for the trunk.
During each CFR iteration, individual \subgames{} (using the reach distribution defined by the current CFR policy) are solved for and strategies discarded, returning only the individual values.
The trunk computation thus uses a form of limited lookahead and value function (as described in Section \ref{sec:iig-depth_limited_solve}).
During playtime, CFR-D simply uses the stored trunk policy until it reaches one of the \subgames{}.
As the \subgame{} policies are never explicitly stored, CFR-D then re-solves a \subgame{} on-demand using the CFR-D gadget.

This decomposition was initially motivated by the need to overcome memory limitations, essentially trading memory for time.
The main ideas of the algorithm (decomposition and safe \resolving{}) now form the core building blocks of many modern search algorithms.
}

\section{\Subgame{} Refinement}
\label{sec:iig-subgame_refinement}
\second{

\begin{figure}[ht]
  \centering
  \includegraphics[width=0.4\textwidth]{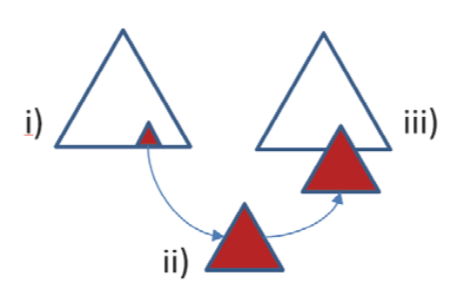}
  \caption{\Subgame{} refinement framework. 
  (i) the strategy for the game is pre-computed using coarse-grained abstraction 
  (ii) during the play, once we reach a node defining a sufficiently small subgame, we refine the strategy for that \subgame{} 
  (iii) this together with the original strategy for the
trunk creates a combined strategy. 
The point is to produce improved combined strategy}
\label{fig:iig-subgame_refinement}
\end{figure}

We now present another contribution of this thesis the --- the \gls{subgame_refinement} framework and max margin \resolving{}.
For a long time, state of the art methods simply pre-computed optimal solution for a simplified (abstracted)
version of the game (Section \ref{sec:iig-abstraction_methods}).
When the agent played the game, it simply acted based on this pre-computed policy.
But as the game was approaching its end, the remaining sub-problem
to reason about become either tractable or one could make a finer-grained abstraction of the \subgame{}.

An appealing idea is then to compute a new policy for the \subgame{}, refining the original strategy (Figure \ref{fig:iig-subgame_refinement}).
One can than use the CFR-D gadget to guarantee the soundness of such approach.
But even if one could do better in the finer-grained abstraction, CFR-D gadget has no strong reason to do so --- it is only guaranteed to recover the original solution.
That is, the resulting policy in the finer-grained abstraction is guaranteed to be no worse than the original policy based on the coarse abstraction.
}

\subsection{\Subgame{} Margin}
\second{
While the CFR-D gadget recovers the original policy, max margin aims to improve upon it and the resulting improvement is in a sense optimal \citep{moravcik2016refining}.
The safe \resolving{} constraints of CFR-D guaranteed that the opponent's counterfactual values of the resulting policy are no greater that counterfactual values of the original policy.
To be able to improve the values, we first introduce the \subgame{} margin (Definition~\ref{def:iig-subgame_margin}), corresponding to a slack between the values of the original policy and the values\footnote{Counterfactual best response values (Section~\ref{sec:iig-counterfactual_best_response}).} of the new strategy.
)

\begin{defn}[\Subgame{} Margin]
\label{def:iig-subgame_margin}
\citep[Definition 2]{moravcik2016refining}
Let $\pi_1, \pi_1'$ be a pair of strategies for \subgame{} $s_{pub}$.
A \subgame{} margin $SM_1$ is then
\begin{align*}
SM_1(\pi_1, \pi_1', s_{pub}) = \min_{s \in \mathcal{S}_2(s_{pub}) } \cfbrv_2^{\pi_1}(s) - \cfbrv_2^{\pi_1'}(s)
\end{align*}
\end{defn}

The \subgame{} margin has several useful properties. 
The worst case performance is closely related to the value of the margin.
If it is non-negative, the new strategy is guaranteed to be no more exploitable than original one. 
Furthermore, given that the opponent’s best response reaches the
\subgame{} with non-zero probability, the performance against this best response is improved and this improvement is at least proportional to the \subgame{} margin (and may be greater) (Theorem~\ref{thm:iig-maxmargin}).

\begin{restatable}{thm}{maxmargintheorem}
\label{thm:iig-maxmargin}
\cite[Theorem 1]{moravcik2016refining}
Given a strategy $\pi_1$, a \subgame{} $s_{pub}$ and a refined \subgame{} strategy $\pi_1^{s_{pub}}$, let $\pi'_1 = \pi_1[s_{pub} \leftarrow \pi_1^{s_{pub}}]$ be a 
combined strategy of $\pi_1$ and $\pi_1^{s_{pub}}$. 
Let the \subgame{} margin $SM(\pi_1, \pi_1', s_{pub})$ be non-negative. 
Then $BRV(\pi_1') - BRV(\pi_1) \geq 0$. 
Furthermore, if there is a best response strategy $\pi^*_2 \in \brset(\pi_1')$ such that  $P^{(\pi_1', \pi_2^*)}(s) > 0$ for some $s 
\in \mathcal{S}_2(s_{pub})$, then $BRV(\pi_1') -
BRV(\pi_1) \geq P_{-2}^{\pi_1'}(s) \, SM(\pi_1, \pi_1', s_{pub})$.
\label{margin_theorem}
\end{restatable}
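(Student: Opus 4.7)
The plan is to construct a comparison opponent strategy against which $\pi_1$ performs essentially as well as the refined $\pi_1'$ performs against its own worst case, and to bound the gap by the \subgame{} margin. First I would fix any $\pi_2^* \in \brset(\pi_1')$ and define a strategy $\pi_2^{\triangle}$ that coincides with $\pi_2^*$ at every infostate lying outside the \subgame{} rooted at $s_{pub}$, while switching to a counterfactual best response $\pi_2^{cf} \in \cfbrset(\pi_1)$ at every infostate inside the \subgame{}. Because $\pi_1$ and $\pi_1'$ agree everywhere except inside $s_{pub}$, and because $\pi_2^{\triangle}$ and $\pi_2^*$ also agree everywhere outside $s_{pub}$, every terminal history that never enters $s_{pub}$ has the same reach probability under $(\pi_1, \pi_2^{\triangle})$ and under $(\pi_1', \pi_2^*)$; its reward contribution therefore cancels in the difference $R_2(\pi_1, \pi_2^{\triangle}) - R_2(\pi_1', \pi_2^*)$.

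Next I would isolate the contribution from histories passing through $s_{pub}$ using the perfect-recall factorisation
\[
    R_2^{\mathrm{thru}}(\pi_1, \pi_2) \;=\; \sum_{\sigma \in \mathcal{S}_2(s_{pub})} P_2^{\pi_2}(\sigma)\, v_{2,c}^{(\pi_1,\pi_2)}(\sigma),
\]
which holds because $P_2^{\pi_2}(h)$ depends only on $\sigma = s_2(h)$. By construction $v_{2,c}^{(\pi_1,\pi_2^{\triangle})}(\sigma) = \cfbrv_2^{\pi_1}(\sigma)$, and by definition of a counterfactual best response $v_{2,c}^{(\pi_1',\pi_2^*)}(\sigma) \leq \cfbrv_2^{\pi_1'}(\sigma)$. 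Since $\pi_2^{\triangle}$ and $\pi_2^*$ agree above the \subgame{} the player-$2$ reaches coincide, and writing this common factor $\rho_\sigma$ I obtain
\[
    R_2(\pi_1, \pi_2^{\triangle}) - R_2(\pi_1', \pi_2^*) \;\geq\; \sum_{\sigma \in \mathcal{S}_2(s_{pub})} \rho_\sigma \bigl(\cfbrv_2^{\pi_1}(\sigma) - \cfbrv_2^{\pi_1'}(\sigma)\bigr).
\]
Invoking Definition \ref{def:iig-subgame_margin}, each bracketed term is at least $SM(\pi_1,\pi_1',s_{pub})$.

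The first conclusion is then immediate. Under $SM \geq 0$ the right-hand side is non-negative, so $R_2(\pi_1, \pi_2^{\triangle}) \geq R_2(\pi_1', \pi_2^*)$. Because $\pi_2^{\triangle}$ is a feasible opponent to $\pi_1$, the max over opponents only grows, and translating through zero-sum gives $BRV(\pi_1') = R_1(\pi_1',\pi_2^*) \geq R_1(\pi_1, \pi_2^{\triangle}) \geq BRV(\pi_1)$. For the refined bound, once $SM \geq 0$ every summand above is non-negative, so I can drop every term except the distinguished $s$ for which $P^{(\pi_1', \pi_2^*)}(s) > 0$ and retain a single-term lower bound proportional to the reach of that state multiplied by $SM$.

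The conceptually clean part of the argument is the \emph{cf-swap}: replacing $\pi_2^*$ inside $s_{pub}$ with a counterfactual best response to $\pi_1$ and showing the resulting comparison strategy cancels the outside terms exactly. The main obstacle is bookkeeping the reach coefficient of the surviving term so that it matches the stated $P_{-2}^{\pi_1'}(s)$. This requires factorising $P^{(\pi_1',\pi_2^*)}(s) = P_{-2}^{\pi_1'}(s)\, P_2^{\pi_2^*}(s)$, unwinding the counterfactual weights hidden inside each $\cfbrv$ (which depend on $\pi_1$ and $\pi_1'$ identically above $s_{pub}$, hence equal $P_{-2}^{\pi_1'}$), and checking that the min in $SM$ is realised in compatible units --- after which the stated form $P_{-2}^{\pi_1'}(s)\, SM$ drops out.
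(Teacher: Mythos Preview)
The paper does not give a self-contained proof of this theorem; it simply writes ``See the Appendix of \citep{moravcik2016refining}.'' Your argument is the standard one that underlies that result: fix a best response $\pi_2^*$ to $\pi_1'$, swap its \subgame{} behaviour for a counterfactual best response to the \emph{original} $\pi_1$, observe that everything outside $s_{pub}$ cancels because both player~1 strategies and both player~2 strategies agree there, and bound the remaining inside-\subgame{} difference via the margin. The first conclusion ($BRV(\pi_1')\geq BRV(\pi_1)$) falls out cleanly from your chain $BRV(\pi_1')=R_1(\pi_1',\pi_2^*)\geq R_1(\pi_1,\pi_2^{\triangle})\geq BRV(\pi_1)$, and this is exactly the intended mechanism.

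The one place where your write-up is not fully tied down is the coefficient in the quantitative bound. Your decomposition produces
\[
BRV(\pi_1')-BRV(\pi_1)\;\geq\;\sum_{\sigma\in\mathcal{S}_2(s_{pub})} P_2^{\pi_2^*}(\sigma)\bigl(\cfbrv_2^{\pi_1}(\sigma)-\cfbrv_2^{\pi_1'}(\sigma)\bigr)\;\geq\;P_2^{\pi_2^*}(s)\cdot SM,
\]
i.e.\ the factor in front of $SM$ is player~2's \emph{own} reach $P_2^{\pi_2^*}(s)$, whereas the statement in the thesis records $P_{-2}^{\pi_1'}(s)$. Your last paragraph suggests ``unwinding the counterfactual weights hidden inside each $\cfbrv$'' to reconcile the two, but that does not actually work under the thesis' definitions: the $P_{-2}$ weighting is already baked into $\cfbrv$ and hence into $SM$ itself, so you cannot also extract a separate $P_{-2}^{\pi_1'}(s)$ factor. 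The discrepancy is almost certainly a notational drift between the thesis and the original paper (the original uses a slightly different normalisation of counterfactual values), not a flaw in your argument --- but you should state explicitly which reach factor your derivation delivers rather than claiming the printed form ``drops out.''
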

\begin{proof}
See the Appendix of \citep{moravcik2016refining}.
\end{proof}

Theorem~\ref{thm:iig-maxmargin} has critical consequences for the \subgame{} refinement. 
Since we refine the strategy once we reach the \subgame{}, we are either
facing opponent's best response that reaches the $s_{pub}$, or they made a
mistake earlier in the game. 
Furthermore, the probability of reaching a \subgame{} is proportional to $P^{\pi}$. 

\subsection{Optimization Formulation}
\second{
It is relatively easy to modify the CFR-D optimization formulation to maximize the \subgame{} margin by introducing a slack variable.
The high level idea is that the CFR-D gadget construction simulates the ``no greater than'' constraints $u_{s} \leq V_{-i}(s_{pub}, s)$ (Section~\ref{sec:iig-resolving_optimization}).
Max margin then adds a slack/margin to the constraints that is then maximized $u_{s} + m \leq V_{-i}(s_{pub}, s)$ (Equation~\ref{eq:iig-maxmargin_constrants}).

\begin{align}
\label{eq:iig-maxmargin_constrants}
\begin{split}
    & \max_{u, v, m} m  \\
    & Fy = f, E^\top u -Ay \geq \mathbf{0}, y \geq \mathbf{0} \\
    & u_{s} + m\leq V_{-i}(s_{pub}, s) \, \forall s \in \mathcal{S}_{-i}(s_{pub})
\end{split}
\end{align}

}

\subsection{Gadget Game Construction}

\begin{figure}[ht]
    \centering
    \begin{subfigure}[b]{0.4\textwidth}
        \includegraphics[width=\textwidth]{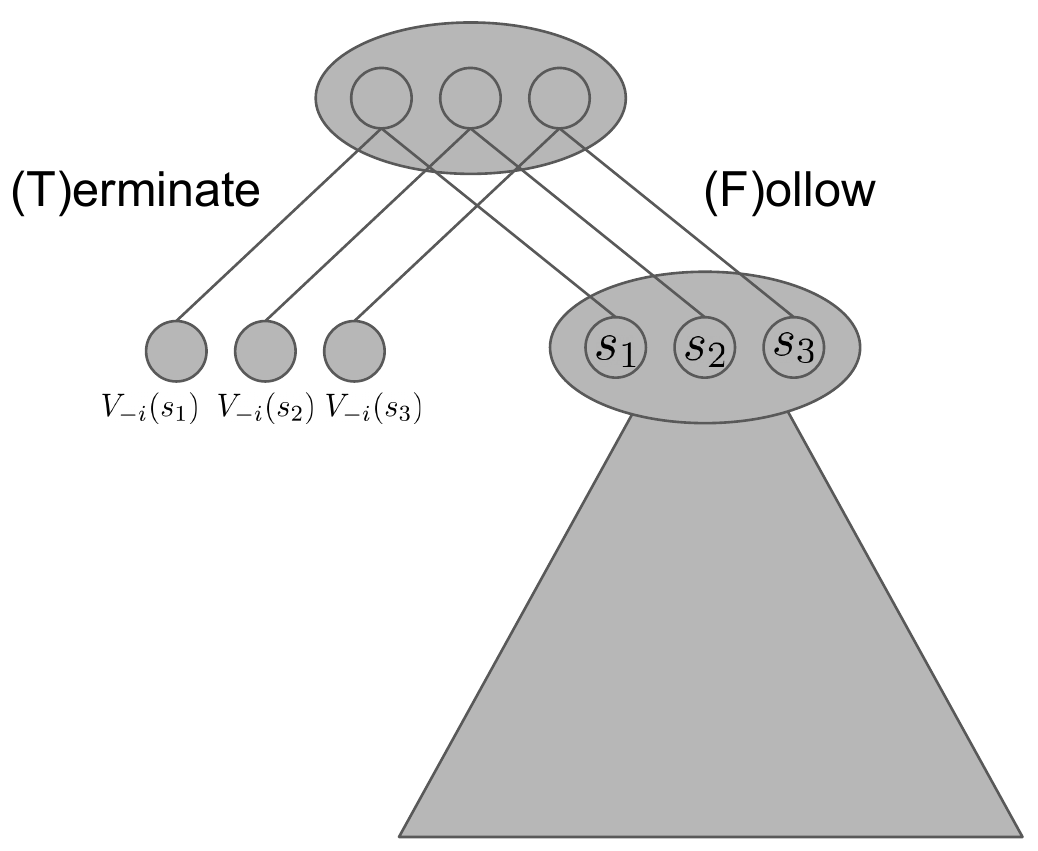}
        \caption{}
    \end{subfigure}
    \hspace{0.1\textwidth} 
    \begin{subfigure}[b]{0.25\textwidth}
        \includegraphics[width=\textwidth]{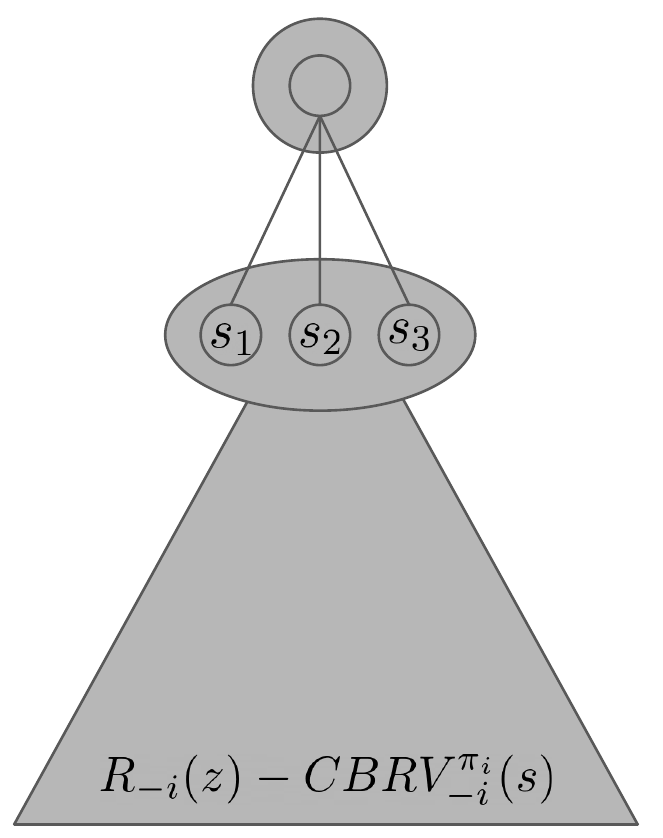}
        \caption{}
    \end{subfigure}
    \caption{
    Comparison of the CFRD and max-margin gadgets.
    a) CFRD gadget: the opponent at each root infostate $s \in \mathcal{S}_{-i}(s_{pub})$ chooses to either (T)erminate, receiving the bound value $V_{-i}(s_{pub}, s)$, or to (F)ollow.
    b) Max-margin gadget: the opponent chooses an infostate $s \in \mathcal{S}_{-i}(s_{pub})$ to start with, and the corresponding terminal values are shifted so that the values under the original policy are all zero.
    }
    \label{fig:iig-max_margin_gadget}
\end{figure}

\first{
Similarly to the gadget-game construction of the CFR-D constraints (Section~\ref{sec:iig-cfrd_gadget}), we can create a gadget game corresponding to the max margin optimization. 
We create the gadget game by making two modifications to the original \subgame{}.
i) we shift the opponent's utilities using the $\cfbrv{}_{-i}$, initializing all the state values to zero and 
ii) we add opponent's node followed by chance nodes at the top of the \subgame{}, allowing the opponent to pick any starting state.

We will distinguish the states, strategies, utilities, etc. for the gadget game by adding a tilde to corresponding notation.
The following is a detailed description of the steps (see also Figure~\ref{fig:iig-max_margin_gadget} that visualizes the constructed max margin gadget)

\paragraph{Utilities Shift}
    To compare the changes in the performance of each of opponents' (root) information states, it is necessary to give them a common baseline. 
    We use the original strategy $\pi_i^{s_{pub}}$ as the starting point.
    For every $s \in \mathcal{S}_{-i}(s_{pub})$, we subtract the opponent's original counterfactual best response value from all the terminal histories $z$ reachable from $s$, setting the return to
    $\tilde{R}_{-i}(z) = R_{-i}(z) - \cfbrv_{-i}^{\pi_i}(s)$.
    We also update $\tilde{R}_i(z) = -\tilde{R}_{-i}(z)$ since we need the game to remain zero-sum.
    The reason behind this utilities shift is that under the original policy $\pi_i$, opponent's values are now zero for all of their root infostates: $\cfbrv{}_{-i}^{\pi_i}(\tilde{s}) = 0 \, \forall \tilde{s} \in \tilde{\mathcal{S}}_{-i}(s_{pu})$.
    
\paragraph{Opponent's Node}
    The opponent is permitted to choose the distribution of their private states at the start of the \subgame{}, while the player retains their distribution from the original strategy $\Delta(\mathcal{S}_i(s_{pub}))$.  
    Since the opponent is aiming to maximize $\tilde{R}_{-i}$, they will always select the information state with the lowest margin.
    The minimax nature of the zero-sum game forces the player to find a strategy that maximizes this value.
    The construction adds a decision node $\tilde{d}$ for the opponent, where each action corresponds to choosing an information set $s \in \mathcal{S}_{-i}(s_{pub})$ to start with.
    Each action the leads to a new chance node $\tilde{c}_{s}$, where the chance player chooses the histories $h \in \mathcal{H}(s)$
    based on $\Delta(\mathcal{S}_i(s_{pub}))$.

}

\subsection{Experiments} 
\second{

\begin{figure}[ht]
  \centering
  \includegraphics[width=0.8\textwidth]{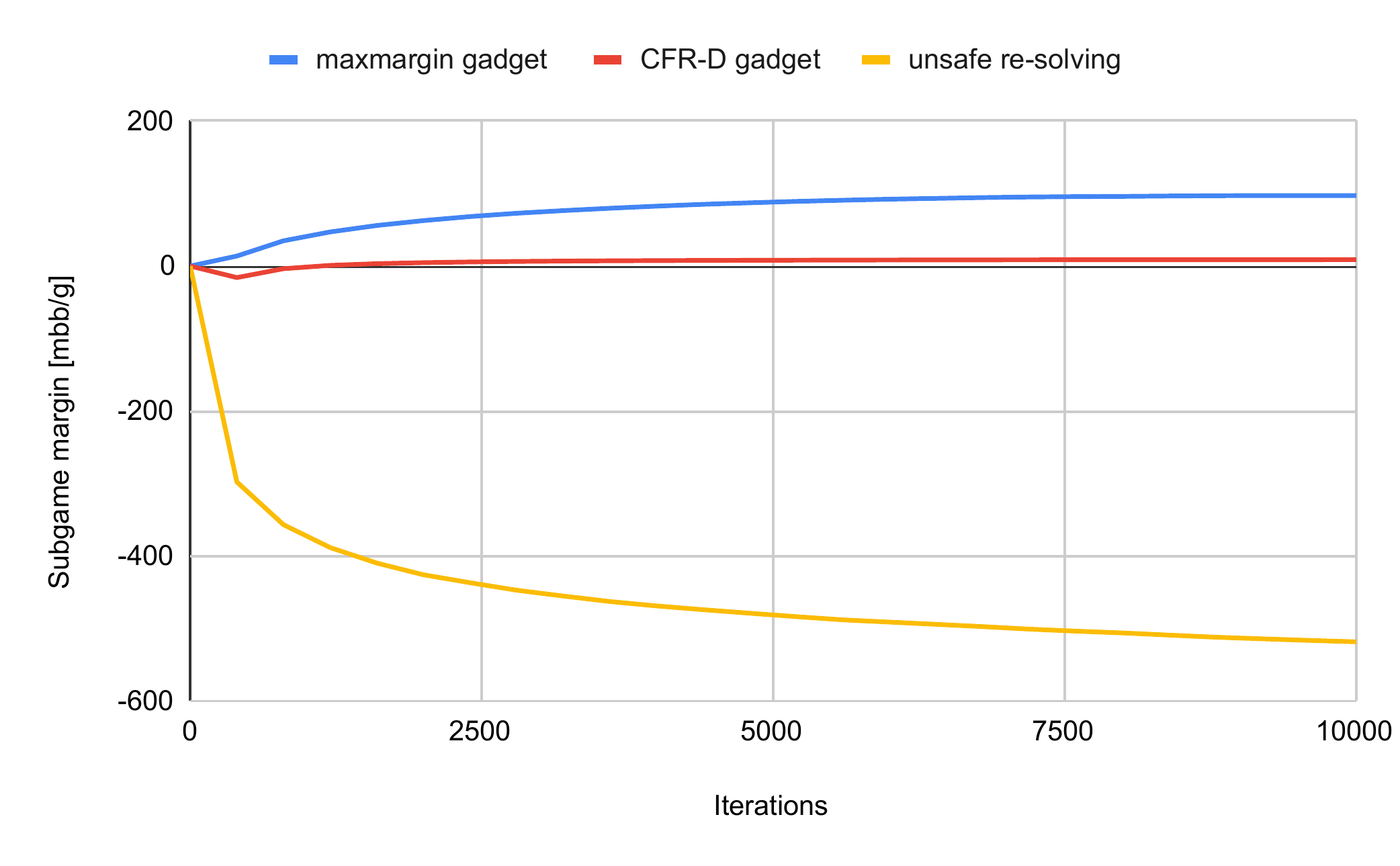}
  \caption{\Subgame{} margins of the refined strategies.
  One big blind corresponds to 100 chips.
  The max-margin technique produces the optimal value. 
  We see that the optimal value is
much greater than the one produced by either re-solving or
endgame solving (which produces even negative margins).
The 95\% confidence intervals for the results (after $10,000$
iterations) are: maxmargin $101.49 \pm  7.09$, \resolving{} $8.79 \pm 2.45$, unsafe solving $-518.5 \pm 49.19$.
}
\label{fig:iig-subgame_refinement_graph}
\end{figure}

We compare the unsafe \resolving{}, CFR-D re-solving and max-margin \subgame{} refinement in \subgames{} of no-limit Texas hold'em poker.
We use an improved version of the Nyx agent, the second strongest participant at the $2014$ Annual Computer Poker Competition (heads-up no-limit Texas hold'em total bankroll) as the baseline strategy to be refined in the \subgames{}.

All the three techniques tested start with the same abstractions and trunk strategy. 
Following \cite{ganzfried2015endgame}, we begin the \subgames{} at start of the last round (the river).
While we use card abstraction to compute the original (trunk) strategy, specifically \citep{schmid2015automatic} and \citep{johanson2013evaluating}, the fine-grained abstraction for the endgame is using no card abstraction.
We use the same actions in the refined \subgame{} as in the original strategy.

We refine only the \subgames{} that (after creating the fine-grained abstraction) are smaller 
than $1,000$ betting sequences --- this is simply to speed up the experiments.
The original agent strategy is used for both players in the trunk of the game.  
Once the self-play reaches the \subgame{}, we refine the strategy of the first player using each of the three techniques.
We ran $10,000$ iterations of the CFR+ algorithm in the corresponding gadget games and each technique was used to refine around $2,000$ \subgames{}.
Figure~\ref{fig:iig-subgame_refinement_graph} then visualizes the average margins for the evaluated techniques.
}

\first{
\textbf{Unsafe \Resolving}
The largely negative margin values suggest that the produced strategy may indeed be much more exploitable.
\textbf{CFR-D \Resolving{}}
The positive margin for \resolving{} shows that, although there's no explicit construction that forces the margin to be greater than zero, it does increase in practice.
Notice, however, that the margin is far below the optimal level.
\textbf{Max Margin Refinement}
This technique produces a much larger \subgame{} margin than the previous techniques.
The size of the margin suggests that the original strategy is potentially quite exploitable, and our technique can substantially decrease its exploitability (Theorem~\ref{thm:iig-maxmargin}).
}
}

\section{Full Lookahead Continual Resolving}
\second{
Safe re-solving produces a policy for a single (public) sub-game and thus corresponds to a single step of search (Figure~\ref{fig:iig-search_1}), producing a strong globally consistent strategy.
We now use safe re-solving as a building block in the continual re-solving algorithm.

As the name suggests, the algorithm continually performs re-solving step for the states visited during gameplay (Figure~\ref{fig:iig-search_2}).
Continual \resolving{} thus needs to use and update the invariants required by the individual re-solving step.
These invariants are updated on the game-play trajectory as the full lookahead rooted in the prior state includes the current game state and its required values.

\subsection{Updating Invariants}

\begin{figure}[ht]
  \centering
  \includegraphics[width=\textwidth]{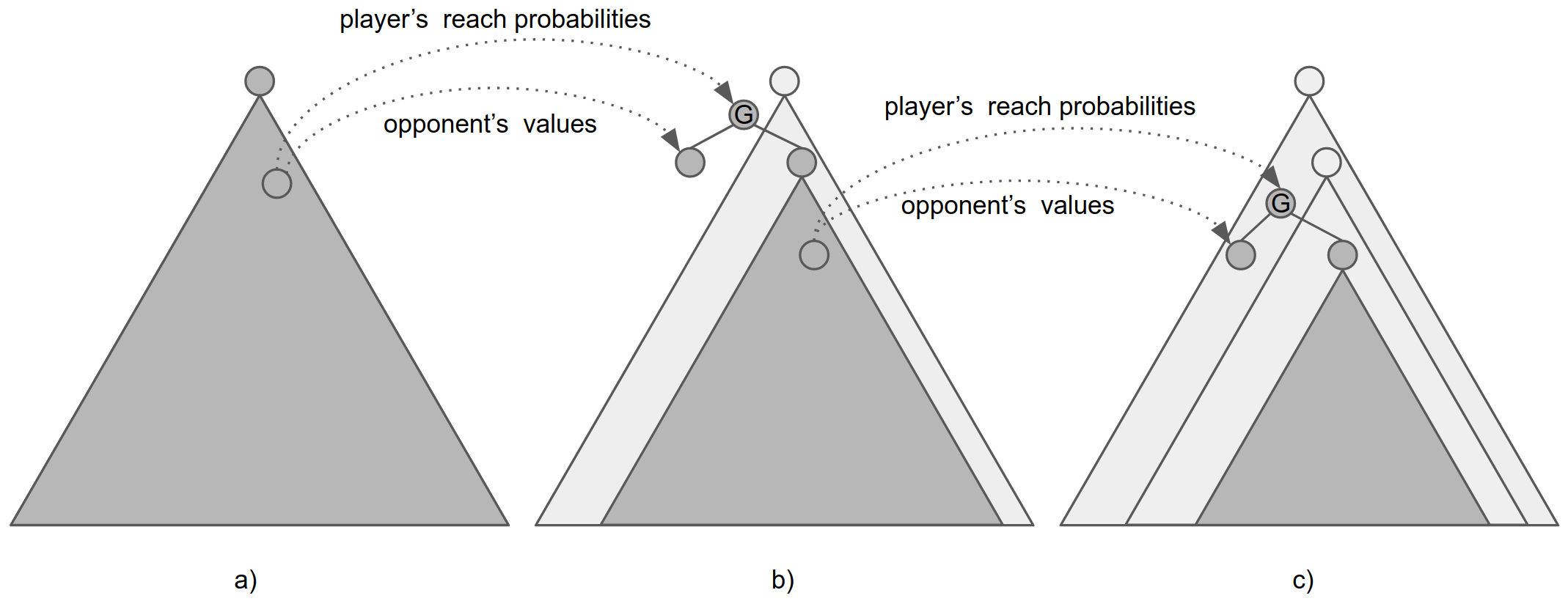}
  \caption{
  Continual \Resolving{}:
  a) Solve is run for the first state of the game.
  b) Re-solving step is performed for the next state the player is to act.
  To run re-solving, we construct the re-solving game using the opponent's value and player's reach probabilities.
  As the lookahead in the previous step included this state, the previous computation includes both of these quantities.
  c) We keep running re-solving for all the next nodes the player is to act in the same way.
  }
\label{fig:iig-continual_resolving_full_lh}
\end{figure}

Recall that the re-solving step requires the following invariants i) players' reach probabilities and ii) opponent's counterfactual values.
First observe that for the very first state of the game, no re-solving is necessary and we can simply solve the initial game and store the result (Figure \ref{fig:iig-continual_resolving_full_lh}a).
For a state visited during the game, we build a \subgame{} rooted in the current public state and use the corresponding invariants for the re-solve.
The invariants are retrieved from the previous lookahead tree as it must include the currently visited state.
This process is then repeated until the game is finished (Figure~\ref{fig:iig-continual_resolving_full_lh}bc).

\begin{algorithm}
\caption{Continual \Resolving{} with Full Lookahead}
\label{alg:iig-continual_resolving_full_lh}
\begin{algorithmic}[1]

\Function{Play}{$s \in \mathcal{S}_i$}
    \State $s_{pub} \gets \mathcal{S}_{pub}(s)$
    \State \Comment{Uses the average strategy from the last search tree.}
    \State $\Delta(\mathcal{S}_i(s_{pub})) \gets GetReachProbabilities(last\_search\_tree, s_{pub})$
    \State \Comment{Uses the averaged counterfactual values from the last search tree.}
    \State $V^{bound}_{-i}(s_{pub}) \gets GetCounterfactualValues(last\_search\_tree, s_{pub}, -i)$
    \State $search\_tree \gets$ \Call{BuildFullLookaheadTree}{$s_{pub}$}
    \State \Call{SafeResolve}{$search\_tree$, $\Delta(\mathcal{S}_i(s_{pub})$, $V^{bound}_{-i}(s_{pub}))$}
    \State $last\_search\_tree \gets search\_tree$
    \State \Return $a \sim \pi_i(s)$
\EndFunction
\State
\Function{NewGame}{}
    \State $last\_search\_tree \gets $ \Call{BuildFullLookaheadTree}{$s^{initial}_{pub}$}
    \State \Call{Solve}{$last\_search\_tree$}
\EndFunction
\end{algorithmic}
\end{algorithm}

}

\subsection{Resulting Policy}
\second{
As we need to reason about all the states in a public state, the algorithm produces a policy for all such states rather than the current infostate (i.e. strategy for all the poker hands we could be holding).
Although policies for the states other than the current one are not directly useful for action selection, they are need to make a sound \resolve{} search.
They can also be leveraged by variance reduction techniques (e.g. AIVAT, Section~\ref{sec:iig-aivat}).

As the re-solving step uses the gadget game, any game solving method can be used.
The common choice is to use a member of the CFR family, and continual \resolving{} has also been analysed with its Monte Carlo variants \citep{sustr2018monte}.
}

\subsection{Experiments}
\second{
While the full lookahead is clearly impractical for large games, one can still evaluate the performance of the continual \resolving{} algorithm on small game variants.
Figure \ref{fig:continual_resolving_full_lh} reports exploitability on Leduc poker and graph chase game (\glasses{}).
As continual \resolving{} is a stateless online search algorithm (Section~\ref{sec:pig-online_stateful_vs_stateles}), we can compute the exploitability of the tabularized policy (Section~ \ref{sec:pig-tabularized_policy}).
}

\begin{figure}[ht]
    \centering
    \begin{subfigure}[b]{0.48\textwidth}
        \includegraphics[width=\textwidth]{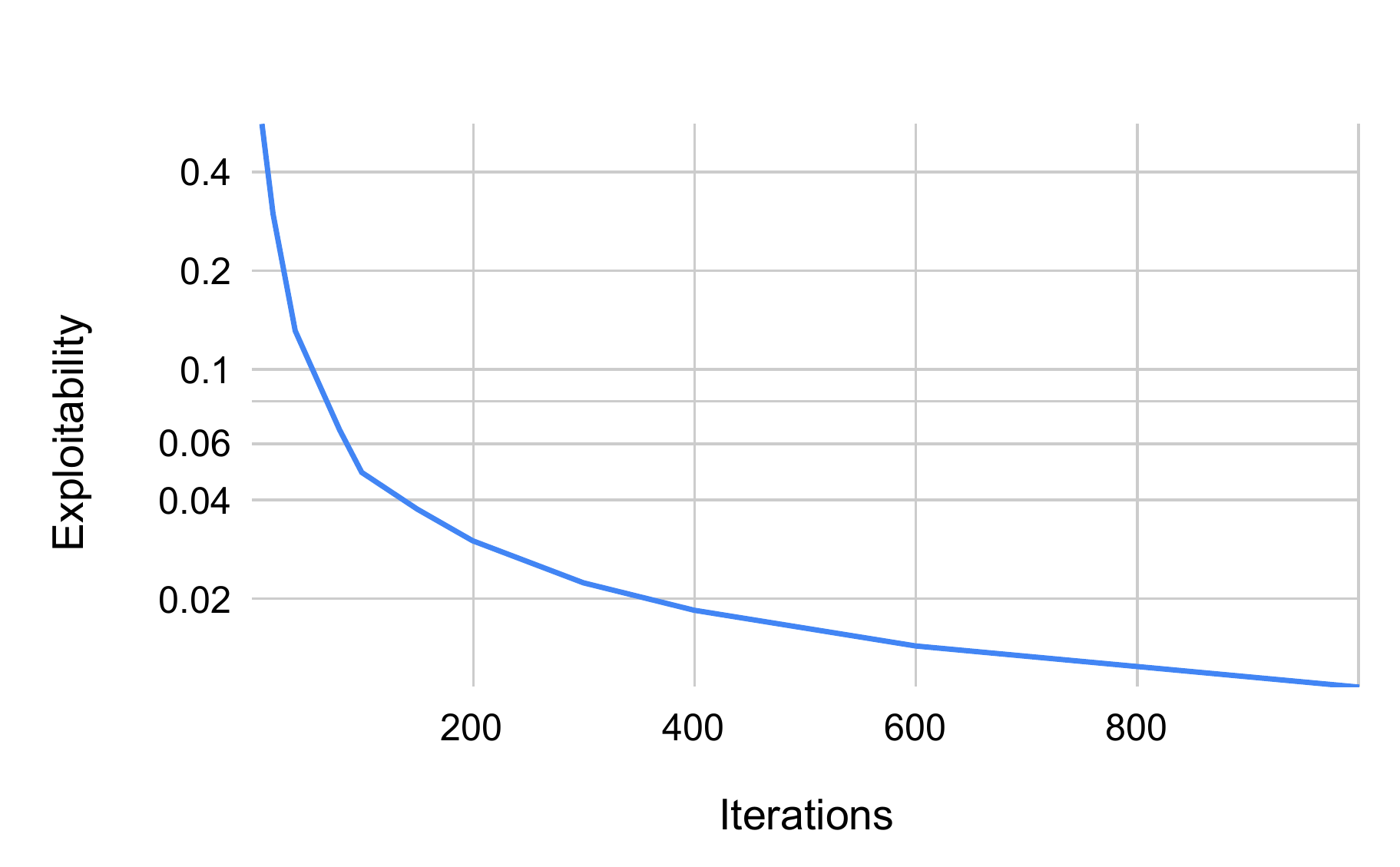}
        \caption{Leduc poker.}
        \label{fig:iig-continual_resolving_full_lh_leduc}
    \end{subfigure}
    \begin{subfigure}[b]{0.48\textwidth}
        \includegraphics[width=\textwidth]{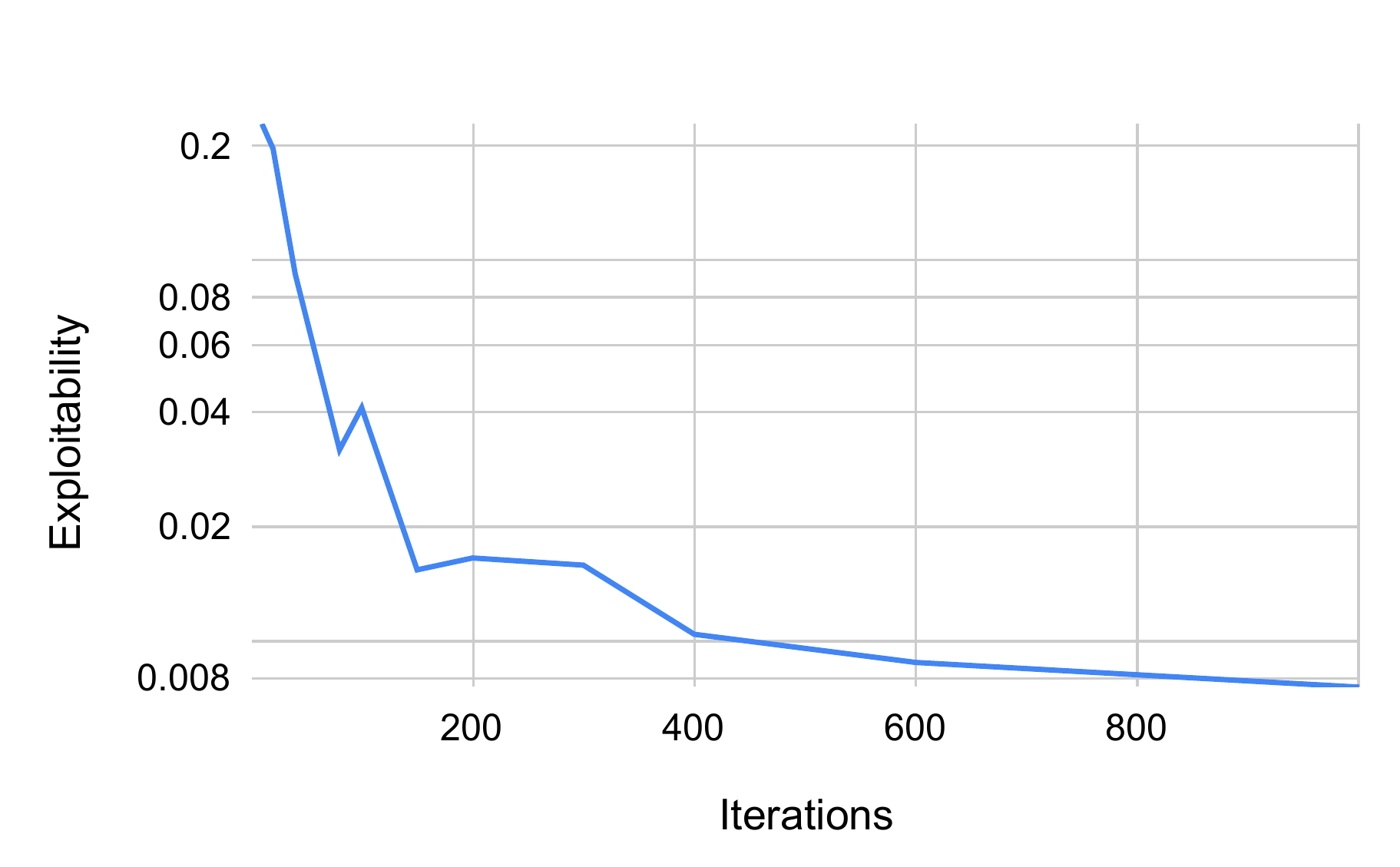}
        \caption{\glasses{}.}
        \label{fig:continual_resolving_full_lh_glasses}
    \end{subfigure}
    \caption{Continual \resolving{} with full-lookahead tree and increasing number of CFR iterations for the search / re-solve.}
    \label{fig:continual_resolving_full_lh}
\end{figure}

\section{Limited Lookahead and Value Functions}
\label{sec:iig-depth_limited_solve}
\second{
We are now ready to introduce a dept-limited solving.
Just like in perfect information games, this allows us to reason over only a limited number of steps forward and evaluate the \subgames{} at the end of our lookahead using a value function.
Fortunately, we already know how generalized value functions look for imperfect information games (Section \ref{sec:iig-value_functions}).
We just need to properly use them in the limited lookahead paradigm.

The separation to lookahead tree and \subgames{} is possible thanks to the notion of public tree and we will show a simple modification of the public tree CFR that leverages value functions for the \subgames{}.
}

\begin{figure}[ht]
  \centering
\includegraphics[width=0.3\textwidth]{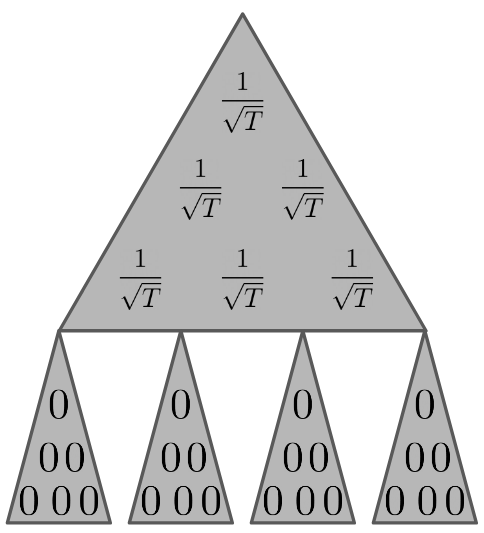}
  \caption{CFR with limited lookahead and value functions.
  During each iteration, regret minimizer is used to in the lookahead tree and zero-regret policies are played in the \subgames{}.}
\label{fig:iig-limited_lookahead_solve}
\end{figure}

\subsection{CFR and Value Functions}
\second{
CFR is particularly suited to be modified to use value functions.
This is because the CFR Theorem (Theorem~\ref{thm:iig-cfr}) bounds the overall regret by the sum of individual counterfactual regrets.
Furthermore, recall that the CFR algorithm (Algorithm~\ref{alg:iig_cfr}) essentially sends reach probabilities down the tree and sends values up the tree.
Consider what happens if we use a regret minimizer for all states in the lookahead tree, and use a zero-regret policy for all the states of the \subgames{} (Figure \ref{fig:iig-limited_lookahead_solve}).
The CFR Theorem then guarantees convergence of this approach.

As a zero-regret policy in a \subgame{} corresponds to a counterfactual best response, both players are best-responding to each other and thus playing optimally.
During each CFR iteration, we can thus simply use an optimal policy in all the \subgames{}.
Furthermore, the upward pass of CFR only requires values of those \subgames{}.
To run CFR in the lookahead tree, each iteration never explicitly requires one to compute the optimal (zero-regret) policies in the \subgames{}, it only requires the optimal values corresponding to  \subgame{} --- value functions.

Algorithm~\ref{alg:public_tree_cfr_limited_lh} then shows a minor modification to the ComputeValues Function (Algorithm~\ref{alg:iig_cfr}, Line \ref{alg:func-compute_values}) that results in version of CFR that uses value functions.
The reason why this is a particularly simple change is that our initial implementation of CFR was already operating on public states.
}

\second{
An important detail is that the counterfactual values returned by the value function need to be counterfactual best response values (Section~\ref{sec:iig-counterfactual_best_response}).
This is because value function returns values under an optimal policy, but we need to further restrict the corresponding optimal policy --- making sure the per-iteration regrets in the \subgame{} are all zero.
Fortunately, producing zero-regret optimal policies is not at all difficult and CFR algorithm already produces such policies.
}

\begin{algorithm}
\caption{Limited Lookahead Tree CFR}
\label{alg:public_tree_cfr_limited_lh}
\begin{algorithmic}[1]

\Function{ComputeValues}{$s_{pub} \in \mathcal{S}_{pub}$, $d_1 \in \Delta(\mathcal{S}_1(s_{pub}))$, $d_2 \in \Delta(\mathcal{S}_2(s_{pub}))$}
    \If{EndOfLookahead($s_{pub}$)}
        \State $v_{i,c}(s) \gets$ \Call{ValueFunction}{$s_{pub}, d_1, d_2$}
        \State \Return
    \EndIf
\EndFunction
\end{algorithmic}
\end{algorithm}

\section{Continual \Resolving{} with Value Functions}
\label{sec:iig-continual_resolving_vf}

\second{
It is time to combine all of the building blocks introduced up to this point, culminating in a practical sound search algorithm for imperfect information games.
We simply modify continual \resolving{} so that the individual \resolving{} steps use limited lookahead and generalized value functions (Figure \ref{fig:iig_continual_resolving}).
The combination of continual \resolving{} and value functions then results in the critical bound on the resulting policy (Theorem~\ref{thm:iig-deepstack}).

\begin{thm}
\label{thm:iig-deepstack}
\citep[Theorem 1]{moravvcik2017deepstack}
If the values returned by the value function used when the depth limit is reached have error less than $\epsilon$, and $T$ iterations of CFR are used to re-solve, then the exploitability of continual \resolving{} with value functions is less than $k_1 \epsilon + k_2 \sqrt{T}$, where $k_1$ and $k_2$ are game-specific constants.
\end{thm}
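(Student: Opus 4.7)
The plan is to combine three quantitative ingredients: (i) the convergence rate of CFR inside a single re-solving step, (ii) the propagation of value-function error through that step, and (iii) the additive accumulation of per-step error along the trajectory of re-solvings. The overall strategy mirrors the soundness argument for safe re-solving (Section~\ref{sec:iig-safe_resolve}), but carried through carefully when the gadget's terminal values are replaced by approximate ones produced by the value function.

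First, I would fix a single re-solving step at some public state $s_{pub}$ and bound its local error. Running $T$ iterations of CFR on the gadget game (Section~\ref{sec:iig-cfrd_gadget}) with the depth-limited tree gives, by Theorem~\ref{thm:iig-cfr} and the $1/\sqrt{T}$ regret bound of the chosen regret minimizer, an average policy whose average regret in the gadget is $O(1/\sqrt{T})$. If the counterfactual values returned at the depth limit by the value function deviate from the true optimal-policy counterfactual values by at most $\epsilon$ per infostate, then the payoffs of the gadget game are perturbed by at most $\epsilon$ (in the $\ell_\infty$ sense on the relevant value vector), so by Theorem~\ref{thm:iig_state_values_epsilon_smooth} the value vector produced by the gadget differs from the one that would arise with the exact subgame values by at most a constant multiple of $\epsilon$. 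Combining these, the gadget's counterfactual-value constraint is satisfied up to additive slack $c_1 \epsilon + c_2/\sqrt{T}$, where the constants depend only on game-specific quantities such as $\Delta_R$ and the number of infostates in $s_{pub}$.

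Second, I would lift this local bound to the full game via the safe-resolving invariant. Exactly as in the CFR-D argument, having the opponent's counterfactual values after re-solving bounded by the target values means any best response that continues to play into the resulting refined subgame cannot improve on the target values by more than the additive slack. If the opponent had $k$ re-solving steps along its best-response trajectory, each step contributes at most $c_1\epsilon + c_2/\sqrt{T}$ of extra exploitability, so telescoping over the game depth (or more precisely, over the number of public-state re-solvings the player performs, bounded by a game-specific constant $D$) yields an overall exploitability bound of the form $D(c_1\epsilon + c_2/\sqrt{T})$. Absorbing $D$ and the other game-specific constants into $k_1$ and $k_2$ gives the stated bound $k_1 \epsilon + k_2/\sqrt{T}$.

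The main obstacle I anticipate is the second step: showing that the per-step errors accumulate \emph{additively} and not multiplicatively. Naively, the values used as constraints at the next re-solving step are themselves outputs of an already-perturbed CFR run, so one might fear that a perturbation of size $\delta$ at step $t$ becomes $O(\delta)$ at step $t+1$ with a multiplicative constant greater than one, producing exponential blow-up in the depth. The way around this is to track the opponent's counterfactual best-response value in the \emph{full} game rather than in the gadget, using the invariant that each re-solving step only loosens the bound by the additive slack it introduced, independent of the previous looseness. Formalising this requires the careful telescoping argument made in the DeepStack supplement, exploiting the fact that CFR returns counterfactual best-response values (Section~\ref{sec:iig-counterfactual_best_response}) whose interpretation is stable under the safe-resolving construction; the rest is bookkeeping of the constants $k_1, k_2$ in terms of $\Delta_R$, the number of infostates per public state, and the depth of the public tree.
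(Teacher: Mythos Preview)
The paper itself does not prove this theorem; its entire proof reads ``See the Appendix of \cite{moravvcik2017deepstack}.'' Your sketch is a faithful outline of the argument actually carried out in that appendix: the DeepStack supplement proceeds exactly by (i) bounding the quality of a single depth-limited re-solve in terms of the CFR regret $O(1/\sqrt{T})$ plus the value-function error $\epsilon$, and then (ii) showing that these per-step errors accumulate \emph{additively} over the at most $D$ re-solving steps along a play trajectory, not multiplicatively. You have also correctly identified the delicate point---the additive telescoping of constraint slack across successive gadget constructions---and the mechanism that makes it work, namely tracking the opponent's counterfactual best-response values in the full game rather than only inside each gadget.

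One small caveat: your invocation of Theorem~\ref{thm:iig_state_values_epsilon_smooth} is slightly misplaced. That theorem concerns perturbations of the \emph{input belief distributions} to a subgame, whereas what you need here is robustness to perturbations of the \emph{leaf payoffs} (the value-function outputs at the depth limit). The conclusion you want---that an $\epsilon$ error at the leaves yields at most $O(\epsilon)$ error in the root counterfactual values after CFR---is true and is used in the DeepStack supplement, but it is established there directly from the linearity of counterfactual values in leaf utilities and a bound on total counterfactual reach, not via Theorem~\ref{thm:iig_state_values_epsilon_smooth}. This is a citation slip rather than a conceptual gap; the structure of your argument matches the referenced proof.
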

\begin{proof}
See the Appendix of \cite{moravvcik2017deepstack}.
\end{proof}
}

\begin{figure}[ht]
  \centering
  \includegraphics[width=\textwidth]{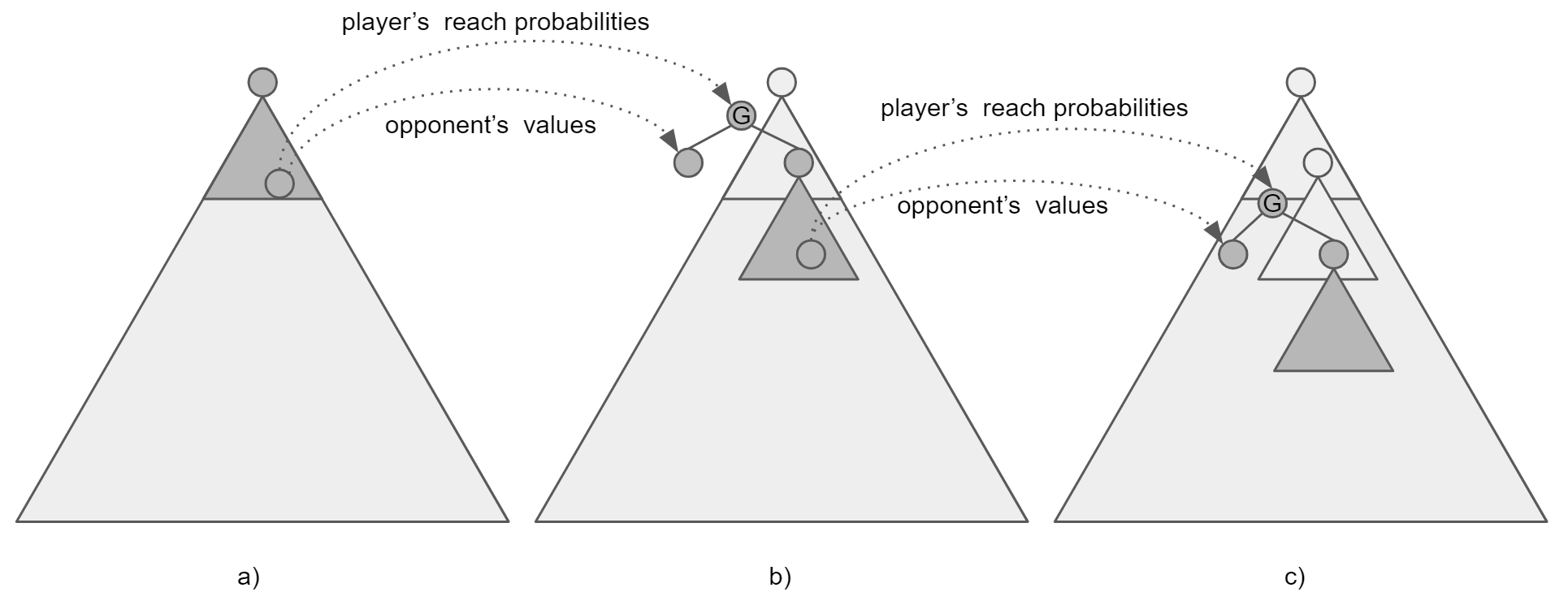}
  \caption{
  Continual Resolving:
  a) Limited lookahead solve is run for the first state of the game.
  b) Re-solving step is performed for the next state the player is to act.
  To run re-solving, we construct the re-solving game using the opponent's value and player's reach probabilities.
  If the lookahead in the previous step included this state,
  the previous computation includes both of these quantities.
  c) We keep running re-solving for all the next nodes the player
  is to act in the same way.
  }
\label{fig:iig_continual_resolving}
\end{figure}

\subsection{Value Functions}
\second{
Training generalized value functions is an open problem.
Deep learning methods are a natural choice to learn and represent the generalized value functions, but how should we train these value functions?
What training data and loss should one use?
The ultimate goal is to learn value functions so that the search results in a low exploitable policy.
But the exploitability is a complex function of the interaction of search and the value function.
As this question is far from answered in perfect information games, it is hardly a surprise that there is little known at this point about the imperfect information case.
}

\subsection{Experiments}
\second{
We again use Leduc poker and \glasses{} to evaluate our search algorithm, and use tabularization for the exploitability computation.
To see the effect of inexact value functions (i.e. functions producing value from an $\epsilon$-optimal policy with non-zero counterfactual regrets), we use the CFR algorithm with a varying number of iterations to serve as the value functions.
This is important since in practice, we will have access only to an approximate value functions.

Figure~\ref{fig:continual_resolving_vf} then shows the exploitability with a varying number of CFR iterations in search and in the value functions.
We can see that as we increase the number of CFR iterations within the search tree, the exploitability gets lower until it hits a limit of the imperfect value function.
Improving the quality of the value function (by increasing number of CFR iterations for the value function) then allows the search to converge closer to an optimal solution.
This is in line with the presented theorem, where the resulting exploitability is bound by the number of search iterations and the quality of the value function (Theorem~\ref{thm:iig-deepstack}).
}

\begin{figure}[ht]
    \centering
    \begin{subfigure}[b]{0.49\textwidth}
        \includegraphics[width=\textwidth]{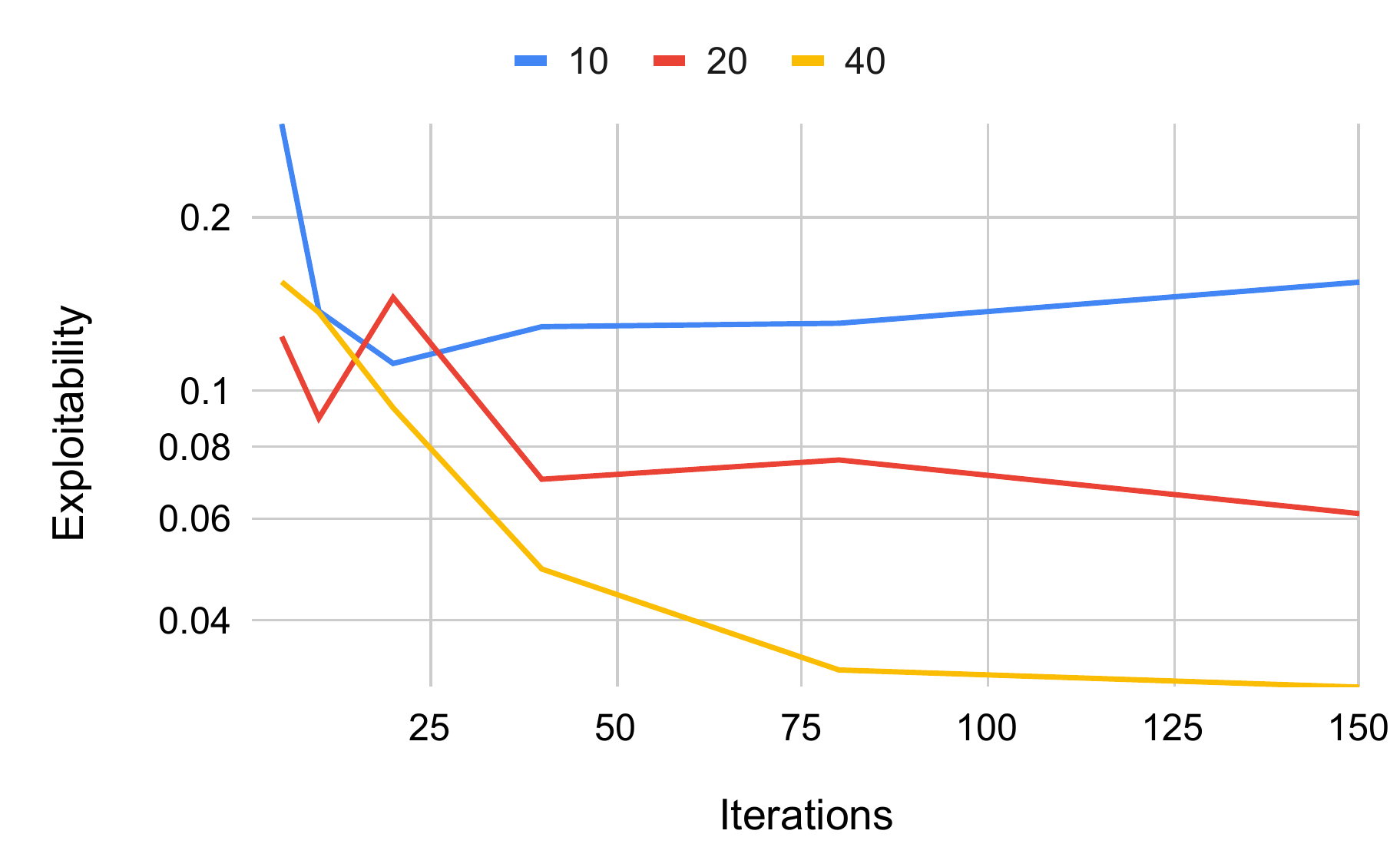}
        \caption{Leduc poker}
        \label{fig:continual_resolving_perfect_vf_leduc}
    \end{subfigure}
    \begin{subfigure}[b]{0.49\textwidth}
        \includegraphics[width=\textwidth]{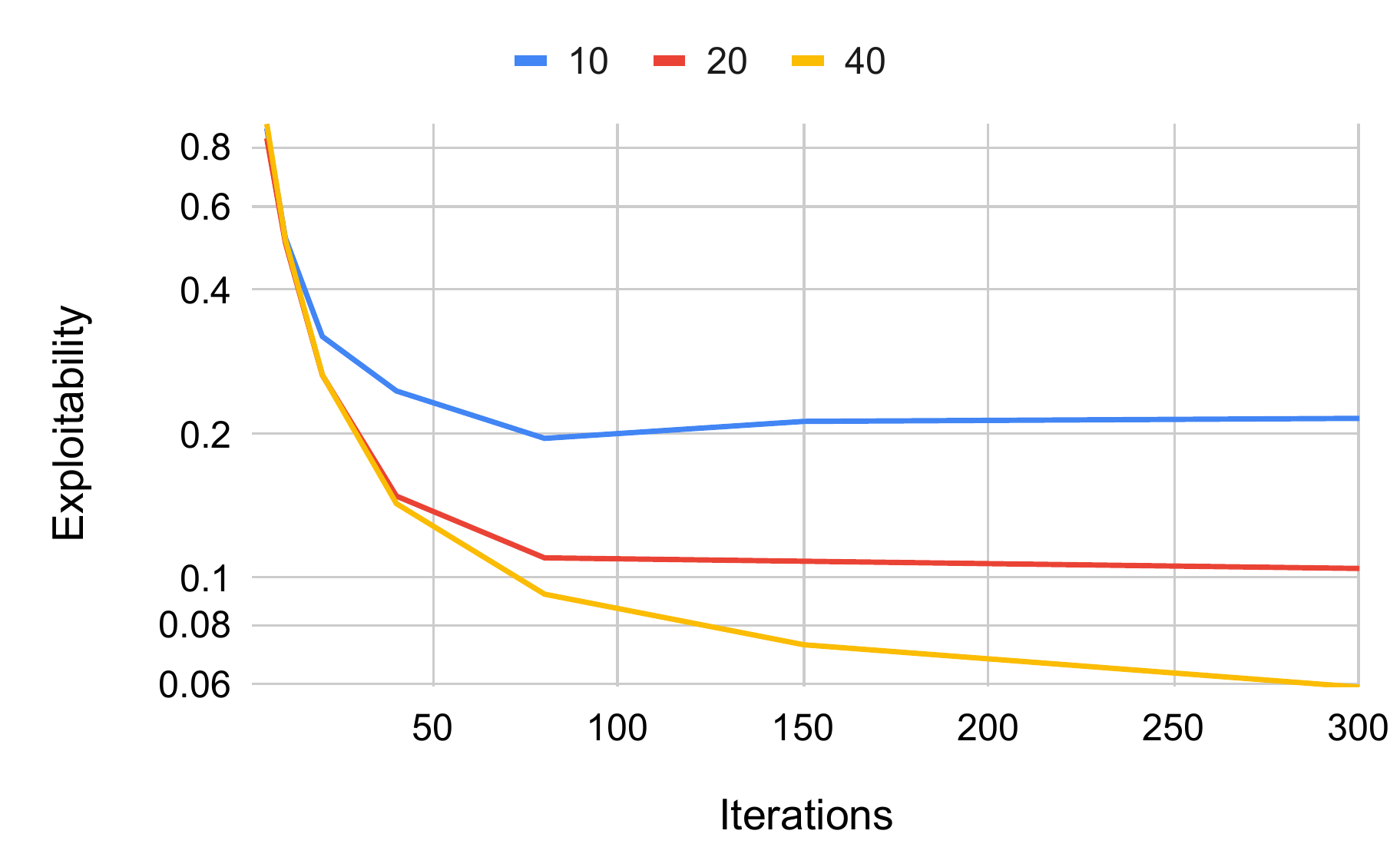}
        \caption{Graph chase game}
        \label{fig:continual_resolving_perfect_vf_glasses}
    \end{subfigure}
    \caption{Continual \resolving{} with limited lookahead tree.
    Lookahead tree is the smallest possible --- only one step forward.
    For the value function, we use CFR to compute the target values with varying number of iterations ($10$, $20$ and $40$).
    }
    \label{fig:continual_resolving_vf}
\end{figure}

\section{Limitations}
\second{
Currently, there are some inherent limitations of the presented sound search methods.
The \resolving{} steps as well as the value functions reason about all the states within a public state $\mathcal{S}_i(s_{pub})$.
While necessary for sound \resolve{} methods, there are games where number of such states is intractably large.
The methods also require explicit model of the environment (i.e. game tree) to run the search.
Finally, as the model is discrete it can not capture continuous action space.
}
\chapter{DeepStack}
\label{chap:iig-deepstack}

\second{
DeepStack --- the final contribution of the thesis --- was the first to introduce the combination of sound search and value functions in imperfect information game of poker \citep{moravvcik2017deepstack}.
The combination of continual resolving and neural networks as value functions led to a leap improvement over the prior methods.
First, unlike the abstraction based techniques, DeepStack was unexploitable by the local best response (Section~\ref{sec:iig-lbr}).
Second, DeepStack became the first program to beat professional human players in no-limit Texas hold'em poker.
}

\section{Search / Resolving Step}
\label{sec:iig-deepstack_search}
\begin{figure}[ht]
  \centering
  \includegraphics[width=1.0\textwidth]{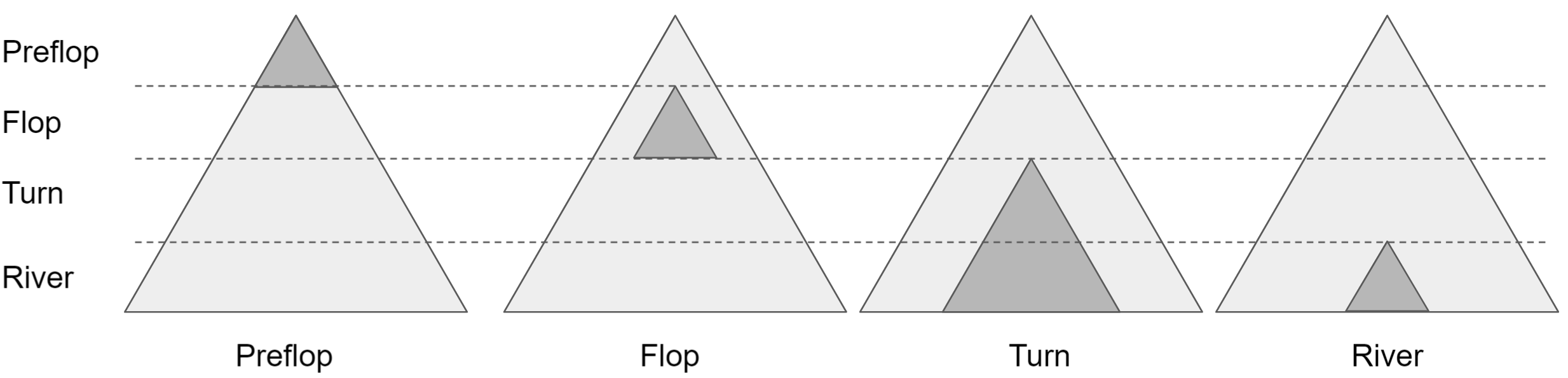}
  \caption{
  Lookahed used on different streets, see also Table~\ref{tab:iig-deeptack_lookaheads}}
\label{fig:iig_deepstack_lookaheads}
\end{figure}

\second{

DeepStack used continual resolving with limited lookaheads and value functions (Section~\ref{sec:iig-continual_resolving_vf}).
For the re-solving construction, it used a modified variant of the CFR-D gadget (Section~\ref{sec:iig-cfrd_gadget}) and fixed sparse lookahead tree (details in Section~\ref{sec:igg-deepstack_action_absraction}).
The lookahead tree was built all the way to the next round (for pre-flop and flop infostates) or until the end of the game (for turn and river).
This guaranteed that the lookahead included all the possible next states so that we could correctly update the invariants during the continual resolving process.

DeepStack then used a hybrid of CFR ad CFR+ for the policy computation --- combining regret matching plus, simultaneous updates and uniform  weighting of the average policy.
Furthermore, it skipped the first half of iterations when averaging the policy and counterfactual values.
The motivation is that the early iterations are often relatively poor (especially in the case of no-limit poker).
Skipping iterations has been proven to be sound and is a commonly used trick.
For more details on the search parameters, see Table~\ref{tab:iig-deeptack_lookaheads} and Figure~\ref{fig:iig_deepstack_lookaheads}.

\paragraph{Preflop}
Preflop used $2,000$ iterations, skipped $1,000$ of them and the lookahead was built until the beginning of flop.
During the initial skipped iterations, the auxiliary preflop network (at the end of the preflop) was used to speed up the computation.
This allows to use a single value function call for each state at the end of the preflop, rather than enumerating all the possible (isomorphic) $22,100$ flops.
For the final iterations that actually accumulate the the average strategy and counterfactual
values, we did the expensive enumeration and flop evaluations as we need the counterfactual values for continual resolving.

\paragraph{Flop}
Preflop used $1,000$ iterations, skipped $500$ of them and the lookahead was built until the beginning of turn.

\paragraph{Turn}
Preflop used $1,000$ iterations, skipped $500$ of them and the lookahead was built all the way until the end of the game.
To speed up the computation, we used a bucketed abstraction for the river \subgames{} within the lookahead tree (the same bucketing as used for neural net representation described in Section~\ref{sec:iig-deepstack_network_features}).

\paragraph{River}
Preflop used $1,000$ iterations, skipped $500$ of them and the lookahead was built all the way until the end of the game.
No abstraction other than the sparse lookaheads were used as these river \subgames{} can be resolved relatively quickly.

\begin{table*}[ht]
\centering
\renewcommand{\arraystretch}{1.3}
\begin{tabular}{@{}lllll@{}}
\toprule
Round & CFR Iterations & Skip Iterations & Lookahead & Value Functions \\
\midrule
Pre-flop & 1,000 & 500 & Until Flop & Flop/Aux \\
Flop & 1,000 & 500 & Until Turn & Turn \\
Turn & 1,000 & 500 & Full & - \\
River & 2,000 & 1,000 & Full &- \\
\bottomrule
\end{tabular}
\caption{
Detailed parameters of the re-solving step of DeepStack (see also Figure~\ref{fig:iig_deepstack_lookaheads}).
}
\label{tab:iig-deeptack_lookaheads}
\end{table*}

}

\section{Action Abstraction in Lookahead}
\label{sec:igg-deepstack_action_absraction}

\begin{figure}[ht]
    \centering
    \begin{subfigure}[b]{0.48\textwidth}
        \includegraphics[width=\textwidth]{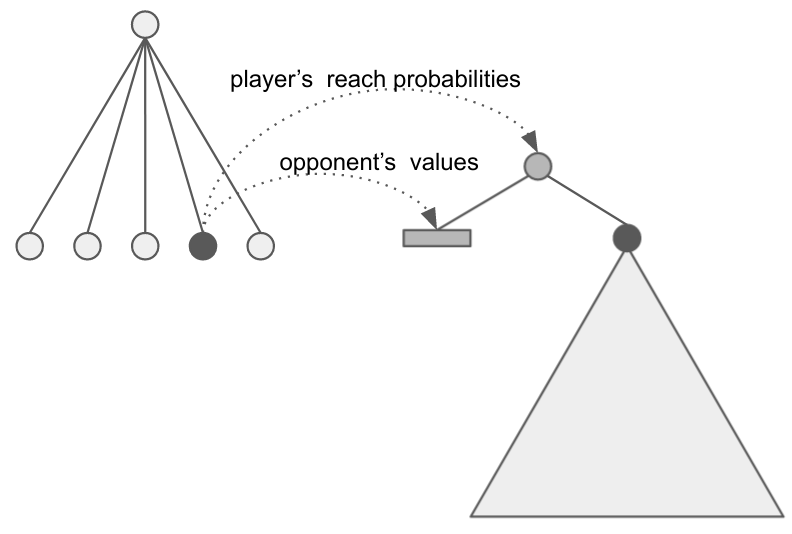}
        \caption{}
        \label{fig:iig-deepstack_maxtrick1}
    \end{subfigure}
    \begin{subfigure}[b]{0.48\textwidth}
        \includegraphics[width=\textwidth]{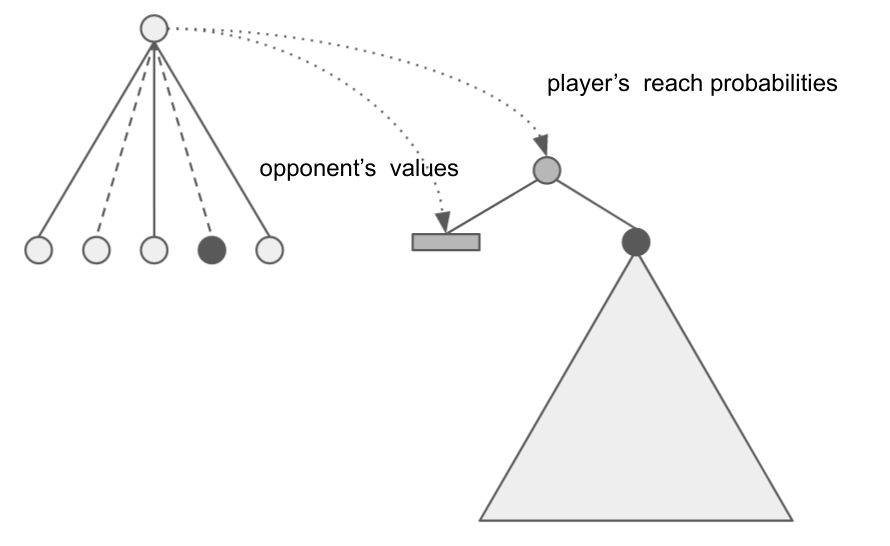}
        \caption{}
        \label{fig:iig-deepstack_maxtrick2}
    \end{subfigure}
    \caption{(a) Resolving a state included in the abstraction.
    We simply use the invariants for that state.
    (b) Resolving a state after the opponent made an action not included in the sparse lookahead.
    Due to structure of the game and the lookaheads used by DeepStack, we can use invariants from the previous state.
    }
    \label{fig:iig-deepstack_maxtrick}
\end{figure}

\begin{table*}[ht]
\centering
\renewcommand{\arraystretch}{1.3}
\begin{tabular}{@{}llll@{}}
\toprule
Poker Round & First Actions & Second Actions & Remaining Actions \\
\midrule
Pre-flop & F, C, \nicefrac{1}{2}P, P, A & F, C, \nicefrac{1}{2}P, P, 2P, A & F, C, P, A \\
Flop & F, C, \nicefrac{1}{2}P, P,  A & F, C, P, A & F, C, P, A \\
Turn & F, C, \nicefrac{1}{2}P, P, A & F, C, P, A & F, C, P, A \\
River & F, C, \nicefrac{1}{2}P, P, P, A & F, C, \nicefrac{1}{2}P,P, P, A & F, C, P, A \\
\bottomrule
\end{tabular}
\caption{
Sparse lookahead actions.
(F)old, (C)all/(C)heck, (P)ot bet and fractional (P)ot bets, (A)ll-in.
}
\label{tab:iig-deepstack_action_abstracion}
\end{table*}

\second{
DeepStack used sparse lookahead trees, where it considered only a subset of all the possible actions (there are almost $20,000$ actions in each infostate).
Sparse lookahead trees are now a common choice in online search agents for the no-limit poker \citep{brown2018superhuman, brown2020combining, zarick2020unlocking}.
For the full list of actions included in DeepStack's search see Table~\ref{tab:iig-deepstack_action_abstracion}.

There is an important distinction to the abstraction techniques described in Section~\ref{chap:iig-abstraction}.
DeepStack never needs to translate the current (real) state into an abstraction, 
regardless of the actions taken by the opponent.
The resolving step always starts in the exact state of the game, and DeepStack thus perfectly understands the current situation and the pot-ods.
This is confirmed by our LBR analysis in Section~\ref{sec:iig-deepstack_lbr}.

But to start the safe resolving step for $s_{pub}$, we still require the counterfactual values of the opponent $V^{bound}_{-i}(s_{pub})$ and agent's reach distribution  $\Delta(\mathcal{S}_i(s_{pub}))$ (Section~\ref{sec:iig-safe_resolve}).
DeepStack leveraged a particular structure of the game to correctly compute these invariants even if the current state $s_{pub}$ was not included in the sparse search tree.

\paragraph{Counterfactual Values}
First, consider the counterfactual values. 
As there is always a single player to act and all the actions are publicly observable, we can assign a unique player to each $s_{pub}$ (agent/opponent/chance).
Under any best responding policy, the following then holds for any $s_{pub}$ of the opponent
$\forall s \in \mathcal{S}_{-i}(s_{pub}) \forall a \in \mathcal{A}(s) \, : \, v_{-i,c}^{\pi}(s) \geq q_{-i,c}^{\pi}(s, a)$.
Furthermore, there is a single state $sa$ after an action $a$ in a state $s$.
We thus have $v_{-i,c}^{\pi}(s) \geq v_{-i,c}^{\pi}(sa)$.
This allows us to use counterfactual values of the state $s$ as the constraint values for $sa$ --- that is, we only need the values for the state prior to the action of the opponent.
DeepStack's lookahead tree then guarantees that this is the only off-tree case.
This is because the lookahead included all the possible next chance states and DeepStack will never take an actions not included in its abstractions.

\paragraph{Reach Distribution}
Agent's distribution over their infostates $\Delta(\mathcal{S}_i (s_{pub}))$ is not a function of the opponent's policy and we can thus compute the reach distribution for the current state $s_{pub}$ after we observe the state (i.e. after the opponent made an action).
}

\subsection{Action Abstraction Analysis}
\second{
The choice of sparse lookahead tree abstraction thus only effects 
i) the actions the agent can make and
ii) the quality of the counterfactual values to be used in the next re-solving step.
While we know that it is sound to use the values of the parent state $s$ for resolving a state $sa$, we can only do so because we assumed the values for $s$ are based on a best responding policy.
As we limit the actions in the lookahead, the resulting values might be considerably different.

The action set used by DeepStack was constructed using a careful analysis, where experimented with different abstraction sizes and measured the quality of the resulting counterfactual values (Table~\ref{tab:iig-deeptack_action_abstraction_cfvalues}).
Additional experiments proved that it is critical to include fold, call, all-in and a bet around the pot size --- and the same choice is then made by other search agents (Section~\ref{sec:iig-other_agnets}).

\begin{table*}
\centering
\renewcommand{\arraystretch}{1.3}
\begin{tabular}{@{}lllll@{}}
\toprule
Betting & Size & $L_1$ & $L_2$ & $L_{\infty}$ \\
\midrule
F, C, Min, \nicefrac{1}{4}P, \nicefrac{1}{2}P, \nicefrac{3}{4}P, P, 2P, 3P, 10P, A & 555k & 18.06 & 0.891 & 0.2724  \\
F, C, P, A & 61k & 25.51 & 1.272 & 0.3372 \\
F, C, 2P, A & 48k & 64.79 & 2.672 & 0.3445 \\
F, C, \nicefrac{1}{2}P, A & 100k & 58.24 & 3.426 & 0.7376 \\
F, C, \nicefrac{1}{2}P, P, A & 126k & 41.42 & 1.541 & 0.2955 \\
F, C, P, 2P, A & 204k & 27.69 & 1.390 & 0.2543 \\
F, C, \nicefrac{1}{2}P, P, 2P, A & 360k & 20.96 & 1.059 & 0.2653 \\
\bottomrule
\end{tabular}
\caption{
Errors in counterfactual values when solving a river game with different action abstraction.
The counterfactual values computed with $1,000$ and averaged over $100$ random river situations.
As the ``ground truth'' targets, we used the largest set of actions (F, C, Min, \nicefrac{1}{4}P, \nicefrac{1}{2}P, \nicefrac{3}{4}P, P, 2P, 3P, 10P, A) with $4,000$ iterations.
}
\label{tab:iig-deeptack_action_abstraction_cfvalues}
\end{table*}
}

\section{Value Function}
\second{
DeepStack used deep fully connected neural networks with seven hidden layers to represent the value functions.
Each layer contained 500 neurons and used the parametric rectified linear units \citep{he2015delving}.
The output of the final linear layer then predicts individual counterfactual values and thus contains as many neurons as there are infostates.
Since the game is zero-sum, it must be the case that the reach-weighted sum of counterfactual values is in balance (Equation~\ref{eq:iig-deepstack_net_gadget1}).
To make sure this is the case, we included additional custom layer that redistributes the ``excess'', ensuring the network produces only zero-sum values.

\begin{align}
\label{eq:iig-deepstack_net_gadget1}
\sum_{s \in \mathcal{S}_i(s_{pub})} \Delta(\mathcal{S}_i(s_{pub})(s) v_{i,c}(s) + \sum_{s \in \mathcal{S}_{-i}(s_{pub})} \Delta(\mathcal{S}_{-i}(s_{pub})(s) v_{-i,c}(s) = 0
\end{align}

We trained separate networks for flop and turn that were used during preflop and flop search respectively (as the lookahed reached all the way to the next street).
Furthermore, an auxiliary preflop network was used to speed up the computation on preflop.
}

\subsection{Networks Training}
\second{
All the networks were trained using the supervised training paradigm.
The training data was generated by sampling random\footnote{See supplementary material of DeepStack for details on the distribution.} \subgames{} (inputs) and solving them to obtain the counterfactual values (outputs/targets).
These input-output pairs were then used as the supervised dataset.
As the lookahead was built all the way to the next street (Section~\ref{sec:iig-deepstack_search}), the sampled \subgames{} always corresponded to the initial public states of the respective streets (with the exception of the auxiliary preflop network where the public states rather corresponded to the last states of preflop).

The solving used $1,000$ iterations of CFR+ and the FCPA action abstraction.
The training was implemented using the Torch7 libraries \citep{torch}.
The training loss was the average Huber loss \citep{huber1992robust} over the counterfactual values, minimized using the Adam stochastic gradient descent procedure \citep{kingma2014adam}.
We used batch size of $1,000$ and a learning rate $0.001$, which was decreased to $0.0001$ after the first $200$ epochs. 
Networks were trained for approximately $350$ epochs over two days on a single GPU, and the epoch with the lowest validation loss was then selected.

\paragraph{Turn}
The training data consisted of ten million randomly sampled turn \subgames{} that were then solved.

\paragraph{Flop}
We sampled one million flop \subgames{} and used limited lookahead solving (Section~\ref{sec:iig-depth_limited_solve}) with the already trained turn network as the value function --- thus bootstrapping the networks.

\paragraph{Aux Preflop}
We sampled ten million preflop situations and the target values were obtained by enumerating all $22,100$ possible flops
and averaging the counterfactual values from the flop network’s output (another level of bootstrapping).
This averaging is equivalent to the limited lookahead solving used for the flop network, as there are only chance states in between the last preflop state and the flop network.
}

\subsection{Networks Features}
\label{sec:iig-deepstack_network_features}
\second{
To simplify the generalization task of the networks, 
we map the distribution over the individual poker hands (combinations of public and private cards) into distribution over buckets/clusters.
This essentially compresses the state space of $\binom{52}{7}$ card combinations down to the number of buckets used.
For both the turn and flop networks, we used
$1,000$ clusters generated using k-means clustering with earth mover’s distance over hand-strength features \citep{ganzfried2014potential, johanson2013evaluating}.
The auxiliary preflop network used no bucketing as there are only $169$ isomorphic hands.
}

\section{Human Evaluation}
\second{
In collaboration with the International Federation of Poker (IFP, now the International Federation of Match Poker IFMP) \citep{ifmp},
we recruited $33$ players from $17$ countries.
Each player was expected to play $3,000$ games in between November 7th and December 12th, 2016.
Cash incentives were given to the top three performers
($\$5,000$, $\$2,500$ and $\$1,250$ CAD).

Poker is inherently game of high variance and even if one agent is substantially stronger than the opponent, one might have to play hundreds of thousands of games to get statistically significant result.
We thus evaluate the performance using AIVAT (Section~\ref{sec:iig-aivat}), where the counterfactual values produced by the continual resolving served as the value estimates required by AIVAT --- providing an excellent estimates and resulting in impressive 85\% reduction of standard deviation.

A total of $44,852$ games were finished by the players, with $11$ players completing the full 3,000 games. 
DeepStack won 492 $\pm$ 220 mbb/g when the raw data was used, and when AIVAT correction terms were used the performance was estimated at 486 mbb/g $\pm$ 40 mbb/g (note the significantly smaller confidence interval).
AIVAT also allows us to derive statistically significant individual results for all but one of the players who finished the required $3,000$ games.
DeepStack is beating all of such player, with 10 out of 11 results being significant and only for the best performing player is the result not significant.
The detailed results can be found in Table~\ref{tab:deepstack-results} and Figure~\ref{fig:iig_deepstack_winrate_graph}
}



\begin{figure}[ht]
  \centering
  \includegraphics[width=1.0\textwidth]{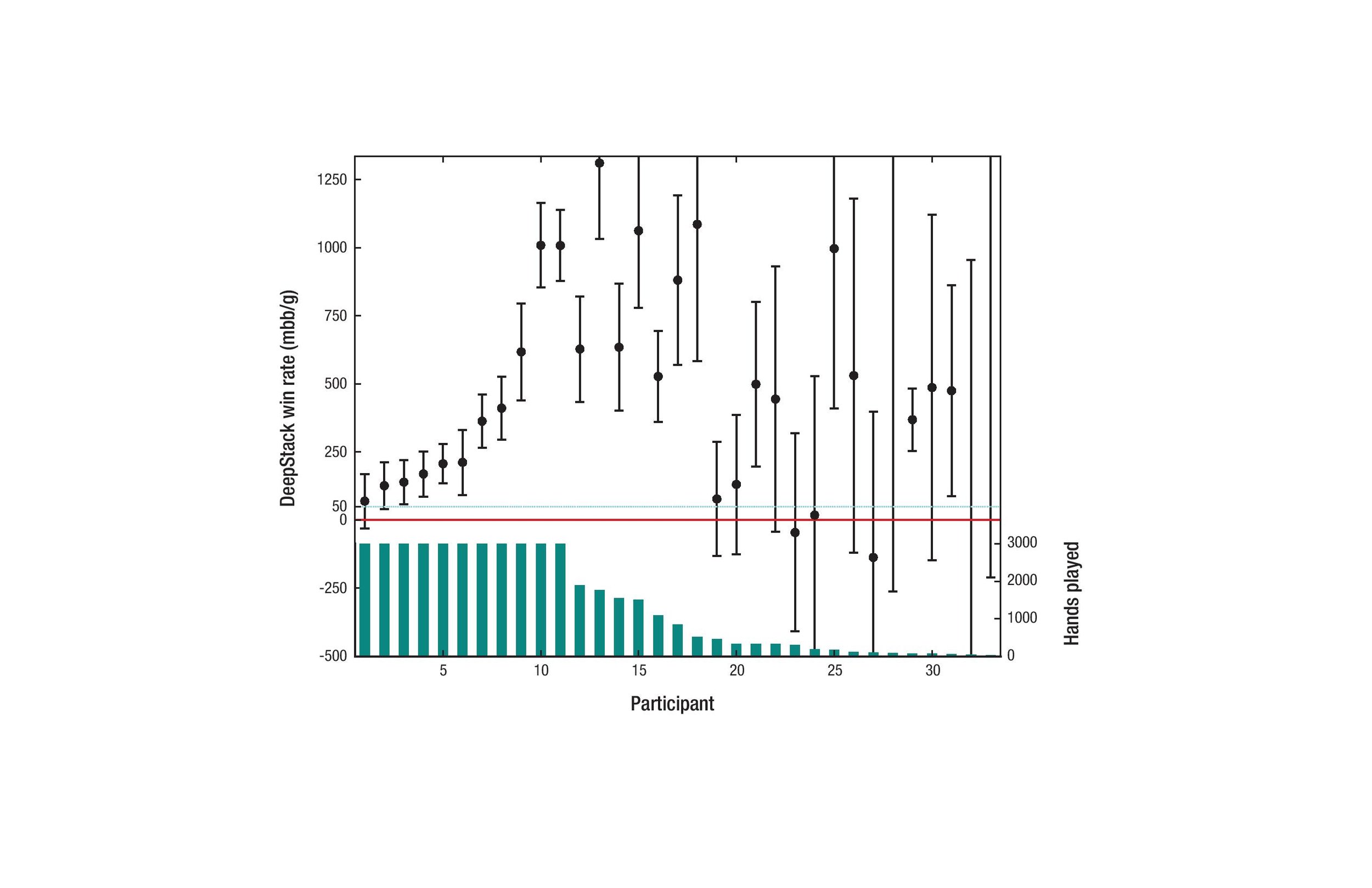}
  \caption{Win rate of DeepStack against the individual human players:
  AIVAT value with 95\% confidence interval and number of hands per player.
  Thanks to AIVAT variance reduction, the result is significant for all but one individual player who finished the required $3,000$ hands.
  See also Table~\ref{tab:deepstack-results} for more analysis and aggregate results.
  }
\label{fig:iig_deepstack_winrate_graph}
\end{figure}

\begin{table*}\centering
\renewcommand{\arraystretch}{1.3}
\begin{tabular}{@{}llr@{}}
\toprule
Player & Games & Win Rate (mbb/g) \\
\midrule
Martin Sturc & 3000 & $70 \pm 119$ \\
Stanislav Voloshin & 3000 & $126 \pm 103$ \\
Prakshat Shrimankar & 3000 & $139 \pm 97$ \\
Ivan Shabalin & 3000 & $170 \pm 99$ \\
Lucas Schaumann & 3000 & $207 \pm 87$ \\
Phil Laak & 3000 & $212 \pm 143$ \\
Kaishi Sun & 3000 & $363 \pm 116$ \\
Dmitry Lesnoy & 3000 & $411 \pm 138$ \\
Antonio Parlavecchio & 3000 & $618 \pm 212$ \\
Muskan Sethi & 3000 & $1009 \pm 184$ \\
Pol Dmit & 3000 & $1008 \pm 156$ \\
Tsuneaki Takeda & 1901 & $627 \pm 231$ \\
Youwei Qin & 1759 & $1306 \pm 331$ \\
Fintan Gavin & 1555 & $635 \pm 278$ \\
Giedrius Talacka & 1514 & $1063 \pm 338$ \\
Juergen Bachmann & 1088 & $527 \pm 198$ \\
Sergey Indenok & 852 & $881 \pm 371$ \\
Sebastian Schwab & 516 & $1086 \pm 598$ \\
Dara O'Kearney & 456 & $78 \pm 250$ \\
Roman Shaposhnikov & 330 & $131 \pm 305$ \\
Shai Zurr & 330 & $499 \pm 360$ \\
Luca Moschitta & 328 & $444 \pm 580$ \\
Stas Tishekvich & 295 & $-45 \pm 433$ \\
Eyal Eshkar & 191 & $18 \pm 608$ \\
Jefri Islam & 176 & $997 \pm 700$ \\
Fan Sun & 122 & $531 \pm 774$ \\
Igor Naumenko & 102 & $-137 \pm 638$ \\
Silvio Pizzarello & 90 & $1500 \pm 2100$ \\
Gaia Freire & 76 & $369 \pm 136$ \\
Alexander B\"{o}s & 74 & $487 \pm 756$ \\
Victor Santos & 58 & $475 \pm 462$ \\
Mike Phan & 32 & $-1019 \pm 2352$ \\
Juan Manuel Pastor & 7 & $2744 \pm 3521$ \\
\bottomrule
\end{tabular}
\caption{DeepStack's win rate against professional poker players, with $95\%$ confidence interval.}
\label{tab:deepstack-results}
\end{table*}

\subsection{Thinking Time}
\second{
To minimize the thinking time, both search and network evaluation were ran on GPU (single NVIDIA GeForce GTX 1080) and we batched the network evaluations.
To further speed up the play, we cached the resolving result for every observed preflop situation.
When the same information state was reached again, we simply retrieve the result from the cache.
Table~\ref{tab:iig-deeptack_thinking_time} then reports thinking times for DeepStack and humans.
Note that DeepStack acted considerably faster than our human players in all rounds, and that the pre-flop round was by far the fastest due to the cache.

\begin{table*}[ht]
\centering
\renewcommand{\arraystretch}{1.3}
\begin{tabular}{@{}lllll@{}}
\toprule
 & \multicolumn{2}{c}{Humans} & \multicolumn{2}{c}{DeepStack} \\
Round & Median & Mean & Median & Mean \\
\midrule
Pre-flop & 10.3 & 16.2 & 0.04 & 0.2 \\
Flop & 9.1 & 14.6 & 5.9 & 5.9 \\
Turn & 8.0 & 14.0 & 5.4 & 5.5 \\
River & 9.5 & 16.2 & 2.2 & 2.1 \\
Per Action & 9.6 & 15.4 & 2.3 & 3.0 \\
Per Hand & 22.0 & 37.4 & 5.7 & 7.2 \\
\bottomrule
\end{tabular}
\caption{
Thinking time [s] of DeepStack and humans.
}
\label{tab:iig-deeptack_thinking_time}
\end{table*}

}

\section{Local Best Response Evaluation}
\label{sec:iig-deepstack_lbr}
\second{
Not only DeepStack was the first to beat professional human in no-limit Texas hold'em poker.
Unlike the previous approaches based on the abstraction framework (Section~\ref{sec:iig-abstraction_methods}), suggesting that it's substantially harder to exploit and closer to optimal policy.
Furthermore, we can evaluate how the choice of the actions in the sparse lookahead affect the LBR performance.
Table~\ref{tab:iig-deeptack_lbr_deepsack_action_sets} reports the LBR results with three different actions abstractions.
Regardless of the abstraction in place, LBR is unable to exploit the agent.
}

\begin{table*}[ht]
\centering
\renewcommand{\arraystretch}{1.3}
\begin{tabular}{@{}lllll@{}}
\toprule
LBR Pre-flop Actions & F, C & C & C & C \\
LBR Flop Actions & F, C & C & C & 56bets \\
LBR Turn Actions & F, C & F, C, P, A & 56bets & F, C \\
LBR River Actions & F, C & F, C, P, A & 56bets & F, C \\
\midrule
Hyperborean (2014) & 721 $\pm$ 56 & 3852 $\pm$ 141 & 4675 $\pm$ 152 & 983 $\pm$ 95 \\
Slumbot (2016) & 522 $\pm$ 50 & 4020 $\pm$ 115 & 3763 $\pm$ 104 & 1227 $\pm$ 79 \\
Act1 (2016) & 407  $\pm$ 47 & 2597 $\pm$ 140 & 3302 $\pm$ 122 & 847 $\pm$ 78 \\
Full Cards [100 BB] & -424 $\pm$ 37 & -536 $\pm$ 87 & 2403 $\pm$ 87 & 1008 $\pm$ 68 \\
DeepStack & -428 $\pm$ 87 & -383 $\pm$ 219 & -775 $\pm$ 255 & -602 $\pm$ 214 \\
\bottomrule
\end{tabular}
\caption{
Local best response performance against strong abstraction-based poker agents in no-limit Texas hold'em poker [mbb/g].
The list of actions considered by the technique varies in different poker rounds, and we report four different configurations.
Actions are (F)old, (C)all (P)ot bet and (A)ll-in. 
For the full list of $56$ bets see \citep{lisy2017eqilibrium}.
Note that a policy that simply always folds each hand is exploitable by $750mbb$.
}
\label{tab:iig-deepstack_lbr_vs_bots}
\end{table*}

\begin{table*}[ht]
\centering
\renewcommand{\arraystretch}{1.3}
\begin{tabular}{@{}ll@{}}
\toprule
First level actions & LBR performance\\
\midrule
F, C, P, A & $-479 \pm  216$ \\
Default & $-383 \pm 219$ \\
F, C, \nicefrac{1}{3}P, P, \nicefrac{1}{2}P, 2P, A & $-406 \pm 218$ \\
\bottomrule
\end{tabular}
\caption{
LBR against DeepStack with different action sets in DeepStack's lookahead tree.
}
\label{tab:iig-deeptack_lbr_deepsack_action_sets}
\end{table*}

\section{AIVAT Variance Reduction Technique}
\label{sec:iig-aivat}

\second{
No-limit poker is inherently noisy game, where one needs to collect a large number of games to draw statistical conclusions.
Consider the first human-machine event in no-limit Texas hold'em poker, where the Claudico agent faced a team of four professional poker players \citep{ganzfried2016reflections}.
The match involved $80,000$ hands of poker, where each human played dozens of hours over the span of seven
days.
Despite this significant investment of human time and Claudico losing by over $90mbb$, the result was still not statistically conclusive (it was on the very edge of the statistical test).
It is wort mentioning that Claudico was an instance of abstraction based agent (Section~\ref{sec:iig-abstraction_methods}) combined with unsafe resolving method (Section~\ref{sec:iig-unsafe_resolving}) used for \subgame{} refinement.

Another large scale human match took place from January 11 to January 31, 2017.
Libratus (Section~\ref{sec:iig-libratus}) faced four top-class human poker players (Jason Les, Dong Kim, Daniel McAulay and Jimmy Chou). 
While this time the results allowed for statisticaly significant conclusion, the match involved a total of $120,000$ hands (50\% increase compared to Claudico).
And since this time Libratus was (mostly) a search based agent, it bested its human opponents.
}

\subsection{Previous Variance Reduction Methods}
\second{
Duplicate (or mirror) hands is a simple idea aimed to decrease the role of luck introduced by the chance events  \citep{davidson2014baseline}.
Mirror hands methods replicate the chance events for both positions of the players.
Namely in poker, the agents get to play both positions (small and big blind seats) with the identical chance deals (both private and public cards).
The motivation is that if one player gets ``lucky'' by favourable card deal, the variance will be reduced as the agent also gets to play from the less favourable position (note that this method also halves the number of datapoints as the paired outcome then forms a single measurement).
This is of course problematic when human play is involved, as we need to reset the memory of the agent to make sure they can not predict the chance events.
Typical workaround is then to play against human pairs, where the paired humans are dealt the duplicate hands while playing in a separate room (room one: computer vs player one, room two: computer vs player two) \citep{hedbergman}.
This pairing method was indeed used during both Claudico and Libratus events.
While simple, this method provides only a modest improvement.
}

\second{
Second, the technique of importance sampling over imaginary observations \citep{bowling2008strategy} uses the knowledge of player's policy to replace the single terminal state return $R_i(z)$ with expectation over the compatible terminal states in $\mathcal{S}_{pub}(z)$ and uses the importance sampling correction.
In poker terms, this replaces the single hand outcome with the expected value over all the other hands the agent could be dealt and their respective reach probabilities to that terminal public state (often referred to as the range evaluation in poker literature).
}

\second{
Third, techniques like MIVAT \citep{white2009learning} use the method of control variates (Section~\ref{sec:iig-control_variates}) to reduce the variance caused by chance events.
Each chance action is paired with a value estimate (baseline), resulting in correction terms for each chance event on the trajectory
(just like in VR-MCCFR in Section~\ref{sec:iig-vrmccfr}).
In poker, these correction terms essentially aim to negate the chance ``luck'' factor, e.g. when the agent hits good/bad card deal.
}

\subsection{AIVAT}

\begin{figure}[ht]
    \centering
    \begin{subfigure}[b]{0.18\textwidth}
        \includegraphics[width=\textwidth]{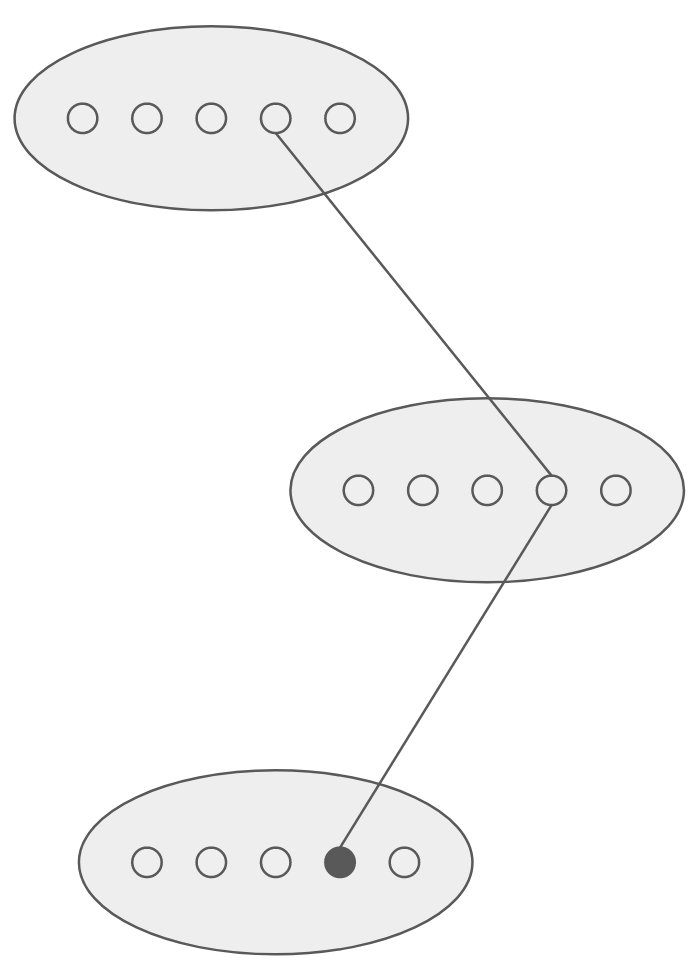}
        \caption{}
        \label{fig:iig-aivat1}
    \end{subfigure}
    \begin{subfigure}[b]{0.18\textwidth}
        \includegraphics[width=\textwidth]{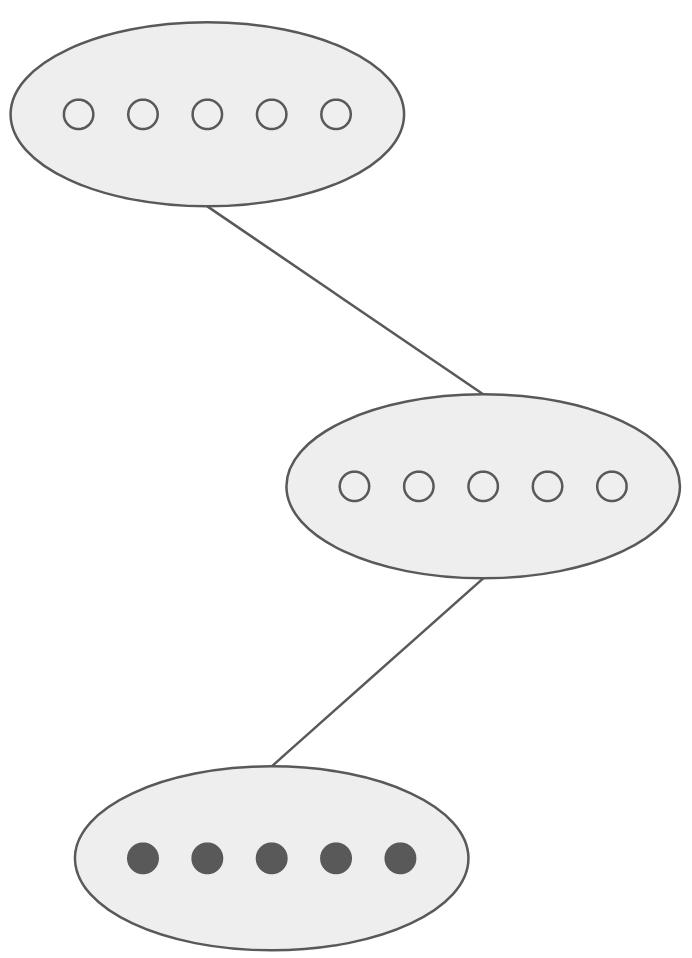}
        \caption{}
        \label{fig:iig-aivat2}
    \end{subfigure}
    \begin{subfigure}[b]{0.25\textwidth}
        \includegraphics[width=\textwidth]{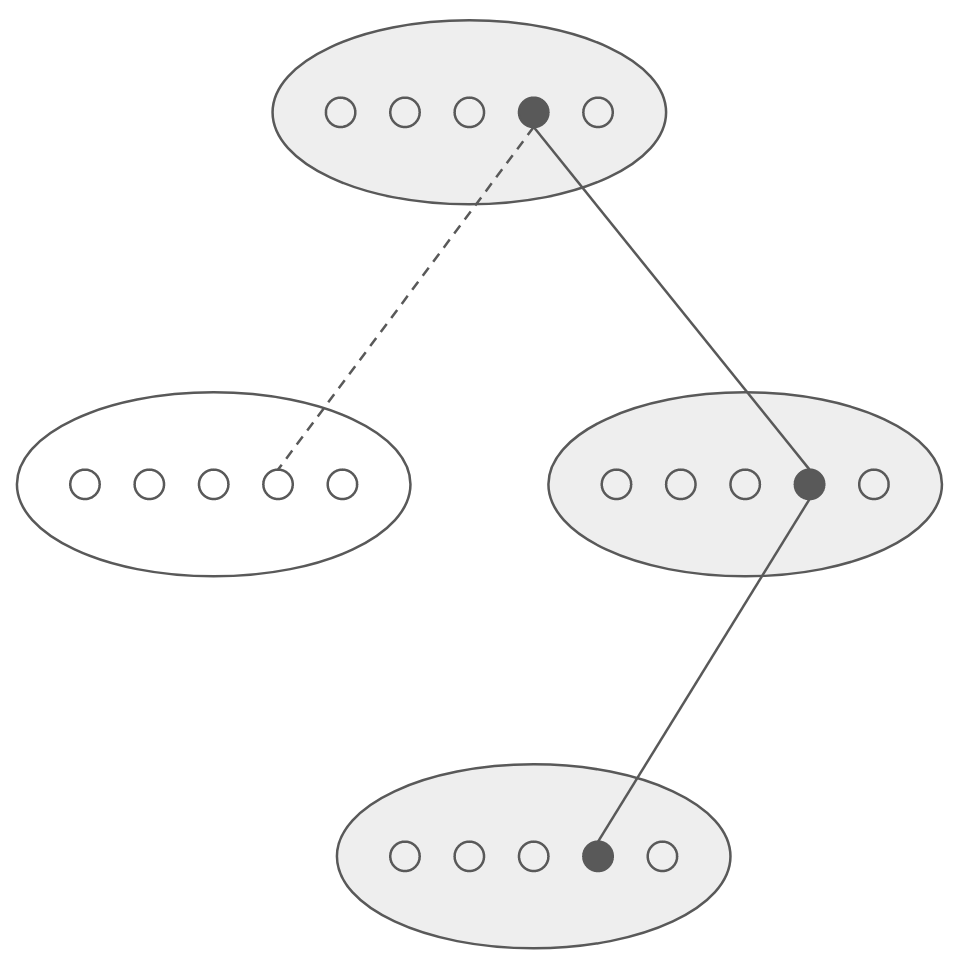}
        \caption{}
        \label{fig:iig-aivat3}
    \end{subfigure}
    \begin{subfigure}[b]{0.32\textwidth}
        \includegraphics[width=\textwidth]{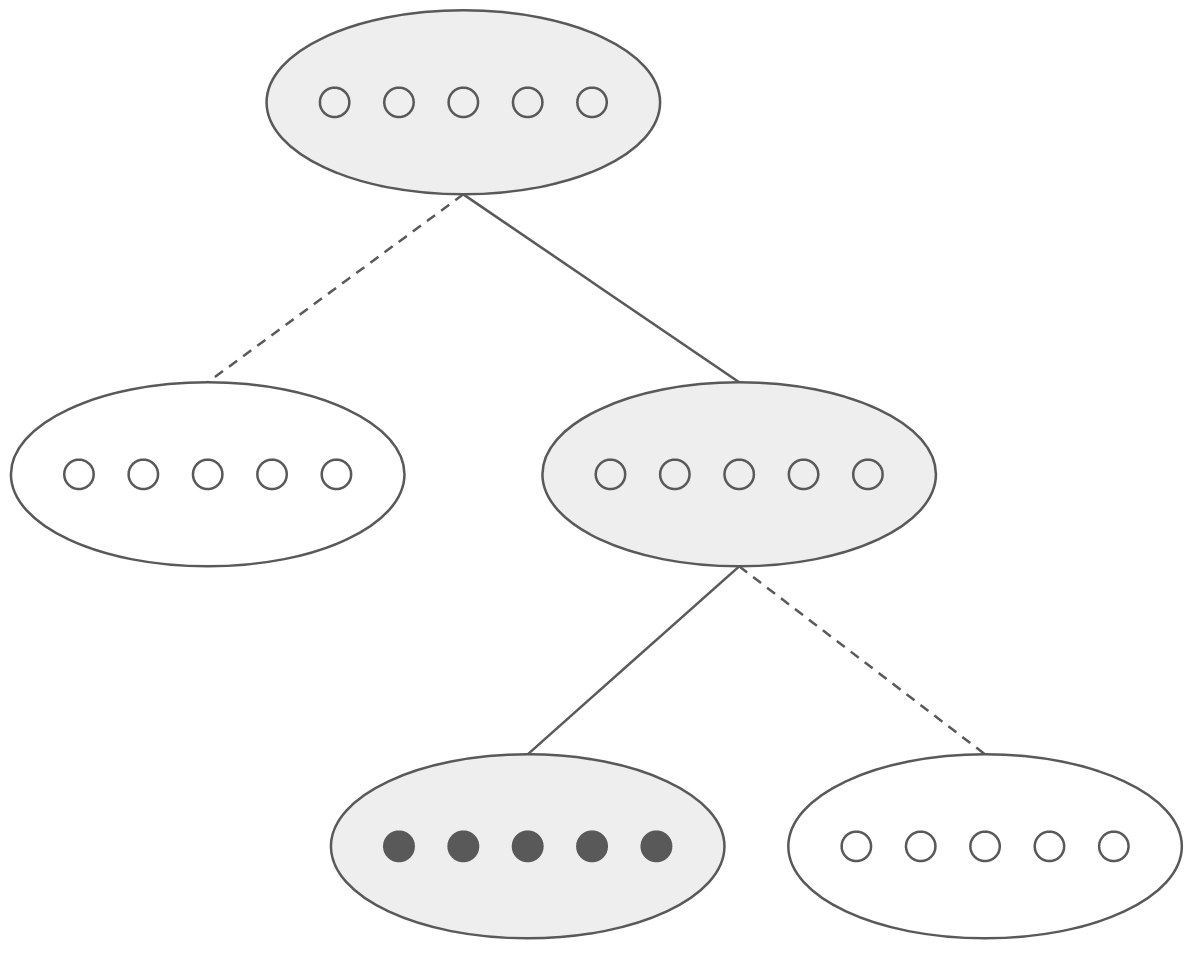}
        \caption{}
        \label{fig:iig-aivat4}
    \end{subfigure}
    \caption{(a) Monte Carlo estimate uses the single sampled outcome $R_i(z)$.
    (b) Importance sampling over imaginary observations considers the outcomes of all the possible private states.
    (c) MIVAT uses a single outcome and control variate with baselines to form correction terms for each chance event on the trajectory
    (d) AIVAT uses imaginary observations in combination with correction terms applied for chance events as well as the the agent's decitions. 
    }
    \label{fig:iig-aivat}
\end{figure}

\second{
\Gls{aivat} (action-informed value assessment tool, \citep{burch2018aivat}) combines the technique of imaginary observations and control variates.
Unlike MIVAT, it uses the control variate based correction terms both for the chance actions and the agent actions.
There are no correction terms for opponent's actions as we do not know their policy.

\begin{table*}[ht]
\centering
\renewcommand{\arraystretch}{1.3}
\begin{tabular}{@{}lll@{}}
\toprule
Estimator & $\bar{v}_x$ & $SD(v_x)$ \\
\midrule
Monte Carlo (chips) & 0.03871 & 25.962 \\
MIVAT & 0.02038 & 21.293 \\
MIVAT+Imaginary & 0.02596 & 16.073 \\
AIVAT & 0.00186 & 8.095 \\
\bottomrule
\end{tabular}
\caption{
Comparison of variance reduction techniques in no-limit Texas hold'em poker.
Data are computed from self-play of a small abstraction agent for a 100 big blind variant.
}
\label{tab:iig-aivat1}
\end{table*}

We compare the techniques in no-limit Texas hold'em poker, using a relatively small abstraction agent trained with MCCFR (Section~\ref{sec:iig-mccfr}) for the underlying policy.
The resulting counterfactual values are then used as the baselines for the correction terms.
The data are generated from self-play of the agent, using 1 million games.
Table~\ref{tab:iig-aivat1} compares the variance of Monte Carlo estimator, MIVAT and AIVAT.

AIVAT leads to substantially lower variance compared to prior techniques, resulting in  68\% decrease in standard deviation.
Just like better value estimates lead to lower variance for VR-MCCFR, so do they in the case of AIVAT.
While this is substantial improvement over the prior techniques, the same variance reduction technique in DeepStack proved even more effective.
This suggests that valued produced by the continual \resolving{} are much closer to the true values despite the fact that DeepStack faced human players. 
}

\section{Other Search Agents}
\label{sec:iig-other_agnets}

\second{
After DeepStack, there are now multiple agents that successfully employ search techniques in imperfect information games.
}

\subsection{Libratus}
\label{sec:iig-libratus}
\second{
Shortly after DeepStack, Libratus used search to beat top human professional players \citep{brown2018superhuman}.
Libratus has essentially two phases.
When the \subgame{} is small enough to be reasoned about, Libratus runs continual resolving with full lookahead and sparse action abstraction (the authors refer to the continual resolving as ``nested resolving'' as they developed the technique independently in parallel \citep{brown2017safe}).
In the earlier rounds though, Libratus did not use limited lookaheads combined with a value function.
Rather than doing search, it falls back to the prior abstraction based techniques (Section~\ref{chap:iig-abstraction}), where the pre-computed abstraction policy is referred to as the ``blue print'' policy.
}

\subsection{Modicum}
\second{
Following Libratus, the authors presented Modicum \citep{brown2018depth}.
Modicum replaces the ``blue print'' policy and uses limited lookahead search with value functions.
Rather than learning the value function, a hand-crafted rollout-based evaluation function is used.
Advantage of this evaluation function is that it is particularly fast and easy to evaluate.
On the other hand, it has very limited guarantees and one has to construct this evaluation function for each game manually
}

\subsection{Supremus}
\second{
Supremus is essentially a re-implementation of DeepStack with larger neural networks, larger search tree and more training data \citep{zarick2020unlocking}.
Supremus was then involved during the $2021$ public match between Doug Paulk (considered to be the best heads-up no-limit Texas hold'em specialist in the world) and Daniel Negreanu (recognized as the best poker player of the decade in $2014$ and having won over $42,000,000$ USD).
Supremus was used by Doug Paulk to help him to formulate the strategies, and he ended up winning around $1,200,000$ USD   \citep{dougsupremus}.
Thanks to the continual \resolving{} and learned value functions, state of the art agents in few years improved from being easily exploitable by simple techniques, to beating even the best human players.
}

\subsection{DeepRole}
\second{
DeepRole is an agent for the game ``The Resistance: Avalon'' --- a multi-agent hidden role game.
Its search algorithm combines CFR, search and value functions operating on public states, trained similarly to DeepStack \citep{serrino2019finding}.
}

\subsection{ReBeL}
\second{
Rebel is a recent algorithm that can be though of as ``simulating'' CFR-D (Section~\ref{sec:iig-cfrd})
\citep{brown2020combining}.
While the algorithm does not have to use gadget during the \resolving{}, it comes at a price.
It makes very strong assumptions about the \resolving{} policy, where it assumes an exact match between its training and evaluation phase.
Namely, the \resolving{} policy in a \subgame{} is assumed to match the one that produced the values during the trunk computation.
In other words, rather than using gadget, is uses the unsafe resolve (Section~\ref{sec:iig-unsafe_resolving}) and assumes the resulting policy to match the one that originally produced the \subgame{} values.
This strong assumptions also limits the algorithm to use different lookaheads and thinking time during the actual play (a crucial property of strong search based agents in perfect information games), as it introduces a mismatch between the training and evaluation phase.

This algorithm thus have fundamentally different properties and assumptions to the agents built on the notion of continual \resolving{}, where the only assumption is quality of the value function
}

\chapter*{Conclusion}

\first{
Thanks to the combination of search and (learned) value functions, computers bested even the strongest of human players in Chess, Go, Backgammon, Arima and many other perfect information games.
Generalization of the underlying concepts to imperfect information then allows for search methods in imperfect information,  resulting in agents that outplay humans in poker --- challenging game of imperfect information.

Imperfect information games allow for modeling much wider class of problems and interactions.
As many real world problems do not allow for perfect information, the concepts introduced in this thesis allow for potentially interesting applications.
}


\bibliographystyle{apalike}    

\renewcommand{\bibname}{Bibliography}


\bibliography{bibliography}


%
%

\listoffigures

\listoftables


\printglossaries

\chapwithtoc{List of Publications}
\label{chap:list_of_publications}

\section*{Journal Papers}
Moravcik Matej$^*$\footnotetext{$^*$Equal contribution, alphabetical order.}, Martin Schmid$^*$, Neil Burch, Viliam Lisý, Dustin Morrill, Nolan Bard, Trevor Davis, Kevin Waugh, Michael Johanson, and Michael Bowling. "Deepstack: Expert-level artificial intelligence in heads-up no-limit poker." Science 356, no. 6337 (2017): 508-513.
\\
\\
Burch, N., Moravcik, M. and Schmid, M., 2019. Revisiting cfr+ and alternating updates. Journal of Artificial Intelligence Research, 64, pp.429-443.

\section*{Conference Papers}
Sustr, M., Schmid, M., Moravcik, M., Burch, N., Lanctot, M., \& Bowling, M. (2020). "Sound search in imperfect information games" 20th International Conference on Autonomous Agents and Multiagent Systems
\\
\\
Davis, Trevor, Martin Schmid, and Michael Bowling. "Low-Variance and Zero-Variance Baselines for Extensive-Form Games." International Conference on Machine Learning. PMLR, 2020.
\\
\\
Schmid, Martin, et al. "Variance reduction in monte carlo counterfactual regret minimization (VR-MCCFR) for extensive form games using baselines." Proceedings of the AAAI Conference on Artificial Intelligence. Vol. 33. No. 01. 2019.
\\
\\
Burch, N., Schmid, M., Moravcik, M., Morill, D. and Bowling, M., 2018, April. Aivat: A new variance reduction technique for agent evaluation in imperfect information games. In Thirty-Second AAAI Conference on Artificial Intelligence.
\\
\\
Moravcik, M., Schmid, M., Ha, K., Hladik, M. and Gaukrodger, S.J., 2016, February. Refining subgames in large imperfect information games. In Thirtieth AAAI Conference on Artificial Intelligence.
\\
\\
Schmid, M., Moravcik, M. and Hladik, M., 2014, June. Bounding the support size in extensive form games with imperfect information. In Twenty-Eighth AAAI Conference on Artificial Intelligence.

\section*{Workshop Papers}
Schmid, Martin, et al. "Automatic public state space abstraction in imperfect information games." AAAI Workshop: Computer Poker and Imperfect Information. 2015.

\section*{ArXiv}
Gruslys, Audrunas, et al. "The advantage regret-matching actor-critic." arXiv preprint arXiv:2008.12234 (2020).
\\
\\
Sokota, Samuel, et al. "Solving Common-Payoff Games with Approximate Policy Iteration." arXiv preprint arXiv:2101.04237 (2020).
\\
\\
Timbers, F., Lockhart, E., Schmid, M., Lanctot, M., Bowling, M. (2020). Approximate exploitability: Learning a best response in large games. arXiv preprint arXiv:2004.09677.
\\
\\
Kovarik, V., Schmid, M., Burch, N., Bowling, M. and Lisy, V., 2019. Rethinking formal models of partially observable multiagent decision making. arXiv preprint arXiv:1906.11110.
\\
\\

\appendix

\openright
\end{document}